\DeclareMathOperator*{\argmin}{arg\,min}
\DeclareMathOperator*{\argmax}{arg\,max}
\DeclarePairedDelimiter\abs{\lvert}{\rvert}%
\let\oldabs\abs
\def\abs{\@ifstar{\oldabs}{\oldabs*}}
\theoremstyle{plain}
\newtheorem{thm}{Theorem}
\newtheorem{lem}{Lemma}
\newtheorem{cor}{Corollary}
\newtheorem{rem}{Remark}
\newtheorem{defi}{Definition}
\newtheorem{assu}{Assumption}
\newcounter{keylemma}
\newtheorem{keylem}[keylemma]{Lemma}
\newcounter{auxlemma}
\newtheorem{auxlem}[auxlemma]{Lemma}
\newcounter{techlemma}
\newtheorem{techlem}[techlemma]{Lemma}
\newcommand{\Algo}{{Robust Value Iteration with $f$-Divergence Uncertainty Set (RVI-$f$)}}
\newcommand{\Algoname}{RVI-$f$}
\newcommand{\Algonamechi}{RVI-$\chi^2$}
\newcommand{\AlgonameKL}{RVI-\text{KL}}
\newcommand{\RMDPchi}{RMDP-$\chi^2$}
\newcommand{\RMDPKL}{RMDP-\text{KL}}
\title{ORVIT: Near-Optimal Online Distributionally Robust Reinforcement Learning}
\author{Debamita Ghosh$^1$, George K. Atia$^{1,2}$, Yue Wang$^{1,2}$ \\
$^1$ Department of Electrical and Computer Engineering\\
$^2$ Department of Computer Science\\
University of Central Florida\\
Orlando, FL 32816, USA 
}
\begin{document}
\maketitle

\begin{abstract}
\label{sec:Abstract_Arxiv}

We investigate reinforcement learning (RL) in the presence of distributional mismatch between training and deployment, where policies trained in simulators often underperform in practice due to mismatches
between training and deployment conditions, and thereby reliable guarantees on real-world performance are essential. Distributionally robust RL addresses this issue by
optimizing worst-case performance over an uncertainty set of environments and providing an optimized lower bound on deployment performance. However, existing studies typically assume
access to either a generative model or offline datasets with broad coverage of the deployment
environment—assumptions that limit their practicality in unknown environments without prior knowledge. In this work, we study a more practical and challenging setting: online distributionally robust RL, where the agent interacts only with a single unknown training environment while seeking policies that are robust with respect to an uncertainty set around this nominal model. We consider general $f$-divergence-based ambiguity sets, including $\chi^2$ and KL divergence balls, and design a computationally efficient algorithm that achieves sublinear regret for the robust control objective under minimal assumptions, without requiring generative or offline data access. Moreover, we establish a corresponding minimax lower bound on the regret of any online algorithm, demonstrating the near-optimality of our method. Experiments across diverse environments with model misspecification show that our approach consistently improves worst-case performance and aligns with the theoretical guarantees.
\end{abstract}

\section{Introduction}
\label{sec:Introduction_Arxiv}

Reinforcement learning (RL) has emerged as a powerful paradigm for sequential decision-making, with notable successes in simulation-driven domains such as video games \cite{silver2016mastering,zha2021douzero,berner2019dota,vinyals2017starcraft} and large-scale generative modeling \cite{NeurIPS2022_TrainingLLM_Ouyang,cao2023reinforcement,black2023training,uehara2024understanding,zhang2024large,du2023guiding,cao2024survey}. However, deploying RL in safety-critical applications such as autonomous driving \cite{IEEE2021_DRLSurvey_Kiran} and healthcare \cite{ACM2018_SupervisedRLRNN_Wang} remains challenging. The major challenge is due to the infeasibility of direct real-world training under these environments, where RL agents are instead generally trained in simulation and then deployed in the real world. However, real-world deployments are often susceptible to environment uncertainties from, e.g.,  unpredictable noise, unmodeled perturbations, and even adversarial attacks, which cannot be fully captured by the training environments \cite{padakandla2020reinforcement, rajeswaran2017epopt, james2017transferring}, resulting in the {\it sim-to-real gap} and severe performance degradation in practical deployments \cite{Sage2013_RLSurvey_Kober, Arxiv2016_CAD2RL_Sadeghi, IEEE2018_SimToRealTransfer_Peng, IEEE2020_SimToRealTransferDRLRobotics_Zhao}. 

Distributionally robust RL (DRRL) \cite{INFORM2005_RobusDP_Iyengar, PMLR2017_RobustAdvRL_Pinto, Arxiv2022_SimToRealTransferContinuousPartialObs_Hu} provides a principled framework for bridging the sim-to-real gap by training policies that remain reliable under environment shifts. Instead of optimizing performance solely under a nominal model, DRRL specifies an uncertainty set of plausible environments centered
around the training model and then optimizes for the worst-case scenario using a pessimistic objective. When the true environment lies within the prescribed ambiguity set, this leads to a guaranteed lower bound on deployment performance and enhances generalization and robustness to unforeseen conditions \cite{goodfellow2014explaining, vinitsky2020robust, abdullah2019wasserstein, hou2020robust, rajeswaran2017epopt, atkeson2003nonparametric, morimoto2005robust, huang2017adversarial, kos2017delving, lin2017tactics, pattanaik2018robust, mandlekar2017adversarially}.

Despite its extensive studies, most existing methods on DRRL cannot be directly adapted to real-world applications due to their restrictive assumptions on data collection. Most studies either rely on a  {\it generative model } of the training environment \cite{PMLR2022_SampleComplexityRORL_Panaganti,PMLR2023_ImprovedSampleComplexityDRRL_Xu, NeuRIPS2023_CuriousPriceDRRLGenerativeMdel_Shi} that can freely generate data, or assume {\it offline datasets} that comprehensively cover the unknown optimal policy  \cite{NeuRIPS2023_DoublePessimismDROfflineRL_Blanchet, JMLR2024_DROfflineRLNearOptimalSampelComplexity_Shi, Arxiv2024_OfflineRobustRLFDivg_Tang,NeuRIPS2024_UnifiedPessimismOfflineRL_Yue,NeurIPS2024_MinimaxOptimalOfflineRL_Liu,NeurIPS2022_RobustRLOffline_Panaganti,Arxiv2024_SampleComplexityOfflineLinearDRMDP_Wang}, which generally cannot be guaranteed in practice, as the practical environments are unknown and data is often sparse, and the agent needs to explore the environments by itself. 

These considerations have recently motivated the study of DRRL with \emph{online interactions} \cite{Arxiv2024_UpperLowerDRRL_Liu, Arxiv2024_DRORLwithInteractiveData_Lu, PMLR2024_LinearFunctionDRMDP_Liu, ICML2025_OnlineDRMDPSampleComplexity_He}, where the agent strategically explores the unknown training environment and optimizes for the worst-case performance. A key difficulty in this setting is the inherent mismatch between (i) the distribution that generates the training data (the nominal model) and (ii) the worst-case distributions used to evaluate robustness. This induces  online DRRL as an \emph{off-target learning} problem \cite{Arxiv2020_OffDynRL_Eysenbach, NeurIPS2024_MinimaxOptimalOfflineRL_Liu, holla2021off}, leading to a major challenge known
as the information deficit: the states covered by the worst-case or deployment environment may not be visited during
training, yet the agent must still act reliably in these unfamiliar states \cite{ICML2025_OnlineDRMDPSampleComplexity_He}. This lack of exposure can significantly increase the difficulties of online learning.

To cope with this challenge, existing online DRRL approaches typically impose additional structural or coverage conditions. Some works assume special “fail-state” structures or monotone degradation patterns that constrain how the worst-case model can differ from the nominal one \cite{Arxiv2024_UpperLowerDRRL_Liu, PMLR2024_LinearFunctionDRMDP_Liu, Arxiv2024_DRORLwithInteractiveData_Lu}, effectively rendering adversarial shifts more predictable and alleviating the information deficit. More recent results \cite{ICML2025_OnlineDRMDPSampleComplexity_He} relax such structural assumptions but still rely on nontrivial coverage-type conditions that ensure sufficient exploration of critical regions. However, in more realistic settings where deployment dynamics are unknown, these assumptions are impractical or hard to justify, which limits the practical applicability of existing theory. This raises the central question that motivates our work:
\begin{center}
\textbf{\textit{Can we develop an online distributionally robust RL algorithm with no structural assumptions and near-optimal
sample-complexity?}}
\end{center}

\subsection{Contributions}
In this paper, we answer this question by designing algorithms for online DRRL with near-optimal sample complexity, without relying on structural assumptions such as fail-states. Our major contributions are summarized as follows. 
\begin{itemize}
    \item We propose an optimistic model-based meta-algorithm for online DRRL, named  {\Algoname}, with implementations under two important uncertainty set structures: {\Algonamechi} and {\AlgonameKL}. These algorithms use plug-in estimates of the nominal kernel and introduce data-driven penalty terms tailored to the respective uncertainty sets. Notably, our algorithm enjoys better simplicity and more efficient implementation, compared to the ones in \cite{ICML2025_OnlineDRMDPSampleComplexity_He}, which requires an additional optimization oracle. 

    \item We prove that {\Algonamechi} and {\AlgonameKL} are data-efficient: they achieve an $\varepsilon$-optimal robust policy with $\tilde{\mathcal{O}}\big(H^5(1+\sigma)SA/\varepsilon^2\big)$ and $\tilde{\mathcal{O}}\big(H^5\exp(2H^2)SA/(P^\star_{\min}\varepsilon^2)\big)$ samples, respectively. Here, $P^\star_{\min}$ denotes the minimum positive entry of the nominal transition probability induced by the optimal robust policy. Our results do not require any additional assumptions, and enjoy a better complexity compared to previous work \cite{ICML2025_OnlineDRMDPSampleComplexity_He} (Refer to Table \ref{tab:comparison_theoretical_guarantees} for details and Section \ref{sec:Problem_setup_Arxiv} for relevant notation).

    \item We further develop hard instances to derive the minimax lower bound for online DRRL, highlighting the fundamental difficulty of the problem. Our results show that for any online algorithm, there exists a hard instance requiring a sample complexity of at least $\Omega\big(H^5(1+\sigma)SA/\varepsilon^2\big)$ and $\Omega\big(H^5SA/(P^\star_{\min}\varepsilon^2)\big)$ to find an $\varepsilon$-optimal policy under {\RMDPchi} and {\RMDPKL}. This demonstrates the near-optimality of our algorithms, which is minimax-optimal up to logarithmic factors under $\chi^2$ set, and matches the lower bound up to the dependency on $H$ under the KL set.

    \item We validate our methods through extensive experiments on the Gambler’s problem and Frozen Lake, demonstrating strong performance under significant distribution shifts and supporting our theoretical findings. 
\end{itemize}

\section{Related Work}
\label{app:related_work}

We provide a detailed discussion on other related works in this section.

\textbf{Distributionally Robust Reinforcement Learning (DRRL).}
The study of distributionally robust MDPs (RMDPs) originates from
\cite{INFORM2005_RobusDP_Iyengar, INFORMS2005_RobustMDPUncertaintyTransitionMatrix_Nilim, NeurIPS2006_RobustnessMDP_Xu, INFORM2013_RMDP_Wiesemann, INFORMS2016_RMDPkRectangualr_Mannor},
where the uncertainty set is known
and dynamic programming based planning methods are studied. Subsequent research incorporates learning, where the nominal model is unknown and must be inferred from data. Broadly, two data-access models have been investigated:
(i) the \emph{generative model} setting, where the learner may sample any state-action pair under the nominal kernel
\cite{AnnalsStat2022_TheoreticalUnderstandingRMDP_Yang, PMLR2022_SampleComplexityRORL_Panaganti, NeuRIPS2023_CuriousPriceDRRLGenerativeMdel_Shi, INFORMS2023_DRBatchContxBandits_Si, PMLR2023_SampleComplexityDRQLearning_Wang, PMLR2023_ImprovedSampleComplexityDRRL_Xu, JMLR2024_SampleComplexityVarianceReducedDRQLearning_Wang,Arxiv2023_TowardsMinimaxOptimalityRobustRL_Clavier, NeurIPS2021_OnlineRobustRLModelUncertainty_Wang,wang2022policy,wang2023model,wang2024modelfree},
and (ii) the \emph{offline} setting, where one is given a fixed dataset collected under (possibly unknown) behavior policies
\cite{PMLR2021_DROTabularRL_Zhou, JMLR2024_DROfflineRLNearOptimalSampelComplexity_Shi, NeurIPS2022_RobustRLOffline_Panaganti, NeuRIPS2023_DoublePessimismDROfflineRL_Blanchet, NeurIPS2024_MinimaxOptimalOfflineRL_Liu, Arxiv2025_LinearMixtureDRMDP_Liu, Arxiv2024_SampleComplexityOfflineLinearDRMDP_Wang, Arxiv2022_OfflineDRRLLinearFunctionApprox_Ma, NeuRIPS2024_UnifiedPessimismOfflineRL_Yue,zhang2025modelfree}.
Both lines provide sample-complexity guarantees under their respective access models, but they do not address the purely online interactive regime studied in this paper.

More recently, robust RL has also been explored through regularized RMDPs (RRMDPs), also known as penalized
or soft robust MDPs
\cite{Arxiv2023_RMDPModelEstimation_Yang, Arxiv2023_SoftRMDPRiskSensitive_Zhang, Arxiv2024_ModelFreeRobustRL_Panaganti, Arxiv2024_OfflineRobustRLFDivg_Tang},
where robustness is enforced through a divergence-based penalty rather than
hard uncertainty constraints. The algorithm design and analysis therein, however, are distinct from ours.

DRRL is also related to risk-measure RL, such as entropic value-at-risk and Orlicz-heart risk measures
\cite{ahmadi2012entropic, cheridito2009risk}, and risk-sensitive RL, e.g., \cite{osogami2015robust,artzner1999coherent,rockafellar2000optimization,deng2025near,godbout2021acrel,
chow2015risk,
tamar2015optimizing,
chow2018risk,
tamar2015optimizing,
kashima2007risk}. However, most existing risk-sensitive RL formulations related to entropic risk or alternative risk criteria
(e.g., entropic-VaR-based RL and Gini-deviation-based risk-averse RL
\cite{ni2022risk, luo2023alternative})
either assume known or generative models, or focus on asymptotic or static performance, and do not provide online, instance-level regret guarantees.

\textbf{Online Distributionally Robust Reinforcement Learning (Online DRRL).}
Online robust RL, where the agent learns exclusively from its own interaction with an unknown environment, has been comparatively less explored than offline or generative-model settings. Prior works such as
\cite{NeurIPS2021_OnlineRobustRLModelUncertainty_Wang, PMLR2021_RobustRLLeastSquaresPolicyIteration_Badrinath}
study infinite-horizon RMDPs with model uncertainty and design algorithms under $R$-contamination or general uncertainty sets, but their guarantees typically rely on assuming access to sufficiently exploratory behavior, effectively presupposing good coverage. This assumption is difficult to justify in many real-world applications.

Several recent papers move closer to the purely interactive robust setting considered here.
For example,
\cite{PMLR2024_LinearFunctionDRMDP_Liu, Arxiv2024_UpperLowerDRRL_Liu, Arxiv2024_DRORLwithInteractiveData_Lu}
analyze TV-based RMDPs and obtain upper and/or lower bounds under additional structural conditions (such as fail-states or vanishing minimal values) that constrain how adversarial shifts can occur and thereby alleviate the exploration challenge.
The work of
\cite{ICML2025_OnlineDRMDPSampleComplexity_He}
dispenses with such structural assumptions but instead imposes a coverage-type condition (the supremal visitation ratio) to ensure that critical regions are sufficiently explored.
In contrast to these results, we consider finite-horizon RMDPs with general $(s,a)$-rectangular $f$-divergence ambiguity sets, including $\chi^2$ and KL, and we design algorithms that achieve near-optimal regret without relying on fail-state structures, vanishing minimal values, or explicit visitation-ratio assumptions.
Our contributions are therefore complementary: we address a different class of uncertainty sets and remove structural conditions that are central to the guarantees in \cite{Arxiv2024_DRORLwithInteractiveData_Lu} and related work.

\textbf{Connections to Non-Robust Online RL.}
Our work also relates to the extensive literature on non-robust online RL in tabular MDPs
\cite{PMLR2017_MinimxRegretBoundNonRobustRL_Azar, NeurIPS2017_UnifyingPACEpisodicRL_Dann, NeuRIPS2018_QLearningEfficient_Jin, PMLR2019_TighterProblemDependentRegretRL_Zanette, NeurIPS2020_AlmostOptimalModelFreeRL_Zhang, PMLR2021_UCBMomentumQLearning_Menard, PMLR2022_NearOptimalPolicyStableTImeGuarantee_Wu, INFORMS2024_QlearningTIghtSampleComplexity_Li, PMLR2021_RLDifficultThanBandits_Zhang},
which establishes minimax-optimal regret rates (e.g., $\widetilde{O}(\sqrt{H^3SAK})$ for UCB-VI \cite{PMLR2017_MinimxRegretBoundNonRobustRL_Azar}).
A standard MDP can be viewed as a special case of an RMDP where the uncertainty radius is zero; in this sense, robust algorithms should recover classical guarantees when robustness is inactive.
Recent advances further extend online RL to linear and more general function approximation
\cite{PMLR2020_EfficientRLLinearFuncApprox_Jin, PMLR2021_MinimaxOptimalLenarMDP_Zhou, NeurIPS2021_BellmanEluderDim_Jin, Arxiv2022_GECUnifiedFrameworkMDPPOMDPBeyond_Zhong, CoRR2023_MaxObjFusingEst_Liu, PMLR2024_HorizonFreeInstanceDependentRegretBoundFuncApprox_Huang}.
Our results complement this line of work by showing how the robust objective and $f$-divergence uncertainty sets fundamentally modify the exploration requirements and attainable regret bounds, even in the tabular setting.

\textbf{Distinction from Corruption-Robust RL.}
Finally, we distinguish our setting from corruption-robust RL, which studies robustness when the \emph{training data} itself may be corrupted or adversarially perturbed
\cite{PMLR2021_CorruptedEpisodicRL_Lykouris, PMLR2022_ModelSelectionApproachCorruptedRobustRL_Wei, PMLR2022_CorruptedRobustOfflineRL_Zhang, yArxiv2024_RobustRLAdvCorruption_Ye, PMLR2023_CorruptedRobustRLContextualBandits_Ye, NeurIPS2023_CorruptedRobustOfflineRLFuncApprox_Ye}.
DRRL instead aims to guarantee performance under \emph{test-time} shifts in the environment dynamics.
While both paradigms pursue reliable decision-making under uncertainty, they address different failure modes and are not directly interchangeable, and our analysis is confined to the latter.

\section{Preliminaries and Problem Formulation}
\label{sec:Problem_setup_Arxiv}

\subsection{Distributionally Robust Markov Decision Process (RMDPs).}
We begin by recalling the standard episodic finite-horizon RMDP formulation \cite{INFORM2005_RobusDP_Iyengar}.
An RMDP is given by $(\mathcal{S},\mathcal{A},H, \mathcal{P}, r)$, where $\mathcal{S}=\{1,\dots,S\}$ is a finite state space, $\mathcal{A}=\{1,\dots,A\}$ is a finite action space, $H$ is the horizon length, and $r = \{r_h\}_{h=1}^H$ is a collection of reward functions with $r_h: \mathcal{S}\times \mathcal{A}\rightarrow [0, 1]$. The set $\mathcal{P}=\{\mathcal{P}_h\}_{h=1}^H$ denotes an uncertainty set of transition kernels. At step $h$, the agent at state $s_h$ selects an action $a_h$, receives reward $r_h(s_h,a_h)$, and transited to next state $s_{h+1}$ following an arbitrary transition kernel kernel $P_h \in \mathcal{P}_h$.

We consider the standard $(s,a)$-rectangular uncertainty model with a divergence-ball structure \cite{INFORM2013_RMDP_Wiesemann}. Specifically, let $P^{\star} = \{P^{\star}_h\}_{h=1}^H$ be a \emph{nominal} transition kernel with $P^\star_h(\cdot|s,a)\in\Delta(\mathcal{S})$. The ambiguity set centered at $P^\star$ is defined as
\[
\mathcal{P}=\mathcal{U}^{\sigma}(P^{\star})
= \bigotimes_{(h,s,a)\in [H]\times \mathcal{S} \times \mathcal{A}}\mathcal{U}^{\sigma}_h(s,a),
\]
where
\[
\mathcal{U}^{\sigma}_h(s,a)
=\big\{P\in\Delta(\mathcal{S}): D(P,P^\star_h(\cdot|s,a))\leq \sigma\big\}
\]
collects all transition kernels within radius $\sigma\ge 0$ under a chosen probability divergence functions
\cite{INFORM2005_RobusDP_Iyengar, PMLR2022_SampleComplexityRORL_Panaganti, AnnalsStat2022_TheoreticalUnderstandingRMDP_Yang}.
In this work, we focus on the important class of $f$-divergence-based ambiguity sets, recalled below (see also Section~\ref{app:RMDP_f_Divg} for further discussion).

\begin{defi}[$f$-Divergence Uncertainty Set]
\label{def:f_divergence_uncertainty}
For each $(s,a)$ and step $h$, the uncertainty set is
\begin{align}
\label{eq:f_divg_Uncertainty}
\mathcal{U}^{\sigma}_h(s,a)
= \left\{ P \in \Delta(\mathcal{S}): D_f\Big(P,P^{\star}_h(\cdot|s,a)\Big)\leq \sigma \right\},
\end{align}
where
\[
D_f\big(P,P^{\star}_h(\cdot|s,a)\big)
= \sum_{s' \in \mathcal{S}} f\!\left( \frac{ P(s')}{P^{\star}_h(s'|s,a)} \right)
P^{\star}_h(s'|s,a)
\]
is the $f$-divergence \cite{sason2016f}.
\end{defi}
We refer to an RMDP $(\mathcal{S},\mathcal{A},H, \mathcal{P}, r)$ with $\mathcal{P}$ defined via \Cref{def:f_divergence_uncertainty} as an $f$-RMDP.

\subsection{Policy and Robust Value Function.}
The agent's strategy of taking actions is captured by a Markov policy $\pi := \{\pi_h\}_{h=1}^H$, with $\pi_h: \mathcal{S}\rightarrow \Delta(\mathcal{A})$ for each step $h \in [H]$, where $\pi_h(\cdot|s)$ is the probability of taking actions at the state $s$ in step $h$. In RMDPs, performance is evaluated in a worst-case sense over the ambiguity set. For any policy $\pi$ and step $h\in[H]$, the \emph{robust value function} and \emph{robust state-action value function} are
\begin{align}
    V^{\pi,\sigma}_{h}(s)
    &= \inf_{P \in \mathcal{U}^{\sigma}(P^{\star})}
    \mathbb{E}_{\pi,P}\Bigg[\sum_{t=h}^H r_t(s_t,a_t)\,\Big|\, s_h=s\Bigg] \quad \forall (h,s)\in [H]\times \mathcal{S}, \label{eq:robust_Q_fn}\\
    Q^{\pi,\sigma}_{h}(s,a)
    &= \inf_{P \in \mathcal{U}^{\sigma}(P^{\star})}
    \mathbb{E}_{\pi,P}\Bigg[\sum_{t=h}^H r_t(s_t,a_t)\,\Big|\, s_h=s,a_h=a\Bigg], \quad \forall (h,s,a)\in [H]\times \mathcal{S}\times \mathcal{A}\nonumber
\end{align}
where the expectation is taken over trajectories $\{s_h,a_h,r_h\}_{h=1}^H$ induced by $\pi$ and $P$.

The objective is to find an optimal robust policy
\begin{align}
\label{eq:optimal_policy}
\pi^\star \in \argmax_{\pi\in \Pi} V^{\pi,\sigma}_{1}(s_1),
\end{align}
for a (fixed) initial state $s_1\in\mathcal{S}$, where $\Pi$ is the set of admissible policies. Thus $\pi^\star$ maximizes the worst-case expected return over all transition kernels in $\mathcal{U}^\sigma(P^\star)$.

\subsection{RMDP with $f$-Divergence Uncertainty Set}
\label{app:RMDP_f_Divg}

n this subsection we briefly review the formulation of RMDP with $f$-divergence uncertainty sets. 

\paragraph{Rectangular structure.}
Throughout, we assume an $\mathcal{S}\times\mathcal{A}$-rectangular ambiguity set
\cite{INFORM2005_RobusDP_Iyengar}, meaning that for each $(h,s,a)$ the adversary selects a transition kernel independently from a local set $\mathcal{U}_h^{\sigma}(s,a)$.
This standard assumption decouples uncertainty across state-action pairs and ensures that robust dynamic programming is tractable.
For any kernel $P_h$ and value function $V$, we use the shorthand
\[
[\mathbb{P}_h V](s,a) := \mathbb{E}_{s'\sim P_h(\cdot|s,a)}[V(s')]
\]
when the dependence on $P_h$ is clear.

\paragraph{Dual representation of $f$-divergence balls.}
Given a nominal kernel $P_h^\star(\cdot|s,a)$ and radius $\sigma\ge 0$, the local ambiguity set in \Cref{def:f_divergence_uncertainty} is
\[
\mathcal{U}^{\sigma}_h(s,a)
=
\left\{ P \in \Delta(\mathcal{S}): D_f\big(P,P^{\star}_h(\cdot|s,a)\big)\leq \sigma \right\},
\]
where $D_f$ is the $f$-divergence \cite{sason2016f}.
For such sets, the robust next-state value
\[
\mathbb{E}_{\mathcal{U}^{\sigma}_h(s,a)}[V]
:= \inf_{P \in \mathcal{U}^{\sigma}_h(s,a)} [\mathbb{P}V](s,a)
\]
admits an equivalent dual formulation obtained via convex duality. In particular, combining standard arguments (see, e.g.,
\cite{Book2004_ConvexOpt_Boyd, AnnalsStat2022_TheoreticalUnderstandingRMDP_Yang}), one can show that
\begin{align}
\label{eq:f-div-dual-generic}
\mathbb{E}_{\mathcal{U}^{\sigma}_h(s,a)}[V]
=
\sup_{\lambda \ge 0,\ \eta \in \mathbb{R}}
\left\{
-\lambda\sigma + \eta
-\lambda \sum_{s'\in\mathcal{S}} P_h^\star(s'|s,a)\,
f^\star\!\left(\frac{\eta - V(s')}{\lambda}\right)
\right\},
\end{align}
where $f^\star$ is the convex conjugate of $f$ restricted to $[0,\infty)$. We will use this representation as a unifying template for deriving robust bonuses. This formula as given in \eqref{eq:f-div-dual-generic} follows from standard strong duality arguments; see, e.g., \cite[Lemma B.1]{AnnalsStat2022_TheoreticalUnderstandingRMDP_Yang}.

\paragraph{Special cases: TV, $\chi^2$, and KL}
For concreteness, we recall the resulting one-dimensional variational forms for three choices frequently used in robust RL; detailed derivations can be found in
\cite{AnnalsStat2022_TheoreticalUnderstandingRMDP_Yang, ICML2025_OnlineDRMDPSampleComplexity_He}. Under the $\mathcal{S} \times \mathcal{A}$-rectangularity assumption and \eqref{eq:f-div-dual-generic}, the robust expectation for any $V:\mathcal{S}\to[0,H]$ and $P^{\star}_h$ admits the following equivalent forms:

\begin{itemize}
\item \textbf{TV-divergence} ($f(t)=|t-1|$).
In this case, \eqref{eq:f-div-dual-generic} simplifies to
\begin{align}
\label{eq:dual_TV}
\mathbb{E}_{\mathcal{U}^{\sigma}_h(s,a)}[V]
=
\sup_{\eta \in [0,H]}
\left\{
-\big[\mathbb{P}^\star_h (\eta-V)_{+}\big](s,a)
- \frac{\sigma}{2}\big(\eta - \min_{s'}V(s')\big)_{+}
+ \eta
\right\}.
\end{align}

\item \textbf{$\chi^2$-divergence} ($f(t)=(t-1)^2$).
One obtains a variance-sensitive form:
\begin{align}
\label{eq:dual_chi}
\mathbb{E}_{\mathcal{U}^{\sigma}_h(s,a)}[V]
=
\sup_{\eta \in [0,H]}
\left\{
-\sqrt{\sigma\,\mathrm{Var}_{P_h^\star(\cdot|s,a)}\big((\eta-V)_{+}\big)}
+ \big[\mathbb{P}^\star_h (V-\eta)_{+}\big](s,a)
\right\}.
\end{align}

\item \textbf{KL-divergence} ($f(t)=t\log t$).
The robust expectation can be written as
\begin{align}
\label{eq:dual_KL}
\mathbb{E}_{\mathcal{U}^{\sigma}_h(s,a)}[V]
=
\sup_{\eta \in [\underline{\eta},\,H/\sigma]}
\left\{
-\eta \log\!\Big( \big[\mathbb{P}^\star_h(\exp\{-V/\eta\})\big](s,a) \Big)
- \eta\sigma
\right\},
\end{align}
where $\underline{\eta}>0$ is a regularity bound on the optimal dual variable, as commonly assumed in
\cite{NeuRIPS2023_DoublePessimismDROfflineRL_Blanchet, ICML2025_OnlineDRMDPSampleComplexity_He}.
\end{itemize}

\begin{rem}[Comparison between TV, $\chi^2$, and KL sets]
In TV-based RMDPs, information deficits can be exacerbated when rarely visited states under the nominal dynamics become critical in the worst-case model, motivating additional structural assumptions such as fail-states or vanishing minimal values
\cite{PMLR2024_LinearFunctionDRMDP_Liu, Arxiv2024_DRORLwithInteractiveData_Lu}.
For smoother $f$-divergences such as $\chi^2$ and KL, the impact of individual rare states is moderated by the geometry of the divergence, which motivates our focus on these uncertainty sets in the online setting.
\end{rem}

\subsection{Robust Bellman Equation and the Robust Optimal Policy}
For $(s,a)$-rectangular RMDPs, the robust value functions admit a dynamic programming characterization that parallels the standard Bellman equations; see, e.g., \cite{INFORM2005_RobusDP_Iyengar, INFORMS2005_RobustMDPUncertaintyTransitionMatrix_Nilim, INFORM2013_RMDP_Wiesemann}. We briefly summarize the finite-horizon formulation used in this paper.

Given any policy $\pi = \{\pi_h\}_{h=1}^H$ and ambiguity sets $\{\mathcal{U}^{\sigma}_h(s,a)\}$ as defined above, the robust state-action and value functions satisfy, for each step $h$,
\begin{align}
Q^{\pi,\sigma}_{h}(s,a)
&= r_h(s,a)
+ \mathbb{E}_{\mathcal{U}^{\sigma}_h(s,a)}\!\left[V^{\pi,\sigma}_{h+1}\right],
\label{eq:Robust_bellman_Q_fn}\\
V^{\pi,\sigma}_{h}(s)
&= \mathbb{E}_{a \sim \pi_h(\cdot|s)} \left[ Q^{\pi,\sigma}_{h}(s,a) \right],
\label{eq:Robust_bellman_V_fn}
\end{align}
with terminal condition $V^{\pi,\sigma}_{H+1} \equiv 0$.
Here, $\mathbb{E}_{\mathcal{U}^{\sigma}_h(s,a)}[\,\cdot\,]$ denotes the worst-case expectation over all kernels in the local set $\mathcal{U}^{\sigma}_h(s,a)$, as in Section~\ref{app:RMDP_f_Divg}.

The optimal robust value functions arise as the fixed point of the corresponding max–min recursion.
In particular, for any $(h,s,a)$,
\begin{align}
\label{cor:Robust_Bellman_Optimal_eq}
Q^{\star,\sigma}_{h}(s,a)
&= r_h(s,a)
+ \mathbb{E}_{\mathcal{U}^{\sigma}_h(s,a)}\!\left[V^{\star,\sigma}_{h+1}\right], \nonumber \\
V^{\star,\sigma}_{h}(s)
&= \max_{a \in \mathcal{A}} Q^{\star,\sigma}_{h}(s,a),
\end{align}
again with $V^{\star,\sigma}_{H+1}\equiv 0$.
Any policy $\pi^\star$ that is greedy with respect to $\{Q^{\star,\sigma}_{h}\}$, i.e.,
$\pi_h^\star(\cdot|s) \in \arg\max_{a} Q^{\star,\sigma}_{h}(s,a)$ for all $(h,s)$,
achieves the robust optimal value.
These properties follow from standard robust dynamic programming arguments for rectangular ambiguity sets
\cite{INFORM2005_RobusDP_Iyengar, INFORMS2005_RobustMDPUncertaintyTransitionMatrix_Nilim, NeuRIPS2023_DoublePessimismDROfflineRL_Blanchet}, see, e.g., \cite[Proposition 2.3]{NeuRIPS2023_DoublePessimismDROfflineRL_Blanchet}, 
and form the backbone of our algorithmic and regret analysis.

\subsection{Online Distributionally Robust RL.}
We now formalize the online learning setting of interest.
The learner interacts with the (unknown) nominal environment $P^\star$ over $K \in \mathbb{N}$ episodes. At the beginning of episode $k$, the agent observes an initial state $s^k_1$ and selects a policy $\pi^k$ based on its past observations. Executing $\pi^k$ in $P^\star$ generates a trajectory, which is then used to update the policy for subsequent episodes. Our performance metric is the \emph{cumulative robust regret}
\begin{align}
\label{eq:Regret_K}
\mathrm{Regret}(K)
:= \sum_{k=1}^K \Big[V^{\star,\sigma}_{1}(s^k_1) - V^{\pi^k,\sigma}_{1}(s^k_1)\Big],
\end{align}
which measures the total robust value gap between the optimal robust policy $\pi^\star$ and the learner’s sequence $\{\pi^k\}_{k=1}^K$.
This extends the classical notion of regret in standard MDPs \cite{NeuRIPS2008_NearOptimalRegetBoundRL_Auer} to the distributionally robust objective.

We also consider the \emph{sample complexity} of learning an $\varepsilon$-optimal robust policy.
Let $T = KH$ be the total number of time steps.
We say an algorithm succeeds if it outputs a policy $\widehat{\pi}$ such that
\begin{align}
V^{\star,\sigma}_1(s_1) - V^{\widehat{\pi},\sigma}_1(s_1) \leq \varepsilon.
\end{align}
Our goal is to design algorithms with sublinear robust regret and near-optimal sample complexity under the above online interaction model.

\section{Robust Value Iteration (RVI)}
\label{sec:Algorithm_Arxiv}
\begin{algorithm}[!htb]
\caption{{\Algo}}
\label{algo:ORVI}
\begin{algorithmic}[1]
\State \textbf{Input:} uncertainty level $\sigma>0$.
\State \textbf{Initialize:} Dataset \( \mathbb{D} = \emptyset \)
\For{episode \( k = 1, \dots, K \)}
    \Statex {\color{blue}*** {\bf \textsc{  Nominal Transition Estimation }} ***}
    \State Compute the transition kernel estimator $ \widehat{P}_h^k(s,a,s')$ as given in \eqref{eq:transition_estimate}.
    \Statex {\color{blue}*** {\bf \textsc{  Optimistic Robust Estimation }} ***}
    \State Set \( \overline{V}_{H+1}^k(\cdot) = \underline{V}_{H+1}^k(\cdot) = 0 \)
    \For{step \( h = H, \dots, 1 \)}
        \For{$\forall (s,a)\in \mathcal{S}\times \mathcal{A}$}
            \State Update \( \overline{Q}_h^k(s,a) \) as in \eqref{eq:Upper_estimate_Q}.
            \State Update \( \underline{Q}_h^k(s,a) \) as in \eqref{eq:Lower_estimate_Q}.
        \EndFor
        \For{$\forall s\in \mathcal{S}$}
        \State Update  $\pi_h^k(\cdot)$, $\overline{V}_h^k(\cdot)$ and $\underline{V}_h^k(\cdot)$ by \eqref{eq:policy_value_epsiode_k}.
        \EndFor
    \EndFor
    \Statex {\color{blue} *** {\bf \textsc{  Policy Execution and data collection}} ***}
    \State Receive initial state \( s_1^k \in \mathcal{S} \)
    \For{step \( h = 1, \dots, H \)}
        \State Take action \( a_h^k \sim \pi_h^k(\cdot \mid s_h^k) \), observe reward \( r_h(s_h^k, a_h^k) \) and next state \( s_{h+1}^k \).
    \EndFor
    \State Set \( \mathbb{D} = \mathbb{D} \cup \{(s_h^k, a_h^k, s_{h+1}^k)\}_{h=1}^H \).
\EndFor
\State \textbf{Output:} Randomly (uniformly) return a policy from \( \{ \pi^k \}_{k=1}^K \).
\end{algorithmic}
\end{algorithm}

In this section, we introduce {\it \Algo}, a meta-algorithm for episodic finite-horizon RMDPs with interactive data collection under $f$-divergence uncertainty sets, as defined in \Cref{def:f_divergence_uncertainty}. {\Algoname} is flexible and can be applied to various $f$-divergences, with a focus on the $\chi^2$-divergence and KL-divergence. The algorithm, detailed in Algorithm~\ref{algo:ORVI}, balances exploration and exploitation by building confidence intervals directly around the robust value function, avoiding the complexity of modeling full transition dynamics. By leveraging the structure of the $f$-divergence, it uses adaptive bonuses that reflect both uncertainty and robustness. Inspired by UCB-VI \cite{PMLR2017_MinimxRegretBoundNonRobustRL_Azar}, this leads to tighter confidence bounds, less dependence on state space size, and more efficient learning in uncertain environments.

\subsection{Algorithm Design: {\Algoname}}
Our algorithm follows a value iteration framework and integrates optimistic estimation, to derive an optimistic estimation of the robust value function. In each episode $k$, {\Algoname} proceeds in three stages as follows.

\subsubsection{Stage 1: Nominal Transition Estimation (Line 4).}
At the beginning of each episode \( k \in [K] \), we maintain an estimate of the transition kernel \( P^\star \) of the training environment by using the historical data \( \mathbb{D} = \{(s_h^\tau, a_h^\tau, s_{h+1}^\tau)\}_{\tau=1,h=1}^{k-1,H} \) collected from the interaction with the training environment. Specifically, {\Algoname} updates the empirical transition kernel for $(h,s,a,s') \in [H]\times \mathcal{S} \times \mathcal{A} \times \mathcal{S}$, as follows
\begin{align}
\label{eq:transition_estimate}
\widehat{P}_h^k(s'|s,a) &= 
\begin{cases}
    \frac{N_h^k(s,a,s') }{N_h^k(s,a)},  &\text{if } N_h^k(s,a)>0\\
    \frac{1}{|\mathcal{S}|} , &\text{if } N_h^k(s,a)=0,
\end{cases}
\end{align}
where the counts \( N_h^k(s,a,s') \) and \( N_h^k(s,a) \) are calculated on the current dataset \( \mathbb{D} \) by
\begin{align}
\label{eq:counts_on_D}
    N^k_h(s,a,s') &= \sum\limits_{\tau=1}^{k-1}\mathbf{1}\{(s^{\tau}_h, a^{\tau}_h, s^{\tau}_{h+1}) = (s,a,s^{\prime})\}, \qquad \text{ and } \qquad N^k_h(s,a)=\sum\limits_{s^{\prime}\in \mathcal{S}}N^k_h(s,a,s^{\prime}).
\end{align}
Notably, our algorithm follows a model-based approach, as it requires explicit model estimation. While this incurs a large memory cost, we emphasize that DRRL is inherently challenging in the model-free setting: the worst-case expectation is non-linear in the nominal transition kernel, making model-free estimation either biased or extremely sample-inefficient \cite{wang2023model,liu2022distributionally,wang2024modelfree,zhang2025modelfree}. 

\subsubsection{Stage 2: Optimistic Robust Estimation (Lines 5--14).}
Given the empirical transition model $\widehat{P}^k$ obtained, {\Algoname} performs optimistic robust planning to construct the policy $\pi^k$ for episode $k$. Specifically, we construct an optimistic estimation of the robust value function for policy execution. Such an approach, known as the Upper-Confidence-Bound (UCB) method, is shown to be effective in online interactive learning in vanilla RL \cite{PMLR2017_MinimxRegretBoundNonRobustRL_Azar, auer2010ucb, dann2019policy, PMLR2019_TighterProblemDependentRegretRL_Zanette, NeurIPS2020_AlmostOptimalModelFreeRL_Zhang, PMLR2021_RLDifficultThanBandits_Zhang, PMLR2021_UCBMomentumQLearning_Menard, li2021breaking, domingues2021episodic, zhang2024settling}. Specifically, interacting based on an optimistic estimation encourages the agent to explore the less visited state-action pairs.

Toward this goal, we update our estimation as follows, to ensure the estimation is optimistic. 
 At each episode \( k \), {\Algoname} maintains a bonus term to account for the difference between the robust value function of the estimated model $\hat{P}$ and the true robust value function. We then add this term to the estimation, which is constructed following the robust Bellman equation,  to ensure its optimism. Namely, for each \( (h, s, a) \in [H] \times \mathcal{S} \times \mathcal{A} \), we update the estimation as 
\begin{align}
\overline{Q}_h^k(s,a) &= \min \left\{ \overline{R}^k_h(s,a) + B^{f}_{k,h}(s,a),\; H \right\}, \label{eq:Upper_estimate_Q}\\
\underline{Q}_h^k(s,a) &= \max \left\{ \underline{R}^k_h(s,a) - B^{f}_{k,h}(s,a),\; 0 \right\}. \label{eq:Lower_estimate_Q}
\end{align}
Each of these estimates \eqref{eq:Upper_estimate_Q} and \eqref{eq:Lower_estimate_Q} consists of two components: an estimated robust Bellman operator $\overline{R}_h^k(s,a)$ or $\underline{R}_h^k(s,a)$, computed as 
\begin{align}
\overline{R}_h^k(s,a) &= r_h(s,a) + \mathbb{E}_{\widehat{\mathcal{U}}^{\sigma}_h}(s,a)[\overline{V}_{h+1}^k], \label{eq:overestimate_R} \\
\underline{R}_h^k(s,a) &= r_h(s,a) + \mathbb{E}_{\widehat{\mathcal{U}}^{\sigma}_h}(s,a)[\underline{V}_{h+1}^k], \label{eq:underestimate_R}
\end{align}
and a bonus term $B^f_{k,h}(s,a) \geq 0$. We denote $     \mathbb{E}_{\mathcal{U}^{\sigma}(s,a)}[V]:= \inf_{P \in \mathcal{U}^{\sigma}(s,a)}\mathbb{E}_{P}[V]$. The bonus term is constructed (we will discuss the construction later) to ensure the estimation becomes a confidence interval of the true robust value function, i.e., $Q^{\star,\sigma}_h\in [\underline{Q}_h^k(s,a), \overline{Q}_h^k(s,a)]$, with high probability. 

With the optimistic estimation $\overline{Q}_h^k$, we then set the execution policy for the $k$-th episode as the greedy policy with respect to the optimistic Q-estimate:
\begin{align}
\pi_h^k(\cdot \mid s) = \arg\max_{a \in \mathcal{A}}\overline{Q}_h^k(s,a), 
\end{align}
and update the robust value function estimation of $V^{\star,\sigma}_{h}$ as 
\begin{align}
&\overline{V}_h^k(s) = \max_{a \in \mathcal{A}} \overline{Q}_h^k(s,a), \quad 
\underline{V}_h^k(s) = \max_{a \in \mathcal{A}} \underline{Q}_h^k(s,a). \label{eq:policy_value_epsiode_k}
\end{align}

We remark that although the lower estimation in \eqref{eq:Lower_estimate_Q} does not affect the choice of the execution policy, it is critical for constructing valid exploration bonus terms and for establishing strong theoretical guarantees, and the algorithm leverages both upper and lower bounds to guide exploration. This strategy—optimistic robust planning—enables structured, uncertainty-aware exploration, effectively balancing the competing objectives of exploration, exploitation, and distributional robustness.

\subsubsection{Stage 3: Policy Execution and Data Collection (Lines 15--21).}
After evaluating the policy $\{\pi^k_h\}_{h=1}^H$ for episode $k$, the learner takes action based on $\pi^k_h$ and observes reward $r_h(s^k_h,a^k_h)$ and next state $s^k_{h+1}$, which gets appended to the historical dataset collected till episode $k-1$.

\subsection{Bonus of {\Algoname} under RMDP}
We then instantiate our meta-algorithm for RMDPs under both $\chi^2$-divergence and KL-divergence by explicitly constructing the corresponding bonus terms and estimation procedures.
\textbf{{\RMDPchi}:} We denote $B^{f}_{k,h}(s,a):= B^{\chi^2}_{k,h}(s,a)$, and is given by
    \begin{align}
\label{eq:Bonus_term_chi}
B^{\chi^2}_{k,h}(s,a) &=  \sqrt{\frac{\sigma c_1L\text{Var}_{\widehat{P}_h^k(\cdot|s,a)}\left[
\left( \frac{ \overline{V}_{h+1}^k + \underline{V}_{h+1}^k }{2} \right)\right]}{N_h^k(s,a)\vee 1}} +\frac{2\sqrt{\sigma}\mathbb{E}_{\widehat{P}_h^k(\cdot|s,a)}\left[\overline{V}_{h+1}^k - \underline{V}_{h+1}^k) \right]}{H} \nonumber\\
&\quad + \frac{c_2 \sqrt{\sigma}H^2 S(2L+1)}{\sqrt{N_h^k(s,a)\vee 1}} + \sqrt{\frac{\sigma}{K}},
\end{align}
where \( L = \log\Big(\frac{S^3 A H^2 K^{3/2}}{\delta}\Big) \), and \( c_1, c_2 > 0 \) are absolute constants. The term \( \delta \) is a pre-selected failure probability.

 \textbf{{\RMDPKL}:} We denote $B^{f}_{k,h}(s,a) := B^{\text{KL}}_{k,h}(s,a)$, and is given by
\begin{align}
\label{eq:Bonus_term_KL}
  B^{\text{KL}}_{k,h}(s,a)&= \frac{2c_fH}{\sigma}\sqrt{\frac{L}{\big(N_h^k(s,a)\vee 1\big)\widehat{P}^k_{\min,h}(s,a)}}  + \sqrt{\frac{1}{K}},
\end{align}
where $\widehat{P}^k_{\min,h}(s,a) = \min\limits_{s'\in \mathcal{S}}\{\widehat{P}^k_{h}(s'|s,a): \widehat{P}^k_{h}(s'|s,a)>0\}$, $L = \log\Big(\frac{S^3AH^2K^{3/2}}{\delta}\Big)$, and  $c_f>0$ is an absolute constant.

Under these constructions, \( \overline{Q}_h^k \) and \( \underline{Q}_h^k \) remain valid confidence bounds ( as shown in Lemma ~\ref{lem:Optimistic_pessimism} and \Cref{lem:Optimistic_pessimism_KL} in Appendix). Importantly, we will also show that the carefully designed bonus \eqref{eq:Bonus_term_chi} and \eqref{eq:Bonus_term_KL} ensure the confidence region is tight, resulting in a near-optimal regret bound of our algorithm. 


\subsection{Theoretical Guarantees}
\label{sec:Theoretical_guarantees}
We then develop theoretical analysis of our algorithm, under both $\chi^2$ and KL divergence uncertainty sets. 

\subsubsection{Regret Bound} We first study the regret bound of our method. For the KL-divergence uncertainty set, we adopt the following standard assumption \cite{AnnalsStat2022_TheoreticalUnderstandingRMDP_Yang,NeuRIPS2023_CuriousPriceDRRLGenerativeMdel_Shi,Arxiv2022_OfflineDRRLLinearFunctionApprox_Ma}, which ensures the regularity of the dual formulation of the distributionally robust optimization over the KL-divergence uncertainty set.
\begin{assu}
\label{ass:KL_P_min}
    We assume there exists a constant
$P^{\star}_{\min}>0$, such that for any $(h,s,a,s') \in [H]\times \mathcal{S}\times \mathcal{A} \times \mathcal{S}$, if $P^\star_h(s'|s,a)>0$, then $P^\star_h(s'|s,a)>P^\star_{\min}$. 
\end{assu}
We then present the regret bound of our algorithm. 
\begin{thm}[Regret Bound of {\Algoname}]
\label{thm:Regret_f_bound}
Consider the $\chi^2$ and KL divergence uncertainty sets. For any $\delta\in (0,1)$ and uncertainty radius $\sigma>0$, with probability at least $1-\delta$, the regret of our {\Algoname} algorithm with corresponding bonus term as  \eqref{eq:Bonus_term_chi} and \eqref{eq:Bonus_term_KL} can be bounded as:
\begin{itemize}
    \item For $\chi^2$ divergence uncertainty set, \begin{align}
        \text{Regret}(K)= \tilde{\mathcal{O}} \left(\sqrt{H^4(1+\sigma)SAK} \right);
    \end{align} 
    \item For KL divergence uncertainty set, under \Cref{ass:KL_P_min}, \begin{align}
        \text{Regret}(K)= \tilde{\mathcal{O}} \left(\sqrt{\frac{H^4\exp\big(2H^2\big)SAK}{P^{\star}_{\min}\sigma^2}} \right).
    \end{align} 
\end{itemize}
where $f(K)=\tilde{\mathcal{O}}(g(K))$ means $f(K)\leq c\cdot g(K)\cdot\textbf{Poly}(\log(K))$ for some constant $c$ and some polynomial of $\log(K)$.
\end{thm}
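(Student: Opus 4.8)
The plan is to follow the standard optimism-based regret decomposition familiar from UCB-VI, adapted to the robust setting. First I would establish the key \emph{optimism/pessimism} property: with probability at least $1-\delta$, for every episode $k$, step $h$, and state-action pair, the estimates sandwich the truth, i.e.\ $\underline{Q}^k_h(s,a)\le Q^{\star,\sigma}_h(s,a)\le \overline{Q}^k_h(s,a)$ and likewise for $V$. This is proved by backward induction on $h$: assuming the bound holds at $h+1$, one controls the deviation between the empirical robust Bellman operator $\overline{R}^k_h$ (which uses $\widehat{P}^k_h$ and the dual formulas of \Cref{cor:dual_f_divergence}) and the true one applied to $V^{\star,\sigma}_{h+1}$. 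The bonus terms $B^{\chi^2}_{k,h}$ and $B^{\mathrm{KL}}_{k,h}$ are designed precisely so that this deviation is dominated; the $\chi^2$ bonus absorbs a Bernstein-type variance term plus lower-order $S$-dependent terms, while the KL bonus uses the $1/\widehat{P}^k_{\min,h}$ factor to handle the sensitivity of the exponential/$\log$ dual expression to perturbations of the nominal kernel (this is where \Cref{ass:KL_P_min} and the regularity of \Cref{ass:Regularity_KL} enter). I would cite the appendix lemmas (Lemma~\ref{lem:Optimistic_pessimism}, Lemma~\ref{lem:Optimistic_pessimism_KL}) for the details, since the concentration machinery is routine but tedious.

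Given optimism, the per-episode regret is bounded by the width of the confidence interval at the initial state:
\begin{align}
\text{Regret}(K)=\sum_{k=1}^K\big[V^{\star,\sigma}_1(s^k_1)-V^{\pi^k,\sigma}_1(s^k_1)\big]\le \sum_{k=1}^K\big[\overline{V}^k_1(s^k_1)-\underline{V}^k_1(s^k_1)\big].
\end{align}
The next step is to unroll this width recursively down the horizon. Using the greedy choice of $\pi^k_h$ and monotonicity of the robust Bellman operator (which follows from the dual forms being monotone in $V$), one shows that $\overline{V}^k_h(s^k_h)-\underline{V}^k_h(s^k_h)$ is controlled by $2B^f_{k,h}(s^k_h,a^k_h)$ plus the robust-Bellman-propagated gap $\mathbb{E}_{\widehat{\mathcal{U}}^\sigma_h}[\overline{V}^k_{h+1}]-\mathbb{E}_{\widehat{\mathcal{U}}^\sigma_h}[\underline{V}^k_{h+1}]$. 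A key subtlety is that the robust (inf-over-uncertainty-set) operator is a contraction but the ``propagated'' next-state distribution is not simply $\widehat{P}^k_h(\cdot|s^k_h,a^k_h)$ — it is the worst-case kernel in $\widehat{\mathcal{U}}^\sigma_h$. For the $\chi^2$ case one bounds the worst-case kernel's density ratio by $1+\sqrt{\sigma}$ type factors (hence the $(1+\sigma)$ dependence), while for KL one uses a change-of-measure/importance-weight argument whose exponent blows up as $\exp(H/\eta)$ over $H$ steps, producing the $\exp(2H^2)$ factor. This yields a recursion that telescopes to $\sum_{k,h}$ of the bonus terms along the trajectory actually visited under $\pi^k$, plus a martingale difference term handled by Azuma–Hoeffding.

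The final step is to sum the bonuses. The dominant bonus term has the form $\sqrt{c\,\sigma L\,\mathrm{Var}(\cdot)/(N^k_h(s,a)\vee 1)}$ (for $\chi^2$) or $\frac{2c_fH}{\sigma}\sqrt{L/((N^k_h(s,a)\vee 1)\widehat{P}^k_{\min,h})}$ (for KL). Summing $1/\sqrt{N^k_h(s,a)\vee 1}$ over episodes via the pigeonhole bound $\sum_{k=1}^K 1/\sqrt{N^k_h(s^k_h,a^k_h)\vee 1}\le \tilde{\mathcal{O}}(\sqrt{SAK})$ per step, then summing over $H$ steps, and invoking a law-of-total-variance argument to replace $\sum_h \mathrm{Var}$ by $\mathcal{O}(H^2)$ (rather than $H^3$), gives the claimed $\tilde{\mathcal{O}}(\sqrt{H^4(1+\sigma)SAK})$ and $\tilde{\mathcal{O}}(\sqrt{H^4\exp(2H^2)SAK/(P^\star_{\min}\sigma^2)})$ bounds; the lower-order $S$-dependent bonus terms and the $\sqrt{\sigma/K}$, $\sqrt{1/K}$ slack terms contribute only lower-order or constant amounts after summation. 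The main obstacle I anticipate is the second step's recursion in the KL case: carefully tracking how the worst-case-kernel change of measure compounds over the horizon without losing more than $\exp(2H^2)$, and ensuring the lower-estimate $\underline{V}^k$ stays a valid pessimistic bound under the nonlinear KL dual (which is why the algorithm maintains it explicitly) — the $\chi^2$ case is comparatively clean because its dual is variance-based and amenable to Bernstein-type control.
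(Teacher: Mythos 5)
Your overall architecture --- optimism via the bonus, regret bounded by the confidence width at the initial state, a backward recursion over the horizon, Azuma--Hoeffding for the martingale terms, a pigeonhole bound on $\sum_{k,h}1/\sqrt{N^k_h\vee 1}$, and a total-variance argument --- matches the paper. The gap is in your second step, where you propagate the width $\overline{V}^k_{h+1}-\underline{V}^k_{h+1}$ through ``the worst-case kernel in $\widehat{\mathcal{U}}^\sigma_h$'' and invoke a density-ratio / change-of-measure argument (ratio $1+\sqrt{\sigma}$ for $\chi^2$, importance weights for KL). Taken literally this does not work: the data and the counts $N^k_h(s,a)$ are generated under the nominal kernel $P^\star$, so a recursion whose propagation kernel is the worst-case one cannot be telescoped against the trajectory actually visited without precisely the visitation-ratio assumption the theorem is meant to avoid; moreover, compounding a per-step ratio of $1+\sqrt{\sigma}$ over $H$ steps gives $(1+\sqrt{\sigma})^H$, not the claimed polynomial $(1+\sigma)$. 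The paper's proof sidesteps this entirely via the elementary observation that $P^\star_h(\cdot|s,a)\in\mathcal{U}^\sigma_h(s,a)$, hence $\mathbb{E}_{\mathcal{U}^\sigma_h(s,a)}[\overline{V}^k_{h+1}]\le[\mathbb{P}^\star_h\overline{V}^k_{h+1}](s,a)$ (see \eqref{eq:Regret_step6}); the recursion therefore unrolls along the nominal dynamics that generate the data, with only a benign multiplicative factor $(1+16\sqrt{\sigma}/H)^H$ (a $\sigma$-dependent constant) coming from the bonus in the $\chi^2$ case, and $(\exp\{H/\underline{\eta}\})^H$ in the KL case coming from the Lipschitz constant of the log-exponential dual in \eqref{eq:Regret_bound_KL_B} --- not from importance weighting.

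Relatedly, your attribution of the $(1+\sigma)$ factor is off. In the paper it arises from the total variance law (\Cref{lem:total_variance_bound}): the robust value function $V^{\pi^k,\sigma}$ satisfies the Bellman recursion under the adversarial kernel $\widetilde{T}^k$, so the standard bound $\sum_h\text{Var}\le 2H^2$ per episode (\Cref{lem:TOtal_variance_law}) holds under $\widetilde{T}^k$, and converting back to variances under $P^\star$ costs an additional $O(\sigma H^3)$ per episode; combined with $\sum_{k,h}1/(N^k_h\vee 1)=\tilde{\mathcal{O}}(HSA)$ via Cauchy--Schwarz, this yields the $\sqrt{H^4(1+\sigma)SAK}$ rate. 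Your proposal to ``replace $\sum_h\text{Var}$ by $O(H^2)$'' would overshoot to $\sqrt{H^3SAK}$ and misses where $\sigma$ enters. The remaining steps of your outline (optimism by backward induction with the stated bonuses, summation of the bonuses, control of the lower-order terms) do track the paper's Lemmas \ref{lem:Optimistic_pessimism}, \ref{lem:Proper_bouns_optimism_pessimism_bound} and \ref{lem:Control_Bonus}, so the fix is localized to the propagation step.
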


Our results imply that our algorithm achieves a sublinear regret of $\tilde{\mathcal{O}}(\sqrt{K})$ in both uncertainty sets, ensuring efficient robust policy learning from interactive data. Notably, the exponential term in the KL case is standard due to the complicated structure of the uncertainty set, e.g.,  \cite{NeuRIPS2023_DoublePessimismDROfflineRL_Blanchet,PMLR2022_SampleComplexityRORL_Panaganti}.

\subsubsection{Sample Complexity}
As a direct consequence, we derive the sample complexity to learn an $\varepsilon$-optimal policy through {\Algonamechi} and {\AlgonameKL}. Using a standard online-to-batch conversion \cite{NeuRIPS2001_OnlineLearningAlgo_Cesa}, we have the following results. 
\begin{cor}[Sample Complexity of {\Algoname}]
\label{cor:Sample_Complexity_bound_chi}
Under the same setup in \Cref{thm:Regret_f_bound}, with probability at least $1 - \delta$, {\Algoname} obtains an $\varepsilon$-optimal policy with
\begin{align}
\label{eq:Sample_Complexity_bound}
T = KH = 
\begin{cases}
\tilde{\mathcal{O}}\left(\frac{H^5(1+\sigma)SA}{\varepsilon^2}\right), & \text{{\RMDPchi}} \\
\tilde{\mathcal{O}}\left(\frac{H^5\exp\big(2H^2\big) SA}{P^{\star}_{\min} \sigma^2 \varepsilon^2}\right), & \text{{\RMDPKL}}
\end{cases}
\end{align}
number of samples. 
\end{cor}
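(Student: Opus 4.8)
The plan is to apply a standard online-to-batch conversion \cite{NeuRIPS2001_OnlineLearningAlgo_Cesa} to the regret bound of \Cref{thm:Regret_f_bound}. Recall that \Cref{algo:ORVI} returns a policy $\widehat{\pi}$ drawn uniformly at random from the set $\{\pi^k\}_{k=1}^K$ of executed policies, and that (as assumed in the sequel) the initial state is fixed, $s_1^k = s_1$ for all $k$. Hence the cumulative robust regret in \eqref{eq:Regret_K} is $\text{Regret}(K) = \sum_{k=1}^K\big[V^{\star,\sigma}_1(s_1) - V^{\pi^k,\sigma}_1(s_1)\big]$, with every summand nonnegative since $\pi^\star$ is robust-optimal. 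Conditioning on the interaction history, the expected suboptimality of the random output equals the average per-episode gap:
\[
\mathbb{E}_{\widehat{\pi}}\big[V^{\star,\sigma}_1(s_1) - V^{\widehat{\pi},\sigma}_1(s_1)\big] \;=\; \frac{1}{K}\sum_{k=1}^K\big[V^{\star,\sigma}_1(s_1) - V^{\pi^k,\sigma}_1(s_1)\big] \;=\; \frac{\text{Regret}(K)}{K}.
\]

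Next I would invoke \Cref{thm:Regret_f_bound}: on an event of probability at least $1-\delta$ we have $\text{Regret}(K) = \tilde{\mathcal{O}}\big(\sqrt{H^4(1+\sigma)SAK}\big)$ under the $\chi^2$ set, and (additionally under \Cref{ass:KL_P_min}) $\text{Regret}(K) = \tilde{\mathcal{O}}\big(\sqrt{H^4\exp(2H^2)SAK/(P^\star_{\min}\sigma^2)}\big)$ under the KL set. Dividing by $K$, the average gap — and hence the conditional expected suboptimality of $\widehat{\pi}$ — is at most $\tilde{\mathcal{O}}\big(\sqrt{H^4(1+\sigma)SA/K}\big)$ and $\tilde{\mathcal{O}}\big(\sqrt{H^4\exp(2H^2)SA/(P^\star_{\min}\sigma^2 K)}\big)$, respectively. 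Forcing each of these to be at most $\varepsilon$ and solving for $K$ yields that it suffices to take $K = \tilde{\mathcal{O}}\big(H^4(1+\sigma)SA/\varepsilon^2\big)$ for $\chi^2$ and $K = \tilde{\mathcal{O}}\big(H^4\exp(2H^2)SA/(P^\star_{\min}\sigma^2\varepsilon^2)\big)$ for KL. Since $K$ episodes consume $T = KH$ samples, multiplying through by $H$ gives exactly the two cases of \eqref{eq:Sample_Complexity_bound}.

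Finally I would tighten the last step: \Cref{thm:Regret_f_bound} is a high-probability ($1-\delta$) statement, whereas $\widehat{\pi}$ carries additional internal randomness, so to phrase the conclusion as ``$\widehat{\pi}$ is $\varepsilon$-optimal with probability $1-\delta$'' one either states the guarantee in expectation over $\widehat{\pi}$ on the good event (the form in \cite{NeuRIPS2001_OnlineLearningAlgo_Cesa}), or applies Markov's inequality over the choice of $\widehat{\pi}$ together with a union bound, at the cost of adjusting absolute constants and the logarithmic factors (which absorb the extra $\log(1/\delta)$). I do not expect a substantive obstacle here; the only points requiring care are the bookkeeping of the $\textbf{Poly}(\log K)$ factors hidden inside $\tilde{\mathcal{O}}$ and correctly propagating the single failure probability $\delta$ through the conversion, both of which are routine.
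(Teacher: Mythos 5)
Your proposal is correct and follows essentially the same route as the paper, which itself only invokes the standard online-to-batch conversion of the regret bound in \Cref{thm:Regret_f_bound} (dividing by $K$, setting the average gap to $\varepsilon$, and multiplying by $H$) without spelling out further details. Your additional remark about propagating the failure probability through the randomness of the uniformly sampled output policy is a reasonable piece of bookkeeping that the paper leaves implicit.
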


The sample complexity bound for {\RMDPchi} shows linear dependence on the uncertainty radius $\sigma$, consistent with prior generative model results \citep{NeuRIPS2023_CuriousPriceDRRLGenerativeMdel_Shi, AnnalsStat2022_TheoreticalUnderstandingRMDP_Yang}. In particular, \citep[Theorem 3]{NeuRIPS2023_CuriousPriceDRRLGenerativeMdel_Shi} proved a near-optimal sample complexity of order $\mathcal{O}\left(\frac{SA(1+\sigma)H_\gamma^4}{\varepsilon^2}\right)$ for infinite-horizon stationary {\RMDPchi} with the effective horizon $H_\gamma = \frac{1}{1-\gamma}$, thus our bound aligns with this result while requiring no access to a generative model (note that the non-stationary nature of finite horizon MDPs generally requires an additional $H$ in complexity \cite{JMLR2024_DROfflineRLNearOptimalSampelComplexity_Shi}), showing the tightness of our algorithm. 

Similarly, our sample complexity bounds for {\RMDPKL} are comparable to existing results under both generative models \cite{AnnalsStat2022_TheoreticalUnderstandingRMDP_Yang, PMLR2022_SampleComplexityRORL_Panaganti} and offline settings \cite{NeuRIPS2024_UnifiedPessimismOfflineRL_Yue, JMLR2024_DROfflineRLNearOptimalSampelComplexity_Shi, NeuRIPS2023_DoublePessimismDROfflineRL_Blanchet, AnnalsStat2022_TheoreticalUnderstandingRMDP_Yang}.  Notably, in the offline setting, the sample complexity for the KL set is shown to be  $\tilde{\mathcal{O}}\left(\frac{S^2C^\star H^4e^H}{\sigma^2 \varepsilon^2}\right)$ \cite{NeuRIPS2023_DoublePessimismDROfflineRL_Blanchet}, where $C^\star$ is the converge coefficient. Our result is hence comparable to theirs. 

The appearance of exponential (or $P_{\min}^\star$-dependent) factors in KL-based bounds is not an artifact of our analysis but a direct consequence of the geometry of KL ambiguity sets. Through the dual representation of KL balls \eqref{eq:dual_KL}, robust expectations involve log-moment generating functions of $V$ under the nominal kernel, which naturally yield dependencies that can be re-expressed in terms of the minimal transition probability $P_{\min}^\star$, leading to factors of order $(P_{\min}^\star)^{-2}$ \cite{NeuRIPS2023_DoublePessimismDROfflineRL_Blanchet}. These terms are unavoidable: KL divergence permits adversarial mass shifts toward states with tiny nominal probability, and such states are intrinsically hard to estimate robustly.

Furthermore, our bounds are stated uniformly for all $\sigma>0$. When $\sigma$ is very small, the KL-ambiguity set is nearly degenerate around the nominal model, so robustness is essentially unnecessary and unified robust bounds may appear loose relative to the non-robust case. When $P_{\min}^\star$ is extremely small, rare transitions are observed too infrequently to constrain worst-case KL perturbations; any DRRL algorithm must pay a sample-complexity price to guard against adversarial behavior on such transitions \cite{INFORMS2023_DRBatchContxBandits_Si,wang2023model,NeuRIPS2023_DoublePessimismDROfflineRL_Blanchet,JMLR2024_DROfflineRLNearOptimalSampelComplexity_Shi}. Thus the $\exp(\cdot)$-type and $1/(P_{\min}^\star\sigma^2)$ dependencies in KL-robust bounds capture fundamental hardness rather than proof artifacts, and our {\RMDPKL} guarantees are effectively tight: they align with known behavior in generative and offline models while showing that interactive online learning attains comparable robustness without extra structural assumptions beyond those inherent to the KL ambiguity.

\section{Lower Bound for Online RMDP}
\label{sec:Lower_Bound_Arxiv}
\begin{table*}[t]
\centering
\renewcommand{\arraystretch}{1.3}
\caption{Comparison between {\Algoname} and prior results on online RMDP}
\label{tab:comparison_theoretical_guarantees}
\begin{tabular}{|
>{\centering\arraybackslash}p{3.4cm}|
>{\centering\arraybackslash}p{2cm}|
>{\centering\arraybackslash}p{7cm}|
>{\centering\arraybackslash}p{3.2cm}|
}
\hline
\textbf{Model Assump.} & \textbf{Algorithm} & \textbf{Regret} & \textbf{Lower Bound} \\
\hline
Vanishing minimal value \cite{Arxiv2024_DRORLwithInteractiveData_Lu} & TV-OPROVI & 
$\tilde{\mathcal{O}}\big(\sqrt{\min\{H, \sigma^{-1}\}H^2 S A K} \big)$ & 
N/A \\
\hline
\multirow{3}{=}{Supremal visitation ratio $C_{vr}$ \cite{ICML2025_OnlineDRMDPSampleComplexity_He}} 
& TV-ORBIT & 
$\tilde{\mathcal{O}}\big(C_{vr} S^2 A H^2 + \sqrt{C_{vr}H^4 S^3 A K} \big)$ & 
\multirow{3}{*}{$\Omega\Big(\sqrt{C_{vr} K}\Big)$} \\
\cline{2-3}
& $\chi^2$-ORBIT & 
$\tilde{\mathcal{O}}\big(C_{vr} S^2 A H^2 + \sqrt{C_{vr}H^4 S^3 A K} \big)$ & \\
\cline{2-3}
& KL-ORBIT & 
$\tilde{\mathcal{O}}\left(\left(1 + \frac{H \sqrt{S}}{\sigma P^\star_{\min}}\right)(C_{vr} S A H + \sqrt{C_{vr} H^2 S A K})\right)$ & \\
\hline
\multirow{2}{*}{No assump. \textbf{(our work)}} 
& {\Algonamechi} & 
$\mathcal{O}\big(\sqrt{H^4(1+\sigma)SAK}\big)$ &  $\Omega\big(\sqrt{H^4(1+\sigma)SAK}\big)$ \\
\cline{2-4}
& {\AlgonameKL} & 
$\tilde{\mathcal{O}}\left(\sqrt{\frac{H^4\exp(2H^2)S A K}{P^\star_{\min} \sigma^2}} \right)$ & 
$\Omega\left(\sqrt{\frac{H^4S A K}{P^\star_{\min} \sigma^2} }\right)$ \\
\hline
\end{tabular}
\end{table*}

To further understand the hardness of online DRRL and assess the tightness of the upper bounds presented in \cref{thm:Regret_f_bound}, we now study corresponding minimax lower bounds for {\RMDPchi} and {\RMDPKL}.

\begin{thm}[Minimax Lower Bound of Online DRRL]
\label{thm:regret_lower_bound}
For any learning algorithm $\xi$, there exist an $f$-RMDP $\mathcal{M}$ with the following regret bound with $\xi$, as long as $K\geq A$:
    \begin{align}
        \label{eq:regret_lower_bound}
      \mathbb{E}[\text{Regret}_{\mathcal{M}}(\xi, K)]
        &=
        \begin{cases}
            \Omega\Big(\sqrt{H^4(1+\sigma)SAK}\Big), &\text{{\RMDPchi}}\\
            \Omega\Big(\sqrt{\frac{H^4SAK}{(1-P^\star_{\min})\sigma^2}}\Big), &\text{{\RMDPKL}}
        \end{cases}
    \end{align}
    where $f(K)=\Omega(g(K))$ means $\limsup_{K\to\infty}\frac{f(K)}{g(K)}> 0$. 
\end{thm}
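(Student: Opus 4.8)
Both bounds follow from a reduction to composite hypothesis testing over a family of hard $f$-RMDPs, carried out through a coordinate-wise Assouad argument together with a change-of-measure bound on the law of the \emph{observed} (nominal) trajectories. I would build a family $\{\mathcal M_{\bm\theta}\}$ indexed by $\bm\theta\in\{-1,+1\}^{m}$ with $m\asymp SA$, adapting the standard finite-horizon hard instances behind the non-robust $\sqrt{H^3SAK}$ bound \cite{PMLR2017_MinimxRegretBoundNonRobustRL_Azar,domingues2021episodic} and the generative-model robust constructions \cite{NeuRIPS2023_CuriousPriceDRRLGenerativeMdel_Shi,ICML2025_OnlineDRMDPSampleComplexity_He}: a short deterministic routing phase spreads trajectories near-uniformly over $\Theta(S)$ ``decision states'' (this is where $K\ge A$ is used, to make the routing balanced); from a decision state $s$ the $A$ actions lead into an absorbing ``good''/``bad'' pair that then accrues reward $1/0$ for the remaining $\Theta(H)$ steps, with nominal probability $\bar p+\delta\,\mathbb{I}\{a=a_s(\bm\theta)\}$ into the good state. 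The reference level $\bar p$ and the separation $\delta$ are free parameters, and the uncertainty set is the $\chi^2$ (resp. KL) ball of radius $\sigma$ around this nominal kernel.

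The heart of the argument is computing the robust value gap using the dual forms in \Cref{cor:dual_f_divergence}. For $\chi^2$, \eqref{eq:dual_chi} makes the robust backup at a decision state equal $\bar p\,v_g+(1-\bar p)v_b-\sqrt{\sigma\,\mathrm{Var}_{\bar p}}\,(v_g-v_b)$ up to the $(\eta-V)_+$ truncations, so a suboptimal action costs a robust-value gap $\Delta$ of order $\delta(v_g-v_b)$ times a factor depending on $(\bar p,\sigma)$, while one visit to $s$ carries a per-visit KL $\iota$ of order $\delta^2/(\bar p(1-\bar p))$ about $a_s(\bm\theta)$; tuning $\bar p$ to the boundary of the regime where the variance penalty is non-degenerate is what turns the ratio $\Delta^2/\iota$ into the $(1+\sigma)$ factor. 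For KL, \eqref{eq:dual_KL} gives robust backup $-\eta^\star\log\!\big([\mathbb{P}^\star\exp(-V/\eta^\star)](s,a)\big)-\eta^\star\sigma$ with $\eta^\star$ of order $1/\sigma$ on the relevant branch (controlled via \Cref{ass:KL_P_min} and the lower limit $\underline\eta$); a first-order expansion in the perturbed entry of $P^\star$ shows $\Delta$ scales like $\eta^\star\delta(v_g-v_b)$ and $\iota$ like $\delta^2$ divided by a $(1-P^\star_{\min})$-type factor, producing the $\sigma^{-2}$ and $(1-P^\star_{\min})^{-1}$ dependence. (The $\exp(2H^2)$ in the KL upper bound has no lower-bound analogue, which is exactly why the two match only up to $H$-dependence under KL.)

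Given $\Delta$ and $\iota$ from the previous step, the rest is the standard online lower-bound machinery. Fix an algorithm $\xi$ and let $N_{s,a}$ be its expected number of plays of $(s,a)$ over $K$ episodes, so $\sum_{s,a}N_{s,a}=HK$. The trajectory-level chain rule gives $D_{\mathrm{KL}}\big(\mathbb{P}_{\bm\theta}\,\|\,\mathbb{P}_{\bm\theta'}\big)\lesssim N_{s,a}\,\iota$ whenever $\bm\theta'$ flips only the coordinate attached to $(s,a)$; Assouad's lemma plus averaging over $\bm\theta$ and over the balanced routing then yields $\mathbb{E}[\text{Regret}_{\mathcal M}(\xi,K)]\gtrsim \Delta K\big(1-\sqrt{(HK/SA)\,\iota}\big)$. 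Choosing $\delta$ so that $(HK/SA)\,\iota\asymp 1$ — i.e. $\delta\asymp\sqrt{SA(1+\sigma)/(HK)}$ for $\chi^2$ and the analogue for KL — makes the bracket a constant and leaves $\mathbb{E}[\text{Regret}]\gtrsim\Delta K$; substituting $\Delta$, with the absorbing-state reward calibrated to a Bernoulli mean offset of order $1/H$ (which supplies the remaining factor of $H$), gives the claimed $\Omega(\sqrt{H^4(1+\sigma)SAK})$ and $\Omega(\sqrt{H^4SAK/((1-P^\star_{\min})\sigma^2)})$.

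The main obstacle is the robust-value computation itself. I must verify that the worst-case kernel inside the $f$-divergence ball behaves exactly as the formal stationary-point calculation predicts: that the dual optimizer $\eta^\star$ is interior (so the sup in \eqref{eq:dual_chi}/\eqref{eq:dual_KL} is attained where the derivative vanishes), that the $(\eta-V)_+$ truncations are inactive at the decision states, and that the radius-$\sigma$ ball is simultaneously rich enough to move the required probability mass yet not so rich that the robust values collapse to zero. Only once this is nailed down can one extract the precise dependence of $\Delta$ and $\iota$ on $\sigma$ (and on $P^\star_{\min}$ for KL) and show that the $(1+\sigma)$ (resp. $(1-P^\star_{\min})^{-1}\sigma^{-2}$) factor arises from a genuine trade-off rather than a lossy bound; the information-theoretic steps and the horizon bookkeeping are otherwise routine adaptations of the non-robust finite-horizon lower bound.
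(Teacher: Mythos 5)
Your route is genuinely different from the paper's. The paper never perturbs the transition kernel across instances: it builds $S$ copies of a three-state RMDP sharing the \emph{same} nominal kernel ($s_1\to s_2$ w.p.\ $p$, $s_1\to s_3$ w.p.\ $1-p$, then absorbing), differing only in the mean of a Gaussian reward at $s_2$ for one designated action. The change-of-measure step is therefore the completely standard Gaussian computation (trajectory KL decomposition plus Bretagnolle--Huber over the $S$ alternatives), and all of the robustness enters through a single scalar: the worst case raises the probability of the low-value state from $p$ to $\tilde p=\sup\{p':D_f(P',P^\star)\le\sigma\}$, which multiplies every per-episode value gap. For a two-point distribution $\tilde p$ has a closed form ($p+\sqrt{\sigma p(1-p)}$ for $\chi^2$; $\tilde p\ge 1/\beta$ with $\beta=\tfrac12\log(1/P^\star_{\min})$ for KL under the range condition \eqref{eq:sigma_KL_range}), so the paper never touches the dual optimizer, the $(\eta-V)_+$ truncations, or the interiority questions you flag as your main obstacle. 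Your plan instead perturbs transitions and extracts the $\sigma$-dependence from the curvature of the robust backup, in the style of the generative-model lower bounds. For $\chi^2$ this is workable: the two-point calculation gives $\Delta^2/\iota\asymp(v_g-v_b)^2\bigl(\bar p(1-\bar p)+\sigma(2\bar p-1)^2/4\bigr)$, so any constant $\bar p$ bounded away from $1/2$ — not ``the boundary of the regime where the variance penalty is non-degenerate,'' where the amplification actually vanishes — yields the $(1+\sigma)$ factor. Your Assouad routing over $\Theta(SA)$ coordinates is also the more standard way to obtain the $S$ and $A$ factors than the paper's $\sqrt S$-scaled reward gaps concentrated at a single state.

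The genuine gap is that the entire $\sigma$- and $P^\star_{\min}$-dependence of your bound lives in the robust-value computation you have deferred, and for the KL branch the scaling you assert does not follow from what is written. You claim $\eta^\star\asymp 1/\sigma$ ``on the relevant branch'' and that $\iota$ is ``$\delta^2$ divided by a $(1-P^\star_{\min})$-type factor,'' but neither is forced by the construction as described: the optimizer of \eqref{eq:dual_KL} for a two-point value distribution depends jointly on $\sigma$, on $v_g-v_b\asymp H$, and on the tail mass $1-\bar p$, and the per-visit Bernoulli KL is $\asymp\delta^2/(\bar p(1-\bar p))$, which is inflated by $1/P^\star_{\min}$ (not $1/(1-P^\star_{\min})$) when $1-\bar p=P^\star_{\min}$ is small — so $\Delta^2/\iota$ does not visibly assemble into $H^2/\bigl((1-P^\star_{\min})\sigma^2\bigr)$. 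In the paper the analogous fact, $\tilde p\ge 1/\beta\asymp 1/\sigma$, holds only because $\sigma$ is placed in an explicit window tied to $P^\star_{\min}$ and $H$ via \eqref{eq:sigma_KL_range}; your plan needs a correspondingly explicit compatibility condition among $\sigma$, $\bar p$, and $H$ before the first-order expansion you invoke is even valid. Until $\Delta$ and $\iota$ are actually computed under such conditions — which is precisely the content of the theorem — the proposal is a plausible program rather than a proof.
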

 Noting that the previous results are all for generative model settings \cite{NeuRIPS2023_CuriousPriceDRRLGenerativeMdel_Shi} or offline setting \cite{JMLR2024_DROfflineRLNearOptimalSampelComplexity_Shi,NeurIPS2024_MinimaxOptimalOfflineRL_Liu}, our results stand for the first minimax lower bound for the more involved online setting. We also note that our upper bounds in \Cref{thm:Regret_f_bound} match with the lower bound in \Cref{thm:regret_lower_bound} in parameters \( K, S, A \), up to some logarithmic factors. This implies that our {\Algoname} is nearly minimax optimal. Our algorithm is hence the first online DRRL algorithm to achieve near-optimal sample complexity without structural assumptions.

  \textbf{Proof Sketch.} We then briefly discuss our proof of the minimax lower bound in \Cref{thm:regret_lower_bound}. The proof is inspired by techniques from \cite{Book2020_Bandit_Lattimore} and builds on the lower bound framework of \cite{ICML2025_OnlineDRMDPSampleComplexity_He}. 

 \textbf{Construction of hard instances.} We construct a family of hard RMDPs ${\bf \Xi} := \{\mathcal{M}_1, \dots, \mathcal{M}_S\}$, where each $\mathcal{M}_i$ has 3 states, $A$ actions, and horizon $H$, with $s_1$ as the initial state. All MDPs share the same nominal transition: $s_1$ transitions to $s_2$ with probability $p$ and to $s_3$ with probability $1-p$. At $s_2$, the agent receives a Gaussian reward (variance 1) whose mean depends on the action; at $s_3$, it receives a fixed reward of 1. Each $\mathcal{M}_i$ differs only in the reward associated with a single action $a_{c_i}$ at $s_2$, where $c_1 = 1$, and $c_i \neq c_j$ for $i \neq j$. In $\mathcal{M}_1$, action $a_1$ yields mean reward $\mu^\star$; in $\mathcal{M}_i$ for $i \geq 2$, action $a_{c_i}(\neq a_1)$ yields a higher mean $\sqrt{S}\mu^\star$, making it uniquely optimal.

\textbf{Worst-transition kernel.} The worst-case transition kernel $P^\omega$ is chosen that minimizes the expected value while satisfying an $f$-divergence constraint. Since $V^{\pi,\sigma}_h(s_2) < V^{\pi,\sigma}_h(s_3)$, $P^{\omega}$ for step $h=1$ is determined by the supremum value $\tilde{p} = \sup \{ p' : D_f(P^\omega_1, P^\star_1) \leq \sigma \}$ for \( f \)-divergence uncertainty set. For steps $h=2,\dots,H$, $P^{\omega}_h=1$ for state $s_2$ or $s_3$.

\textbf{Expected regret between RMDPs.} Define $\mathbb{P}^0_1$ and $\mathbb{P}^0_i$ as the probability distribution over all $K$ episodes under $\mathcal{M}_1$ and $\mathcal{M}_i$, respectively. By applying the Bretagnolle–Huber inequality (\Cref{lem:Bretagnolle_Huber_inequality}), we can bound the sum of expected regret under  $\mathcal{M}_1$ and $\mathcal{M}_i$ as
\begin{align*}
    &\mathbb{E}[\text{Regret}_{\mathcal{M}_1} + \text{Regret}_{\mathcal{M}_i}] 
    = \Omega\left(\tilde{p}(H-1)KSH\mu^\star \cdot e^{-\text{KL}(\mathbb{P}^0_1, \mathbb{P}^0_i)} \right).
\end{align*}
By substituting $\mu^\star = \sqrt{c'A/(SKp)}$, we get that
\begin{align*}
    \sup_{\mathcal{M}_i \in {\bf \Xi}} \mathbb{E}[\text{Regret}_{\mathcal{M}_i}(\xi,K)] 
    = \Omega\left( \frac{\tilde{p}}{\sqrt{p}} \sqrt{H^4KSA} \right).
\end{align*}

\textbf{Lower bounds for {\RMDPchi} and {\RMDPKL}.} For {\RMDPchi}, applying $\tilde{p} = p + \sqrt{\sigma p(1-p)}$, and setting $p=1/2$, we get the bound in \eqref{eq:regret_lower_bound} for {\RMDPchi}.

For {\RMDPKL}, we set $p = 1 - P^\star_{\min}$, and define $\beta = \frac{1}{2}\log(1/P^\star_{\min})$, such that $\tilde{p} \geq \frac{1}{\beta}$. Under appropriate range of $\sigma$,
we get the bound in \eqref{eq:regret_lower_bound} for {\RMDPKL}.\qedhere

\section{Comparisons with Prior Works}
\label{sec:Comparison_Prior_Work_Arxiv}
 We then compare our results with two most related prior studies on online DRRL \cite{Arxiv2024_DRORLwithInteractiveData_Lu,ICML2025_OnlineDRMDPSampleComplexity_He}. Other related works will be discussed in \Cref{app:related_work}. Compared to the existing works \cite{Arxiv2024_DRORLwithInteractiveData_Lu,ICML2025_OnlineDRMDPSampleComplexity_He}, our results enjoy two major advantages: {\it better applicability}, and {\it better sample efficiency}.

The work \cite{Arxiv2024_DRORLwithInteractiveData_Lu} studies online learning under the TV-divergence uncertainty set.
Their results and methods heavily rely on the additional assumption of the fail-state condition and vanishing minimal states, which effectively addresses the information deficit in their case. However, such simplifications do not extend to general $f$-divergences, such as $\chi^2$ or KL, whose worst-case solutions become more difficult \cite{INFORM2005_RobusDP_Iyengar}. A more recent work \cite{ICML2025_OnlineDRMDPSampleComplexity_He} studies all three uncertainty sets, however, their studies also rely on an assumption of the coverage of the worst-case kernel by the nominal environment. Specifically, they assume a supremal visitation ratio between the visitation distributions under the nominal and the worst-case kernels which is polynomial in $S,A,H$: $C_{vr}=\textbf{Poly}(S,A,H)$. Such an assumption similarly bypasses the information deficit as in \cite{Arxiv2024_DRORLwithInteractiveData_Lu}, inspired by the offline RL literature \cite{Anals2024_SettlingSampleComplexityModelbasedRL_Li, NeuRIPS2023_CuriousPriceDRRLGenerativeMdel_Shi}. However, both assumptions can be infeasible in practice, as there is no such prior knowledge on the distributionally RMDPs. Moreover, their implementations require an additional optimization oracle.  In contrast, our method makes no additional assumptions. We design confidence-aware updates that are fully data-driven and based on the dual representation of uncertainty sets, without any oracle. As a result, our analysis applies broadly to RMDPs with general $f$-divergences, providing robust learning guarantees while addressing the information deficit.

Moreover, our method has a better data efficiency (also see Table \ref{tab:comparison_theoretical_guarantees}). Specifically, ORBIT incurs a regret of $\tilde{\mathcal{O}}\big(\sqrt{C_{vr} S^3 A H^4 K}\big)$ for $\chi^2$ and $\tilde{\mathcal{O}}\left(\left(1 + \frac{H\sqrt{S}}{\sigma P^{\star}_{\min}}\right)\left(\sqrt{C_{vr} S A H^2 K}\right)\right)$ for KL, which are largely sub-optimal. In contrast, our regret bound does not depend on $C_{vr}$, providing more general results, and improve scaling in parameters, leading to tighter guarantees. 

On the other hand, our studies on minimax lower bound provides an assumption-free result, specifying the dependence on $S,A,H,\sigma$, which presents a more detailed and accurate study on distributionally robust online learning compared to \cite{ICML2025_OnlineDRMDPSampleComplexity_He}. Moreover, our lower bounds indicate the near-optimality of our method.  


Our method hence leads to both sharper theoretical bounds and more applicable and efficient learning in practice.

\section{Numerical Experiments}
\label{app:Numerical_Experiments_Arxiv}
\begin{figure*}[!htb]
\centering
\begin{subfigure}[b]{0.24\textwidth}
  \includegraphics[width=\linewidth]{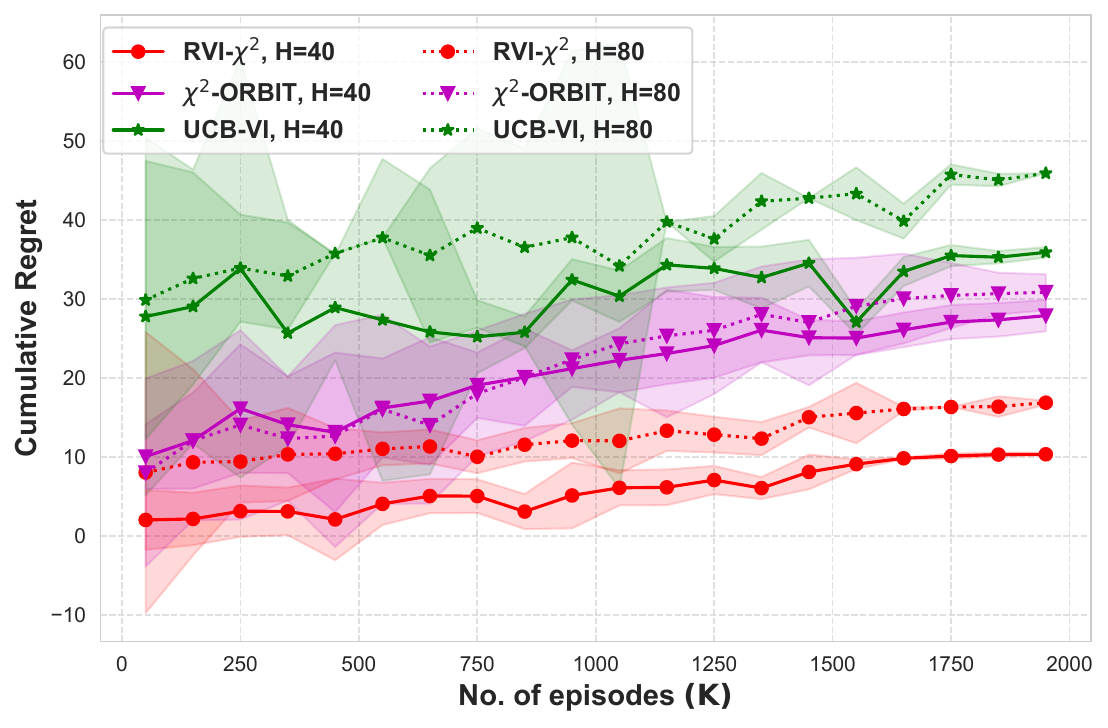}
  \caption{Regret vs. $K$ for {\RMDPchi}, $S=20$}
  \label{fig:Gambler_Chi_regret_vs_K_S20}
\end{subfigure}
\hfill
\begin{subfigure}[b]{0.24\textwidth}
  \includegraphics[width=\linewidth]{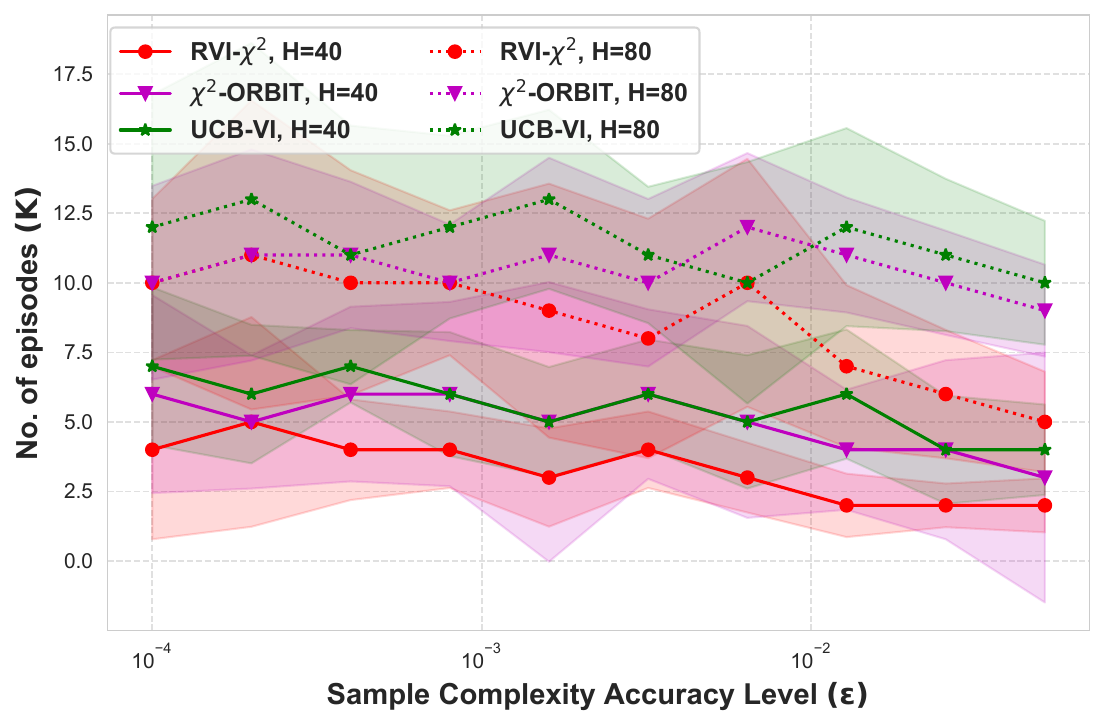}
  \caption{Accuracy level $\varepsilon$ vs. $K$ for {\RMDPchi}, $S=100$}
  \label{fig:Gambler_Chi_samplecomplexity_vs_K_S100}
\end{subfigure}
\hfill
\begin{subfigure}[b]{0.24\textwidth}
  \includegraphics[width=\linewidth]{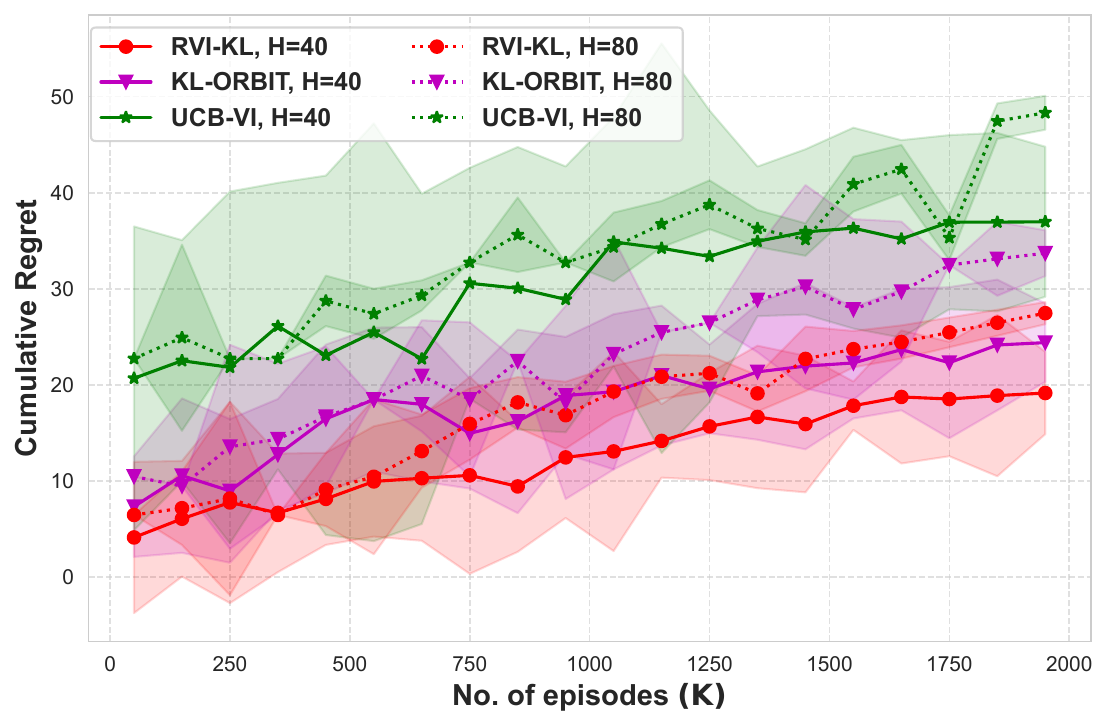}
  \caption{Regret vs. $K$ for {\RMDPKL}, $S=20$}
  \label{fig:Gambler_KL_regret_vs_K_S20}
\end{subfigure}
\hfill
\begin{subfigure}[b]{0.24\textwidth}
  \includegraphics[width=\linewidth]{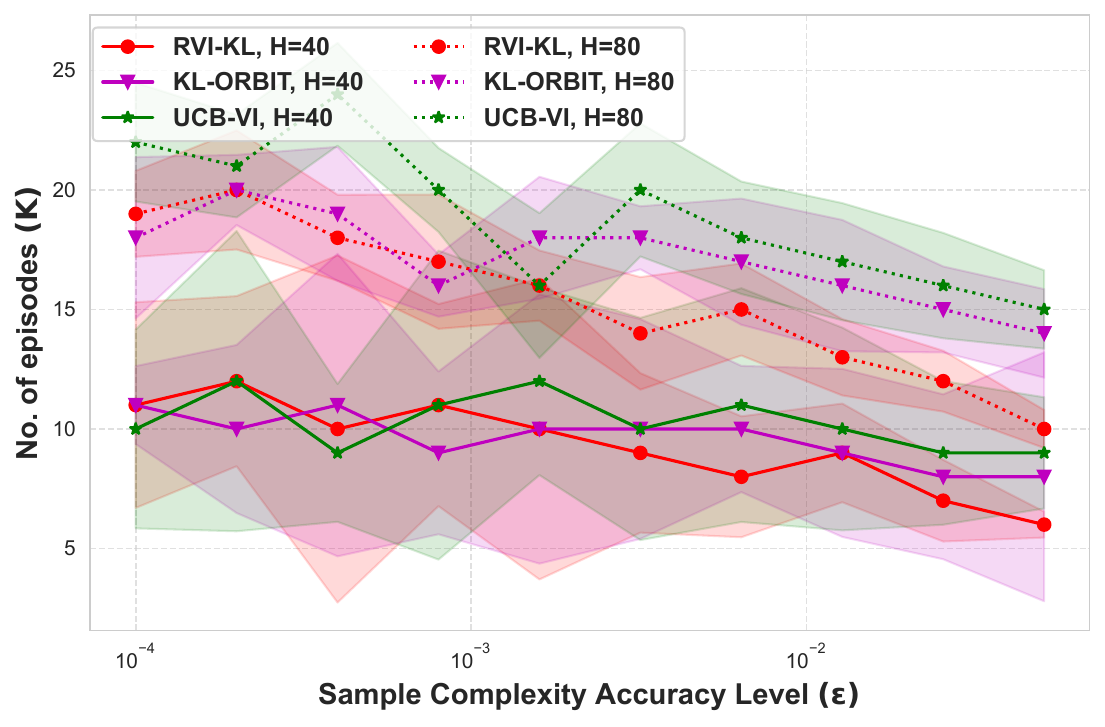}
  \caption{Accuracy level $\varepsilon$ vs. $K$ for {\RMDPKL}, $S=100$}
  \label{fig:Gambler_KL_samplecomplexity_vs_K_S100}
\end{subfigure}
\caption{Performance comparisons for the Gambler’s problem under {\RMDPchi} ($\sigma=0.05)$ and {\RMDPKL} ($\sigma=0.1)$.}
\label{fig:Gambler_comparison_all}
\end{figure*}

\begin{figure*}[!htb]
\centering
\begin{subfigure}[b]{0.24\textwidth}
  \includegraphics[width=\linewidth]{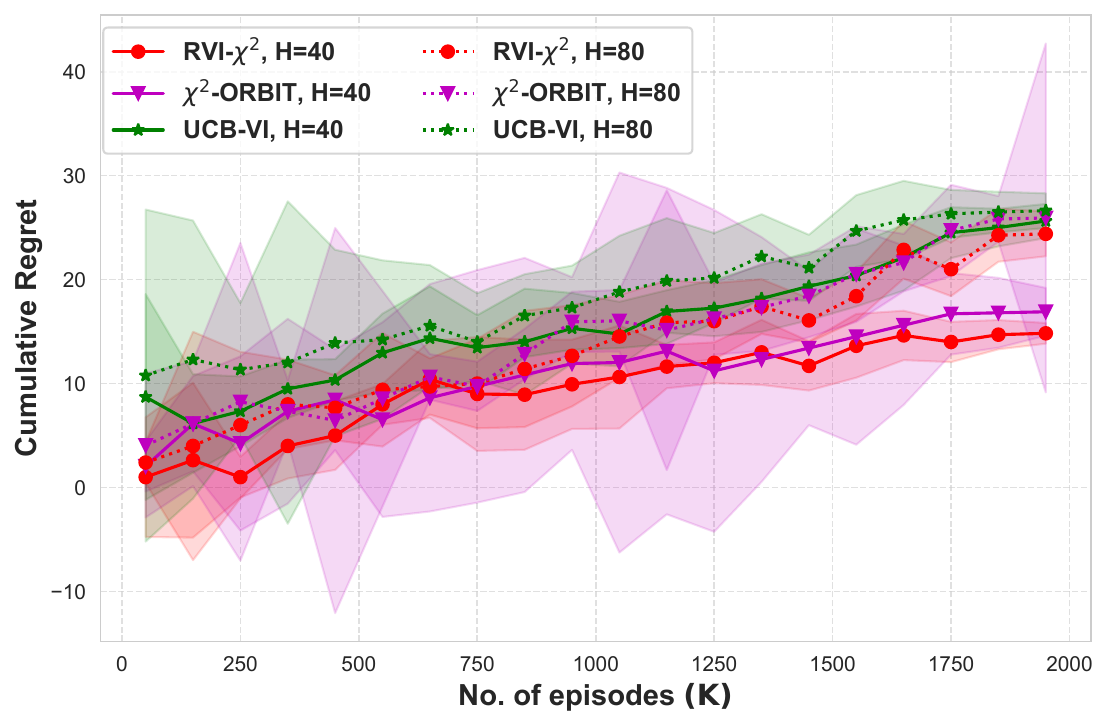}
  \caption{Regret vs. $K$ for {\RMDPchi}, Grid size= $4 \times 4$}
  \label{fig:Frozen_Chi_regret_vs_K_gridsize4}
\end{subfigure}
\hfill
\begin{subfigure}[b]{0.24\textwidth}
  \includegraphics[width=\linewidth]{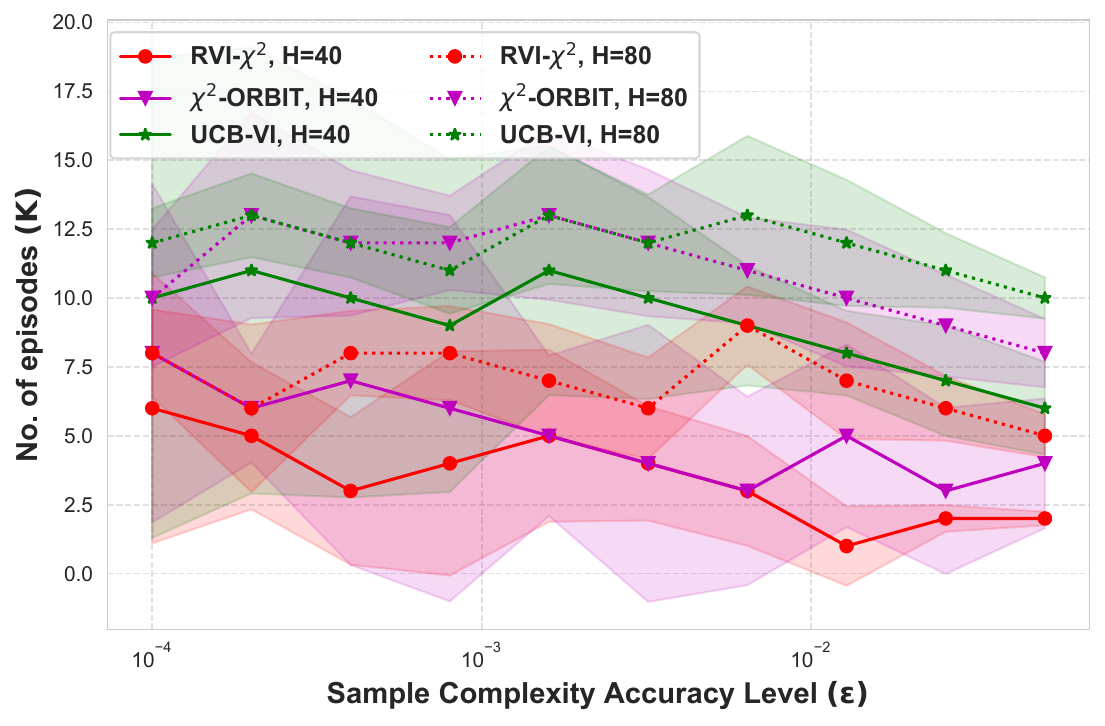}
  \caption{Accuracy level $\varepsilon$ vs. $K$ for {\RMDPchi}, Grid size= $20 \times 20$}
  \label{fig:Frozen_Chi_samplecomplexity_vs_K_gridsize20}
\end{subfigure}
\hfill
\begin{subfigure}[b]{0.24\textwidth}
  \includegraphics[width=\linewidth]{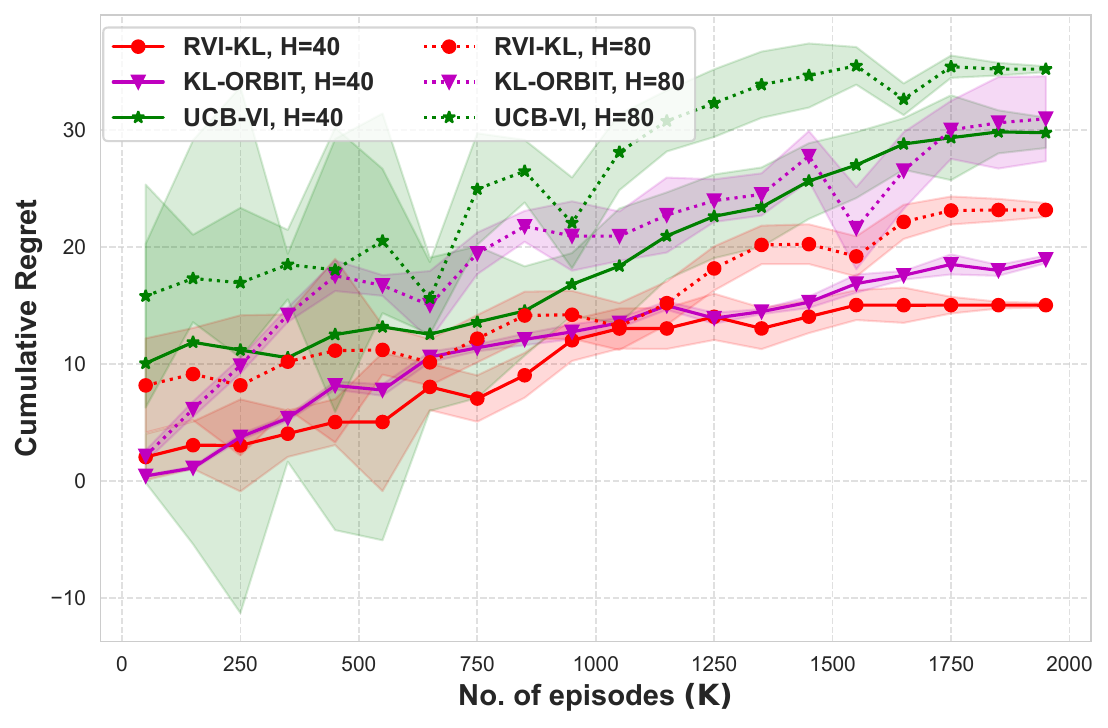}
  \caption{Regret vs. $K$ for {\RMDPKL}, Grid size = $4 \times 4$}
  \label{fig:Frozen_KL_regret_vs_K_gridsize4}
\end{subfigure}
\hfill
\begin{subfigure}[b]{0.24\textwidth}
  \includegraphics[width=\linewidth]{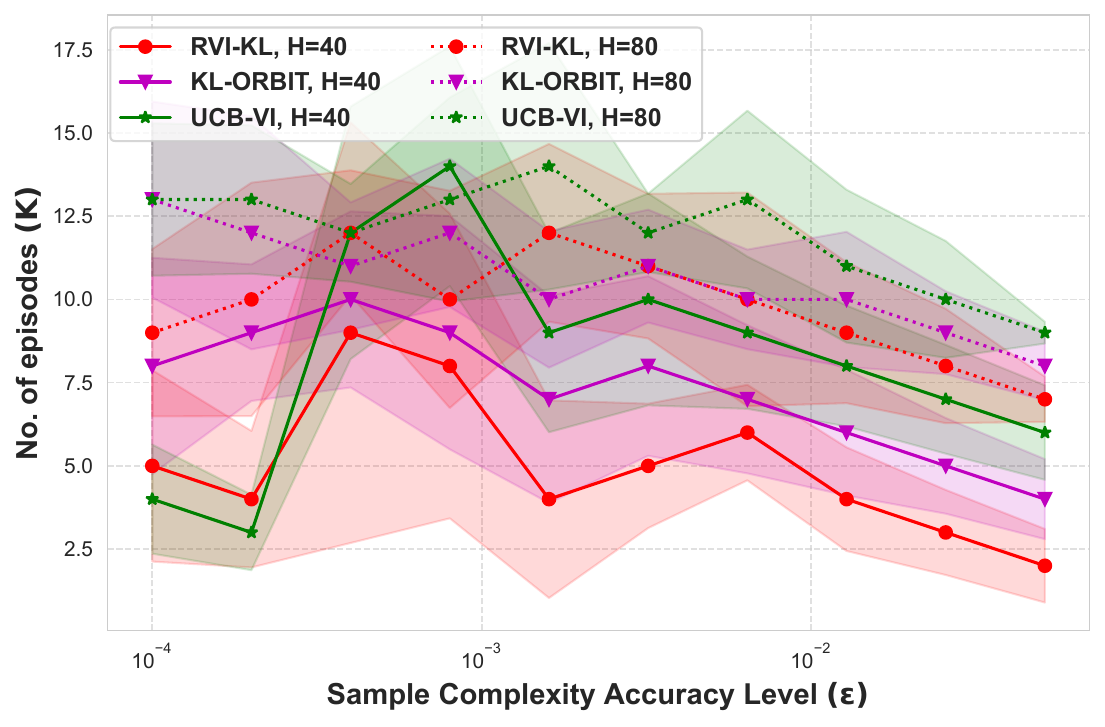}
  \caption{Accuracy level $\varepsilon$ vs. $K$ for {\RMDPKL}, Grid size = $20 \times 20$}
  \label{fig:Frozen_KL_samplecomplexity_vs_K_gridsize20}
\end{subfigure}
\caption{Performance comparisons for the Frozen Lake under {\RMDPchi} ($\sigma=0.05)$ and {\RMDPKL} ($\sigma=0.1$).}
\label{fig:Frozen_comparison_all}
\end{figure*}

In this section, we evaluate the effectiveness of {\Algoname} using two challenging environments: the Gambler’s problem \cite{Book1998_RL_Sutton, JMLR2024_DROfflineRLNearOptimalSampelComplexity_Shi, PMLR2021_DROTabularRL_Zhou} and the Frozen Lake environment \cite{Arxiv2016_openai_Brockman}. 

\subsection{Learning the Environments}
\label{app:Construction_games}
Here, we give the detailed environment setup of the Gambler’s problem  and the Frozen Lake environment, as explained below: 

\subsubsection{Gambler's Game Problem}
In the {\it Gambler’s game} \cite{Book1998_RL_Sutton, JMLR2024_DROfflineRLNearOptimalSampelComplexity_Shi, PMLR2021_DROTabularRL_Zhou}, a gambler places bets on a sequence of biased coin flips. At each step, the gambler wins the amount staked if the coin lands heads and loses it otherwise. Starting from some initial balance, the game terminates either when the gambler’s balance reaches the maximum target amount $S$, or drops to $0$, or the time horizon $H$ is reached. This setup defines an episodic MDP with state space $\mathcal{S} = \{0, 1, \dots, S\}$ and action space $\mathcal{A}(s) = \{0, 1, \dots, \min(s, S-s)\}$ at each state $s$. The transition dynamics are governed by a fixed coin bias, with the probability of heads set to $p_{\text{head}} = 0.6$ for all time steps $h \in [H]$. The reward function assigns a reward of $1$ when the state reaches $S$, and $0$ for all other cases. We evaluate this problem under two episode lengths as $H = 40$ and $H = 80$.

\subsubsection{Frozen Lake Problem}
In the {\it Frozen Lake} environment \cite{Arxiv2016_openai_Brockman}, an agent aims to cross a frozen lake of grid size $S \times S$ from a designated Start location to a Goal, while avoiding terminal Hole states. The environment is stochastic due to the slippery surface. The agent’s movement may deviate from the intended direction. The state space corresponds to the positions on the grid, and the action space includes the four cardinal directions. The agent receives a reward of $1$ when the agent reaches Goal and $0$ otherwise. Similar to the Gambler’s game, we consider two finite horizons lengths as $H = 40$ and $H = 80$.

Both environments are modeled as episodic finite-horizon RMDPs. We implement our algorithms for {\RMDPchi} and {\RMDPKL} and compare them with two baselines: (i) the standard non-robust UCB-VI algorithm \cite{PMLR2017_MinimxRegretBoundNonRobustRL_Azar}, and (ii) the robust ORBIT algorithm (using $\chi^2$-ORBIT for {\RMDPchi} and KL-ORBIT for {\RMDPKL}) from \cite{ICML2025_OnlineDRMDPSampleComplexity_He}. For all methods, we set $\sigma = 0.05$ in {\RMDPchi} and $\sigma = 0.1$ in {\RMDPKL}, obtain their output policies, and evaluate them under the corresponding RMDPs. We focus on two performance criteria: robust regret as a function of episodes $K$ and the sample complexity to achieve an $\varepsilon$-optimal robust policy. All results are averaged over 10 independent runs with confidence intervals.

\subsection{Comparisons of {\Algoname} with the Benchmark Algorithms under {\RMDPchi} and {\RMDPKL}}

\subsubsection{Robust Regret}
We first evaluate and compare the robust regrets of all algorithms. For both the {$\chi^2$-divergence} and KL-divergence uncertainty models, our proposed algorithms—{\Algonamechi} and {\AlgonameKL}—consistently demonstrate superior performance in terms of robust regret. As shown in Figures \ref{fig:Gambler_Chi_regret_vs_K_S20} and \ref{fig:Frozen_Chi_regret_vs_K_gridsize4} (for {\RMDPchi}), and Figures \ref{fig:Gambler_KL_regret_vs_K_S20} and \ref{fig:Frozen_KL_regret_vs_K_gridsize4} (for {\RMDPKL}), the cumulative regret of all considered algorithms grows sub-linearly with the number of episodes $K$ for a fixed horizon $H$. Compared to the non-robust UCB-VI, {\Algonamechi} and {\AlgonameKL} consistently achieve lower regret across all configurations. The performance gap between our algorithms and UCB-VI becomes even more pronounced as the horizon increases, highlighting the effectiveness of our distributionally robust approach in managing model uncertainty. This strong and consistent performance makes {\Algonamechi} and {\AlgonameKL} more efficient and effective choices for complex and uncertain environments, verifying our theoretical results.

\subsubsection{Sample Complexity}
The sample complexity behavior further reinforces the advantages of our proposed algorithms. Figures \ref{fig:Gambler_Chi_samplecomplexity_vs_K_S100} and \ref{fig:Frozen_Chi_samplecomplexity_vs_K_gridsize20} (for {\RMDPchi}), along with Figures \ref{fig:Gambler_KL_regret_vs_K_S20} and \ref{fig:Frozen_KL_samplecomplexity_vs_K_gridsize20} (for {\RMDPKL}), illustrate the number of episodes $K$ required to achieve a given accuracy level $\varepsilon$. In both the Gambler and Frozen-Lake environments, {\Algonamechi} and {\AlgonameKL} consistently require fewer episodes to reach the same level of accuracy compared to the benchmark algorithms. This advantage is especially noticeable at higher accuracy levels, i.e., for larger values of $\varepsilon$.
This sample-efficiency advantage becomes more significant as the horizon $H$ increases, indicating that the robust formulation of both {\Algonamechi} and {\AlgonameKL} is more sample-efficient and scales better in deeper decision-making scenarios. Overall, these results highlight the superior sample complexity of {\Algonamechi} and {\AlgonameKL}, reinforcing their effectiveness in efficiently learning robust policies under distributional uncertainty, under both $\chi^2$- or KL-divergence uncertainty sets.

\section{Conclusion}
\label{sec:Conclusion_Arxiv}

In this paper, we studied online DRRL in the RMDP framework under general $f$-divergence uncertainty sets, including $\chi^2$ and KL divergences. Unlike prior work, our approach avoids strong structural assumptions, such as vanishing minimal values or bounded supremal visitation ratios, but can still achieve convergence guarantees. We proposed a computationally efficient algorithm, $f$-ORVIT, that achieve a sub-linear robust regret bound. Moreover, we derived the minimax regret lower for online distributionally robust learning, verifying the near-optimality of our algorithm. We hence provided the first tight performance guarantees for online DRRL under these uncertainty sets. Extensive experiments on diverse environments validate our theoretical results and demonstrate the practical robustness and efficiency of our method. This work opens several promising directions, including online robust learning for large-scale problems with function approximation.


\section*{Acknowledgments}
This work was supported in part by DARPA under Agreement No. HR0011-24-9-0427. The authors thank the anonymous reviewers for their constructive feedback. 

\bibliographystyle{unsrt}  
\bibliography{Arxviv_Version_Online/reference}

\newpage
\appendix
\onecolumn
\label{appendix}
\section{Proofs of Theoretical Results of {\Algoname}}
We present the proofs of the regret bound for {\Algonamechi} and {\AlgonameKL}.

\subsection{Proof of regret bound of {\Algonamechi}}
\label{app:thm:Regret_chi_bound}

\subsubsection{Define the event \( \mathcal{E}_{\chi^2} \) for {\RMDPchi}:}
Before presenting all lemmas, we define the typical event \( \mathcal{E}_{\chi^2} \)  as

\begin{align}
\label{eq:Event}
    \mathcal{E}_{\chi^2} =\bigg\{&\abs{\sqrt{\text{Var}_{P^{\star}_h(\cdot|s,a)}(\eta-V_{h+1})_{+}}- \sqrt{\text{Var}_{\widehat{P}^{k}_h(\cdot|s,a)}(\eta-V_{h+1})_{+}}}\leq \sqrt{\frac{c_1L\text{Var}_{\widehat{P}^k_h(\cdot|s,a)}(V_{h+1})}{\{N^k_h(s,a)\vee 1\}}} + \frac{c_2HL}{\{N^k_h(s,a)\vee 1\}},\nonumber\\
    & \abs{\left[ \bigg(\mathbb{P}^\star_h - \widehat{\mathbb{P}}^k_h\bigg)\bigg(\eta-V_{h+1} \bigg)\right](s,a)}\leq \sqrt{\frac{\sigma c_1L\text{Var}_{\widehat{P}^k_h(\cdot|s,a)}(V_{h+1})}{\{N^k_h(s,a)\vee 1\}}} + \frac{c_2\sqrt{\sigma}HL}{\{N^k_h(s,a)\vee 1\}},\nonumber\\
&\abs{P^{\star}_h(s^{\prime}|s,a)-\widehat{P}^k_h(s^{\prime}|s,a)}\leq \sqrt{\frac{c_1L\min\{P^{\star}_h(s^{\prime}|s,a),\widehat{P}^k_h(s^{\prime}|s,a)\}}{\{N^k_h(s,a)\vee 1\}}} + \frac{c_2L}{\{N^k_h(s,a)\vee 1\}},\nonumber\\
    &\forall (h, s,a,s^{\prime},k) \in [H]\times \mathcal{S}\times \mathcal{A}\times \mathcal{S}\times[K], \forall \eta \in \mathcal{N}_{1/S\sqrt{K}}([0,H]) \bigg\},
\end{align}
where $L=\log(S^3AH^2K^{3/2}/\delta)$ and $c_1, c_2 > 0$ are two constants and $\eta \in \mathcal{N}_{1/S\sqrt{K}}([0,H])$, where $\mathcal{N}_{1/S\sqrt{K}}([0,H])$ denotes an $1/S\sqrt{K}$-cover of the interval $[0,H]$.

\begin{lem}[Bound of event $\mathcal{E}_{\chi^2}$]
\label{lem:confidence_event}
    For the typical event $\mathcal{E}_{\chi^2}$ defined in \eqref{eq:Event}, it holds that $\Pr(\mathcal{E}_{\chi^2}) \geq 1 - \delta$.
\end{lem}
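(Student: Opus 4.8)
The plan is to establish a high-probability bound for each of the three constituent inequalities defining $\mathcal{E}_{\chi^2}$ separately, and then combine them via a union bound. The overall structure is standard: each line in \eqref{eq:Event} is a uniform (over $(h,s,a,s',k)$ and over the net $\mathcal{N}_{1/S\sqrt{K}}([0,H])$) concentration statement for an empirical average or empirical variance around its population counterpart. First I would fix a single tuple $(h,s,a)$ and a fixed $\eta$ in the net, and condition on the filtration generated by the first $k-1$ episodes, so that the $N_h^k(s,a)$ samples of the next-state $s_{h+1}$ used in $\widehat{P}_h^k(\cdot|s,a)$ are conditionally i.i.d.\ from $P^\star_h(\cdot|s,a)$. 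The key tool for the third inequality (the per-coordinate transition estimate) is an empirical Bernstein / Bernstein-type bound that yields a bound in terms of $\min\{P^\star_h(s'|s,a),\widehat{P}^k_h(s'|s,a)\}$ — this is the standard trick of converting a Bernstein bound stated in terms of the true variance $P^\star(1-P^\star)$ into one in terms of either the true or the empirical probability, absorbing lower-order terms into the $c_2 L/(N\vee 1)$ additive term.

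For the second inequality, I would apply Freedman's inequality (or Bernstein for a fixed sample size, then handle the random $N_h^k(s,a)$ via a union bound over its possible values $1,\dots,K$) to the martingale $\big[(\mathbb{P}^\star_h-\widehat{\mathbb{P}}^k_h)(\eta - V_{h+1})\big](s,a)$, noting that $(\eta - V_{h+1})$ is a fixed (net-element-dependent) vector bounded in absolute value by $H$. The resulting Bernstein bound is controlled by $\sqrt{\operatorname{Var}_{\widehat{P}^k_h}(\eta - V_{h+1})/(N\vee 1)}$; since variance is shift-invariant, $\operatorname{Var}(\eta-V_{h+1}) = \operatorname{Var}(V_{h+1})$, giving the stated form, and the extra $\sqrt{\sigma}$ factors are inserted to match the scaling needed downstream (they only make the event larger, hence easier to satisfy). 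The first inequality — concentration of $\sqrt{\operatorname{Var}_{P^\star}(\cdot)_+}$ around $\sqrt{\operatorname{Var}_{\widehat P^k}(\cdot)_+}$ — follows by writing the difference of variances as a difference of second moments minus a difference of squared means, applying Bernstein to each empirical average, using the truncation $(\cdot)_+$ which preserves the $[0,H]$ range, and finally using the elementary inequality $|\sqrt{a}-\sqrt{b}| \le \sqrt{|a-b|}$ to pass from the variance difference to the standard-deviation difference.

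After obtaining each of the three bounds with failure probability at most $\delta/(\text{appropriate count})$ for a \emph{fixed} $(h,s,a,s',k,\eta)$, I would take a union bound over $h\in[H]$, $(s,a)\in\mathcal{S}\times\mathcal{A}$, $s'\in\mathcal{S}$, $k\in[K]$, and $\eta\in\mathcal{N}_{1/S\sqrt{K}}([0,H])$. The net has cardinality $O(SH\sqrt{K})$, so the total number of events is polynomial in $S,A,H,K$; choosing the log factor $L=\log(S^3AH^2K^{3/2}/\delta)$ exactly absorbs this cardinality (this is the reason for that particular argument of the logarithm), and choosing the absolute constants $c_1,c_2$ large enough makes each individual failure probability small enough that the union bound gives total failure probability at most $\delta$. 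A minor additional point is that the inequalities in \eqref{eq:Event} are required to hold for the \emph{actual} value functions $V_{h+1}$ produced by the algorithm, which are data-dependent and hence not fixed in advance; this is handled in the usual way by noting $V_{h+1}$ takes values in $[0,H]$ and approximating it by the nearest net element, with the $1/(S\sqrt{K})$ discretization error contributing only a lower-order term (this is also where the $\sqrt{\sigma/K}$ and $\sqrt{1/K}$ slack in the bonus terms originates). The main obstacle I anticipate is the bookkeeping for the first inequality: getting the concentration of the empirical standard deviation into precisely the stated form — in particular justifying that the cross term and the squared-mean difference can both be folded into the single expression $\sqrt{c_1 L\operatorname{Var}_{\widehat P^k_h}(V_{h+1})/(N\vee 1)} + c_2 HL/(N\vee 1)$ without losing the self-bounding structure that later makes the regret telescoping work — rather than any single concentration inequality, each of which is routine.
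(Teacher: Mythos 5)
Your proposal is correct and follows essentially the same route as the paper's proof, which simply invokes the self-bounding variance inequality of Maurer--Pontil (Lemma~\ref{lem:self_bound_variance}) together with a union bound over all tuples $(h,s,a,s',k,\eta)$ including the $\mathcal{O}(SH\sqrt{K})$-sized net, with $L$ chosen to absorb that cardinality. Your write-up is considerably more detailed (and usefully flags the data-dependence of $V_{h+1}$, which the paper's two-sentence proof glosses over), but the underlying argument is the same.
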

\begin{proof}
This result follows directly from \Cref{lem:self_bound_variance}, which is a version of Bernstein's inequality and its empirical counterpart from \cite{Arxiv2009_EmpBernsteinBounds_Maurer}. To extend the bound uniformly, we apply a union bound over all tuples $
(h, s, a, s', k, \eta) \in [H] \times \mathcal{S} \times \mathcal{A} \times \mathcal{S} \times [K] \times \mathcal{N}_{1/(S\sqrt{K})}\big([0, H]\big)$.
Here, the size of $\mathcal{N}_{1/(S\sqrt{K})}\big([0, H]\big)$ is of order $\mathcal{O}(SH\sqrt{K})$.
\end{proof}

In the following, we condition on the event $\mathcal{E}_{\chi^2}$, which, according to \Cref{lem:confidence_event}, holds with probability at least $1 - \delta$.

\subsubsection{Proof of \Cref{thm:Regret_f_bound}({\RMDPchi} Setting)}
\label{app:proof_thm_Regret_chi_bound}
\begin{proof}
With \Cref{lem:Optimistic_pessimism}, we can upper bound the regret as
\begin{align}
    \text{Regret}(K) = \sum_{k=1}^{K} V_{1}^{\star,\sigma}(s^k_1) - V_{1}^{\pi^k,\sigma}(s^k_1) 
    \leq \sum_{k=1}^{K} \overline{V}_1^k(s^k_1) - \underline{V}_1^k(s^k_1). \label{eq:Regret_step1}
\end{align}

In the following, we break our proof into three steps.
\begin{itemize}
\item \textbf{Step 1: Upper bound of \eqref{eq:Regret_step1}.} By the choice of $\overline{Q}_h^k$, $\underline{Q}_h^k$, $\overline{V}_h^k$, $\underline{V}_h^k$ as given in \eqref{eq:Upper_estimate_Q}, \eqref{eq:Lower_estimate_Q} and \eqref{eq:policy_value_epsiode_k}, and by the choice of bonus term $B^{\chi^2}_{k,h}(s,a)$ given in \eqref{eq:Bonus_term_chi} for any $(h,k) \in [H] \times [K]$ and $(s,a) \in \mathcal{S} \times \mathcal{A}$,
\begin{align}
    \overline{Q}_h^k(s,a) - \underline{Q}_h^k(s,a)
    &= \min \left\{ r_h(s,a) + \mathbb{E}_{\widehat{\mathcal{U}^{\sigma}_h}(s,a)}\left[ \overline{V}_{h+1}^k \right] + B^{\chi^2}_{k,h}(s,a), H\right\} \\
    &\quad - \max \left\{ r_h(s,a) + \mathbb{E}_{\widehat{\mathcal{U}^{\sigma}_h}(s,a)}\left[ \underline{V}_{h+1}^k \right] - B^{\chi^2}_{k,h}(s,a), 0 \right\} \\
    &\leq \mathbb{E}_{\widehat{\mathcal{U}^{\sigma}_h}(s,a)}\left[ \overline{V}_{h+1}^k \right] 
    - \mathbb{E}_{\widehat{\mathcal{U}^{\sigma}_h}(s,a)}\left[ \underline{V}_{h+1}^k \right] + 2B^{\chi^2}_{k,h}(s,a).\label{eq:Regret_step2}
\end{align}
We denote
\begin{align}
    A:= &\mathbb{E}_{\widehat{\mathcal{U}^{\sigma}_h}(s,a)}\left[ \overline{V}_{h+1}^k \right]  - \mathbb{E}_{\mathcal{U}^{\sigma}_h(s,a)}\left[ \overline{V}_{h+1}^k \right] + \mathbb{E}_{\mathcal{U}^{\sigma}_h(s,a)}\left[ \underline{V}_{h+1}^k \right] - \mathbb{E}_{\widehat{\mathcal{U}^{\sigma}_h}(s,a)}\left[ \underline{V}_{h+1}^k \right] \label{eq:Regret_A}\\
    B := &\mathbb{E}_{\mathcal{U}^{\sigma}_h(s,a)}\left[ \overline{V}_{h+1}^k \right] - \mathbb{E}_{\mathcal{U}^{\sigma}_h(s,a)}\left[ \underline{V}_{h+1}^k \right]\label{eq:Regret_B}
\end{align}
Applying \eqref{eq:Regret_A} and \eqref{eq:Regret_B} in \eqref{eq:Regret_step2}, we get
\begin{align}
   \overline{Q}_h^k(s,a) - \underline{Q}_h^k(s,a)   &\leq A + B + 2B^{\chi^2}_{k,h}(s,a). \label{eq:Regret_step3}
\end{align}

\begin{enumerate}[label=(\roman*)]
    \item \textbf{Upper bound $A$.} By applying a Bernstein-style concentration inequality tailored to {\RMDPchi}, as shown in \Cref{lem:Proper_bouns_optimism_pessimism_bound}, we can upper bound term \(A\) by the corresponding bonus term, as given by
    \begin{align}
        A \leq 2B^{\chi^2}_{k,h}(s,a). \label{eq:Regret_bound_A}
    \end{align}

    \item \textbf{Upper bound $B$.}  By the definition of $B$ as given in \eqref{eq:Regret_B}, and by \Cref{lem:Optimistic_pessimism}, we have
\begin{align}
 B &= \mathbb{E}_{\mathcal{U}^{\sigma}_h(s,a)}\left[ \overline{V}_{h+1}^k \right] - \mathbb{E}_{\mathcal{U}^{\sigma}_h(s,a)}\left[ \underline{V}_{h+1}^k \right]\leq \mathbb{E}_{\mathcal{U}^{\sigma}_h(s,a)}\left[ \overline{V}_{h+1}^k \right].\label{eq:Regret_step5}
 \end{align}
By the definition of $\mathbb{E}_{\mathcal{U}^{\sigma}_h(s,a)}[V]$ as given in \eqref{eq:dual_chi}
\begin{align}
\mathbb{E}_{\mathcal{U}^{\sigma}_h(s,a)}\left[ \overline{V}_{h+1}^k \right] &= \sup_{\eta \in [0,H]} \left\{-\sqrt{\sigma\text{Var}_{P^{\star}_h(\cdot|s,a)}(\eta - \overline{V}^k_{h+1})_{+}} + \bigg[\mathbb{P}^\star_h\Big(\overline{V}^k_{h+1}-\eta\Big)_+ \bigg](s,a)\right\},\nonumber\\
&\overset{(i)}{\leq} \sup_{\eta \in [0,H]} \left\{\bigg[\mathbb{P}^\star_h\Big(\overline{V}^k_{h+1}-\eta\Big)_+ \bigg](s,a)\right\}, \nonumber\\
&\leq \left[\mathbb{P}^\star_h\Big(\overline{V}^k_{h+1}\Big)\right](s,a),\label{eq:Regret_step6}
\end{align}
where (i) is due to the fact $\text{Var}_{P^{\star}_h(\cdot|s,a)}(\eta - \overline{V}^k_{h+1})_{+}\geq 0$. Therefore, by applying \eqref{eq:Regret_step6} in \eqref{eq:Regret_step5}, we get
\begin{align}
\label{eq:Regret_bound_B}
    B \leq \left[\mathbb{P}^\star_h\Big(\overline{V}^k_{h+1}\Big)\right](s,a).
\end{align}
\end{enumerate}

By applying \eqref{eq:Regret_bound_A} and \eqref{eq:Regret_bound_B} in \eqref{eq:Regret_step3}, we get
\begin{align}
     \overline{Q}_h^k(s,a) - \underline{Q}_h^k(s,a)  &\leq \mathbb{P}^\star_h\Big(\overline{V}^k_{h+1}\Big) + 4B^{\chi^2}_{k,h}(s,a). \label{eq:Regret_step9}
\end{align}

We recall the bound of bonus term as given in \Cref{lem:Control_Bonus}, as 
\begin{align}
\label{eq:Regret_step10}
    B^{\chi^2}_{k,h}(s,a) &\leq \sqrt{\frac{\sigma c_1L\text{Var}_{P_h^\star(\cdot|s,a)}\left[ V_{h+1}^{\pi^k,\sigma} \right]}{\{N_h^k(s,a) \vee 1\}}}+ \frac{4 \sqrt{\sigma}\left[ \mathbb{P}^\star_h\big(\overline{V}_{h+1}^k\big) \right](s,a)}{H} + \frac{c_2\sqrt{\sigma} H^2 S(2L+1)}{\sqrt{\{N_h^k(s,a)\vee 1\}}}
+ \sqrt{\frac{\sigma}{K}}.
\end{align}

By applying \eqref{eq:Regret_step10} in \eqref{eq:Regret_step9}, and after rearranging terms we further obtain that
\begin{align}
   \overline{Q}_h^k(s,a) - \underline{Q}_h^k(s,a) &\leq \left(1 + \frac{16\sqrt{\sigma}}{H}\right) \left[\mathbb{P}^\star_h\big(\overline{V}^k_{h+1} \big)\right](s,a)+ 4 \sqrt{ \frac{ \sigma c_1L\text{Var}_{P_h^{\star}(\cdot|s,a)} \left[ V_{h+1}^{\pi^k,\sigma} \right]}{\{N_h^k(s,a) \vee 1\}} } \nonumber\\
   &\qquad + \frac{4 c_2\sqrt{\sigma} H^2 S(2L+1)}{\sqrt{\{N_h^k(s,a) \vee 1\}}} + 4\sqrt{\frac{\sigma}{K}},\label{eq:Regret_step11}
\end{align}
where $c^{\sigma}_{H}, c_1, c_2 > 0$ are two absolute constants. 

For the sake of brevity, we now introduce the  following notations of differences, for any $(h,k) \in [H] \times [K]$, as given by
\begin{align}
    \Delta^k_h &:= \overline{V}^k_{h}(s^k_h) -\underline{V}^k_{h}(s^k_h),\label{eq:Delta_k_h}\\
    \zeta_h^k &:= \Delta_h^k - \left( \overline{Q}_h^k(s_h^k, a_h^k) - \underline{Q}_h^k(s_h^k, a_h^k) \right), \label{eq:zeta_h_k}\\
     \xi_h^k &:= \left[\mathbb{P}^\star_h\Big(\overline{V}^k_{h+1}\Big)\right](s^k_h,a^k_h) - \Delta_{h+1}^k.\label{eq:xi_h_k}
\end{align}
We now define the filtration $\{ \mathcal{F}_{h,k} \}_{(h,k) \in [H] \times [K]}$ as
\begin{align*}
    \mathcal{F}_{h,k} := \sigma \bigg( \Big\{\big(s_i^\tau, a_i^\tau \big)\Big\}_{(i,\tau) \in [H] \times [k-1]} \bigcup \Big\{\big(s_i^k, a_i^k\big)\Big\}_{i \in [h-1]} \bigcup \Big\{s_h^k\Big\} \bigg).
\end{align*}
Considering the filtration $\{ \mathcal{F}_{h,k} \}_{(h,k) \in [H] \times [K]}$, we can find that $\{\zeta_h^k\}_{(h,k) \in [H] \times [K]}$ is a martingale difference sequence with respect to $\{ \mathcal{F}_{h,k} \}_{(h,k) \in [H] \times [K]}$ and $\{\xi_h^k\}_{(h,k) \in [H] \times [K]}$ is a submartingale difference sequence with respect to $\{\mathcal{F}_{h,k} \cup \{a_h^k\}\}_{(h,k) \in [H] \times [K]}$. Furthermore, applying \eqref{eq:Regret_step11} in \eqref{eq:zeta_h_k}, we have
\begin{align}
\Delta_h^k &= \zeta_h^k + \left( \overline{Q}_h^k(s_h^k, a_h^k) - \underline{Q}_h^k(s_h^k, a_h^k) \right) \nonumber \\
&\le \zeta_h^k + \left(1 + \frac{16\sqrt{\sigma}}{H}\right) \left[\mathbb{P}^\star_h\big(\overline{V}^k_{h+1} \big)\right](s,a) + 4 \sqrt{ \frac{ \sigma c_1 L\text{Var}_{P_h^{\star}(\cdot|s,a)} \left[ V_{h+1}^{\pi^k,\sigma} \right]}{\{N_h^k(s_h^k, a_h^k) \vee 1\}} }  + \frac{4 c_2\sqrt{\sigma} H^2 S(2L+1)}{\sqrt{\{N_h^k(s_h^k, a_h^k) \vee 1\}}} + 4\sqrt{\frac{\sigma}{K}} \nonumber \\
&= \zeta_h^k + \left(1 + \frac{16\sqrt{\sigma}}{H}\right) \xi_h^k + \left(1 + \frac{16\sqrt{\sigma}}{H}\right)\Delta_{h+1}^k + 4 \sqrt{ \frac{ \sigma c_1 L\text{Var}_{P_h^{\star}(\cdot|s,a)} \left[ V_{h+1}^{\pi^k,\sigma} \right]}{\{N_h^k(s_h^k, a_h^k) \vee 1\}} } + \frac{4 c_2\sqrt{\sigma} H^2 S(2L+1)}{\sqrt{\{N_h^k(s_h^k, a_h^k) \vee 1\}}} + 4\sqrt{\frac{\sigma}{K}}. \label{eq:Regret_step12}
\end{align}
Recursively applying \eqref{eq:Regret_step12} and using the fact that $1 \leq \left(1 + \frac{16\sqrt{\sigma}}{H}\right)^h \leq \left(1 + \frac{16\sqrt{\sigma}}{H}\right)^H :=d^{\sigma}_{H}$ for some constant $d^{\sigma}_{H} > 0$, we can upper bound the right hand side of \eqref{eq:Regret_step1} as
\begin{align}
\text{Regret}_{\bf \Phi}(K) \leq \sum_{k=1}^K \Delta_1^k &\leq C \cdot \sum_{k=1}^K \sum_{h=1}^H \Bigg\{\left( \zeta_h^k + \xi_h^k \right) + \sqrt{ \frac{L \text{Var}_{P_h^{\star}(\cdot|s,a)} \left[ V_{h+1}^{\pi^k,\sigma} \right]}{\{N_h^k(s_h^k, a_h^k) \vee 1\}} } + \frac{H^2 S(2L+1)}{\sqrt{\{N_h^k(s_h^k, a_h^k) \vee 1\}}} + \sqrt{\frac{1}{K}}\Bigg\}. \label{eq:Regret_step13}
\end{align}
where $C > 0$ is an constant.

\item \textbf{Step 2: Upper bound on the summation of variance terms.} To make progress, it suffices to upper bound the right-hand side of \eqref{eq:Regret_step13}. The main difficulty lies in handling the sum of the variance terms, which we now analyze carefully. Applying the Cauchy–Schwarz inequality to this summation, we get
\begin{align}
\label{eq:Regret_step14}
&\sum_{k=1}^K \sum_{h=1}^H \sqrt{ \frac{ \text{Var}_{P_h^{\star}(\cdot \mid s_h^k, a_h^k)} \left[ V_{h+1, {\bf P}^{\star}, {\bf \Phi}}^{\pi^k} \right] }{\{N_h^k(s_h^k, a_h^k) \vee 1\}}}\le \sqrt{\left( \sum_{k=1}^K \sum_{h=1}^H \text{Var}_{P_h^{\star}(\cdot \mid s_h^k, a_h^k)}\left[ V_{h+1}^{\pi^k,\sigma} \right] \right)\cdot 
\left( \sum_{k=1}^K \sum_{h=1}^H \frac{1}{\{N_h^k(s_h^k, a_h^k) \vee 1\}} \right)}.
\end{align}

According to \Cref{lem:inverse_count_bound}, we have
\begin{align}
\label{eq:Regret_step15}
     \sum_{k=1}^K \sum_{h=1}^H \frac{1}{\{N_h^k(s_h^k, a_h^k) \vee 1\}} \leq c_3HSA\log(K) \leq c_3HSAL,
\end{align}
where $c_3>0$ is an absolute constant, and $L=\log(S^2AH^2K^{3/2}/\delta)$.

Moreover, according to \Cref{lem:total_variance_bound}, with probability at least $1-\delta$, we have
\begin{align}
\label{eq:Regret_step16}
    \sum_{k=1}^{K} \sum_{h=1}^{H} \text{Var}_{P^\star_h(\cdot \mid s_h^k, a_h^k)} \left[ V_{h+1}^{\pi^k,\sigma} \right] 
\leq c_4 \cdot \left( H^3L + H^3(1+\sigma)K \right),
\end{align}
where $c_4$ is the absolute constant.

Combining \eqref{eq:Regret_step15} and \eqref{eq:Regret_step16} in \eqref{eq:Regret_step14}, we get
\begin{align}
\label{eq:Regret_step17}
   \sum_{k=1}^K \sum_{h=1}^H \sqrt{ \frac{ \text{Var}_{P_h^{\star}(\cdot \mid s_h^k, a_h^k)} \left[ V_{h+1}^{\pi^k,\sigma} \right] }{\{N_h^k(s_h^k, a_h^k) \vee 1\}}} \leq c_5\sqrt{H^4L^2SA + H^4L(1+\sigma)SAK},
\end{align}
where \( c_5 > 0 \) being another absolute constant.

\item \textbf{Step 3: Conclusion the proof.} Note that according to the definition in \eqref{eq:zeta_h_k} and \eqref{eq:xi_h_k}, both $\zeta^k_h$ and $\xi^k_h$ are bounded in the range $[0,2H]$. As a result, using Azuma-Hoeffding inequality in \Cref{lem:Azuma-Hoeffding}, with probability at least \(1-\delta\),
\begin{align}
\label{eq:Regret_step18}
    \sum_{k=1}^{K} \sum_{h=1}^{H} (\zeta_h^k + \xi_h^k)
\leq c_6 \sqrt{H^3KL},
\end{align}
where \( c_6 > 0 \) is an absolute constant. Therefore, applying \eqref{eq:Regret_step18}, \eqref{eq:Regret_step17}, and \eqref{eq:Regret_step15} in \eqref{eq:Regret_step13}, with probability at least \(1 - 3\delta\), we have
\begin{align}
\label{eq:Regret_step19}
    \text{Regret}(K)
&\leq C^{\prime\prime} \cdot \bigg(\sqrt{H^3KL} + \sqrt{H^4L^3SA + H^4L^2(1+\sigma)SAK}  + \sqrt{H^3S^3L^2A} + \sqrt{ H^3S^3LA} + \sqrt{H^2 K}\bigg) \nonumber\\
&= \mathcal{O} \left(\sqrt{H^4(1+\sigma)SAK\upsilon} \right),
\end{align}
where $C^{\prime\prime}$ is any constant and $\upsilon = \bigg(\log\Big(\frac{SAHK}{\delta}\Big)\bigg)^2$. 
\end{itemize}
This completes the proof of \Cref{thm:Regret_f_bound}. \qedhere
\end{proof}


\subsection{Key Lemmas for {\RMDPchi}}
\label{subsubsec:Key_Lemma_chi}
\begin{keylem}[Optimistic and pessimistic estimation of the robust values for {\RMDPchi}]
\label{lem:Optimistic_pessimism}
\textit{By setting the bonus \( B^{\chi^2}_{k,h} \) as in \eqref{eq:Bonus_term_chi}, then under the typical event \( \mathcal{E}_{\chi^2} \), it holds that}
\begin{align}
\label{eq:Optimistic_pessimism_ineq}
\underline{Q}_h^k(s,a) \leq Q_{h}^{\pi^k,\sigma}(s,a) &\leq Q_{h}^{\star,\sigma}(s,a) \leq \overline{Q}_h^k(s,a),\nonumber\\
\underline{V}_h^k(s) \leq V_{h}^{\pi^k,\sigma}(s) &\leq V_{h}^{\star,\sigma}(s) \leq \overline{V}_h^k(s),
\end{align}
\textit{for any \( (s, a, h, k) \in \mathcal{S} \times \mathcal{A} \times [H] \times [K] \).}
\end{keylem}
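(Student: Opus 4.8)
The plan is to prove all the inequalities in \eqref{eq:Optimistic_pessimism_ineq} simultaneously by backward induction on $h$, from $h=H+1$ down to $h=1$, working throughout on the event $\mathcal{E}_{\chi^2}$. The base case $h=H+1$ is trivial. For the inductive step I assume \eqref{eq:Optimistic_pessimism_ineq} at level $h+1$, together with the routine side-fact $0\le\underline{V}_{h+1}^k\le\overline{V}_{h+1}^k\le H$, and establish everything at level $h$ --- first for the state--action functions, then for the value functions via \eqref{eq:policy_value_epsiode_k}.

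For the \textbf{optimistic} side, the robust Bellman optimality equation (\Cref{cor:Robust_Bellman_Optimal_eq}) gives $Q_h^{\star,\sigma}(s,a)=r_h(s,a)+\mathbb{E}_{\mathcal{U}_h^\sigma(s,a)}[V_{h+1}^{\star,\sigma}]$, and since $\mathbb{E}_{\mathcal{U}_h^\sigma(s,a)}[\cdot]=\inf_{P\in\mathcal{U}_h^\sigma(s,a)}\mathbb{E}_P[\cdot]$ is monotone, the inductive bound $V_{h+1}^{\star,\sigma}\le\overline{V}_{h+1}^k$ yields $Q_h^{\star,\sigma}(s,a)\le r_h(s,a)+\mathbb{E}_{\mathcal{U}_h^\sigma(s,a)}[\overline{V}_{h+1}^k]$. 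It then suffices to establish the key one-sided bound
\[
\mathbb{E}_{\mathcal{U}_h^\sigma(s,a)}[\overline{V}_{h+1}^k]-\mathbb{E}_{\widehat{\mathcal{U}}_h^\sigma(s,a)}[\overline{V}_{h+1}^k]\le B^{\chi^2}_{k,h}(s,a),
\]
since this gives $Q_h^{\star,\sigma}(s,a)\le\overline{R}_h^k(s,a)+B^{\chi^2}_{k,h}(s,a)$, which together with $Q_h^{\star,\sigma}(s,a)\le H$ and \eqref{eq:Upper_estimate_Q} yields $Q_h^{\star,\sigma}(s,a)\le\overline{Q}_h^k(s,a)$. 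The \textbf{pessimistic} side is symmetric: from the robust Bellman equation for $\pi^k$ (\Cref{prop:Robust_Bellman_eq}) and $\underline{V}_{h+1}^k\le V_{h+1}^{\pi^k,\sigma}$ it suffices to prove $\mathbb{E}_{\widehat{\mathcal{U}}_h^\sigma(s,a)}[\underline{V}_{h+1}^k]-\mathbb{E}_{\mathcal{U}_h^\sigma(s,a)}[\underline{V}_{h+1}^k]\le B^{\chi^2}_{k,h}(s,a)$, whence $\underline{R}_h^k(s,a)-B^{\chi^2}_{k,h}(s,a)\le Q_h^{\pi^k,\sigma}(s,a)$, and with $Q_h^{\pi^k,\sigma}\ge 0$ and \eqref{eq:Lower_estimate_Q} this gives $\underline{Q}_h^k(s,a)\le Q_h^{\pi^k,\sigma}(s,a)$. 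The middle inequality $Q_h^{\pi^k,\sigma}\le Q_h^{\star,\sigma}$ is the standard dominance of $V^{\star,\sigma}$ over $V^{\pi,\sigma}$ for every policy $\pi$ (a one-line backward induction, again using monotonicity of $\mathbb{E}_{\mathcal{U}^\sigma}$). The $V$-inequalities then follow from the $Q$-ones through \eqref{eq:policy_value_epsiode_k}: $\max_a$ on the optimistic chain, and the action $\pi_h^k(s)$ chosen by the greedy policy on the pessimistic chain.

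Everything thus reduces to the two robust-expectation concentration bounds $\bigl|\mathbb{E}_{\mathcal{U}_h^\sigma(s,a)}[V]-\mathbb{E}_{\widehat{\mathcal{U}}_h^\sigma(s,a)}[V]\bigr|\le B^{\chi^2}_{k,h}(s,a)$ for $V\in\{\overline{V}_{h+1}^k,\underline{V}_{h+1}^k\}$, which is the real content. I would prove these from the $\chi^2$ dual representation \eqref{eq:dual_chi}, under which both robust expectations equal $\sup_{\eta\in[0,H]}\bigl\{-\sqrt{\sigma\,\mathrm{Var}_P(\eta-V)_+}+\mathbb{E}_P[(V-\eta)_+]\bigr\}$ with $P=P_h^\star(\cdot\mid s,a)$, respectively $P=\widehat{P}_h^k(\cdot\mid s,a)$. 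Take a maximizer $\eta^\star$ of whichever side is larger, round it to the nearest $\widetilde{\eta}$ in the cover $\mathcal{N}_{1/(S\sqrt{K})}([0,H])$ --- the dual objective is $O(1+\sqrt{\sigma})$-Lipschitz in $\eta$ (using that $\eta\mapsto\sqrt{\mathrm{Var}_P(\eta-V)_+}$ is $2$-Lipschitz, via the triangle inequality for $\|\cdot\|_{L^2(P)}$ applied to the centered variable), so the rounding error is $O(\sqrt{\sigma}/(S\sqrt{K}))$, absorbed by the $\sqrt{\sigma/K}$ summand of \eqref{eq:Bonus_term_chi} --- and at the fixed $\widetilde{\eta}$ bound the gap between the two objectives by $\sqrt{\sigma}\,\bigl|\sqrt{\mathrm{Var}_{P_h^\star}(\widetilde{\eta}-V)_+}-\sqrt{\mathrm{Var}_{\widehat{P}_h^k}(\widetilde{\eta}-V)_+}\bigr|+\bigl|[(\mathbb{P}_h^\star-\widehat{\mathbb{P}}_h^k)(V-\widetilde{\eta})_+](s,a)\bigr|$. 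On $\mathcal{E}_{\chi^2}$, the first two bounds in the definition \eqref{eq:Event} control both of these by a leading Bernstein term $\sqrt{\sigma c_1 L\,\mathrm{Var}_{\widehat{P}_h^k}(V)/(N_h^k(s,a)\vee 1)}$ plus lower-order terms of order $\sqrt{\sigma}HL/(N_h^k(s,a)\vee 1)$.

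The \textbf{main obstacle} is the remaining bookkeeping: matching these pieces, with exactly the right powers of $H$, $S$, and $\sigma$, to the four summands of \eqref{eq:Bonus_term_chi}. Two reconciliations are needed. First, the bonus uses the variance of the \emph{midpoint} $(\overline{V}_{h+1}^k+\underline{V}_{h+1}^k)/2$ rather than of $V$ itself; one bounds $\mathrm{Var}_{\widehat{P}_h^k}(V)$ by the variance at the midpoint plus a correction $\lesssim H\|\overline{V}_{h+1}^k-\underline{V}_{h+1}^k\|_\infty$ (via $|\mathrm{Var}(X)-\mathrm{Var}(Y)|\lesssim H\|X-Y\|_\infty$ when $\|X\|_\infty,\|Y\|_\infty\le H$), and absorbs this correction into the $\tfrac{2\sqrt{\sigma}\,\mathbb{E}_{\widehat{P}_h^k}[\overline{V}_{h+1}^k-\underline{V}_{h+1}^k]}{H}$ and $\tfrac{c_2\sqrt{\sigma}H^2 S(2L+1)}{\sqrt{N_h^k(s,a)\vee 1}}$ terms; the latter also soaks up the coarse $\ell_1$-contributions obtained by summing the per-coordinate estimate $|P_h^\star(s'\mid s,a)-\widehat{P}_h^k(s'\mid s,a)|$ (third bound in \eqref{eq:Event}) over the $S$ next states. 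Second, since $V=\overline{V}_{h+1}^k$ or $\underline{V}_{h+1}^k$ is itself built from the same data that define $\widehat{P}_h^k$, the concentration must be made uniform; this is precisely why $\mathcal{E}_{\chi^2}$ is declared with a union over the $\eta$-cover and over $(h,s,a,s',k)$ (which is where the $S^3$ inside $L$ comes from), so that one only ever invokes it at a fixed $\widetilde{\eta}$. Everything else --- the Bellman recursions, monotonicity, the $[0,H]$-clipping, and the $Q\!\to\!V$ reduction --- is routine, so I would isolate the concentration bound $\bigl|\mathbb{E}_{\mathcal{U}_h^\sigma(s,a)}[V]-\mathbb{E}_{\widehat{\mathcal{U}}_h^\sigma(s,a)}[V]\bigr|\le B^{\chi^2}_{k,h}(s,a)$ as a standalone lemma (it reappears in the regret proof as the step $A\le 2B^{\chi^2}_{k,h}$) and prove the present statement as the short induction above.
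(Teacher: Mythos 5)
Your skeleton coincides with the paper's proof: backward induction on $h$, the three-way split into $Q^{\star,\sigma}_h\le\overline{Q}^k_h$, the trivial $Q^{\pi^k,\sigma}_h\le Q^{\star,\sigma}_h$, and $\underline{Q}^k_h\le Q^{\pi^k,\sigma}_h$, each handled via the robust Bellman equations, monotonicity of $\mathbb{E}_{\mathcal{U}^\sigma}$, the $[0,H]$ clipping, and then the $Q\to V$ step through \eqref{eq:policy_value_epsiode_k}. The substantive divergence --- and the one place where your argument as written has a gap --- is the reduction for the optimistic inequality. The paper does \emph{not} reduce to $\mathbb{E}_{\mathcal{U}^\sigma_h(s,a)}[\overline{V}^k_{h+1}]-\mathbb{E}_{\widehat{\mathcal{U}}^\sigma_h(s,a)}[\overline{V}^k_{h+1}]\le B^{\chi^2}_{k,h}(s,a)$. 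It instead applies the induction hypothesis $V^{\star,\sigma}_{h+1}\le\overline{V}^k_{h+1}$ \emph{inside the empirical robust expectation}, obtaining $Q^{\star,\sigma}_h-\overline{Q}^k_h\le\max\{\mathbb{E}_{\mathcal{U}^\sigma_h(s,a)}[V^{\star,\sigma}_{h+1}]-\mathbb{E}_{\widehat{\mathcal{U}}^\sigma_h(s,a)}[V^{\star,\sigma}_{h+1}]-B^{\chi^2}_{k,h}(s,a),\,0\}$, so that concentration (\Cref{lem:Bernstein_Bound_Chi_optimal_policy}) is only ever invoked for the \emph{deterministic} function $V^{\star,\sigma}_{h+1}$; \Cref{lem:variance_analysis_1} then converts $\mathrm{Var}_{\widehat{P}^k_h}(V^{\star,\sigma}_{h+1})$ into the midpoint variance appearing in \eqref{eq:Bonus_term_chi}.

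Your reduction keeps the data-built $\overline{V}^k_{h+1}$ on both sides and then asserts that the union bound over the $\eta$-cover and over $(h,s,a,s',k)$ makes the resulting concentration claim legitimate. That union handles the supremum over the dual variable and the choice of indices, but it does nothing about the correlation between $\overline{V}^k_{h+1}$ and $\widehat{P}^k_h$: the event \eqref{eq:Event} is a Bernstein-type statement for a fixed $V_{h+1}$, and a random function constructed from the same samples cannot simply be substituted into it. The standard resolutions are exactly the paper's (pass through the fixed reference $V^{\star,\sigma}_{h+1}$) or, equivalently, the UCB-VI decomposition $(\widehat{\mathbb{P}}^k_h-\mathbb{P}^\star_h)\overline{V}^k_{h+1}=(\widehat{\mathbb{P}}^k_h-\mathbb{P}^\star_h)V^{\star,\sigma}_{h+1}+(\widehat{\mathbb{P}}^k_h-\mathbb{P}^\star_h)(\overline{V}^k_{h+1}-V^{\star,\sigma}_{h+1})$, where the second term is controlled coordinate-wise by the third inequality of \eqref{eq:Event} summed over the $S$ successor states --- this is the true origin of the $H^2S/\sqrt{N^k_h(s,a)\vee 1}$ summand of the bonus. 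You do invoke that coordinate-wise bound, but only as bookkeeping for matching the bonus, not as the mechanism that licenses concentration for a data-dependent $V$; reassign it that role (and note the pessimistic side, which the paper reduces to the $\pi^k$-dependent $V^{\pi^k,\sigma}_{h+1}$, is handled in \Cref{lem:Bernstein_Bound_Chi_policy_k} by crudely relating its variance to that of $V^{\star,\sigma}_{h+1}$ at a cost of $O(H^2)$) and your induction closes exactly as the paper's does.
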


\begin{proof}
We will prove \Cref{lem:Optimistic_pessimism} by induction and in three cases, as follows:
\begin{itemize}
    \item \textbf{Ineq. 1:} To prove $Q_{h}^{\star,\sigma}(s,a) \leq \overline{Q}_h^k(s,a)$.
    \item \textbf{Ineq.  2:} To prove $Q_{h}^{\pi^k,\sigma}(s,a) \leq Q_{h}^{\star,\sigma}(s,a)$.
    \item \textbf{Ineq. 3:} To prove $\underline{Q}_h^k(s,a) \leq Q_{h}^{\pi^k,\sigma}(s,a)$.
\end{itemize}
Let us consider that \eqref{eq:Optimistic_pessimism_ineq} holds at step $h+1$. 
\begin{itemize}
    \item \textbf{Proof of Ineq. 1:}
For step $h$, we will first consider the robust $Q$ function part. Specifically, by using the robust Bellman optimal equations (Eq. ~\eqref{eq:Robust_bellman_Q_fn} and \eqref{eq:Robust_bellman_V_fn}) and \eqref{eq:Upper_estimate_Q}, we have that
\begin{align}
Q_{h}^{\star,\sigma}(s,a) - \overline{Q}_h^k(s,a) &=  \max \left\{ \mathbb{E}_{\mathcal{U}^{\sigma}_h(s,a)} \left[ V_{h+1}^{\star,\sigma} \right]
- \mathbb{E}_{\widehat{\mathcal{U}^{\sigma}_h}(s,a)} \left[\overline{V}^k_{h+1} \right] - B^{\chi^2}_{k,h}(s,a), \,
Q_{h}^{\star,\sigma}(s,a) - H \right\} \nonumber\\
&\leq \max \left\{\mathbb{E}_{\mathcal{U}^{\sigma}_h(s,a)} \left[ V_{h+1}^{\star,\sigma} \right]
- \mathbb{E}_{\widehat{\mathcal{U}^{\sigma}_h}(s,a)} \left[V_{h+1}^{\star,\sigma} \right] - B^{\chi^2}_{k,h}(s,a), 0 \right\},\label{eq:optimism_pessimism_ineq_step1}
\end{align}
where the second inequality follows from the induction of $V_{h+1}^{\star,\sigma} \leq \overline{V}_{h+1}^k$ at step $h+1$ and the fact that $Q_{h}^{\star,\sigma} \leq H$. By Lemma \ref{lem:Bernstein_Bound_Chi_optimal_policy}, we have that
\begin{align}
\label{eq:optimism_pessimism_ineq_step2}
\mathbb{E}_{\mathcal{U}^{\sigma}_h(s,a)} \left[ V_{h+1}^{\star,\sigma} \right]
- &\mathbb{E}_{\widehat{\mathcal{U}^{\sigma}_h}(s,a)} \left[V_{h+1}^{\star,\sigma} \right]
\leq \sqrt{ \frac{ \sigma c_1L\text{Var}_{\hat{P}_h^k(\cdot|s,a)} \left[V_{h+1}^{\star,\sigma} \right] }{ \{N_h^k(s,a) \vee 1\}} }+ \frac{c_2 \sqrt{\sigma}HL}{\{N_h^k(s,a) \vee 1\}} + \sqrt{\frac{\sigma}{K}}.
\end{align}

Now by further applying Lemma ~\ref{lem:variance_analysis_1} to the variance term in the above inequality \eqref{eq:optimism_pessimism_ineq_step2}, we can obtain that
\begin{align}
\mathbb{E}_{\mathcal{U}^{\sigma}_h(s,a)} \left[ V_{h+1}^{\star,\sigma} \right]
- \mathbb{E}_{\widehat{\mathcal{U}^{\sigma}_h}(s,a)} \left[V_{h+1}^{\star,\sigma} \right]&\overset{(i)}{\leq} \sqrt{
\sigma c_1L\Bigg(\frac{ \text{Var}_{\hat{P}_h^k(\cdot|s,a)} \left[ (\overline{V}_{h+1}^k + \underline{V}_{h+1}^k)/2 \right] + 4H\left[\widehat{\mathbb{P}}^k_h \big(\overline{V}_{h+1}^k - \underline{V}_{h+1}^k\big) \right](s,a)}{\{N_h^k(s,a) \vee 1\}}\Bigg)}\nonumber\\
&\qquad \qquad \qquad + \frac{c_2 \sqrt{\sigma}HL}{\{N_h^k(s,a) \vee 1\}} + \sqrt{\frac{\sigma}{K}} \nonumber\\
&\overset{(ii)}{\leq} \sqrt{ \frac{ \sigma c_1L\text{Var}_{\hat{P}_h^k(\cdot|s,a)} \left[ (\overline{V}_{h+1}^k + \underline{V}_{h+1}^k)/2 \right]}{\{N_h^k(s,a) \vee 1\}} }
+ \frac{ \sqrt{\sigma}\left[\widehat{\mathbb{P}}^k_h \big(\overline{V}_{h+1}^k - \underline{V}_{h+1}^k\big) \right](s,a) }{H}\nonumber\\
&\qquad \qquad \qquad + \frac{c_2' \sqrt{\sigma}H^2L}{\{N_h^k(s,a) \vee 1\}} + \sqrt{\frac{\sigma}{K}}, \label{eq:optimism_pessimism_ineq_step3}
\end{align}
where (i) is due to Lemma ~\ref{lem:variance_analysis_1}, and the second inequality (ii) is due to the facts $\sqrt{a+b}\leq \sqrt{a}+\sqrt{b}$ and $\sqrt{ab} \leq a + b$. Note that $c_2' > 0$ is an absolute constant. Now recollect the choice of $ B^{\chi^2}_{k,h}$ as given in \eqref{eq:Bonus_term_chi}. Therefore, combining \eqref{eq:optimism_pessimism_ineq_step2}, \eqref{eq:optimism_pessimism_ineq_step3},  \eqref{eq:Bonus_term_chi} and the fact $S\geq 1$, we can conclude that
\begin{align}
\label{eq:optimism_pessimism_ineq_bound_case1}
Q_{h}^{\star,\sigma}(s,a) \leq \overline{Q}_h^k(s,a).
\end{align}

\item \textbf{Proof of Ineq. 2:} By the definition of $Q_{h}^{\star,\sigma}(s,a)$, the
\begin{align}
\label{eq:optimism_pessimism_ineq_bound_case2}
    Q_{h}^{\pi^k,\sigma}(s,a) \leq Q_{h}^{\star,\sigma}(s,a).
\end{align}
is trivial.

\item \textbf{Proof of Ineq. 3:} By using the robust Bellman equation (Eq. ~\eqref{eq:Robust_bellman_Q_fn} and \eqref{eq:Robust_bellman_V_fn}) and \eqref{eq:Lower_estimate_Q}, we have that
\begin{align}
\underline{Q}_h^k(s,a) - Q_{h}^{\pi^k,\sigma}(s,a) &= \max \left\{
\mathbb{E}_{\widehat{\mathcal{U}_h^{\sigma}}(s,a)} \left[ \underline{V}_{h+1}^k \right]
- \mathbb{E}_{\mathcal{U}_h^{\sigma}(s,a)} \left[ V_{h+1}^{\pi^k,\sigma} \right]- B^{\chi^2}_{k,h}(s,a), \,0 - Q_{h}^{\pi^k,\sigma}(s,a)
\right\} \notag\\
&\leq \max \left\{
\mathbb{E}_{\widehat{\mathcal{U}_h^{\sigma}}(s,a)} \left[ V_{h+1}^{\pi^k} \right]
- \mathbb{E}_{\mathcal{U}_h^{\sigma}(s,a)} \left[ V_{h+1}^{\pi^k,\sigma} \right]- B^{\chi^2}_{k,h}(s,a), \, 0\right\}, \label{eq:optimism_pessimism_ineq_step4}
\end{align}
where the second inequality follows from the induction of $\underline{V}_{h+1}^k \leq V_{h+1}^{\pi^k,\sigma}$ at step $h+1$ and the fact that $Q_{h}^{\pi^k,\sigma} \geq 0$. By Lemma ~\ref{lem:Bernstein_Bound_Chi_policy_k}, we get
\begin{align}
\mathbb{E}_{\widehat{\mathcal{U}_h^{\sigma}}(s,a)} \left[ V_{h+1}^{\pi^k,\sigma} \right]
- \mathbb{E}_{\mathcal{U}_h^{\sigma}(s,a)} \left[ V_{h+1}^{\pi^k,\sigma} \right] \leq \sqrt{ \frac{ \sigma c_1L\text{Var}_{\hat{P}_h^k(\cdot|s,a)} \left[ V_{h+1}^{\star,\sigma} \right]}{\{N_h^k(s,a) \vee 1\}} }+ \frac{c_2\sqrt{\sigma} H(2L+1)}{\sqrt{\{N_h^k(s,a) \vee 1\}}} + \sqrt{\frac{\sigma}{K}}.\label{eq:optimism_pessimism_ineq_step5}
\end{align}

Now by applying Lemma ~\ref{lem:variance_analysis_1} to the variance term in \eqref{eq:optimism_pessimism_ineq_step5}, with an argument similar to \eqref{eq:optimism_pessimism_ineq_step3}, we can obtain that
\begin{align}
\mathbb{E}_{\widehat{\mathcal{U}_h^{\sigma}}(s,a)} \left[ V_{h+1}^{\pi^k,\sigma} \right]
- \mathbb{E}_{\mathcal{U}_h^{\sigma}(s,a)} \left[ V_{h+1}^{\pi^k,\sigma} \right] &\leq \sqrt{ \sigma c_1L\frac{ \text{Var}_{\hat{P}_h^k(\cdot|s,a)} \left[ (\overline{V}_{h+1}^k + \underline{V}_{h+1}^k)/2 \right]}{\{N_h^k(s,a) \vee 1\}} } + \frac{ \sqrt{\sigma}\left[\widehat{\mathbb{P}}^k_h \big(\overline{V}_{h+1}^k - \underline{V}_{h+1}^k\big) \right](s,a) }{H}\nonumber\\
&\qquad \qquad + \frac{c_2' \sqrt{\sigma} H^2(2L+1)}{\sqrt{\{N_h^k(s,a) \vee 1\}}} + \sqrt{\frac{\sigma}{K}}. \label{eq:optimism_pessimism_ineq_step6}
\end{align}
Thus by combining \eqref{eq:optimism_pessimism_ineq_step4}, \eqref{eq:optimism_pessimism_ineq_step6}, the choice of $B^{\chi^2}_{k,h}(s,a)$ in \eqref{eq:Bonus_term_chi}, and $S\geq 1$, we get
\begin{align}
\label{eq:optimism_pessimism_ineq_bound_case3}
    Q_h^k(s,a) \leq Q_{h}^{\pi^k,\sigma}(s,a).
\end{align}
\end{itemize}
Therefore, by \eqref{eq:optimism_pessimism_ineq_bound_case1}, \eqref{eq:optimism_pessimism_ineq_bound_case2} and \eqref{eq:optimism_pessimism_ineq_bound_case3}, we have proved that at step $h$, it holds that
\begin{align}
\label{eq:optimism_pessimism_ineq_Qbound_final}
    \underline{Q}_h^k(s,a) \leq Q_{h}^{\pi^k,\sigma}(s,a) \leq Q_{h}^{\star,\sigma}(s,a) \leq \overline{Q}_h^k(s,a).
\end{align}
Finally for the robust $V$ function part, consider that by the robust Bellman equation (Eq. ~\eqref{eq:Robust_bellman_Q_fn} and \eqref{eq:Robust_bellman_V_fn}) and \eqref{eq:policy_value_epsiode_k}, 
\begin{align}
\label{eq:optimism_pessimism_ineq_step7}
\underline{V}_h^k(s) = \max_{a \in \mathcal{A}}  \underline{Q}_h^k(s,\cdot) \leq \max_{a \in \mathcal{A}} Q_{h}^{\pi^k,\sigma}(s,\cdot) = V_{h}^{\pi^k,\sigma}(s),
\end{align}
and that by the robust Bellman optimality (Eq. ~\eqref{cor:Robust_Bellman_Optimal_eq}), the choice of $\pi^k$, $\overline{V}^k_h$ and $\underline{V}^k_h$ defined in \eqref{eq:policy_value_epsiode_k},
\begin{align}
\label{eq:optimism_pessimism_ineq_step8}
    V_{h}^{\star,\sigma}(s) = \max_{a \in \mathcal{A}} Q_{h}^{\star,\sigma}(s,a) \leq \max_{a \in \mathcal{A}} \overline{Q}_h^k(s,a) = \overline{V}_h^k(s),
\end{align}
which proves that
\begin{align}
\label{eq:optimism_pessimism_ineq_Vbound_final}
    \underline{V}_h^k(s) \leq V_{h}^{\pi^k,\sigma}(s) \leq V_{h}^{\star,\sigma}(s) \leq \overline{V}_h^k(s).
\end{align}
Since the conclusion \eqref{eq:Optimistic_pessimism_ineq} holds for the $V$ function part at step $H+1$, an induction proves Lemma ~\ref{lem:Optimistic_pessimism}. \qedhere
\end{proof}

\begin{keylem}[Proper bonus for {\RMDPchi} and optimistic and pessimistic value estimators] 
\label{lem:Proper_bouns_optimism_pessimism_bound}
By setting the bonus $ B^{\chi^2}_{k,h}$ as in \eqref{eq:Bonus_term_chi}, then under the typical event $\mathcal{E}_{\chi^2}$, it holds that
\begin{align}
\label{eq:Proper_bouns_optimism_pessimism_bound}
\mathbb{E}_{\widehat{\mathcal{U}_h^\sigma}(s,a)}\left[\overline{V}_{h+1}^k\right] -\mathbb{E}_{\mathcal{U}_h^\sigma(s,a)}\left[\overline{V}_{h+1}^k\right] + \mathbb{E}_{\mathcal{U}_h^\sigma(s,a)}\left[\underline{V}_{h+1}^k\right] -\mathbb{E}_{\widehat{\mathcal{U}_h^\sigma}(s,a)}\left[\underline{V}_{h+1}^k\right] \leq 2B^{\chi^2}_{k,h}(s,a).
\end{align}
\end{keylem}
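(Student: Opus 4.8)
The plan is to bound the left-hand side of \eqref{eq:Proper_bouns_optimism_pessimism_bound} by treating its two halves separately: the overestimation gap $U := \mathbb{E}_{\widehat{\mathcal{U}_h^\sigma}(s,a)}[\overline{V}_{h+1}^k] - \mathbb{E}_{\mathcal{U}_h^\sigma(s,a)}[\overline{V}_{h+1}^k]$ and the underestimation gap $\Lambda := \mathbb{E}_{\mathcal{U}_h^\sigma(s,a)}[\underline{V}_{h+1}^k] - \mathbb{E}_{\widehat{\mathcal{U}_h^\sigma}(s,a)}[\underline{V}_{h+1}^k]$. I would show that, under the event $\mathcal{E}_{\chi^2}$, each of $U$ and $\Lambda$ is at most $B^{\chi^2}_{k,h}(s,a)$; adding the two bounds yields \eqref{eq:Proper_bouns_optimism_pessimism_bound}. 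Both halves reduce to the estimate already carried out inside the proof of \Cref{lem:Optimistic_pessimism}: by the $\chi^2$ dual \eqref{eq:dual_chi}, $\mathbb{E}_{\mathcal{U}_h^\sigma(s,a)}[\cdot]$ and $\mathbb{E}_{\widehat{\mathcal{U}_h^\sigma}(s,a)}[\cdot]$ are suprema over $\eta\in[0,H]$ of a common objective $g_P(\eta;W)=-\sqrt{\sigma\,\text{Var}_{P(\cdot\mid s,a)}(\eta-W)_+}+[\mathbb{P}(W-\eta)_+](s,a)$, so $U\le\sup_\eta\big(g_{\widehat P^k_h}(\eta;\overline V^k_{h+1})-g_{P^\star_h}(\eta;\overline V^k_{h+1})\big)$ and $\Lambda\le\sup_\eta\big(g_{P^\star_h}(\eta;\underline V^k_{h+1})-g_{\widehat P^k_h}(\eta;\underline V^k_{h+1})\big)$.

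Next I would bound the per-$\eta$ deviation $|g_{\widehat P^k_h}(\eta;W)-g_{P^\star_h}(\eta;W)|$ for $W\in\{\overline V^k_{h+1},\underline V^k_{h+1}\}$, using the concentration estimates in $\mathcal{E}_{\chi^2}$ \eqref{eq:Event}: the difference of the square-root variance terms is controlled by the first bound there, and the difference of the linear terms $[(\widehat{\mathbb P}^k_h-\mathbb P^\star_h)(W-\eta)_+](s,a)$ by the second bound (after stripping the clipping) or crudely by the third bound summed over $s'$ via $\sum_{s'}\sqrt{\min\{P^\star_h(s'\mid s,a),\widehat P^k_h(s'\mid s,a)\}}\le\sqrt S$, which is the source of the factor $S$ in the bonus. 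Since the maximizing $\eta$ need not lie in the $1/(S\sqrt K)$-cover over which $\mathcal{E}_{\chi^2}$ is stated, one rounds to the nearest grid point and pays the Lipschitz constant of $\eta\mapsto g_P(\eta;W)$ times the mesh size; this discretization cost is absorbed into the $\sqrt{\sigma/K}$ and $c_2\sqrt\sigma H^2 S(2L+1)/\sqrt{N^k_h(s,a)\vee1}$ terms of \eqref{eq:Bonus_term_chi}. This is exactly the argument underlying \Cref{lem:Bernstein_Bound_Chi_policy_k} and \Cref{lem:Bernstein_Bound_Chi_optimal_policy}, applied verbatim with $W$ in place of $V^{\pi^k,\sigma}_{h+1}$ and $V^{\star,\sigma}_{h+1}$, and gives for each half a bound of the shape $\sqrt{\sigma c_1 L\,\text{Var}_{\widehat P^k_h(\cdot\mid s,a)}[W]/(N^k_h(s,a)\vee1)} + c_2\sqrt\sigma H(2L+1)/\sqrt{N^k_h(s,a)\vee1} + \sqrt{\sigma/K}$, matching \eqref{eq:optimism_pessimism_ineq_step2} and \eqref{eq:optimism_pessimism_ineq_step5}.

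The last step would be to replace $\text{Var}_{\widehat P^k_h}[\overline V^k_{h+1}]$ and $\text{Var}_{\widehat P^k_h}[\underline V^k_{h+1}]$ by the midpoint variance that actually appears in the bonus. Invoking \Cref{lem:variance_analysis_1} gives, for $W\in\{\overline V^k_{h+1},\underline V^k_{h+1}\}$,
\[
\text{Var}_{\widehat P^k_h(\cdot\mid s,a)}[W]\le \text{Var}_{\widehat P^k_h(\cdot\mid s,a)}\big[(\overline V^k_{h+1}+\underline V^k_{h+1})/2\big]+4H\,[\widehat{\mathbb P}^k_h(\overline V^k_{h+1}-\underline V^k_{h+1})](s,a),
\]
after which $\sqrt{a+b}\le\sqrt a+\sqrt b$ together with a weighted AM--GM step on the cross term turns each half, exactly as in \eqref{eq:optimism_pessimism_ineq_step3}/\eqref{eq:optimism_pessimism_ineq_step6}, into
\[
\sqrt{\tfrac{\sigma c_1 L\,\text{Var}_{\widehat P^k_h(\cdot\mid s,a)}[(\overline V^k_{h+1}+\underline V^k_{h+1})/2]}{N^k_h(s,a)\vee1}}+\tfrac{\sqrt\sigma\,[\widehat{\mathbb P}^k_h(\overline V^k_{h+1}-\underline V^k_{h+1})](s,a)}{H}+\tfrac{c_2\sqrt\sigma H^2 S(2L+1)}{\sqrt{N^k_h(s,a)\vee1}}+\sqrt{\tfrac{\sigma}{K}}.
\]
Each half is then $\le B^{\chi^2}_{k,h}(s,a)$, since \eqref{eq:Bonus_term_chi} carries the coefficient $2$ on $[\widehat{\mathbb P}^k_h(\overline V^k_{h+1}-\underline V^k_{h+1})]/H$ and a sufficiently large $c_2$; summing the two gives $U+\Lambda\le 2B^{\chi^2}_{k,h}(s,a)$.

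The one genuinely delicate point is the weighting in that AM--GM step: the factor $4H$ multiplying $[\widehat{\mathbb P}^k_h(\overline V^k_{h+1}-\underline V^k_{h+1})]$ inside the square root must be split so that the resulting first-order term carries a $1/H$ coefficient, appearing as $\sqrt\sigma\,[\widehat{\mathbb P}^k_h(\overline V^k_{h+1}-\underline V^k_{h+1})](s,a)/H$ and not with an $O(1)$ coefficient; the leftover, together with the clipping residual and the $S$ from the TV-type bound, is folded into $c_2$. This $1/H$ scaling is precisely what makes the later recursion in \eqref{eq:Regret_step10}--\eqref{eq:Regret_step12} close, where $[\widehat{\mathbb P}^k_h(\overline V^k_{h+1}-\underline V^k_{h+1})]/H$ re-enters as a $(1+O(\sqrt\sigma)/H)\Delta^k_{h+1}$ contribution whose telescoped multiplier $(1+O(\sqrt\sigma)/H)^H$ remains bounded. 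Everything else is the routine bookkeeping of adding two directionally symmetric bounds and absorbing constants.
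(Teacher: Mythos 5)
Your proposal is correct and follows essentially the same route as the paper: the paper also bounds the two directional gaps via the Bernstein-type concentration encoded in $\mathcal{E}_{\chi^2}$ (packaged as \Cref{lem:Bernstein_Bound_Chi_optimism_pessimism}), then converts the resulting variance to the midpoint variance using \Cref{lem:variance_analysis_1} exactly as in \eqref{eq:optimism_pessimism_ineq_step3}, and concludes from the definition of $B^{\chi^2}_{k,h}$. The only cosmetic difference is that the paper's intermediate variance is $\mathrm{Var}_{\widehat P^k_h}[V^{\star,\sigma}_{h+1}]$ rather than $\mathrm{Var}_{\widehat P^k_h}[W]$ for $W\in\{\overline V^k_{h+1},\underline V^k_{h+1}\}$, but under the optimism/pessimism sandwich these are interchangeable up to the same $4H\,[\widehat{\mathbb P}^k_h(\overline V^k_{h+1}-\underline V^k_{h+1})]$ correction, so your accounting of constants goes through.
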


\begin{proof}
    Let us denote
    \begin{align}
    \label{eq:A}
        A:= \mathbb{E}_{\widehat{\mathcal{U}_h^\sigma}(s,a)}\left[\overline{V}_{h+1}^k\right] -\mathbb{E}_{\mathcal{U}_h^\sigma(s,a)}\left[\overline{V}_{h+1}^k\right] + \mathbb{E}_{\mathcal{U}_h^\sigma(s,a)}\left[\underline{V}_{h+1}^k\right] -\mathbb{E}_{\widehat{\mathcal{U}_h^\sigma}(s,a)}\left[\underline{V}_{h+1}^k\right]
    \end{align}

We upper bound $A$ by using the concentration inequality given in \Cref{lem:Bernstein_Bound_Chi_optimism_pessimism},
\begin{align}
A \leq 2 \sqrt{\frac{\sigma c_1L\text{Var}_{\widehat{P}_h^k(\cdot|s,a)}\left[ V_{h+1}^{\star,\sigma}\right]}{\{N_h^k(s,a) \vee 1\}}} + \frac{2c_2 \sqrt{\sigma}H(2L+1)}{\sqrt{\{N_h^k(s,a) \vee 1\}}}
+ 2\sqrt{\frac{\sigma}{K}}, \label{eq:Proper_bouns_optimism_pessimism_bound_step1}
\end{align}
where \( c_1, c_2 > 0 \) are absolute constants. Then applying \Cref{lem:variance_analysis_1} to the variance term in \eqref{eq:Proper_bouns_optimism_pessimism_bound_step1}, with an argument the same as \eqref{eq:optimism_pessimism_ineq_step3} in the proof of \Cref{lem:Optimistic_pessimism}, we can obtain that
\begin{align}
A &\leq 2 \sqrt{\frac{\sigma c_1L\text{Var}_{\widehat{P}_h^k(\cdot|s,a)}\left[\left( \overline{V}^k_{h+1} + \underline{V}^k_{h+1} \right)/2 \right]}{\{N_h^k(s,a) \vee 1\}}} + \frac{4\sqrt{\sigma}\left[\widehat{\mathbb{P}}^k_h \big(\overline{V}_{h+1}^k - \underline{V}_{h+1}^k\big)\right](s,a)}{H} + \frac{2c_2\sqrt{\sigma}H^2S(2L+1)}{\sqrt{\{N_h^k(s,a) \vee 1\}}}
+ 2\sqrt{\frac{\sigma}{K}}.\label{eq:Proper_bouns_optimism_pessimism_bound_step2}
\end{align}

Therefore, by the choice of \(B^{\chi^2}_{k,h}(s,a)\) in \eqref{eq:Bonus_term_chi}, we get \eqref{eq:Proper_bouns_optimism_pessimism_bound}. This concludes the proof of \Cref{lem:Proper_bouns_optimism_pessimism_bound}.
\end{proof}

\begin{keylem}[Control of the bonus term for {\RMDPchi}]
\label{lem:Control_Bonus}
\textit{Under the typical event \( \mathcal{E}_{\chi^2} \), the bonus term \(  B^{\chi^2}_{k,h} \) in \eqref{eq:Bonus_term_chi} is bounded by}
\begin{align}
    B^{\chi^2}_{k,h}(s,a) \leq &\sqrt{\frac{\sigma c_1L
\text{Var}_{P_h^\star(\cdot|s,a)}\left[ V_{h+1}^{\pi^k,\sigma} \right]}{\{N_h^k(s,a) \vee 1\}}} + \frac{4 \sqrt{\sigma}\left[ \mathbb{P}^{\star}_h\big(\overline{V}_{h+1}^k - \underline{V}_{h+1}^k\big) \right](s,a)}{H} + \frac{c_2\sqrt{\sigma} H^2 S(2L+1)}{\sqrt{\{N_h^k(s,a) \vee 1\}}}
+ \sqrt{\frac{\sigma}{K}},
\end{align}
where \( L = \log(S^3 A H^2 K^{3/2} / \delta) \) and \( c_1, c_2 > 0 \) are constants.
\end{keylem}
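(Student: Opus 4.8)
The plan is to transform the bonus \eqref{eq:Bonus_term_chi}, which is written entirely through the \emph{empirical} kernel $\widehat{P}_h^k$ and the midpoint value $(\overline V_{h+1}^k+\underline V_{h+1}^k)/2$, into the stated bound that instead features the \emph{true} kernel $P_h^\star$ and the true robust value $V_{h+1}^{\pi^k,\sigma}$, dumping every error into the two lower-order terms $\frac{4\sqrt\sigma[\mathbb{P}_h^\star(\overline V_{h+1}^k-\underline V_{h+1}^k)]}{H}$ and $\frac{c_2\sqrt\sigma H^2 S(2L+1)}{\sqrt{N_h^k(s,a)\vee 1}}$, while the $\sqrt{\sigma/K}$ term is simply carried through unchanged. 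There are three bridges to cross.

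First I would replace the empirical variance of the midpoint by the empirical variance of $V_{h+1}^{\pi^k,\sigma}$. By \Cref{lem:Optimistic_pessimism} we have $\underline V_{h+1}^k\le V_{h+1}^{\pi^k,\sigma}\le\overline V_{h+1}^k$, so $(\overline V_{h+1}^k+\underline V_{h+1}^k)/2$ and $V_{h+1}^{\pi^k,\sigma}$ both lie in $[\underline V_{h+1}^k,\overline V_{h+1}^k]$; the variance-comparison estimate behind \Cref{lem:variance_analysis_1} then gives $\text{Var}_{\widehat P_h^k(\cdot|s,a)}[(\overline V_{h+1}^k+\underline V_{h+1}^k)/2]\le \text{Var}_{\widehat P_h^k(\cdot|s,a)}[V_{h+1}^{\pi^k,\sigma}]+4H\,\mathbb{E}_{\widehat P_h^k(\cdot|s,a)}[\overline V_{h+1}^k-\underline V_{h+1}^k]$. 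Taking square roots ($\sqrt{a+b}\le\sqrt a+\sqrt b$) and applying AM--GM to the cross term $\sqrt{\sigma c_1 L H\,\mathbb{E}_{\widehat P_h^k}[\overline V_{h+1}^k-\underline V_{h+1}^k]/(N_h^k(s,a)\vee 1)}$ — splitting it as $\tfrac{\sqrt\sigma}{H}\mathbb{E}_{\widehat P_h^k}[\overline V_{h+1}^k-\underline V_{h+1}^k]$ plus a term of order $\tfrac{\sqrt\sigma H^2 L}{\sqrt{N_h^k(s,a)\vee 1}}$ (using $\tfrac1{N\vee1}\le\tfrac1{\sqrt{N\vee1}}$) — collapses the first bonus term to $\sqrt{\sigma c_1 L\,\text{Var}_{\widehat P_h^k}[V_{h+1}^{\pi^k,\sigma}]/(N_h^k(s,a)\vee 1)}$ plus admissible remainders.

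Next I would pass from $\text{Var}_{\widehat P_h^k}$ to $\text{Var}_{P_h^\star}$. Applying the first inequality of $\mathcal{E}_{\chi^2}$ with $\eta$ equal to (the net point nearest) $H$ — so that $(\eta-V_{h+1}^{\pi^k,\sigma})_+$ coincides with $H-V_{h+1}^{\pi^k,\sigma}$ up to the $O(1/(S\sqrt K))$ net resolution and $\text{Var}(H-V)=\text{Var}(V)$ — yields $\sqrt{\text{Var}_{\widehat P_h^k}[V_{h+1}^{\pi^k,\sigma}]}\le \sqrt{\text{Var}_{P_h^\star}[V_{h+1}^{\pi^k,\sigma}]}+\sqrt{c_1 L\,\text{Var}_{\widehat P_h^k}[V_{h+1}^{\pi^k,\sigma}]/(N_h^k(s,a)\vee 1)}+\tfrac{c_2 H L}{N_h^k(s,a)\vee 1}$. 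This is self-referential, so I would split on whether $N_h^k(s,a)\vee1\ge 4c_1 L$: in that regime the middle term is at most half of $\sqrt{\text{Var}_{\widehat P_h^k}[\cdot]}$ and gets absorbed, giving $\sqrt{\text{Var}_{\widehat P_h^k}[V_{h+1}^{\pi^k,\sigma}]}\le 2\sqrt{\text{Var}_{P_h^\star}[V_{h+1}^{\pi^k,\sigma}]}+\tfrac{2c_2 H L}{N_h^k(s,a)\vee1}$; otherwise I use the crude bound $\text{Var}_{\widehat P_h^k}[V_{h+1}^{\pi^k,\sigma}]\le H^2$ together with $1<\tfrac{2\sqrt{c_1 L}}{\sqrt{N_h^k(s,a)\vee1}}$. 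Either way the leading term becomes $\sqrt{\sigma c_1 L\,\text{Var}_{P_h^\star}[V_{h+1}^{\pi^k,\sigma}]/(N_h^k(s,a)\vee1)}$ up to rescaling $c_1$ by an absolute constant, with a residual of order $\tfrac{\sqrt\sigma H^2 L^{3/2}}{\sqrt{N_h^k(s,a)\vee1}}$. Finally, for the mean term I would invoke the third inequality of $\mathcal{E}_{\chi^2}$: summing $|\widehat P_h^k(s'|s,a)-P_h^\star(s'|s,a)|$ against $\overline V_{h+1}^k-\underline V_{h+1}^k\in[0,H]$ and using Cauchy--Schwarz ($\sum_{s'}\sqrt{\widehat P_h^k(s'|s,a)}\le\sqrt S$) gives $\mathbb{E}_{\widehat P_h^k}[\overline V_{h+1}^k-\underline V_{h+1}^k]\le[\mathbb{P}_h^\star(\overline V_{h+1}^k-\underline V_{h+1}^k)](s,a)+O\!\big(HLS/\sqrt{N_h^k(s,a)\vee1}\big)$. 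Adding up the coefficients of $[\mathbb{P}_h^\star(\overline V_{h+1}^k-\underline V_{h+1}^k)]/H$ (namely $2$ from the original second bonus term and $1$ from the Step-1 remainder, total $3\le 4$) and folding every residual of the form $\tfrac{\mathrm{const}\cdot\sqrt\sigma\cdot\mathrm{poly}(H,S,L)}{\sqrt{N_h^k(s,a)\vee1}}$ into $\tfrac{c_2\sqrt\sigma H^2 S(2L+1)}{\sqrt{N_h^k(s,a)\vee1}}$ (legitimate since $H,S\ge1$) produces exactly the claimed inequality.

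I expect the main obstacle to be the self-referential variance estimate in the second bridge — the empirical variance sits on both sides of the $\mathcal{E}_{\chi^2}$ bound — which is precisely what forces the case split on $N_h^k(s,a)$ versus $L$; closely related is the bookkeeping needed to certify that the discretization, variance-transfer, and mean-transfer errors all genuinely fit inside the two designated lower-order terms without inflating the leading $\text{Var}_{P_h^\star}[V_{h+1}^{\pi^k,\sigma}]$ coefficient by more than an absolute constant.
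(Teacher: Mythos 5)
Your proposal is correct and lands on essentially the same skeleton as the paper, but it decomposes the key variance bridge differently. The paper's proof is a one-line citation: it invokes \Cref{lem:variance_analysis_2} (Variance analysis 2), which passes \emph{in a single step} from $\text{Var}_{\widehat{P}_h^k}[(\overline{V}_{h+1}^k+\underline{V}_{h+1}^k)/2]$ to $\text{Var}_{P_h^\star}[V_{h+1}^{\pi^k,\sigma}]$ at the cost of $4H[\mathbb{P}^\star_h(\overline{V}_{h+1}^k-\underline{V}_{h+1}^k)] + O(H^4SL/N)+1$, and invokes \Cref{lem:non_robust_conc} for the mean term, plugging both into the framework of Lemma E.4 of \cite{Arxiv2024_DRORLwithInteractiveData_Lu}. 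You instead chain two hops: first the sandwiching argument behind \Cref{lem:variance_analysis_1} to swap the midpoint for $V_{h+1}^{\pi^k,\sigma}$ while staying under $\widehat{P}_h^k$, then the first inequality of $\mathcal{E}_{\chi^2}$ at $\eta=H$ (so $\text{Var}((H-V)_+)=\text{Var}(V)$) with a case split on $N_h^k(s,a)\vee 1\gtrless 4c_1L$ to resolve the self-reference and move to $P_h^\star$. Your third step reproduces \Cref{lem:non_robust_conc} from the elementwise kernel concentration, matching the paper. The net effect is the same; what your route buys is an explicit, self-contained derivation (the paper's \Cref{lem:variance_analysis_2} is itself only a pointer to \cite{Arxiv2024_DRORLwithInteractiveData_Lu}), at the price of slightly looser residuals (e.g.\ an $L^{3/2}/(N\vee 1)^{3/2}$ tail from the case split) that are nonetheless absorbed by the $H^2S(2L+1)/\sqrt{N_h^k(s,a)\vee 1}$ slack exactly as you claim. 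Your coefficient bookkeeping ($2+1=3\le 4$ on the $[\mathbb{P}^\star_h(\overline{V}_{h+1}^k-\underline{V}_{h+1}^k)]/H$ term) is consistent with the stated constant.
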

\begin{proof}
Incorporating \Cref{lem:non_robust_conc} and \Cref{lem:variance_analysis_2} into the proof framework of Lemma E.4 from \cite{Arxiv2024_DRORLwithInteractiveData_Lu}, we derive the required bound.
\end{proof}

\begin{keylem}[Total variance law for {\RMDPchi}]
\label{lem:total_variance_bound}
    With probability at least \(1 - \delta\), the following inequality holds
\begin{align}
\label{eq:total_variance_bound}
    \sum_{k=1}^{K} \sum_{h=1}^{H} \text{Var}_{P^\star_h(\cdot \mid s_h^k, a_h^k)} \left[ V_{h+1}^{\pi^k,\sigma} \right] 
\leq c_3 \cdot \left( H^3L + H^3(1+\sigma)K \right).
\end{align}
where \(L = \log \left( \frac{S^3 A H^2 K^{3/2}}{\delta} \right)\) and \(c_3 > 0\) is an absolute constant.
\end{keylem}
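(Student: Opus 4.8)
The plan is to reduce the double sum to a per-episode \emph{self-bounding} inequality and then solve it. Write $W_k := \sum_{h=1}^H \text{Var}_{P^\star_h(\cdot\mid s_h^k,a_h^k)}\big[V_{h+1}^{\pi^k,\sigma}\big]$, so the target quantity is $\sum_{k=1}^K W_k$. First I would use the elementary bound $\text{Var}_{P^\star_h(\cdot\mid s,a)}[V_{h+1}^{\pi^k,\sigma}]\le \mathbb{E}_{s'\sim P^\star_h(\cdot\mid s,a)}\big[V_{h+1}^{\pi^k,\sigma}(s')^2\big]\le H\,[\mathbb{P}^\star_h V_{h+1}^{\pi^k,\sigma}](s,a)$, valid since $0\le V_{h+1}^{\pi^k,\sigma}\le H$, so that $W_k\le H\sum_{h=1}^H[\mathbb{P}^\star_h V_{h+1}^{\pi^k,\sigma}](s_h^k,a_h^k)$. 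To control the right-hand sum I would introduce the \emph{robustness gap} $\delta_h^k:=[\mathbb{P}^\star_h V_{h+1}^{\pi^k,\sigma}](s_h^k,a_h^k)-\mathbb{E}_{\mathcal{U}^\sigma_h(s_h^k,a_h^k)}\big[V_{h+1}^{\pi^k,\sigma}\big]\ge 0$ (nonnegative because the robust expectation is an infimum over a set containing $P^\star_h$). Since $\pi^k$ in Algorithm~\ref{algo:ORVI} is the deterministic greedy policy, $a_h^k=\pi_h^k(s_h^k)$, and the robust Bellman equation (\Cref{prop:Robust_Bellman_eq}) gives $\mathbb{E}_{\mathcal{U}^\sigma_h(s_h^k,a_h^k)}[V_{h+1}^{\pi^k,\sigma}]=V_h^{\pi^k,\sigma}(s_h^k)-r_h(s_h^k,a_h^k)\le V_h^{\pi^k,\sigma}(s_h^k)\le H$. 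Summing over $h$ yields $\sum_{h=1}^H[\mathbb{P}^\star_h V_{h+1}^{\pi^k,\sigma}](s_h^k,a_h^k)\le H^2+\sum_{h=1}^H\delta_h^k$, hence $W_k\le H^3+H\sum_{h=1}^H\delta_h^k$.

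\textbf{Handling the robustness correction.} The key $\chi^2$-specific fact I would invoke is the gap estimate $0\le[\mathbb{P}^\star_h V](s,a)-\mathbb{E}_{\mathcal{U}^\sigma_h(s,a)}[V]\le \sqrt{\sigma\,\text{Var}_{P^\star_h(\cdot\mid s,a)}[V]}$, which follows from the dual representation \eqref{eq:dual_chi} of the $\chi^2$ robust expectation (equivalently, from Cauchy--Schwarz applied to the likelihood ratio $P/P^\star_h-1$, whose $L_2(P^\star_h)$ norm is at most $\sqrt\sigma$ on the $\chi^2$ ball). Applying it with $V=V_{h+1}^{\pi^k,\sigma}$ gives $\delta_h^k\le\sqrt{\sigma\,\text{Var}_{P^\star_h(\cdot\mid s_h^k,a_h^k)}[V_{h+1}^{\pi^k,\sigma}]}$, and Cauchy--Schwarz over $h\in[H]$ yields $\sum_{h=1}^H\delta_h^k\le\sqrt{\sigma H\,W_k}$. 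Substituting this back produces the self-bounding inequality $W_k\le H^3+\sqrt{\sigma H^3 W_k}$; treating it as a quadratic in $\sqrt{W_k}$ and solving gives $W_k\le c(1+\sigma)H^3$ for an absolute constant $c$, and summing over $k$ gives $\sum_{k=1}^K W_k\le c(1+\sigma)H^3K$, which already implies the claim. The extra additive $H^3L$ term is slack; it also appears naturally if one instead mirrors the law-of-total-variance/martingale argument of \cite{Arxiv2024_DRORLwithInteractiveData_Lu}, where the centered squares $\big(V_{h+1}^{\pi^k,\sigma}(s_{h+1}^k)-[\mathbb{P}^\star_h V_{h+1}^{\pi^k,\sigma}](s_h^k,a_h^k)\big)^2-\text{Var}_{P^\star_h(\cdot\mid s_h^k,a_h^k)}[V_{h+1}^{\pi^k,\sigma}]$ form a martingale-difference sequence bounded by $H^2$, so by Azuma--Hoeffding (\Cref{lem:Azuma-Hoeffding}) their cumulative fluctuation over the $HK$ steps is $\tilde{\mathcal{O}}\big(\sqrt{H^5KL}\big)\le \tfrac12\big(H^3L+H^3K\big)$.

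\textbf{Main obstacle.} The delicate step is the self-bounding itself: the $\sigma$-dependent correction $\sum_h\delta_h^k$ re-enters the very quantity $W_k$ it is used to bound, so Cauchy--Schwarz must be applied at exactly the right granularity (termwise in $h$, then pairing $\sqrt{\sigma H}$ with $\sqrt{W_k}$) and the resulting quadratic inequality solved carefully; a careless split would leave a spurious factor such as $(1+\sqrt\sigma)^2$ or an extra power of $H$ rather than the clean linear-in-$\sigma$ dependence. The second ingredient that genuinely matters is the $\chi^2$ gap estimate $\delta_h^k\le\sqrt{\sigma\,\text{Var}_{P^\star_h}[V_{h+1}^{\pi^k,\sigma}]}$: it is precisely this inequality --- not the weaker $\delta_h^k\le H$ one would be forced to use for, e.g., a TV ball --- that converts the robustness penalty back into variance and thereby preserves the $(1+\sigma)$ scaling. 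Everything else is routine bookkeeping with the robust Bellman recursion of \Cref{prop:Robust_Bellman_eq} along the sampled trajectory, together with the boundedness $0\le V_{h+1}^{\pi^k,\sigma}\le H$.
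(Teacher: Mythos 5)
Your proof is correct, but it takes a genuinely different route from the paper's. The paper proves this lemma by decomposing $\sum_{k,h}\mathrm{Var}_{P^\star_h}[V^{\pi^k,\sigma}_{h+1}]$ into (i) a martingale fluctuation around its expectation under $(P^\star,\pi^k)$, controlled by Azuma--Hoeffding (this is where the high-probability statement and the $H^3L$ term come from), (ii) the expected total variance under the most adversarial kernel $\widetilde T^k$, bounded by $2H^2K$ via a law-of-total-variance argument (\Cref{lem:TOtal_variance_law}), and (iii) a transfer term comparing expected variances under $P^\star$ versus $\widetilde T^k$, bounded by $3\sigma H^3K$ using the $\chi^2$ constraint. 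Your argument is instead purely pathwise and deterministic: the chain $\mathrm{Var}\le H\,\mathbb{E}$, the robust Bellman identity of \Cref{prop:Robust_Bellman_eq} to cap the robust expectation by $H$, the Cauchy--Schwarz gap estimate $\delta_h^k\le\sqrt{\sigma\,\mathrm{Var}_{P^\star_h}[V^{\pi^k,\sigma}_{h+1}]}$ (which is indeed valid on the $\chi^2$ ball), and a self-bounding quadratic yield $W_k\le 2(1+\sigma)H^3$ for every episode with probability one --- a strictly stronger conclusion that makes both the failure probability and the additive $H^3L$ term unnecessary. What the paper's route buys, and yours gives up at the step $\mathrm{Var}_{P^\star_h}[V]\le H\,[\mathbb{P}^\star_h V]$, is the sharper $\sigma$-independent part $H^2K$ (versus your $H^3K$) coming from the total-variance law; since the lemma as stated only claims $H^3(1+\sigma)K$, both arguments suffice, and yours is simpler. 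The one step worth writing out fully if you formalize this is the quadratic: from $W_k\le H^3+\sqrt{\sigma H^3 W_k}$ one gets $\sqrt{W_k}\le\tfrac12\bigl(\sqrt{\sigma H^3}+\sqrt{(\sigma+4)H^3}\bigr)$ and hence $W_k\le(\sigma+2)H^3$, confirming the clean linear-in-$\sigma$ dependence you were worried about losing.
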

\begin{proof}
We adapted the proof lines of  \cite[Lemma E.5]{Arxiv2024_DRORLwithInteractiveData_Lu}. For any policy $\pi$ and any step $h$, the robust value function of $\pi$ holds that
\begin{align}
    \max_{s \in \mathcal{S}} V_{h}^{\pi,\sigma}(s) \leq H,
\end{align}
which we will apply in the following proof-lines. We now define
\begin{align}
    \widetilde{T}_h^k(\cdot \mid s, a) = \arg\min_{P \in \mathcal{U}_h^{\sigma}(s,a)} \left[ \mathbb{P}V_{h+1}^{\pi^k,\sigma} \right](s,a), \quad \forall (h,s,a) \in [H]\times \mathcal{S} \times \mathcal{A},
\end{align}
and set $\widetilde{T}^k = \{ \widetilde{T}_h^k \}_{h=1}^H$, which is the most adversarial transition for the true robust value function of $\pi^k$. Now consider the following decomposition of our objective,
\begin{align}
    \sum_{k=1}^{K} \sum_{h=1}^{H} \text{Var}_{P_h^\star(\cdot | s_h^k, a_h^k)} \left[ V_{h+1}^{\pi^k,\sigma} \right] 
    =& \sum_{k=1}^{K} \sum_{h=1}^{H} \text{Var}_{P_h^\star(\cdot | s_h^k, a_h^k)} \left[ V_{h+1}^{\pi^k,\sigma} \right] 
    - \mathbb{E}_{(s_h^k, a_h^k) \sim (P^\star, \pi^k)} \left[ \sum_{h=1}^{H} \text{Var}_{P_h^\star(\cdot | s_h^k, a_h^k)} \left[ V_{h+1}^{\pi^k,\sigma} \right] \right] \nonumber\\
    &\quad +  \mathbb{E}_{(s_h^k, a_h^k) \sim (P^\star, \pi^k)} \left[ \sum_{h=1}^{H} \text{Var}_{P_h^\star(\cdot | s_h^k, a_h^k)} \left[ V_{h+1}^{\pi^k,\sigma} \right] \right]  \nonumber\\
    =& \text{Term (i) } + \text{ Term (ii) } + \text{ Term (iii)},\label{eq:total_variance_bound_step1}
\end{align}
where we denote
\begin{align}
    \text{Term (i)} &:=  \sum_{k=1}^{K} \sum_{h=1}^{H} \text{Var}_{P_h^\star(\cdot | s_h^k, a_h^k)} \left[ V_{h+1}^{\pi^k,\sigma} \right] 
    - \mathbb{E}_{(s_h^k, a_h^k) \sim (P^\star, \pi^k)} \left[ \sum_{h=1}^{H} \text{Var}_{P_h^\star(\cdot | s_h^k, a_h^k)} \left[ V_{h+1}^{\pi^k,\sigma} \right] \right] \label{eq:total_variance_bound_term1}\\
    \text{Term (ii)} &:= \sum_{k=1}^{K} \mathbb{E}_{(s_h^k, a_h^k) \sim (\widetilde{T}^k, \pi^k)} \left[ \sum_{h=1}^{H} \text{Var}_{\widetilde{T}_h^k(\cdot | s_h^k, a_h^k)} \left[ V_{h+1}^{\pi^k,\sigma} \right] \right]\label{eq:total_variance_bound_term2}\\
    \text{Term (iii)} &:= \sum_{k=1}^{K} \Bigg( \mathbb{E}_{(s_h^k, a_h^k) \sim (P^\star, \pi^k)} \left[ \sum_{h=1}^{H} \text{Var}_{P_h^\star(\cdot | s_h^k, a_h^k)} \left[ V_{h+1}^{\pi^k,\sigma} \right]\right]- \mathbb{E}_{(s_h^k, a_h^k) \sim (\widetilde{T}^k, \pi^k)} \left[ \sum_{h=1}^{H} \text{Var}_{\widetilde{T}_h^k(\cdot | s_h^k, a_h^k)} \left[ V_{h+1}^{\pi^k,\sigma} \right] \right] \Bigg)\label{eq:total_variance_bound_term3}
\end{align}
We will now upper bound each term separately.
\begin{itemize}
    \item \textbf{Bound of Term (i):} Note that Term (i) is the summation of the martingale difference with respect to the filtration $\mathcal{G}_k= \sigma\Big(\Big\{\big(s^{\tau}_h, a^{\tau}_h\big)\Big\}_{(h,\tau)\in [H]\times[K]}\Big)$. Therefore, by applying the Azuma-Hoeffding's inequality as given in \Cref{lem:Azuma-Hoeffding}, with probability of at least $1-\delta$, we get
    \begin{align}
    \label{eq:total_variance_bound_term1_bound}
        \text{Term (i)} \leq g_1 H^3\sqrt{KL},
    \end{align}
    where $g_1>0$ is an absolute constant.

    \item \textbf{Bound of Term (ii):} By applying \Cref{lem:TOtal_variance_law} for policy $\pi^k$ for $k \in [K]$ (which are deterministic policies), we get
    \begin{align}
        \label{eq:total_variance_bound_term2_bound}
        \text{Term (ii)} \leq 2H^2K.
    \end{align}

     \item \textbf{Bound of Term (iii):} By the definition of $\widetilde{T}^k_h$, it holds that
     \begin{align}
     \label{eq:total_variance_bound_term3_step1}
         \widetilde{T}^k_h \in \mathcal{U}_h^{\sigma}(s,a) \implies D_{\chi^2}\big(\widetilde{T}^k_h (\cdot|s,a)||P^{\star}_h(\cdot|s,a)\big)\leq \sigma,
     \end{align}
and, we have the fact
\begin{align}
\label{eq:total_variance_bound_term3_step2}
\text{Var}_{P_h^\star(\cdot | s_h^k, a_h^k)} \left[ V_{h+1}^{\pi^k,\sigma} \right]\leq H^2, \forall (h,s,a) \in [H]\times \mathcal{S}\times \mathcal{A}.
\end{align}

Using \eqref{eq:total_variance_bound_term3_step1}, \eqref{eq:total_variance_bound_term3_step2}, and following the same lines of proof (E.26-E.30) in \cite{Arxiv2024_DRORLwithInteractiveData_Lu}, we get
\begin{align}
\label{eq:total_variance_bound_term3_bound}
     \text{Term (iii)} \leq 3\sigma H^3K. 
\end{align}
\end{itemize}
Therefore, combining the upper bounds of Term (i), Term (ii) and Term (iii) as given in \eqref{eq:total_variance_bound_term1_bound}, \eqref{eq:total_variance_bound_term2_bound} and \eqref{eq:total_variance_bound_term3_bound} in \eqref{eq:total_variance_bound_step1}, respectively, we get
\begin{align*}
    \sum_{k=1}^{K} \sum_{h=1}^{H} \text{Var}_{P_h^\star(\cdot | s_h^k, a_h^k)} \left[ V_{h+1}^{\pi^k,\sigma} \right] &\leq g_1H^3\sqrt{KL} + 2H^2K + 3\sigma H^3K\nonumber\\
    &\leq g_1H^3L +g_2H^3(1+\sigma)K,
\end{align*}
where the last inequality is obtained by using $\sqrt{ab} \leq a+b$ for any $a, b > 0$, and $H\geq 1$. This concludes the proof of \Cref{lem:total_variance_bound}.
\end{proof}

\subsection{Auxiliary Lemmas for {\RMDPchi}}
\label{subsubsec:aux_Lemma_chi}

\begin{auxlem}[Bernstein Bound for {\RMDPchi} and Optimal Robust Value function]
\label{lem:Bernstein_Bound_Chi_optimal_policy}
Under event \( \mathcal{E}_{\chi^2} \) defined in \eqref{eq:Event}, with probability at least $1-\delta$, it holds that
\begin{align}
\label{eq:Bernstein_Bound_Chi_optimal_policy}
\abs{ \mathbb{E}_{\widehat{\mathcal{U}_h^\sigma}(s,a)}\left[V_{h+1}^{\star,\sigma}\right] -\mathbb{E}_{\mathcal{U}_h^\sigma(s,a)}\left[V_{h+1}^{\star,\sigma}\right]}\leq \sqrt{\frac{\sigma c'_1L\text{Var}_{\widehat{P}^k_h(\cdot|s,a)}(V^{\star,\sigma}_{h+1})}{\{N^k_h(s,a)\vee 1\}}} + \frac{c'_2\sqrt{\sigma}HL}{\{N^k_h(s,a)\vee 1\}} + \sqrt{\frac{\sigma}{K}},
\end{align}
where \(L = \log(S^3 A H^2 K^{3/2} / \delta) \) and \( c'_1, c'_2 \) are absolute constants.
\end{auxlem}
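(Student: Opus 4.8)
\emph{Overview and dual reduction.} The plan is to use strong duality to turn the left-hand side into a scalar optimization gap, and then read off the bound from the concentration inequalities already bundled into the event $\mathcal{E}_{\chi^2}$, plus a discretization of the dual variable. Concretely, by the $\chi^2$ dual representation \eqref{eq:dual_chi} in \Cref{cor:dual_f_divergence}, both $\mathbb{E}_{\mathcal{U}^{\sigma}_h(s,a)}[V^{\star,\sigma}_{h+1}]$ and $\mathbb{E}_{\widehat{\mathcal{U}}^{\sigma}_h(s,a)}[V^{\star,\sigma}_{h+1}]$ are suprema over $\eta\in[0,H]$ of the functional $g_{P}(\eta):=-\sqrt{\sigma\,\text{Var}_{P}\big[(\eta-V^{\star,\sigma}_{h+1})_{+}\big]}+\mathbb{E}_{s'\sim P}\big[(V^{\star,\sigma}_{h+1}(s')-\eta)_{+}\big]$, evaluated at $P=P^\star_h(\cdot|s,a)$ and at $P=\widehat{P}^k_h(\cdot|s,a)$ respectively. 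Since $\big|\sup_\eta g_{P^\star_h}(\eta)-\sup_\eta g_{\widehat{P}^k_h}(\eta)\big|\le\sup_\eta\big|g_{P^\star_h}(\eta)-g_{\widehat{P}^k_h}(\eta)\big|$, it suffices to bound, uniformly in $\eta$, a \textbf{variance part} $\sqrt{\sigma}\,\big|\sqrt{\text{Var}_{P^\star_h}[(\eta-V^{\star,\sigma}_{h+1})_{+}]}-\sqrt{\text{Var}_{\widehat{P}^k_h}[(\eta-V^{\star,\sigma}_{h+1})_{+}]}\big|$ and a \textbf{mean part} $\big|[(\mathbb{P}^\star_h-\widehat{\mathbb{P}}^k_h)(V^{\star,\sigma}_{h+1}-\eta)_{+}](s,a)\big|$.

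\emph{Discretization, application of $\mathcal{E}_{\chi^2}$, and assembly.} Because $V^{\star,\sigma}_{h+1}$ is data-independent, the union bound underlying $\mathcal{E}_{\chi^2}$ (\Cref{lem:confidence_event}) controls both parts simultaneously at every grid point $\eta$ of the cover $\mathcal{N}_{1/(S\sqrt K)}([0,H])$: the first line of \eqref{eq:Event}, multiplied by $\sqrt\sigma$, bounds the variance part by $\sqrt{\sigma c_1 L\,\text{Var}_{\widehat{P}^k_h(\cdot|s,a)}(V^{\star,\sigma}_{h+1})/(N^k_h(s,a)\vee1)}+c_2\sqrt\sigma HL/(N^k_h(s,a)\vee1)$, and the second (Bernstein-type) line of \eqref{eq:Event}, invoked for the $[0,H]$-valued function $(V^{\star,\sigma}_{h+1}-\eta)_{+}$ — legitimate because clipping at a constant does not increase the variance, so $\text{Var}_{\widehat{P}^k_h}[(V^{\star,\sigma}_{h+1}-\eta)_{+}]\le\text{Var}_{\widehat{P}^k_h}(V^{\star,\sigma}_{h+1})$ — bounds the mean part by a term of the same shape. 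For a general $\eta\in[0,H]$ I would pass to the nearest grid point $\eta_0$ and control $|g_{P}(\eta)-g_{P}(\eta_0)|$ for $P\in\{P^\star_h(\cdot|s,a),\widehat{P}^k_h(\cdot|s,a)\}$: the mean functional $\eta\mapsto\mathbb{E}_{s'\sim P}[(V^{\star,\sigma}_{h+1}(s')-\eta)_{+}]$ is $1$-Lipschitz, while for $\eta\mapsto\sqrt{\text{Var}_P[(\eta-V^{\star,\sigma}_{h+1})_{+}]}$ I would use $|\sqrt a-\sqrt b|\le\sqrt{|a-b|}$ together with the $O(H)$-Lipschitz dependence of $\text{Var}_P[(\eta-V^{\star,\sigma}_{h+1})_{+}]$ on $\eta$; with mesh $1/(S\sqrt K)$ the residual is of order $\sqrt{\sigma/K}$, which is the last term in \eqref{eq:Bernstein_Bound_Chi_optimal_policy}. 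Adding the variance-part bound, the mean-part bound, and the discretization slack, and collecting numerical constants into $c'_1,c'_2$, yields \eqref{eq:Bernstein_Bound_Chi_optimal_policy}.

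\emph{Main obstacle.} The only non-routine step is the $\eta$-discretization: the map $\eta\mapsto\sqrt{\text{Var}_P[(\eta-V)_{+}]}$ is only H\"older-$\tfrac12$ in the truncated variance and hence not uniformly Lipschitz near values of $\eta$ where that variance vanishes, so one must verify that the cover mesh $1/(S\sqrt K)$ is fine enough that the discretization error is absorbed into $\sqrt{\sigma/K}$ (and into the $1/(N^k_h(s,a)\vee1)$ term when the visitation count is small). One also has to justify that the Bernstein inequality in $\mathcal{E}_{\chi^2}$, stated for $\eta-V_{h+1}$, applies to the clipped summand $(\eta-V_{h+1})_{+}$ — which it does, since this is again a $[0,H]$-valued function covered by the same union bound and with no larger variance. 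Everything else is bookkeeping, and the companion bound for $V^{\pi^k,\sigma}_{h+1}$ follows along identical lines.
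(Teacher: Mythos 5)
Your proposal follows essentially the same route as the paper's proof: pass to the $\chi^2$ dual representation, bound the difference of suprema by the supremum of the difference, split into the variance part and the mean part, control both at grid points via the event $\mathcal{E}_{\chi^2}$ (with the cited empirical-Bernstein lemma), and absorb the discretization error into the $\sqrt{\sigma/K}$ term by a covering argument. Your write-up actually supplies more detail than the paper on the two points it glosses over — the H\"older-$\tfrac12$ behavior of the truncated standard deviation in $\eta$ and the applicability of the event's second inequality to the clipped function $(\eta-V_{h+1})_{+}$ — so no gap.
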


\begin{proof} 
By our definition of the operator \(\mathbb{E}_{\widehat{\mathcal{U}_h^\sigma}(s,a)}\left[V_{h+1}^{\star,\sigma}\right]\) in \eqref{eq:dual_chi}, we can arrive at
\begin{align}
\abs{ \mathbb{E}_{\widehat{\mathcal{U}_h^\sigma}(s,a)}\left[V_{h+1}^{\star,\sigma}\right] 
- \mathbb{E}_{\mathcal{U}_h^\sigma(s,a)}\left[V_{h+1}^{\star,\sigma}\right]}&= \bigg| 
\sup_{\eta \in [0,H]} \left\{ 
- \sqrt{\sigma\text{Var}_{\widehat{P}_h^k(\cdot | s,a)}\left[(\eta - V_{h+1}^{\star,\sigma})_+\right]}
+ \left[\widehat{\mathbb{P}}^k_h\Big(\eta - V_{h+1}^{\star,\sigma} \Big) \right](s,a) + \eta \right\}  \nonumber\\
&\qquad - \sup_{\eta \in [0,H]} \left\{ 
- \sqrt{\sigma\text{Var}_{P_h^\star(\cdot | s,a)}\left[(\eta - V_{h+1}^{\star,\sigma})_+\right]} + \left[{\mathbb{P}}^\star_h\Big(\eta - V_{h+1}^{\star,\sigma} \Big) \right](s,a)
+ \eta \right\}\bigg| \nonumber \\
&\leq 
\sqrt{\sigma}\sup_{\eta \in [0,H]} \abs{
\sqrt{\text{Var}_{\widehat{P}_h^k(\cdot | s,a)}\left[(\eta - V_{h+1}^{\star,\sigma})_+\right]} - \sqrt{\text{Var}_{P_h^\star(\cdot | s,a)}\left[(\eta - V_{h+1}^{\star,\sigma})_+\right]}} \nonumber\\
&\qquad + \sup_{\eta\in [0,H]}\abs{\left[\Big(\widehat{\mathbb{P}}^k_h - \mathbb{P}^\star_h\Big)\Big(\eta - V_{h+1}^{\star,\sigma} \Big) \right](s,a) }. 
\end{align}

By the definition of $\mathcal{E}_{\chi^2}$ as defined in \eqref{eq:event_I} and \cite[Lemma 7]{NeuRIPS2024_UnifiedPessimismOfflineRL_Yue}, we have 
\begin{align}
&\abs{
\sqrt{\sigma \text{Var}_{\widehat{P}_h^k(\cdot | s,a)}\left[(\eta - V_{h+1}^{\star,\sigma})_+\right]} - \sqrt{\sigma \text{Var}_{P_h^\star(\cdot | s,a)}\left[(\eta - V_{h+1}^{\star,\sigma})_+\right]}} + \abs{\left[\Big(\widehat{\mathbb{P}}^k_h - \mathbb{P}^\star_h\Big)\Big(\eta - V_{h+1}^{\star,\sigma} \Big) \right](s,a) }\nonumber\\
&\qquad \qquad \leq  \sqrt{\frac{\sigma c'_1L\text{Var}_{\widehat{P}^k_h(\cdot|s,a)}(V^{\star,\sigma}_{h+1})}{\{N^k_h(s,a)\vee 1\}}} + \frac{c'_2\sqrt{\sigma}HL}{\{N^k_h(s,a)\vee 1\}}, \label{eq:Wang_2}
\end{align}
for any $\eta \in \mathcal{N}_{1/S\sqrt{K}}([0,H])$, and $c'_1 = 4c_1$ and $c'_2 = 2c_2$ are absolute constants. 
Therefore, by
a covering argument, for any $\eta \in [0,H]$, we get \eqref{eq:Bernstein_Bound_Chi_optimal_policy}. This concludes the proof of Lemma \ref{lem:Bernstein_Bound_Chi_optimal_policy}.
\end{proof}

\begin{auxlem}[Bernstein bound for {\RMDPchi} and the robust value function of \( \pi^k \)]
\label{lem:Bernstein_Bound_Chi_policy_k}
Under event \( \mathcal{E}_{\chi^2} \) in \eqref{eq:Event}, suppose that the optimism and pessimism ineq. \eqref{eq:Optimistic_pessimism_ineq} hold at \( (h+1, k) \), then it holds that
\begin{align}
\abs{
\mathbb{E}_{\widehat{\mathcal{U}_h^\sigma}(s,a)} 
\left[ V_{h+1}^{\pi^k,\sigma} \right] - \mathbb{E}_{\mathcal{U}_h^\sigma(s,a)} 
\left[ V_{h+1}^{\pi^k,\sigma} \right]}\leq \sqrt{\frac{\sigma c'_1L\text{Var}_{\widehat{P}^k_h(\cdot|s,a)}(V^{\star,\sigma}_{h+1})}{\{N^k_h(s,a)\vee 1\}}}+ \frac{c_2'\sqrt{\sigma}H(2L+1)}{\sqrt{\{N^k_h(s,a)\vee 1\}}} + \sqrt{\frac{\sigma}{K}},
\end{align}
where \(L = \log(S^3 A H^2 K^{3/2} / \delta) \) and \( c'_1, c_2' \) are absolute constants.
\end{auxlem}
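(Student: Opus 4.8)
The plan is to mirror the proof of \Cref{lem:Bernstein_Bound_Chi_optimal_policy} essentially line by line, replacing the deterministic optimal robust value $V^{\star,\sigma}_{h+1}$ by the policy value $V^{\pi^k,\sigma}_{h+1}$, and then to control the single new term that appears because $\pi^k$ — and hence $V^{\pi^k,\sigma}_{h+1}$ — is measurable with respect to the data collected before episode $k$.

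First I would expand both robust expectations through the $\chi^2$ dual representation \eqref{eq:dual_chi} and use $\bigl|\sup_\eta f(\eta)-\sup_\eta g(\eta)\bigr|\le\sup_\eta|f(\eta)-g(\eta)|$ to reduce the left-hand side to
\begin{align*}
&\sqrt{\sigma}\sup_{\eta\in[0,H]}\Bigl|\sqrt{\text{Var}_{\widehat P^k_h(\cdot|s,a)}[(\eta-V^{\pi^k,\sigma}_{h+1})_+]}-\sqrt{\text{Var}_{P^\star_h(\cdot|s,a)}[(\eta-V^{\pi^k,\sigma}_{h+1})_+]}\Bigr|\\
&\qquad+\sup_{\eta\in[0,H]}\Bigl|\bigl[(\widehat{\mathbb P}^k_h-\mathbb P^\star_h)(\eta-V^{\pi^k,\sigma}_{h+1})\bigr](s,a)\Bigr|.
\end{align*}
Next, invoking the assumed optimism/pessimism at $(h+1,k)$ — which gives the pointwise sandwich $\underline V^k_{h+1}\le V^{\pi^k,\sigma}_{h+1}\le V^{\star,\sigma}_{h+1}\le\overline V^k_{h+1}$ — I would write $\eta-V^{\pi^k,\sigma}_{h+1}=(\eta-V^{\star,\sigma}_{h+1})+(V^{\star,\sigma}_{h+1}-V^{\pi^k,\sigma}_{h+1})$ and, using $|(x)_+-(y)_+|\le|x-y|$, decompose $(\eta-V^{\pi^k,\sigma}_{h+1})_+=(\eta-V^{\star,\sigma}_{h+1})_++\Delta_\eta$ with $0\le\Delta_\eta\le V^{\star,\sigma}_{h+1}-V^{\pi^k,\sigma}_{h+1}\le\overline V^k_{h+1}-\underline V^k_{h+1}$ pointwise. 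Applying the triangle inequality for the $L^2(\widehat P^k_h)$ and $L^2(P^\star_h)$ seminorms to the variance term, and linearity of $\widehat{\mathbb P}^k_h-\mathbb P^\star_h$ to the mean term, splits each supremum into (i) exactly the quantity bounded in \Cref{lem:Bernstein_Bound_Chi_optimal_policy} at $V^{\star,\sigma}_{h+1}$, which the first two defining inequalities of $\mathcal{E}_{\chi^2}$ together with the empirical-Bernstein lemma of \cite[Lemma 7]{NeuRIPS2024_UnifiedPessimismOfflineRL_Yue} control on the net by $\sqrt{\sigma c'_1 L\,\text{Var}_{\widehat P^k_h(\cdot|s,a)}(V^{\star,\sigma}_{h+1})/\{N_h^k(s,a)\vee1\}}+c'_2\sqrt{\sigma}HL/\{N_h^k(s,a)\vee1\}$, and (ii) a residual depending only on the value-function-agnostic entrywise kernel bound (the third line of $\mathcal{E}_{\chi^2}$), which after Cauchy--Schwarz and the crude estimate $0\le\overline V^k_{h+1}-\underline V^k_{h+1}\le H$ contributes at most a term of order $\sqrt{\sigma}\,H(2L+1)/\sqrt{\{N_h^k(s,a)\vee1\}}$.

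Finally I would pass from the net $\mathcal{N}_{1/(S\sqrt{K})}([0,H])$ to all $\eta\in[0,H]$: since $\eta\mapsto\sqrt{\text{Var}_P[(\eta-V)_+]}$ and $\eta\mapsto[\mathbb P(\eta-V)](s,a)$ are $1$-Lipschitz, moving to the nearest net point costs at most $\sqrt{\sigma}/(S\sqrt K)\le\sqrt{\sigma/K}$, and folding the lower-order pieces into the absolute constants $c'_1,c'_2$ yields the claimed bound.

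The main obstacle is the second step. Because $\pi^k$ is history-measurable, one cannot plug $V^{\pi^k,\sigma}_{h+1}$ directly into the concentration inequalities underlying $\mathcal{E}_{\chi^2}$, whose union bound is taken only over $(h,s,a,s',k,\eta)$ and a fixed target value function; this is precisely what the hypothesis that optimism/pessimism holds at $(h+1,k)$ buys, since it confines $V^{\pi^k,\sigma}_{h+1}$ to the data-dependent but explicitly available interval $[\underline V^k_{h+1},\overline V^k_{h+1}]$ and lets the gap be routed through the value-agnostic entrywise kernel bound. That routing is the source of the slightly weaker $1/\sqrt{\{N_h^k(s,a)\vee1\}}$ (rather than $1/\{N_h^k(s,a)\vee1\}$) scaling in the lower-order term, which is the only substantive discrepancy between this statement and \Cref{lem:Bernstein_Bound_Chi_optimal_policy}.
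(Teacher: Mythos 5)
Your overall skeleton (dual representation, reduction to a variance-difference plus a mean-difference term, covering argument in $\eta$) matches the paper, and you correctly flag the measurability issue that makes $V^{\pi^k,\sigma}_{h+1}$ harder to handle than $V^{\star,\sigma}_{h+1}$. However, your central decomposition fails on the variance term, and this is a genuine gap. Writing $(\eta-V^{\pi^k,\sigma}_{h+1})_+=(\eta-V^{\star,\sigma}_{h+1})_++\Delta_\eta$ and applying the triangle inequality for the centered $L^2$ seminorms leaves the residual
\begin{align*}
\sqrt{\text{Var}_{\widehat P^k_h(\cdot|s,a)}[\Delta_\eta]}+\sqrt{\text{Var}_{P^\star_h(\cdot|s,a)}[\Delta_\eta]},
\end{align*}
which is \emph{not} a difference between $\widehat P^k_h$ and $P^\star_h$: it survives even when $\widehat P^k_h=P^\star_h$ exactly, so the entrywise kernel bound (third line of $\mathcal{E}_{\chi^2}$) cannot touch it, and the only estimate available at this stage is $\|\Delta_\eta\|_\infty\le H$, which carries no $1/\sqrt{N^k_h(s,a)\vee 1}$ decay (bounding $\Delta_\eta$ by $\overline V^k_{h+1}-\underline V^k_{h+1}$ is circular here, since controlling that gap is what the downstream recursion is for). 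The analogous residual for the mean term, $[(\widehat{\mathbb P}^k_h-\mathbb P^\star_h)\Delta_\eta](s,a)$, \emph{is} a kernel difference and your routing does work there, albeit with an extra $\sqrt{S}$ from Cauchy--Schwarz that the stated bound does not contain.

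The paper performs the two steps in the opposite order, and that ordering is precisely what produces the $1/\sqrt{N}$ scaling. It first applies the concentration of \Cref{lem:Bernstein_Bound_Chi_optimal_policy} with $V^{\pi^k,\sigma}_{h+1}$ in place of $V^{\star,\sigma}_{h+1}$, obtaining a bound proportional to $\sqrt{\sigma c_1'L\,\text{Var}_{\widehat P^k_h(\cdot|s,a)}(V^{\pi^k,\sigma}_{h+1})/\{N^k_h(s,a)\vee 1\}}$, and only then swaps the variance via $\sqrt{a+b}\le\sqrt a+\sqrt b$ together with
\begin{align*}
\abs{\text{Var}_q(V_1)-\text{Var}_q(V_2)}\le 4H\,\abs{\mathbb{E}_q[V_1-V_2]}\le 4H^2.
\end{align*}
Because the $4H^2$ enters \emph{inside} a square root that already carries the factor $c_1'L/\{N^k_h(s,a)\vee 1\}$, it becomes $2H\sqrt{\sigma c_1'L/\{N^k_h(s,a)\vee 1\}}$, which is exactly the claimed lower-order term $c_2'\sqrt{\sigma}H(2L+1)/\sqrt{\{N^k_h(s,a)\vee 1\}}$. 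To close your argument you would need to adopt this order of operations (decompose the variance after the $1/N$ factor is in place), accepting that the concentration step is then applied to the data-dependent function $V^{\pi^k,\sigma}_{h+1}$ — a point the paper itself treats informally and which your measurability remark correctly identifies as the delicate part of the lemma.
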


\begin{proof}
By the definition of \( \mathbb{E}_{\mathcal{U}_h^\sigma(s,a)}[V_{h+1}^{\pi^k,\sigma}] \) in~\eqref{eq:dual_chi}, we can arrive at:
\begin{align}
A:= \abs{ 
\mathbb{E}_{\widehat{\mathcal{U}_h^\sigma}(s,a)} \left[ V_{h+1}^{\pi^k,\sigma} \right] 
- \mathbb{E}_{\mathcal{U}_h^\sigma(s,a)} \left[ V_{h+1}^{\pi^k,\sigma} \right]}&\leq \sqrt{\sigma}\sup_{\eta \in [0,H]} \abs{
\sqrt{\text{Var}_{\widehat{P}_h^k(\cdot | s,a)}\left[(\eta - V_{h+1}^{\pi^k,\sigma})_+\right]} - \sqrt{\text{Var}_{P_h^\star(\cdot | s,a)}\left[(\eta - V_{h+1}^{\pi^k,\sigma})_+\right]}}\nonumber\\
&\qquad + \sup_{\eta\in [0,H]}\abs{\left[\Big(\widehat{\mathbb{P}}^k_h - \mathbb{P}^\star_h\Big)\Big((\eta - V_{h+1}^{\pi^k,\sigma})_+\Big) \right](s,a) }. 
\label{eq:Bernstein_bound_step1}
\end{align}
By the definition of $\mathcal{E}_{\chi^2}$ as defined in \eqref{eq:event_I} and Lemma ~\ref{lem:Bernstein_Bound_Chi_optimal_policy}, we can upper bound \eqref{eq:Bernstein_bound_step1} as
\begin{align}
A \leq &\sqrt{\frac{\sigma c'_1L\text{Var}_{\widehat{P}^k_h(\cdot|s,a)}(V^{\pi^k,\sigma}_{h+1})}{\{N^k_h(s,a)\vee 1\}}} + \frac{c'_2\sqrt{\sigma}HL}{\{N^k_h(s,a)\vee 1\}} + \sqrt{\frac{\sigma}{K}}\nonumber\\
&=\sqrt{\frac{\sigma c'_1L\bigg(\text{Var}_{\widehat{P}^k_h(\cdot|s,a)}(V^{\star,\sigma}_{h+1}) + \text{Var}_{\widehat{P}^k_h(\cdot|s,a)}(V^{\pi^k,\sigma}_{h+1})-\text{Var}_{\widehat{P}^k_h(\cdot|s,a)}(V^{\star,\sigma}_{h+1}) \bigg)}{\{N^k_h(s,a)\vee 1\}}} + \frac{c'_2\sqrt{\sigma}HL}{\{N^k_h(s,a)\vee 1\}} + \sqrt{\frac{\sigma}{K}}\nonumber\\
&\stackrel{(\mathrm{a})}{\leq}  \sqrt{\frac{\sigma c'_1L\text{Var}_{\widehat{P}^k_h(\cdot|s,a)}(V^{\star,\sigma}_{h+1})}{\{N^k_h(s,a)\vee 1\}}} + \sqrt{\frac{\sigma c'_1L\abs{\text{Var}_{\widehat{P}^k_h(\cdot|s,a)}(V^{\pi^k,\sigma}_{h+1})-\text{Var}_{\widehat{P}^k_h(\cdot|s,a)}(V^{\star,\sigma}_{h+1})}}{\{N^k_h(s,a)\vee 1\}}} + \frac{c'_2\sqrt{\sigma}HL}{\{N^k_h(s,a)\vee 1\}} + \sqrt{\frac{\sigma}{K}},\nonumber\\
&\stackrel{(\mathrm{b})}{\leq}  \sqrt{\frac{\sigma c'_1L\text{Var}_{\widehat{P}^k_h(\cdot|s,a)}(V^{\star,\sigma}_{h+1})}{\{N^k_h(s,a)\vee 1\}}} + \sqrt{\frac{4\sigma c'_1H^2L}{\{N^k_h(s,a)\vee 1\}}}+ \frac{c'_2\sqrt{\sigma}HL}{\{N^k_h(s,a)\vee 1\}} + \sqrt{\frac{\sigma}{K}},\label{eq:Term_bound}
\end{align}
where the inequality (a)  is obtained by using the fact $\sqrt{a+b}\leq \sqrt{a} + \sqrt{b}$, and the last inequality (b) is from
\begin{align*}
    \abs{\text{Var}_q(V_1)- \text{Var}_q(V_2)} &= \abs{\mathbb{E}\Big[(V_1+V_2)(V_1-V_2)\Big]} + \abs{\Big(\mathbb{E}[V_1] + \mathbb{E}[V_2]\Big)\Big(\mathbb{E}[V_1] - \mathbb{E}[V_2]\Big)]}\nonumber\\
    &\leq 4H\abs{\mathbb{E}\big[V_1-V_2\big]}\leq 4H^2.
\end{align*}
Finally, by applying the fact that $\sqrt{x} \leq x$ for $x>=1$ and $\sqrt{ab}\leq a+b$ in \eqref{eq:Term_bound}, we conclude the proof of \Cref{lem:Bernstein_Bound_Chi_policy_k}.\qedhere
\end{proof}


\begin{auxlem}[Bernstein bounds for {\RMDPchi} and optimistic and pessimistic robust value estimators]
\label{lem:Bernstein_Bound_Chi_optimism_pessimism}
Under event $\mathcal{E}^{\chi^2}$ in \eqref{eq:Event}, suppose that the optimism and pessimism ineq. \eqref{eq:Optimistic_pessimism_ineq} holds at $(h+1,k)$, it holds that
\begin{align}
&\max \Bigg\{\left|\mathbb{E}_{\widehat{\mathcal{U}_h^\sigma}(s,a)}\left[\overline{V}_{h+1}^k\right] - \mathbb{E}_{\mathcal{U}_h^\sigma(s,a)}\left[\overline{V}_{h+1}^k\right] \right|,
\left| \mathbb{E}_{\widehat{\mathcal{U}_h^\sigma}(s,a)}\left[\underline{V}_{h+1}^k\right] - \mathbb{E}_{\mathcal{U}_h^\sigma(s,a)}\left[\underline{V}_{h+1}^k\right] \right|
\Bigg\}\nonumber\\
&\qquad \qquad \leq \sqrt{\frac{\sigma c_1L\text{Var}_{\widehat{P}^k_h(\cdot|s,a)}(V^{\star,\sigma}_{h+1})}{\{N^k_h(s,a)\vee 1\}}} + \frac{c_2'\sqrt{\sigma}H(2L+1)}{\sqrt{\{N^k_h(s,a)\vee 1\}}} + \sqrt{\frac{\sigma}{K}},
\end{align}
where $L= \log(S^3 A H^2 K^{3/2} / \delta)$ and $c_1, c_2'$ are absolute constants.
\end{auxlem}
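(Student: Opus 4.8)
The plan is to recognize that this is the same statement as \Cref{lem:Bernstein_Bound_Chi_policy_k}, only with $V_{h+1}^{\pi^k,\sigma}$ replaced by $\overline{V}_{h+1}^k$ and by $\underline{V}_{h+1}^k$, and to run the identical three-move argument for each of these two functions (call a generic one $W$, noting $W:\mathcal{S}\to[0,H]$) and then take the maximum. First I would invoke the $\chi^2$ dual representation \eqref{eq:dual_chi} to write both $\mathbb{E}_{\widehat{\mathcal{U}_h^\sigma}(s,a)}[W]$ and $\mathbb{E}_{\mathcal{U}_h^\sigma(s,a)}[W]$ as suprema over $\eta\in[0,H]$ of functionals that differ only by the appearance of $\widehat{P}_h^k$ versus $P_h^\star$, and apply $|\sup_\eta a(\eta)-\sup_\eta b(\eta)|\le\sup_\eta|a(\eta)-b(\eta)|$. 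Exactly as in the opening display of the proof of \Cref{lem:Bernstein_Bound_Chi_optimal_policy}, this reduces $|\mathbb{E}_{\widehat{\mathcal{U}_h^\sigma}(s,a)}[W]-\mathbb{E}_{\mathcal{U}_h^\sigma(s,a)}[W]|$ to a uniform-in-$\eta$ control of (i) the variance-square-root gap $\sqrt{\sigma}\,\big|\sqrt{\text{Var}_{\widehat{P}_h^k(\cdot|s,a)}(\eta-W)_+}-\sqrt{\text{Var}_{P_h^\star(\cdot|s,a)}(\eta-W)_+}\big|$ and (ii) the linear gap $\big|[(\widehat{\mathbb{P}}^k_h-\mathbb{P}^\star_h)\,g_\eta(W)](s,a)\big|$ for the appropriate $\eta$-shifted function $g_\eta(W)$.

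Second, I would bound (i) and (ii) on the event $\mathcal{E}_{\chi^2}$ of \eqref{eq:Event}. The function-free per-coordinate concentration $|P_h^\star(s'|s,a)-\widehat{P}_h^k(s'|s,a)|\le\sqrt{c_1L\min\{P_h^\star,\widehat{P}_h^k\}/(N_h^k(s,a)\vee1)}+c_2L/(N_h^k(s,a)\vee1)$ encoded in $\mathcal{E}_{\chi^2}$, together with the Bernstein-type transfer lemma \cite[Lemma 7]{NeuRIPS2024_UnifiedPessimismOfflineRL_Yue} (the empirical Bernstein inequality of \cite{Arxiv2009_EmpBernsteinBounds_Maurer}), yields, for every $\eta$ in the net $\mathcal{N}_{1/(S\sqrt{K})}([0,H])$, a bound of the form $\sqrt{\sigma c_1L\,\text{Var}_{\widehat{P}_h^k(\cdot|s,a)}(W)/(N_h^k(s,a)\vee1)}+c_2\sqrt{\sigma}HL/(N_h^k(s,a)\vee1)$. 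A covering argument over $\eta$ — using $\mathcal{O}(\sqrt{\sigma})$-Lipschitzness of the two functionals in $\eta$ against a net of radius $1/(S\sqrt{K})$ — extends this to all $\eta\in[0,H]$ at the cost of the additive slack $\sqrt{\sigma/K}$, exactly as in \Cref{lem:Bernstein_Bound_Chi_optimal_policy}.

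Third, I would replace the data-dependent variance $\text{Var}_{\widehat{P}_h^k(\cdot|s,a)}(W)$ by $\text{Var}_{\widehat{P}_h^k(\cdot|s,a)}(V_{h+1}^{\star,\sigma})$: this is where the hypothesis that the optimism/pessimism sandwich \eqref{eq:Optimistic_pessimism_ineq} holds at $(h+1,k)$ is used, since it gives $\underline{V}_{h+1}^k\le V_{h+1}^{\star,\sigma}\le\overline{V}_{h+1}^k$, so for either choice of $W$ the difference $W-V_{h+1}^{\star,\sigma}$ is pointwise bounded by $H$ in magnitude, and hence $\big|\text{Var}_{\widehat{P}_h^k(\cdot|s,a)}(W)-\text{Var}_{\widehat{P}_h^k(\cdot|s,a)}(V_{h+1}^{\star,\sigma})\big|\le 4H\,\big|\mathbb{E}_{\widehat{P}_h^k}[W-V_{h+1}^{\star,\sigma}]\big|\le 4H^2$ — the same elementary variance-perturbation estimate used at the end of the proof of \Cref{lem:Bernstein_Bound_Chi_policy_k}. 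Then $\sqrt{a+b}\le\sqrt{a}+\sqrt{b}$ splits off a $2H\sqrt{\sigma c_1L/(N_h^k(s,a)\vee1)}$ term, which, together with the $c_2\sqrt{\sigma}HL/(N_h^k(s,a)\vee1)$ term from the previous step, is absorbed into $c_2'\sqrt{\sigma}H(2L+1)/\sqrt{N_h^k(s,a)\vee1}$ using $L\le L^2$ and $1/(N_h^k(s,a)\vee1)\le1/\sqrt{N_h^k(s,a)\vee1}$; taking the maximum over $W\in\{\overline{V}_{h+1}^k,\underline{V}_{h+1}^k\}$ gives the claim.

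The hard part is the measurability bookkeeping hidden in steps two and three: $\overline{V}_{h+1}^k$ and $\underline{V}_{h+1}^k$ are measurable with respect to the past data, so one cannot naively union-bound a Bernstein inequality over them, and the argument must be arranged so that the only genuinely stochastic input is the per-coordinate concentration of $\widehat{P}_h^k$ plus concentration for the single fixed, data-independent function $V_{h+1}^{\star,\sigma}$ — everything that involves $W$ (the supremum-difference reduction, the $\eta$-cover, and the variance perturbation back to $V_{h+1}^{\star,\sigma}$) must be made a deterministic consequence of being on $\mathcal{E}_{\chi^2}$. Once that is set up correctly, the remaining computations are the routine ones already carried out in \Cref{lem:Bernstein_Bound_Chi_optimal_policy} and \Cref{lem:Bernstein_Bound_Chi_policy_k}.
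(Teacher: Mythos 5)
Your proposal is correct and takes essentially the same approach as the paper: the paper's own proof of this lemma is a single line deferring to the argument of Lemma~\ref{lem:Bernstein_Bound_Chi_policy_k}, which is precisely the reduction you carry out for $W\in\{\overline{V}_{h+1}^k,\underline{V}_{h+1}^k\}$ (dual representation, sup-difference, event $\mathcal{E}_{\chi^2}$ plus covering in $\eta$, then the $4H^2$ variance-perturbation back to $\mathrm{Var}_{\widehat{P}^k_h}(V^{\star,\sigma}_{h+1})$). Your explicit handling of the measurability of the data-dependent $W$ via the function-free coordinate-wise concentration in $\mathcal{E}_{\chi^2}$ is a subtlety the paper glosses over, but it does not change the route.
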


\begin{proof}
    We follow the same proof lines as Lemma ~\ref{lem:Bernstein_Bound_Chi_policy_k}, and thereby we omit it.
\end{proof}

\begin{auxlem}[Variance analysis 1]
\label{lem:variance_analysis_1}
Suppose that the optimism and pessimism ineq. \eqref{eq:Optimistic_pessimism_ineq} holds at \( (h+1, k) \), 
\textit{then the following inequality holds,}
\begin{align}
\abs{ \text{Var}_{\widehat{P}_h^k(\cdot|s,a)} 
\left[ \left( \frac{ \overline{V}_{h+1}^k + \underline{V}_{h+1}^k }{2} \right) \right]-\text{Var}_{\widehat{P}_h^k(\cdot|s,a)} 
\left[ V_{h+1}^{\star,\sigma} \right]}\leq 4H\left[\widehat{\mathbb{P}}_h^k\left( \overline{V}_{h+1}^k - \underline{V}_{h+1}^k \right)\right](s,a).
\end{align}
\end{auxlem}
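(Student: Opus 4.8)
The plan is to reduce the claim to the elementary identity relating the difference of two variances taken under a common distribution $q := \widehat{P}_h^k(\cdot|s,a)$ to a ``difference-times-sum'' expression, and then to exploit the pointwise sandwiching $\underline{V}_{h+1}^k \le V_{h+1}^{\star,\sigma} \le \overline{V}_{h+1}^k$ supplied by the optimism/pessimism inequality \eqref{eq:Optimistic_pessimism_ineq} at level $h+1$, which is exactly what the hypothesis grants. No probabilistic ingredients (concentration, union bounds) are needed; this is a short deterministic computation.

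First I would set $f := \tfrac{1}{2}\big(\overline{V}_{h+1}^k + \underline{V}_{h+1}^k\big)$ and $g := V_{h+1}^{\star,\sigma}$, and write, using $\operatorname{Var}_q(\cdot) = \mathbb{E}_q[(\cdot)^2] - (\mathbb{E}_q[\cdot])^2$ together with the factorizations $f^2 - g^2 = (f-g)(f+g)$ and $(\mathbb{E}_q f)^2 - (\mathbb{E}_q g)^2 = (\mathbb{E}_q f - \mathbb{E}_q g)(\mathbb{E}_q f + \mathbb{E}_q g)$,
\begin{align*}
\operatorname{Var}_q(f) - \operatorname{Var}_q(g) = \mathbb{E}_q\big[(f-g)(f+g)\big] - \big(\mathbb{E}_q[f] - \mathbb{E}_q[g]\big)\big(\mathbb{E}_q[f] + \mathbb{E}_q[g]\big).
\end{align*}

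Next, since $\underline{V}_{h+1}^k, V_{h+1}^{\star,\sigma}, \overline{V}_{h+1}^k$ all take values in $[0,H]$ and, by \eqref{eq:Optimistic_pessimism_ineq} at $(h+1,k)$, satisfy $\underline{V}_{h+1}^k \le V_{h+1}^{\star,\sigma} \le \overline{V}_{h+1}^k$ pointwise, while clearly $\underline{V}_{h+1}^k \le f \le \overline{V}_{h+1}^k$ pointwise as well, both $f$ and $g$ lie in the interval $[\underline{V}_{h+1}^k(s'),\overline{V}_{h+1}^k(s')]$ at each $s'$. Hence $|f-g| \le \overline{V}_{h+1}^k - \underline{V}_{h+1}^k$ pointwise, $|f+g| \le 2H$ pointwise, and $|\mathbb{E}_q f + \mathbb{E}_q g| \le 2H$; combining with $|\mathbb{E}_q f - \mathbb{E}_q g| \le \mathbb{E}_q|f-g| \le \mathbb{E}_q[\overline{V}_{h+1}^k - \underline{V}_{h+1}^k]$ and substituting into the identity gives
\begin{align*}
\big| \operatorname{Var}_q(f) - \operatorname{Var}_q(g) \big| \le 2H\,\mathbb{E}_q\big[\overline{V}_{h+1}^k - \underline{V}_{h+1}^k\big] + 2H\,\mathbb{E}_q\big[\overline{V}_{h+1}^k - \underline{V}_{h+1}^k\big] = 4H\big[\widehat{\mathbb{P}}_h^k\big(\overline{V}_{h+1}^k - \underline{V}_{h+1}^k\big)\big](s,a),
\end{align*}
which is the asserted bound.

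The argument has essentially no hard step; the only points requiring care are (i) confirming that \eqref{eq:Optimistic_pessimism_ineq} at $(h+1,k)$ yields the \emph{two-sided} sandwich for $V_{h+1}^{\star,\sigma}$ rather than a one-sided estimate, since that is precisely what forces both $f$ and $g$ into the common interval $[\underline{V}_{h+1}^k,\overline{V}_{h+1}^k]$, and (ii) noting that nonnegativity of $\overline{V}_{h+1}^k - \underline{V}_{h+1}^k$ (again from \eqref{eq:Optimistic_pessimism_ineq}) is what permits dropping the absolute value on the right-hand side and writing it directly as $\big[\widehat{\mathbb{P}}_h^k(\overline{V}_{h+1}^k - \underline{V}_{h+1}^k)\big](s,a)$. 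Everything else is routine bounding.
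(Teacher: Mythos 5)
Your proof is correct, and it is essentially the standard argument: the paper itself only cites Lemma E.11 of Lu et al.\ here, but that lemma (and the paper's own proof of Lemma~\ref{lem:Bernstein_Bound_Chi_policy_k}) uses exactly your decomposition $\operatorname{Var}_q(f)-\operatorname{Var}_q(g)=\mathbb{E}_q[(f-g)(f+g)]-(\mathbb{E}_q f-\mathbb{E}_q g)(\mathbb{E}_q f+\mathbb{E}_q g)$ followed by the sandwich $\underline{V}_{h+1}^k\le V_{h+1}^{\star,\sigma},\,f\le\overline{V}_{h+1}^k$ and the bound $|f+g|\le 2H$. Your write-up is a valid self-contained rendering of that reference, with the two points of care (two-sidedness of the sandwich and nonnegativity of $\overline{V}_{h+1}^k-\underline{V}_{h+1}^k$) correctly identified.
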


\begin{proof}
    Refer to the proof-lines of Lemma E.11 in \cite{Arxiv2024_DRORLwithInteractiveData_Lu}.
\end{proof}

\begin{auxlem}[Variance analysis 2]
\label{lem:variance_analysis_2}
\textit{Under event \( \mathcal{E}_{\chi^2} \) in \eqref{eq:Event}, suppose that optimism and pessimism ineq. \eqref{eq:Optimistic_pessimism_ineq} hold at \( (h+1, k) \), then it holds that}
\begin{align}
\abs{\text{Var}_{\widehat{P}_h^k(\cdot|s,a)} 
\left[ \left( \frac{ \overline{V}_{h+1}^k + \underline{V}_{h+1}^k}{2} \right) \right]-\text{Var}_{P_h^\star(\cdot|s,a)} \left[ V_{h+1}^{\pi^k,\sigma} \right]}
&\leq 4H \left[\mathbb{P}^\star_h\big(\overline{V}_{h+1}^k - V_{h+1}^k\big) \right](s,a) + \frac{c_2' H^4 SL}{\{N_h^k(s,a) \vee 1\}} + 1.
\end{align}
\end{auxlem}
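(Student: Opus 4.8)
The plan is to compare the two variances by inserting $\mathrm{Var}_{P_h^\star(\cdot|s,a)}\big[(\overline V_{h+1}^k+\underline V_{h+1}^k)/2\big]$ as an intermediate quantity and splitting the error by the triangle inequality into (A) a ``change of measure'' piece, in which the function is held fixed at $W:=(\overline V_{h+1}^k+\underline V_{h+1}^k)/2$ while the distribution moves from $\widehat P_h^k(\cdot|s,a)$ to $P_h^\star(\cdot|s,a)$, and (B) a ``change of function'' piece, in which the distribution is held fixed at $P_h^\star(\cdot|s,a)$ while the function moves from $W$ to $V_{h+1}^{\pi^k,\sigma}$. Thus the left-hand side is at most $(\mathrm A)+(\mathrm B)$, and I will match $(\mathrm B)$ to the first summand on the right and $(\mathrm A)$ to the second.

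For $(\mathrm B)$ I would use the elementary identity $|\mathrm{Var}_q[f]-\mathrm{Var}_q[g]|\le\big(\|f+g\|_\infty+|\mathbb E_q f|+|\mathbb E_q g|\big)\,\mathbb E_q|f-g|$, which for $f,g$ valued in $[0,H]$ is at most $4H\,\mathbb E_q|f-g|$. The structural input is the induction hypothesis at step $h+1$: the optimism/pessimism inequality at $(h+1,k)$ gives $\underline V_{h+1}^k\le V_{h+1}^{\pi^k,\sigma}\le\overline V_{h+1}^k$ pointwise, so both $W$ and $V_{h+1}^{\pi^k,\sigma}$ lie in the interval $[\underline V_{h+1}^k,\overline V_{h+1}^k]$, hence $|W-V_{h+1}^{\pi^k,\sigma}|\le\overline V_{h+1}^k-\underline V_{h+1}^k$ everywhere. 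Taking $q=P_h^\star(\cdot|s,a)$ yields $(\mathrm B)\le 4H\big[\mathbb P_h^\star(\overline V_{h+1}^k-\underline V_{h+1}^k)\big](s,a)$, which is the stated first term (the ``$\overline V_{h+1}^k-V_{h+1}^k$'' in the statement being a typo for ``$\overline V_{h+1}^k-\underline V_{h+1}^k$'').

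For $(\mathrm A)$ I would expand each variance as $\mathbb E_q[W^2]-(\mathbb E_q W)^2$ and bound the difference, using $W\in[0,H]$, by $\big|\sum_{s'}(\widehat P_h^k(s'|s,a)-P_h^\star(s'|s,a))W(s')^2\big|+\big|(\mathbb E_{\widehat P}W)^2-(\mathbb E_{P^\star}W)^2\big|\le 3H^2\,\|\widehat P_h^k(\cdot|s,a)-P_h^\star(\cdot|s,a)\|_1$. Then I invoke the third defining inequality of the event $\mathcal E_{\chi^2}$, namely $|P_h^\star(s'|s,a)-\widehat P_h^k(s'|s,a)|\le\sqrt{c_1 L\,\widehat P_h^k(s'|s,a)/(N_h^k(s,a)\vee1)}+c_2 L/(N_h^k(s,a)\vee1)$; summing over $s'$ and applying Cauchy--Schwarz ($\sum_{s'}\sqrt{\widehat P_h^k(s'|s,a)}\le\sqrt S$, as $\widehat P_h^k(\cdot|s,a)$ is a probability vector) gives $\|\widehat P_h^k(\cdot|s,a)-P_h^\star(\cdot|s,a)\|_1\le\sqrt{c_1 LS/(N_h^k(s,a)\vee1)}+c_2 LS/(N_h^k(s,a)\vee1)$. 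Finally, AM--GM, $\sqrt{9c_1 H^4 LS/(N_h^k(s,a)\vee1)}\le\tfrac12\big(9c_1 H^4 LS/(N_h^k(s,a)\vee1)+1\big)$, turns the square-root term into a linear-in-$1/(N_h^k(s,a)\vee1)$ term plus $\tfrac12$, so that $(\mathrm A)\le c_2' H^4 SL/(N_h^k(s,a)\vee1)+1$ for an absolute constant $c_2'$. Adding $(\mathrm A)$ and $(\mathrm B)$ gives the claim.

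The main obstacle is the bookkeeping in step $(\mathrm A)$: one must route the $\ell_1$ transition error through Cauchy--Schwarz to pick up the factor $\sqrt S$ rather than $S$ in the leading term, and then trade the resulting $\sqrt{\cdot}$ against a linear-in-$1/(N_h^k(s,a)\vee1)$ term at the price of the additive constant $1$ — this is exactly where the ``$+1$'' in the statement comes from. Everything else (the two variance identities and the use of the $(h+1,k)$ induction hypothesis) is routine. As an alternative to the two-term split, one could pivot through $V_{h+1}^{\star,\sigma}$ and use Variance analysis~1 (Lemma~\ref{lem:variance_analysis_1}) for the first piece, but that produces a $\widehat{\mathbb P}_h^k$-weighted bonus requiring a further conversion to $\mathbb P_h^\star$ that inflates the constant; the direct decomposition above keeps the sharp $4H$ factor.
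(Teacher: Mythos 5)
Your argument is correct and complete. The paper itself does not supply a proof of this lemma; it only points to Lemma~E.12 of \cite{Arxiv2024_DRORLwithInteractiveData_Lu}, so there is nothing line-by-line to compare against, but your two-term split into a change-of-function piece (bounded via $|\mathrm{Var}_q[f]-\mathrm{Var}_q[g]|\le 4H\,\mathbb{E}_q|f-g|$ together with the sandwich $\underline{V}^k_{h+1}\le V^{\pi^k,\sigma}_{h+1}\le\overline{V}^k_{h+1}$ from the induction hypothesis) and a change-of-measure piece (bounded by $3H^2\|\widehat{P}^k_h(\cdot|s,a)-P^\star_h(\cdot|s,a)\|_1$, then the third event inequality, Cauchy--Schwarz to get $\sqrt{S}$, and AM--GM to absorb the square root into the linear term at the cost of the additive $1$) is exactly the standard route such lemmas take, and each step checks out, including your identification of the ``$\overline{V}^k_{h+1}-V^k_{h+1}$'' in the statement as a typo for ``$\overline{V}^k_{h+1}-\underline{V}^k_{h+1}$''. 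In effect you have supplied the self-contained proof the paper omits; the only caveat is that the constants you track ($c_2'$ absorbing roughly $\tfrac{9}{2}c_1+3c_2$) should be stated as such rather than reusing the event's $c_2$, but that is cosmetic.
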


\begin{proof}
    Refer to the proof-lines of Lemma E.12 of \cite{Arxiv2024_DRORLwithInteractiveData_Lu}.
\end{proof}

\begin{auxlem}[Total Variance law]
    \label{lem:TOtal_variance_law}
    For any deterministic policy $\pi$ that satisfies the setup of \Cref{thm:Regret_f_bound} for {\RMDPchi}, we define \begin{align}
    \label{eq:min_transitionprob_uncertainty}
            \widetilde{T}_h(\cdot \mid s, a) := \arg\min_{P\in \mathcal{U}_h^{\sigma}(s,a)} \left[\mathbb{P} V_{h+1}^{\pi,\sigma} \right](s,a), \quad \forall (s,a,h) \in \mathcal{S} \times \mathcal{A} \times [H],
\end{align}
and set $\widetilde{\bf T} = \{ \widetilde{T}_h \}_{h=1}^H$, which is the most adversarial transition for the true robust value function of $\pi$.   Therefore,  we have
\begin{align}
    \mathbb{E}_{(s, a) \sim (\widetilde{\bf T}, \pi)} \left[ \sum_{h=1}^{H} \text{Var}_{\widetilde{T}_h(\cdot |s,a)} \left[ V_{h+1}^{\pi,\sigma} \right]\right] \leq 2H^2.
\end{align}
\end{auxlem}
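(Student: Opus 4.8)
\textbf{The plan} is to reduce the statement to the classical law of total variance by recognizing $\widetilde{\bf T}$ as a genuine (non-robust) transition model under which $V^{\pi,\sigma}$ is the ordinary value function of $\pi$. \emph{Step 1 (identify the non-robust MDP).} By $\mathcal{S}\times\mathcal{A}$-rectangularity the infimum defining $\mathbb{E}_{\mathcal{U}^{\sigma}_h(s,a)}[V^{\pi,\sigma}_{h+1}]$ decouples over $(s,a)$ and is attained on the compact $f$-divergence ball $\mathcal{U}^{\sigma}_h(s,a)$, so $\widetilde{T}_h$ in \eqref{eq:min_transitionprob_uncertainty} is well-defined and $[\widetilde{T}_h V^{\pi,\sigma}_{h+1}](s,a)=\mathbb{E}_{\mathcal{U}^{\sigma}_h(s,a)}[V^{\pi,\sigma}_{h+1}]$. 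Substituting this into the robust Bellman equation \eqref{eq:Robust_bellman_Q_fn}--\eqref{eq:Robust_bellman_V_fn} of \Cref{prop:Robust_Bellman_eq} gives $Q^{\pi,\sigma}_h(s,a)=r_h(s,a)+[\widetilde{T}_h V^{\pi,\sigma}_{h+1}](s,a)$ and $V^{\pi,\sigma}_h(s)=\mathbb{E}_{a\sim\pi_h(\cdot|s)}[Q^{\pi,\sigma}_h(s,a)]$. Since $V^{\pi,\sigma}_{H+1}\equiv 0$, a backward induction shows $\{V^{\pi,\sigma}_h\}_{h=1}^H$ coincides with the ordinary value function of $\pi$ in the finite-horizon MDP $(\mathcal{S},\mathcal{A},H,\widetilde{\bf T},r)$, i.e. $V^{\pi,\sigma}_h(s)=\mathbb{E}_{\pi,\widetilde{\bf T}}[\sum_{t=h}^H r_t(s_t,a_t)\mid s_h=s]$.

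\emph{Step 2 (law of total variance).} I would then work entirely in $(\mathcal{S},\mathcal{A},H,\widetilde{\bf T},r)$, with the trajectory $(s_h,a_h)_{h=1}^H$ generated by the deterministic $\pi$ and $\widetilde{\bf T}$ from $s_1$. Writing $G_h:=\sum_{t=h}^H r_t(s_t,a_t)$, so that $G_{H+1}=0$, $G_h=r_h(s_h,a_h)+G_{h+1}$, and $V^{\pi,\sigma}_h(s_h)=\mathbb{E}[G_h\mid s_h]$, and using that conditioning on $s_h$ already fixes $a_h$ and $r_h(s_h,a_h)$ (so $\mathbb{E}[G_h\mid s_h,s_{h+1}]=r_h(s_h,a_h)+V^{\pi,\sigma}_{h+1}(s_{h+1})$ and $\text{Var}(G_h\mid s_h,s_{h+1})=\text{Var}(G_{h+1}\mid s_{h+1})$), the conditional-variance decomposition yields the one-step recursion $\text{Var}(G_h\mid s_h)=\mathbb{E}[\text{Var}(G_{h+1}\mid s_{h+1})\mid s_h]+\text{Var}_{\widetilde{T}_h(\cdot\mid s_h,a_h)}[V^{\pi,\sigma}_{h+1}]$. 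Unrolling from $h=1$ to $H$ and taking expectation over the trajectory gives $\mathbb{E}_{(s,a)\sim(\widetilde{\bf T},\pi)}\big[\sum_{h=1}^H\text{Var}_{\widetilde{T}_h(\cdot\mid s,a)}[V^{\pi,\sigma}_{h+1}]\big]=\text{Var}(G_1\mid s_1)$.

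\emph{Step 3 (bound the return variance).} Since each $r_t\in[0,1]$ we have $G_1\in[0,H]$, hence $G_1^2\le H\,G_1$ and $\text{Var}(G_1\mid s_1)\le\mathbb{E}[G_1^2\mid s_1]\le H\,\mathbb{E}[G_1\mid s_1]\le H^2\le 2H^2$, which is the claimed bound. The only genuinely delicate point --- the \textbf{main obstacle} --- is Step 1: checking that the pointwise-minimizing kernel $\widetilde{\bf T}$ assembles into a legitimate Markov transition model whose value function is exactly $V^{\pi,\sigma}$. This is precisely where $\mathcal{S}\times\mathcal{A}$-rectangularity (making the worst-case kernel separable across state--action pairs) and attainment of the infimum over the compact $f$-divergence ball are used; once that identification is in place, Steps 2--3 are the standard law-of-total-variance computation.
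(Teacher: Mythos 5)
Your proof is correct and follows essentially the same route the paper intends: the paper's own "proof" merely invokes the bound $V_h^{\pi,\sigma}\le H$ and cites Lemma E.6 of Lu et al., whose content is exactly your argument — identify $\widetilde{\bf T}$ as the genuine Markov model under which $V^{\pi,\sigma}$ is the ordinary value function (valid by $(s,a)$-rectangularity and attainment of the infimum on the compact $f$-divergence ball), then apply the law of total variance and bound the return variance by $H^2\le 2H^2$. Your write-up is in fact more self-contained than the paper's one-line citation, and the delicate point you flag (that the pointwise minimizers assemble into a legitimate kernel reproducing $V^{\pi,\sigma}$) is handled correctly.
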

\begin{proof}
    For $\max_{s \in \mathcal{S}} V_{h}^{\pi,\sigma}(s) \leq H$. Using this fact and applying in the proof of Lemma E.6 in \cite{Arxiv2024_DRORLwithInteractiveData_Lu}, we conclude the proof of \Cref{lem:TOtal_variance_law}.\qedhere
\end{proof}

\begin{auxlem}[Non-robust concentration]
\label{lem:non_robust_conc}
Under event $\mathcal{E}_{\chi^2}$ in \eqref{eq:Event}, suppose that the optimism and pessimism \eqref{eq:Optimistic_pessimism_ineq} holds at $(h+1,k)$, then it holds that
\begin{align*}
\abs{\left[  \left( \widehat{\mathbb{P}}^k_h - \mathbb{P}^\star_h \right)\left(\overline{V}_{h+1}^k - \underline{V}_{h+1}^k \right)\right](s,a)}
\leq \frac{1}{H} \cdot \left[\mathbb{P}^\star_h\big( \overline{V}_{h+1}^k - \underline{V}_{h+1}^k\big)\right](s,a) + \frac{c_2' H^2 SL}{\{N_h^k(s,a) \vee 1\}},
\end{align*}
where $c_2'>0$ is an absolute constant.
\end{auxlem}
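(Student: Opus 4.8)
The plan is to peel the operator apart coordinate-by-coordinate over next states, control each single-coordinate transition-estimation error with the third (per-coordinate) concentration bound in the definition of the event $\mathcal{E}_{\chi^2}$, and then handle the two sums that result — one via Cauchy--Schwarz and a weighted AM--GM, the other trivially. Throughout, write $W(\cdot) := \overline{V}_{h+1}^k(\cdot) - \underline{V}_{h+1}^k(\cdot)$; the assumed optimism/pessimism inequality \eqref{eq:Optimistic_pessimism_ineq} at $(h+1,k)$ together with the clippings in \eqref{eq:Upper_estimate_Q}--\eqref{eq:Lower_estimate_Q} and \eqref{eq:policy_value_epsiode_k} guarantees $0 \le W(s') \le H$ for all $s'$, hence $\sum_{s'} W(s') \le SH$ and, crucially, $[\mathbb{P}^\star_h W](s,a) \ge 0$, so the right-hand side of the claim makes sense without an absolute value.

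First I would write $[(\widehat{\mathbb{P}}^k_h - \mathbb{P}^\star_h) W](s,a) = \sum_{s'}(\widehat{P}^k_h(s'|s,a) - P^\star_h(s'|s,a)) W(s')$ and apply the triangle inequality, so that $\abs{[(\widehat{\mathbb{P}}^k_h - \mathbb{P}^\star_h) W](s,a)} \le \sum_{s'} \abs{\widehat{P}^k_h(s'|s,a) - P^\star_h(s'|s,a)}\,W(s')$. Then I would invoke the third line of $\mathcal{E}_{\chi^2}$ in \eqref{eq:Event}, bounding $\min\{P^\star_h(s'|s,a),\widehat{P}^k_h(s'|s,a)\} \le P^\star_h(s'|s,a)$ inside the square root, to get $\abs{\widehat{P}^k_h(s'|s,a) - P^\star_h(s'|s,a)} \le \sqrt{c_1 L P^\star_h(s'|s,a)/\{N_h^k(s,a)\vee 1\}} + c_2 L/\{N_h^k(s,a)\vee 1\}$. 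Substituting splits the sum into $T_1 + T_2$.

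For $T_1$, after pulling out the $N$-dependent factor, the key move is Cauchy--Schwarz: $\sum_{s'} \sqrt{P^\star_h(s'|s,a)}\,W(s') = \sum_{s'} \sqrt{P^\star_h(s'|s,a) W(s')}\,\sqrt{W(s')} \le \sqrt{[\mathbb{P}^\star_h W](s,a)}\,\sqrt{\sum_{s'} W(s')} \le \sqrt{[\mathbb{P}^\star_h W](s,a)}\,\sqrt{SH}$, giving $T_1 \le \sqrt{(c_1 L S H/\{N_h^k(s,a)\vee 1\})\,[\mathbb{P}^\star_h W](s,a)}$; then the weighted AM--GM $\sqrt{xy} = \sqrt{(x/H)(yH)} \le \tfrac12(x/H + yH)$, applied with $x = [\mathbb{P}^\star_h W](s,a)$ and $y = c_1 L S H/\{N_h^k(s,a)\vee 1\}$, produces $T_1 \le \tfrac{1}{2H}[\mathbb{P}^\star_h W](s,a) + c_1 L S H^2/(2\{N_h^k(s,a)\vee 1\})$. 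For $T_2$ it is immediate that $T_2 = (c_2 L/\{N_h^k(s,a)\vee 1\})\sum_{s'} W(s') \le c_2 L S H/\{N_h^k(s,a)\vee 1\} \le c_2 L S H^2/\{N_h^k(s,a)\vee 1\}$ using $H \ge 1$. Adding $T_1$ and $T_2$, using $\tfrac{1}{2H} \le \tfrac1H$, with $c_2' := c_1/2 + c_2$ an absolute constant, yields the claim.

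The whole argument is essentially routine; the one spot deserving attention is the Cauchy--Schwarz/AM--GM balancing, where the split must be set up so that the coefficient of $[\mathbb{P}^\star_h W](s,a)$ lands at (at most) $1/H$ — and this is exactly what produces the $H^2$, not $H$, in the additive error term, one power of $H$ entering through $\sqrt{SH}$ in the Cauchy--Schwarz step and the other through the AM--GM rebalancing. Aside from that, one just has to be careful to call the per-coordinate transition bound in $\mathcal{E}_{\chi^2}$ (not the value-difference or variance lines) and to note that nonnegativity of $W$, needed to drop the absolute value on the right-hand side, is precisely where the optimism/pessimism hypothesis at $(h+1,k)$ is used.
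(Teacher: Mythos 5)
Your proof is correct and follows essentially the same route as the paper: expand coordinate-wise over next states, invoke the per-coordinate transition bound (third line of $\mathcal{E}_{\chi^2}$) with $\min\{P^\star_h,\widehat{P}^k_h\}\le P^\star_h$, and rebalance via $\sqrt{ab}\le a+b$ to trade the $\sqrt{P^\star_h}$ term for $\tfrac{1}{H}[\mathbb{P}^\star_h(\overline{V}^k_{h+1}-\underline{V}^k_{h+1})](s,a)$ plus an $H^2SL/N$ remainder, using $0\le \overline{V}^k_{h+1}-\underline{V}^k_{h+1}\le H$ from the optimism/pessimism hypothesis. The only cosmetic difference is that you aggregate with Cauchy--Schwarz before a single AM--GM, whereas the paper applies the AM--GM term-by-term; both yield the same bound.
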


\begin{proof}
According to the second inequality of event $\mathcal{E}_{\chi^2}$, we have that
\begin{align*}
\abs{\left[  \left( \widehat{\mathbb{P}}^k_h - \mathbb{P}^\star_h \right)\left(\overline{V}_{h+1}^k - \underline{V}_{h+1}^k \right)\right](s,a)} &\leq \sum_{s' \in \mathcal{S}} \left( \sqrt{ \frac{P^\star_h(s'|s,a) c_1 L}{\{N_h^k(s,a) \vee 1\}} } + \frac{c_2L}{\{N_h^k(s,a) \vee 1\}} \right)
\cdot \left( \overline{V}_{h+1}^k(s')- \underline{V}_{h+1}^k(s')\right),
\end{align*}
where we also apply \eqref{eq:Optimistic_pessimism_ineq} that $\overline{V}_{h+1}^k(s') \geq \underline{V}_{h+1}^k(s')$. Now using the argument that $\sqrt{ab} \leq a+b$, we can arrive at
\begin{align*}
\abs{\left[  \left( \widehat{\mathbb{P}}^k_h - \mathbb{P}^\star_h \right)\left(\overline{V}_{h+1}^k - \underline{V}_{h+1}^k \right)\right](s,a)}
\leq \frac{1}{H}\left[\mathbb{P}^\star_h\big( \overline{V}_{h+1}^k - \underline{V}_{h+1}^k\big)\right](s,a) + \frac{c_2' H^2 SL}{\{N_h^k(s,a) \vee 1\}},
\end{align*}
which finishes the proof of Lemma ~\ref{lem:non_robust_conc}.
\end{proof}

\subsection{Proof of regret bound of {\AlgonameKL}}
\label{app:thm:Regret_KL_bound}
Before starting, we introduce some notations that will be useful in the analysis.
\begin{align}
    &\widehat{P}^k_{\min,h}(s,a) := \min_{s'\in \mathcal{S}}\{\widehat{P}^k_{h}(s'|s,a): \widehat{P}^k_{h}(s'|s,a)>0\}, \label{eq:hat_p_min_KL}\\
    &P^\star_{\min,h}(s,a) := \min_{s'\in \mathcal{S}} \{P^\star_h(s'|s,a):P^\star_h(s'|s,a)>0\},\label{eq:p_star_min_h_KL}\\
    &P^\star_{\min} := \min_{(h,s)\in [H]\times \mathcal{S}} P^\star_{\min,h}(s,\pi^\star_h(s)),\label{eq:p_star_min_KL}
\end{align}
where $P^\star_h(s'|s,a)\geq P^\star_{\min,h}(s,\pi^\star_h(s))\geq P^\star_{\min}$ which satisfies \Cref{ass:KL_P_min}.\\

\subsubsection{Define the event \( \mathcal{E}_{KL} \) for {\RMDPKL}:}
Before presenting all lemmas, we define the typical event \( \mathcal{E}_{KL} \)  as
\begin{align}
\label{eq:Event_KL}
    \mathcal{E}_{KL} =\bigg\{&\abs{\log\bigg(\mathbb{E}_{\widehat{P}_h^k(\cdot|s,a)}\bigg[\exp\bigg\{-\frac{V_{h+1}}{\eta}\bigg\}\bigg]\bigg)  - \log\bigg(\mathbb{E}_{P^{\star}_h(\cdot|s,a)}\bigg[\exp\bigg\{-\frac{V_{h+1}}{\eta}\bigg\}\bigg]\bigg)} \leq c_1\sqrt{\frac{L}{\{N^k_h(s,a)\vee 1\}\widehat{P}^k_{\min,h}(s,a)}},\nonumber\\
   &\qquad  \forall (h,s,a,s^{\prime},k) \in [H]\times \mathcal{S}\times \mathcal{A}\times \mathcal{S}\times[K], \forall \eta \in \mathcal{N}_{\frac{1}{\sigma S\sqrt{K}}}\Big([0,H/\sigma]\Big) \bigg\},
\end{align}
where $\widehat{P}^k_{\min,h}(s,a)$ is defined in \eqref{eq:hat_p_min_KL}, $L=\log(S^3AH^2K^{3/2}/\delta)$, $c_1> 0$ is an absolute constant and $\eta \in \mathcal{N}_{\frac{1}{\sigma S\sqrt{K}}}\Big([0,H/\sigma]\Big)$, where $\mathcal{N}_{\frac{1}{\sigma S\sqrt{K}}}\Big([0,H/\sigma]\Big)$ denotes an $\frac{1}{\sigma S\sqrt{K}}$-cover of the interval $[0,H/\sigma]$.

\begin{lem}[Bound of event $\mathcal{E}_{KL}$]
    For the typical event $\mathcal{E}_{KL}$ defined in \eqref{eq:Event_KL}, it holds that $\Pr(\mathcal{E}_{KL}) \geq 1 - \delta$.
\end{lem}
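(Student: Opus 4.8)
The plan is to establish the bracketed inequality for one fixed tuple $(h,s,a,k)$ and one fixed grid point $\eta$ with failure probability at most $\delta'$, and then take a union bound; since $|[H]\times\mathcal S\times\mathcal A\times[K]\times\mathcal N_{1/(\sigma S\sqrt K)}([0,H/\sigma])|$ is polynomial in $S,A,H,K$ (the cover itself has size $\mathcal O(SH\sqrt K)$), choosing $\delta'=\delta/\mathrm{poly}(S,A,H,K)$ is enough and the cost is absorbed into $L=\log(S^3AH^2K^{3/2}/\delta)$. This mirrors the proof of \Cref{lem:confidence_event} for the $\chi^2$ event; the one genuinely new ingredient is the passage from a linear-scale concentration to a bound on a difference of logarithms.

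Fix such a tuple and grid point and write $p(\cdot)=P^\star_h(\cdot\mid s,a)$, $\widehat p(\cdot)=\widehat P^k_h(\cdot\mid s,a)$, $n=N^k_h(s,a)$, and $Z(s')=\exp\{-V_{h+1}(s')/\eta\}\in(0,1]$ ($V_{h+1}$ here understood as a fixed deterministic value function, e.g.\ $V^{\star,\sigma}_{h+1}$; data-dependent choices such as $V^{\pi^k,\sigma}_{h+1}$ are reduced to this case downstream exactly as in the $\chi^2$ analysis). Then the two quantities inside the logarithms are $\mathbb E_{\widehat p}[Z]$ and $\mathbb E_p[Z]$. First I would bound the additive deviation $|\mathbb E_{\widehat p}[Z]-\mathbb E_p[Z]|$. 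Conditioning on the count $n$, the underlying samples are i.i.d.\ draws from $p$ — the fact that $n$ is itself data-dependent in the online setting is handled exactly as in UCB-VI, either by a further union bound over $n\in\{0,\dots,KH\}$ or via the Bernstein-for-counts tool already invoked for the $\chi^2$ event — so an empirical Bernstein inequality (as in \cite{Arxiv2009_EmpBernsteinBounds_Maurer}) gives $|\mathbb E_{\widehat p}[Z]-\mathbb E_p[Z]|\lesssim\sqrt{\mathrm{Var}_p(Z)\,L/(n\vee1)}+L/(n\vee1)$. Because $Z\le1$ we have the self-bounding estimate $\mathrm{Var}_p(Z)\le\mathbb E_p[Z^2]\le\mathbb E_p[Z]$, hence $|\mathbb E_{\widehat p}[Z]-\mathbb E_p[Z]|\lesssim\sqrt{\mathbb E_p[Z]\,L/(n\vee1)}+L/(n\vee1)$.

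Next I would convert to the logarithmic scale using that $t\mapsto\log t$ is $1/t$-Lipschitz: $|\log\mathbb E_{\widehat p}[Z]-\log\mathbb E_p[Z]|\le|\mathbb E_{\widehat p}[Z]-\mathbb E_p[Z]|/\min\{\mathbb E_{\widehat p}[Z],\mathbb E_p[Z]\}$. Dividing the previous display by $\mathbb E_p[Z]$ (and noting $\mathbb E_{\widehat p}[Z]\ge\tfrac12\mathbb E_p[Z]$, itself a consequence of the additive bound once $n\gtrsim L/\mathbb E_p[Z]$, with the complementary small-$n$ regime covered by the crude deterministic estimate $|\log\mathbb E_{\widehat p}[Z]-\log\mathbb E_p[Z]|\le H/\eta+\log(1/\widehat P^k_{\min,h}(s,a))$) yields a bound of order $\sqrt{L/\big((n\vee1)\mathbb E_p[Z]\big)}+L/\big((n\vee1)\mathbb E_p[Z]\big)$. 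The last ingredient is $\mathbb E_p[Z]\ge p(s'_0)Z(s'_0)\ge P^\star_{\min,h}(s,a)\,e^{-H/\eta}$ for any support point $s'_0$ of $p$; over the regular range of dual variables $\eta\ge\underline\eta$ from \Cref{ass:Regularity_KL} the factor $e^{-H/\eta}$ is bounded below by an absolute constant, so $\mathbb E_p[Z]\gtrsim P^\star_{\min,h}(s,a)$, and since $\widehat P^k_{\min,h}(s,a)$ and $P^\star_{\min,h}(s,a)$ are within a constant factor on the same event (a consequence of the per-coordinate sampling concentration under \Cref{ass:KL_P_min}, with the supports of $\widehat p$ and $p$ coinciding there), this is precisely the asserted $c_1\sqrt{L/\big((n\vee1)\widehat P^k_{\min,h}(s,a)\big)}$ after folding the residual $L/(n\vee1)$ term in via $\sqrt x\le x$ for $x\ge1$ and $\widehat P^k_{\min,h}\le1$.

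The step I expect to be the main obstacle is exactly this log-to-linear conversion together with the lower bound on the exponential moment it requires: one must simultaneously keep the variance proxy tight (the inequality $\mathrm{Var}(Z)\le\mathbb E[Z]$, which prevents a spurious factor of $S$), lower-bound $\mathbb E_p[Z]$ by $P^\star_{\min}$ up to a factor that is harmless only because \Cref{ass:Regularity_KL} restricts $\eta$, and then bootstrap between the true minimum positive probability and the empirical $\widehat P^k_{\min,h}(s,a)$ in which the statement is phrased — including the support-matching subtlety — while separately dispatching the small-$n$ regime where no concentration is available. The empirical Bernstein step, the union bound, and the absorption of numerical constants into $c_1$ are all routine.
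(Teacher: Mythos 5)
Your union-bound skeleton matches the paper exactly: the paper's entire proof is a one-line appeal to an external per-tuple concentration result (\cite[Lemma~16]{NeuRIPS2024_UnifiedPessimismOfflineRL_Yue}) followed by a union bound over the $\mathcal{O}(SH\sqrt{K})$-sized cover and the other indices. The problem is that the per-tuple inequality is precisely the part you chose to re-derive, and your derivation has a genuine gap in the log-to-linear conversion. You lower-bound $\mathbb{E}_{p}[Z]\geq P^{\star}_{\min,h}(s,a)\,e^{-H/\eta}$ and then assert that $e^{-H/\eta}$ is ``bounded below by an absolute constant'' for $\eta\geq\underline{\eta}$. That is false: \Cref{ass:Regularity_KL} only guarantees $\underline{\eta}>0$, and $e^{-H/\underline{\eta}}$ is exactly the quantity that produces the $\exp(\Theta(H^2))$ blow-up elsewhere in the paper (the factor $\exp\{H/\underline{\eta}\}$ in \eqref{eq:Regret_bound_KL_B} and the resulting $\exp(2H^2)$ in \Cref{thm:Regret_f_bound}). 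Carrying your estimate through, the leading term becomes $\sqrt{L\,e^{H/\eta}/\big((n\vee1)P^{\star}_{\min}\big)}=e^{H/(2\eta)}\sqrt{L/\big((n\vee1)P^{\star}_{\min}\big)}$ and the residual Bernstein term becomes $L\,e^{H/\eta}/\big((n\vee1)P^{\star}_{\min}\big)$; neither can be absorbed into the \emph{absolute} constant $c_1$ appearing in \eqref{eq:Event_KL}, so you have not proved the stated event bound. The same issue resurfaces in your small-$n$ fallback, where the crude estimate $H/\eta+\log(1/\widehat{P}^k_{\min,h})$ again cannot be dominated by $c_1\sqrt{L/((n\vee1)\widehat{P}^k_{\min,h})}$ with an absolute $c_1$.

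The standard repair (and the content of the cited lemma) is a normalization you skipped: set $m=\min_{s'\in\mathrm{supp}(p)}V_{h+1}(s')$ and work with $\tilde{Z}=\exp\{-(V_{h+1}-m)/\eta\}$, so that $\|\tilde{Z}\|_{\infty}=1$, $\mathbb{E}_{p}[\tilde{Z}]\geq p(s^{*})\cdot 1\geq P^{\star}_{\min,h}(s,a)$ with \emph{no} exponential factor, and the factors $e^{-m/\eta}$ cancel identically inside the difference of logarithms. Applying empirical Bernstein to $\tilde{Z}$ rather than $Z$ then gives $|\log\mathbb{E}_{\widehat{p}}[\tilde Z]-\log\mathbb{E}_{p}[\tilde Z]|\lesssim\sqrt{L/((n\vee1)P^{\star}_{\min,h})}+L/((n\vee1)P^{\star}_{\min,h})$ directly, after which your bootstrap between $P^{\star}_{\min,h}$ and $\widehat{P}^{k}_{\min,h}$ (cf.\ \eqref{eq:binomial_bound_step2}) and the absorption of the residual term go through as you describe. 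Equivalently, your looser variance proxy $\mathrm{Var}_p(Z)\leq\mathbb{E}_p[Z]$ should be replaced by $\mathrm{Var}_p(Z)\leq\|Z\|_{\infty,\mathrm{supp}(p)}\,\mathbb{E}_p[Z]=e^{-m/\eta}\mathbb{E}_p[Z]$, which is what makes the exponential factor cancel in the ratio. Without this normalization the claimed inequality in $\mathcal{E}_{KL}$, as stated with an absolute constant, does not follow from your argument.
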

\begin{proof}
    This is a direct application of \cite[Lemma 16]{NeuRIPS2024_UnifiedPessimismOfflineRL_Yue}, together with a union bound over $(h, s, a, s', k, \eta) \in [H]\times \mathcal{S}\times \mathcal{A} \times \mathcal{S}\times [K]\times\mathcal{N}_{\frac{1}{\sigma S\sqrt{K}}}\Big([0,H/\sigma]\Big)$. Note that the size of $\mathcal{N}_{\frac{1}{\sigma S\sqrt{K}}}\Big([0,H/\sigma]\Big)$ is of order $SH\sqrt{K}$.
\end{proof}

\subsubsection{Proof of \Cref{thm:Regret_f_bound} ({\RMDPKL} Setting)}
\label{app:proof_thm_Regret_KL_bound}
\begin{proof}
With \Cref{lem:Optimistic_pessimism_KL}, we can upper bound the regret as
\begin{align}
    \text{Regret}(K) = \sum_{k=1}^{K} V_{1}^{\star,\sigma}(s^k_1) - V_{1}^{\pi^k,\sigma}(s^k_1) 
    \leq \sum_{k=1}^{K} \overline{V}_1^k(s^k_1) - \underline{V}_1^k(s^k_1). \label{eq:Regret_KL_step1}
\end{align}

In the following, we break our proof into three steps.
\begin{itemize}
\item \textbf{Step 1: Upper bound \eqref{eq:Regret_KL_step1}.} By the choice of $\overline{Q}_h^k$, $\underline{Q}_h^k$, $\overline{V}_h^k$, $\underline{V}_h^k$ as given in \eqref{eq:Upper_estimate_Q}, \eqref{eq:Lower_estimate_Q} and \eqref{eq:policy_value_epsiode_k}, and by the choice of bonus term $B^{KL}_{k,h}(s,a)$ given in \eqref{eq:Bonus_term_KL} for any $(h,k) \in [H] \times [K]$ and $(s,a) \in \mathcal{S} \times \mathcal{A}$,
\begin{align}
    \overline{Q}_h^k(s,a) - \underline{Q}_h^k(s,a) &= \min \left\{ r_h(s,a) + \mathbb{E}_{\widehat{\mathcal{U}^{\sigma}_h}(s,a)}\left[ \overline{V}_{h+1}^k \right] + B^{KL}_{k,h}(s,a), H\right\} \\
    &\quad - \max \left\{ r_h(s,a) + \mathbb{E}_{\widehat{\mathcal{U}^{\sigma}_h}(s,a)}\left[ \underline{V}_{h+1}^k \right] - B^{KL}_{k,h}(s,a), 0 \right\} \\
    &\leq \mathbb{E}_{\widehat{\mathcal{U}^{\sigma}_h}(s,a)}\left[ \overline{V}_{h+1}^k \right] 
    - \mathbb{E}_{\widehat{\mathcal{U}^{\sigma}_h}(s,a)}\left[ \underline{V}_{h+1}^k \right] + 2B^{KL}_{k,h}(s,a).\label{eq:Regret_KL_step2}
\end{align}
We denote
\begin{align}
    A:= &\mathbb{E}_{\widehat{\mathcal{U}^{\sigma}_h}(s,a)}\left[ \overline{V}_{h+1}^k \right]  - \mathbb{E}_{\mathcal{U}^{\sigma}_h(s,a)}\left[ \overline{V}_{h+1}^k \right] + \mathbb{E}_{\mathcal{U}^{\sigma}_h(s,a)}\left[ \underline{V}_{h+1}^k \right] - \mathbb{E}_{\widehat{\mathcal{U}^{\sigma}_h}(s,a)}\left[ \underline{V}_{h+1}^k \right] \label{eq:Regret_KL_A}\\
    B := &\mathbb{E}_{\mathcal{U}^{\sigma}_h(s,a)}\left[ \overline{V}_{h+1}^k \right] - \mathbb{E}_{\mathcal{U}^{\sigma}_h(s,a)}\left[ \underline{V}_{h+1}^k \right]\label{eq:Regret_KL_B}
\end{align}
Applying \eqref{eq:Regret_KL_A} and \eqref{eq:Regret_KL_B} in \eqref{eq:Regret_KL_step2}, we get
\begin{align}
   \overline{Q}_h^k(s,a) - \underline{Q}_h^k(s,a)   &\leq A + B + 2B^{KL}_{k,h}(s,a). \label{eq:Regret_KL_step3}
\end{align}

\begin{enumerate}[label=(\roman*)]
    \item \textbf{Upper bound $A$.} By using a  concentration bound argument customized for KL robust expectations in \Cref{lem:Proper_bound_optimism_pessimism_bound_KL}, we can bound term $A$ by the bonus, as given by
    \begin{align}
        A \leq 2B^{KL}_{k,h}(s,a). \label{eq:Regret_bound_KL_A}
    \end{align}

    \item \textbf{Upper bound $B$.}  By the definition of $\mathbb{E}_{\mathcal{U}_h^\sigma(s,a)}[V]$ in \eqref{eq:dual_KL}, we have
    \begin{align}
        B &= \sup_{\eta \in \left[0,\frac{H}{\sigma}\right]} \bigg\{-\eta\log\bigg(\mathbb{E}_{P^{\star}_h(\cdot|s,a)}\bigg[\exp\bigg\{-\frac{\overline{V}^k_{h+1}}{\eta}\bigg\}\bigg]\bigg) - \eta\sigma\bigg\}  \nonumber\\
        &\qquad \qquad \quad - \sup_{\eta \in \left[0,\frac{H}{\sigma}\right]} \bigg\{-\eta\log\bigg(\mathbb{E}_{P^{\star}_h(\cdot|s,a)}\bigg[\exp\bigg\{-\frac{\underline{V}^k_{h+1}}{\eta}\bigg\}\bigg]\bigg) - \eta\sigma\bigg\}\nonumber\\
        &\leq \sup_{\eta \in [0,H/\sigma]} \eta \Bigg\{ \log\bigg(\mathbb{E}_{P^{\star}_h(\cdot|s,a)}\bigg[\exp\bigg\{-\frac{\underline{V}^k_{h+1}}{\eta}\bigg\}\bigg]\bigg) - \log\bigg(\mathbb{E}_{P^{\star}_h(\cdot|s,a)}\bigg[\exp\bigg\{-\frac{\overline{V}^k_{h+1}}{\eta}\bigg\}\bigg]\bigg)\Bigg\} \nonumber\\
        &= \sup_{\eta \in [0,H/\sigma]} \eta \log\Bigg( \frac{\mathbb{E}_{P^{\star}_h(\cdot|s,a)}\bigg[\exp\bigg\{-\frac{\underline{V}^k_{h+1}}{\eta}\bigg\}\bigg]}{\mathbb{E}_{P^{\star}_h(\cdot|s,a)}\bigg[\exp\bigg\{-\frac{\overline{V}^k_{h+1}}{\eta}\bigg\}\bigg]}\Bigg)\nonumber\\
        &=\sup_{\eta \in [0,H/\sigma]} \eta \log\Bigg(1 +  \frac{\mathbb{E}_{P^{\star}_h(\cdot|s,a)}\bigg[\exp\bigg\{-\frac{\underline{V}^k_{h+1}}{\eta}\bigg\}- \exp\bigg\{-\frac{\overline{V}^k_{h+1}}{\eta}\bigg\}\bigg]}{\mathbb{E}_{P^{\star}_h(\cdot|s,a)}\bigg[\exp\bigg\{-\frac{\overline{V}^k_{h+1}}{\eta}\bigg\}\bigg]}\Bigg)\nonumber
            \end{align}
Therefore, 
    \begin{align}
       B &\overset{\text{(a)}}{\leq} \sup_{\eta \in [0,H/\sigma]} \eta \frac{\mathbb{E}_{P^{\star}_h(\cdot|s,a)}\bigg[\exp\bigg\{-\frac{\underline{V}^k_{h+1}}{\eta}\bigg\}- \exp\bigg\{-\frac{\overline{V}^k_{h+1}}{\eta}\bigg\}\bigg]}{\mathbb{E}_{P^{\star}_h(\cdot|s,a)}\bigg[\exp\bigg\{-\frac{\overline{V}^k_{h+1}}{\eta}\bigg\}\bigg]}\nonumber\\
       &\overset{\text{(b)}}{\leq} \sup_{\eta \in [\underline{\eta},H/\sigma]} \eta \exp\bigg\{\frac{H}{\underline{\eta}}\bigg\}\mathbb{E}_{P^{\star}_h(\cdot|s,a)}\bigg[\exp\bigg\{-\frac{\underline{V}^k_{h+1}}{\eta}\bigg\}- \exp\bigg\{-\frac{\overline{V}^k_{h+1}}{\eta}\bigg\}\bigg] \nonumber\\
        &\overset{\text{(c)}}{\leq}  \exp\bigg\{\frac{H}{\underline{\eta}}\bigg\}\bigg[\mathbb{P}^{\star}_h\big(\overline{V}^k_{h+1} - \underline{V}^k_{h+1}\big)\bigg](s,a), \label{eq:Regret_bound_KL_B}
    \end{align}
where in the inequality (a) we use the fact of $\log(1 + x) \leq x$, and in the inequality (b) is due to the fact that $0 \leq \overline{V}^k_{h+1}\leq H$ and $\eta \in [\underline{\eta}, H/\sigma]$ by the regularity bound of KL-divergence \cite{NeuRIPS2023_DoublePessimismDROfflineRL_Blanchet, ICML2025_OnlineDRMDPSampleComplexity_He}. Lastly, the inequality (c) is due to the $\frac{1}{\eta}$-Lipschitz continuity of $\phi_{\eta}(x) = \exp\big\{-\frac{x}{\eta}\big\}$ for $x\geq 0$, and $\underline{V}^k_{h+1} \leq \overline{V}^k_{h+1}$ by \Cref{lem:Optimistic_pessimism_KL}.
\end{enumerate}

Therefore, by applying \eqref{eq:Regret_bound_KL_A} and \eqref{eq:Regret_bound_KL_B} in \eqref{eq:Regret_KL_step3}, we get
\begin{align}
      \overline{Q}_h^k(s,a) - \underline{Q}_h^k(s,a)   &\leq  \exp\bigg\{\frac{H}{\underline{\eta}}\bigg\}\bigg[\mathbb{P}^{\star}_h\big(\overline{V}^k_{h+1} - \underline{V}^k_{h+1}\big)\bigg](s,a) + 4B^{KL}_{k,h}(s,a) .\label{eq:Regret_KL_step4}
\end{align}
We recall the bound of bonus term as given in \Cref{lem:Control_Bonus_KL}, 
\begin{align}
\label{eq:Regret_KL_step5}
    B^{KL}_{k,h}(s,a) \leq \frac{2c_fH}{\sigma}\sqrt{\frac{L^2}{\{N^k_h(s,a)\vee 1\}P^{\star}_{\min}}} + \sqrt{\frac{1}{K}},
\end{align}
where $L = \log\bigg(\frac{S^3AH^2K^{3/2}}{\delta}\bigg)$, \( c_f>0\) is an absolute constant, and $P^{\star}_{\min}$ is defined in \eqref{eq:p_star_min_KL} and satisfies \Cref{ass:KL_P_min}.

By applying \eqref{eq:Regret_KL_step5} in \eqref{eq:Regret_KL_step4}, and after rearranging terms we further obtain that
\begin{align}
   \overline{Q}_h^k(s,a) - \underline{Q}_h^k(s,a) &\leq \exp\bigg\{\frac{H}{\underline{\eta}}\bigg\}\bigg[\mathbb{P}^{\star}_h\big(\overline{V}^k_{h+1} - \underline{V}^k_{h+1}\big)\bigg](s,a) + \frac{c_1H}{\sigma}\sqrt{\frac{L^2}{\{N^k_h(s,a)\vee 1\}P^{\star}_{\min}}} + 4\sqrt{\frac{1}{K}},\label{eq:Regret_KL_step6}
\end{align}
where $c_1 > 0$ is an absolute constant. 

For the sake of brevity, we now introduce the  following notations of differences, for any $(h,k) \in [H] \times [K]$, as given by
\begin{align}
    \Delta^k_h &:= \overline{V}^k_{h}(s^k_h) -\underline{V}^k_{h}(s^k_h),\label{eq:Delta_k_h_KL}\\
    \zeta_h^k &:= \Delta_h^k - \left( \overline{Q}_h^k(s_h^k, a_h^k) - Q_h^k(s_h^k, a_h^k) \right), \label{eq:zeta_h_k_KL}\\
    \xi_h^k &:= \bigg[\mathbb{P}^{\star}_h\big(\overline{V}^k_{h+1} - \underline{V}^k_{h+1}\big)\bigg](s,a)  - \Delta_{h+1}^k. \label{eq:xi_h_k_KL}
\end{align}
We now define the filtration $\{ \mathcal{F}_{h,k} \}_{(h,k) \in [H] \times [K]}$ as
\begin{align*}
    \mathcal{F}_{h,k} := \sigma \bigg( \Big\{\big(s_i^\tau, a_i^\tau \big)\Big\}_{(i,\tau) \in [H] \times [k-1]} \bigcup \Big\{\big(s_i^k, a_i^k\big)\Big\}_{i \in [h-1]} \bigcup \Big\{s_h^k\Big\} \bigg).
\end{align*}
Considering the filtration $\{ \mathcal{F}_{h,k} \}_{(h,k) \in [H] \times [K]}$, we can find that $\{\zeta_h^k\}_{(h,k) \in [H] \times [K]}$ is a martingale difference sequence with respect to $\{ \mathcal{F}_{h,k} \}_{(h,k) \in [H] \times [K]}$ and $\{\xi_h^k\}_{(h,k) \in [H] \times [K]}$ is a martingale difference sequence with respect to $\{\mathcal{F}_{h,k} \cup \{a_h^k\}\}_{(h,k) \in [H] \times [K]}$. Furthermore, applying \eqref{eq:Regret_KL_step6} in \eqref{eq:zeta_h_k_KL}, we have
\begin{align}
\Delta_h^k &= \zeta_h^k + \left( \overline{Q}_h^k(s_h^k, a_h^k) - \underline{Q}_h^k(s_h^k, a_h^k) \right) \nonumber \\
&\le \zeta_h^k + \exp\bigg\{\frac{H}{\underline{\eta}}\bigg\}\bigg[\mathbb{P}^{\star}_h\big(\overline{V}^k_{h+1} - \underline{V}^k_{h+1}\big)\bigg](s,a) + \frac{c_1H}{\sigma}\sqrt{\frac{L^2}{\{N^k_h(s,a)\vee 1\}P^{\star}_{\min}}} + 4\sqrt{\frac{1}{K}}\nonumber \\
&= \zeta_h^k + \exp\bigg\{\frac{H}{\underline{\eta}}\bigg\} \xi_h^k + \exp\bigg\{\frac{H}{\underline{\eta}}\bigg\}\Delta_{h+1}^k + \frac{c_1H}{\sigma}\sqrt{\frac{L^2}{\{N^k_h(s,a)\vee 1\}P^{\star}_{\min}}} + 4\sqrt{\frac{1}{K}}. \label{eq:Regret_KL_step7}
\end{align}
Recursively applying \eqref{eq:Regret_KL_step7} and using the fact that $1 \leq \left(\exp\Big\{\frac{H}{\underline{\eta}}\Big\}\right)^h \leq \left(\exp\Big\{\frac{H}{\underline{\eta}}\Big\}\right)^H := d_H$ for some constant $d_{H} > 0$, we can upper bound the right hand side of \eqref{eq:Regret_KL_step1} as
\begin{align}
\text{Regret}(K) \leq \sum_{k=1}^K \Delta_1^k &\leq C'd_H \cdot \sum_{k=1}^K \sum_{h=1}^H \Bigg\{\left( \zeta_h^k + \xi_h^k \right) + \frac{H}{\sigma}\sqrt{\frac{L^2}{\{N^k_h(s,a)\vee 1\}P^{\star}_{\min}}} + \sqrt{\frac{1}{K}}\Bigg\}\nonumber\\
&=C'd_H \cdot \bigg\{\text{Term (i) } + \text{Term (ii) } + \text{Term (iiii) } \bigg\}, \label{eq:Regret_KL_step8}
\end{align}
where $C' > 0$ is an absolute constant, and 
\begin{align}
    \text{Term (i) } &:=\sum_{k=1}^K \sum_{h=1}^H \Bigg\{\left( \zeta_h^k + \xi_h^k \right) \Bigg\}.\label{eq:Term_i_KL}\\
    \text{Term (ii) } &:= \sum_{k=1}^K \sum_{h=1}^H \Bigg\{\frac{H}{\sigma}\sqrt{\frac{L^2}{\{N^k_h(s,a)\vee 1\}P^{\star}_{\min}}}\Bigg\}.\label{eq:Term_ii_KL}\\
    \text{Term (iii) } &:=  \sum_{k=1}^K \sum_{h=1}^H \Bigg\{\sqrt{\frac{1}{K}}\Bigg\}= H\sqrt{K}.\label{eq:Term_iii_KL}
\end{align}

\item \textbf{Step 2: Upper bound on Term (i).} Note that according to the definition in \eqref{eq:zeta_h_k_KL} and \eqref{eq:xi_h_k_KL}, both $\zeta^k_h$ and $\xi^k_h$ are bounded in the range $[0,H]$. As a result, using Azuma-Hoeffding inequality in \Cref{lem:Azuma-Hoeffding}, with probability at least \(1-\delta\),
\begin{align}
\label{eq:Regret_KL_step9}
    \sum_{k=1}^{K} \sum_{h=1}^{H} (\zeta_h^k + \xi_h^k)
\leq c_2 \sqrt{H^3KL},
\end{align}
where \( c_2 > 0 \) is an absolute constant.

\item \textbf{Step 3: Upper bound on Term (ii).} To proceed, it is sufficient to upper bound the right-hand side of \eqref{eq:Term_ii_KL}. By applying the Cauchy-Schwarz inequality in Term (ii) in \eqref{eq:Term_ii_KL}, we get
\begin{align}
\label{eq:Regret_KL_step10}
\sum_{k=1}^K \sum_{h=1}^H \sqrt{ \frac{ 1 }{\{N_h^k(s_h^k, a_h^k) \vee 1\}P^{\star}_{\min}}} \le \sqrt{\left( \sum_{k=1}^K \sum_{h=1}^H \frac{1}{P^{\star}_{\min}}\right)\cdot 
\left( \sum_{k=1}^K \sum_{h=1}^H \frac{1}{\{N_h^k(s_h^k, a_h^k) \vee 1\}} \right)}.
\end{align}

According to \Cref{lem:inverse_count_bound}, we have
\begin{align}
\label{eq:Regret_KL_step11}
     \sum_{k=1}^K \sum_{h=1}^H \frac{1}{\{N_h^k(s_h^k, a_h^k) \vee 1\}} \leq c_3HSA\log(K) \leq c_3HSAL,
\end{align}
where $c_3>0$ is an absolute constant, and $L=\log(S^2AH^2K^{3/2}/\delta)$.

Moreover, according to the definition of $P^{\star}_{\min}$ as given in \Cref{ass:KL_P_min}, we have
\begin{align}
\label{eq:Regret_KL_step12}
   \sum_{k=1}^K \sum_{h=1}^H \frac{1}{P^{\star}_{\min}} \leq \frac{KH}{P^{\star}_{min}}.
\end{align}

Combining \eqref{eq:Regret_KL_step11} and \eqref{eq:Regret_KL_step12} in \eqref{eq:Regret_KL_step10}, we get
\begin{align}
\label{eq:Regret_KL_step13}
   \sum_{k=1}^K \sum_{h=1}^H \sqrt{ \frac{ 1 }{\{N_h^k(s_h^k, a_h^k) \vee 1\}P^{\star}_{\min}}} \leq c_4\sqrt{\frac{H^2KSAL}{P^{\star}_{\min}}},
\end{align}
where \( c_4 > 0 \) being another absolute constant.

\item \textbf{Step 4: Conclusion the proof.} Therefore, applying \eqref{eq:Regret_KL_step9}, \eqref{eq:Regret_KL_step13}, and \eqref{eq:Term_iii_KL} in \eqref{eq:Regret_KL_step8}, with probability at least \(1 - 3\delta\), we have
\begin{align}
\label{eq:Regret_KL_step19}
    \text{Regret}(K) &\leq C''d_H \cdot \bigg(\sqrt{H^3KL} + \sqrt{\frac{H^4KSAL^3}{P^{\star}_{\min}\sigma^2}} + \sqrt{H^2K}\bigg) = \mathcal{O} \left(\sqrt{\frac{H^4SAK\exp\big(2H^2\big)\iota}{P^{\star}_{\min}\sigma^2}} \right),
\end{align}
where $C''$ is any absolute constant and $\iota = \bigg(\log\Big(\frac{(SAHK}{\delta}\Big)\bigg)^3$. 
\end{itemize}
This completes the proof of \Cref{thm:Regret_f_bound}. \qedhere
\end{proof}

\subsection{Key Lemmas for {\RMDPKL}}
\label{subsubsec:Key_Lemma_KL}

\begin{keylem}[Optimistic and pessimistic estimation of the robust values for {\RMDPKL}]
\label{lem:Optimistic_pessimism_KL}
\textit{By setting the bonus \( B^{KL}_{k,h} \) as in \eqref{eq:Bonus_term_chi}, then under the typical event \( \mathcal{E} \), it holds that}
\begin{align}
\label{eq:Optimistic_pessimism_ineq_KL}
\underline{Q}_h^k(s,a) \leq Q_{h}^{\pi^k,\sigma}(s,a) &\leq Q_{h}^{\star,\sigma}(s,a) \leq \overline{Q}_h^k(s,a),\nonumber\\
\underline{V}_h^k(s) \leq V_{h}^{\pi^k,\sigma}(s) &\leq V_{h}^{\star,\sigma}(s) \leq \overline{V}_h^k(s),
\end{align}
\textit{for any \( (h, s, a, k) \in [H]\times \mathcal{S} \times \mathcal{A} \times [K] \).}
\end{keylem}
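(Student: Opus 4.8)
The plan is to prove \Cref{lem:Optimistic_pessimism_KL} by backward induction on $h$, closely paralleling the proof of \Cref{lem:Optimistic_pessimism} for {\RMDPchi} but with the $\chi^2$ Bernstein estimates replaced by a KL-tailored concentration bound drawn from the dual representation \eqref{eq:dual_KL} and the event $\mathcal{E}_{KL}$ of \eqref{eq:Event_KL}. The base case $h=H+1$ is immediate, since $\overline{V}^k_{H+1}=\underline{V}^k_{H+1}=V^{\star,\sigma}_{H+1}=V^{\pi^k,\sigma}_{H+1}=0$. For the inductive step, assume \eqref{eq:Optimistic_pessimism_ineq_KL} holds at step $h+1$; I will establish the three $Q$-inequalities at step $h$ in the order (i)~$Q^{\star,\sigma}_h\le\overline{Q}^k_h$, (ii)~$Q^{\pi^k,\sigma}_h\le Q^{\star,\sigma}_h$ (trivial by optimality of $\pi^\star$), and (iii)~$\underline{Q}^k_h\le Q^{\pi^k,\sigma}_h$, and then deduce the $V$-inequalities by maximizing over actions as in \eqref{eq:policy_value_epsiode_k} and invoking the robust Bellman (optimality) equations.

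The engine of the argument is an auxiliary lemma — the KL analogue of \Cref{lem:Bernstein_Bound_Chi_optimal_policy} and \Cref{lem:Bernstein_Bound_Chi_policy_k} — stating that, on $\mathcal{E}_{KL}$, for any $V:\mathcal{S}\to[0,H]$,
\begin{align*}
\Big|\mathbb{E}_{\widehat{\mathcal{U}}^\sigma_h(s,a)}[V]-\mathbb{E}_{\mathcal{U}^\sigma_h(s,a)}[V]\Big|\le\frac{2c_fH}{\sigma}\sqrt{\frac{L}{\big(N^k_h(s,a)\vee 1\big)\widehat{P}^k_{\min,h}(s,a)}}+\sqrt{\frac{1}{K}}=B^{KL}_{k,h}(s,a).
\end{align*}
To prove this I would use \eqref{eq:dual_KL} to write both robust expectations as suprema over $\eta\in[\underline{\eta},H/\sigma]$ of $g_P(\eta):=-\eta\log\big(\mathbb{E}_{P}[\exp\{-V/\eta\}]\big)-\eta\sigma$; then $\big|\sup_\eta g_{\widehat{P}^k_h}-\sup_\eta g_{P^\star_h}\big|\le\sup_\eta\big|g_{\widehat{P}^k_h}(\eta)-g_{P^\star_h}(\eta)\big|=\sup_\eta\eta\,\big|\log\mathbb{E}_{\widehat{P}^k_h}[\exp\{-V/\eta\}]-\log\mathbb{E}_{P^\star_h}[\exp\{-V/\eta\}]\big|$, and since $\eta\le H/\sigma$ the event $\mathcal{E}_{KL}$ bounds this by $\tfrac{c_1H}{\sigma}\sqrt{L/(N^k_h(s,a)\widehat{P}^k_{\min,h}(s,a))}$ for every $\eta$ in the $\tfrac{1}{\sigma S\sqrt{K}}$-net of $[0,H/\sigma]$. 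A covering argument extends the bound to all $\eta\in[\underline{\eta},H/\sigma]$ using the $\tfrac{1}{\underline{\eta}}$-Lipschitz continuity of $x\mapsto\exp\{-x/\eta\}$ and the Lipschitzness of $\eta\mapsto g_P(\eta)$ on $[\underline{\eta},H/\sigma]$ — this is where the lower truncation $\underline{\eta}>0$ of \Cref{ass:Regularity_KL} is essential, since $1/\eta$ blows up near $0$ — with the discretization slack absorbed into the $\sqrt{1/K}$ term.

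Given this lemma, (i) follows exactly as in the $\chi^2$ case: by the robust Bellman optimality equation and \eqref{eq:Upper_estimate_Q},
\begin{align*}
Q^{\star,\sigma}_h(s,a)-\overline{Q}^k_h(s,a)\le\max\Big\{\mathbb{E}_{\mathcal{U}^\sigma_h(s,a)}[V^{\star,\sigma}_{h+1}]-\mathbb{E}_{\widehat{\mathcal{U}}^\sigma_h(s,a)}[V^{\star,\sigma}_{h+1}]-B^{KL}_{k,h}(s,a),\;0\Big\},
\end{align*}
where I used $Q^{\star,\sigma}_h\le H$, the induction hypothesis $V^{\star,\sigma}_{h+1}\le\overline{V}^k_{h+1}$, and the monotonicity of the infimum operator $\mathbb{E}_{\widehat{\mathcal{U}}^\sigma_h(s,a)}[\cdot]$; the right-hand side is $\le 0$ by the auxiliary lemma. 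Symmetrically, (iii) uses the robust Bellman equation for $\pi^k$, \eqref{eq:Lower_estimate_Q}, the induction hypothesis $\underline{V}^k_{h+1}\le V^{\pi^k,\sigma}_{h+1}$, the bound $Q^{\pi^k,\sigma}_h\ge 0$, and the auxiliary lemma applied with $V=V^{\pi^k,\sigma}_{h+1}$. Taking maxima over $a$ in the three $Q$-inequalities and applying \eqref{eq:policy_value_epsiode_k} together with \Cref{cor:Robust_Bellman_Optimal_eq} and \Cref{prop:Robust_Bellman_eq} then yields $\underline{V}^k_h(s)\le V^{\pi^k,\sigma}_h(s)\le V^{\star,\sigma}_h(s)\le\overline{V}^k_h(s)$, closing the induction.

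I expect the main obstacle to be the auxiliary KL-concentration lemma: controlling the log-MGF difference $\log\mathbb{E}_{\widehat{P}^k_h}[\exp\{-V/\eta\}]-\log\mathbb{E}_{P^\star_h}[\exp\{-V/\eta\}]$ uniformly in $\eta$, managing the $1/\eta$ factor (hence the indispensability of $\underline{\eta}$), keeping the dependence on $\widehat{P}^k_{\min,h}(s,a)$ clean — its later replacement by $P^\star_{\min}$ via \Cref{ass:KL_P_min} is deferred to \Cref{lem:Control_Bonus_KL} in the regret proof and is not needed here — and choosing the net fine enough that its error is dominated by $\sqrt{1/K}$. Everything else is a line-by-line transcription of the {\RMDPchi} argument with \eqref{eq:dual_chi} replaced by \eqref{eq:dual_KL}.
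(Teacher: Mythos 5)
Your proposal matches the paper's proof essentially line for line: the same backward induction with the three inequalities in the same order, the same reduction of each non-trivial inequality to a KL concentration bound of the form $\frac{c_1H}{\sigma}\sqrt{L/((N^k_h(s,a)\vee 1)\widehat{P}^k_{\min,h}(s,a))}+\sqrt{1/K}$ derived from the dual representation \eqref{eq:dual_KL} and the event $\mathcal{E}_{KL}$ via a sup-of-differences plus covering argument (the paper splits this into \Cref{lem:Bound_KL_optimal_policy} and \Cref{lem:Bound_KL_policy_k} rather than stating one lemma for generic $V$, but the content is identical), and the same final step for the $V$-inequalities. Your observation that the passage from $\widehat{P}^k_{\min,h}$ to $P^\star_{\min}$ is deferred to \Cref{lem:Control_Bonus_KL} is also exactly how the paper organizes it.
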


\begin{proof}
We will prove \Cref{lem:Optimistic_pessimism_KL} by induction and in three cases, as follows:
\begin{itemize}
    \item \textbf{Ineq. 1:} To prove $Q_{h}^{\star,\sigma}(s,a) \leq \overline{Q}_h^k(s,a)$.
    \item \textbf{Ineq. 2:} To prove $Q_{h}^{\pi^k,\sigma}(s,a) \leq Q_{h}^{\star,\sigma}(s,a)$.
    \item \textbf{Ineq. 3:} To prove $\underline{Q}_h^k(s,a) \leq Q_{h}^{\pi^k,\sigma}(s,a)$.
\end{itemize}
Let us consider that \eqref{eq:Optimistic_pessimism_ineq_KL} holds at step $h+1$. 
\begin{itemize}
    \item \textbf{Proof of Ineq. 1:}
For step $h$, we will first consider the robust $Q$ function part. Specifically, by using the robust Bellman optimal equations (Eq. \eqref{eq:Robust_bellman_Q_fn} and \eqref{eq:Robust_bellman_V_fn}) and \eqref{eq:Upper_estimate_Q}, we have that
\begin{align}
Q_{h}^{\star,\sigma}(s,a) - \overline{Q}_h^k(s,a)
&= \max \left\{ \mathbb{E}_{\mathcal{U}^{\sigma}_h(s,a)} \left[ V_{h+1}^{\star,\sigma} \right]
- \mathbb{E}_{\widehat{\mathcal{U}^{\sigma}_h}(s,a)} \left[\overline{V}^k_{h+1} \right] - B^{KL}_{k,h}(s,a), \,
Q_{h}^{\star,\sigma}(s,a) - H \right\} \nonumber\\
&\leq \max \left\{\mathbb{E}_{\mathcal{U}^{\sigma}_h(s,a)} \left[ V_{h+1}^{\star,\sigma} \right]
- \mathbb{E}_{\widehat{\mathcal{U}^{\sigma}_h}(s,a)} \left[V_{h+1}^{\star,\sigma} \right] - B^{KL}_{k,h}(s,a), 0 \right\},\label{eq:optimism_pessimism_ineq_KL_step1}
\end{align}
where the second inequality follows from the induction of $V_{h+1}^{\star,\sigma} \leq \overline{V}_{h+1}^k$ at step $h+1$ and the fact that $Q_{h}^{\star,\sigma} \leq H$. By Lemma \ref{lem:Bound_KL_optimal_policy} and by the definition of $\widehat{P}^k_{\min,h}(s,a)$ as given in \eqref{eq:hat_p_min_KL}, we have that
\begin{align}
\label{eq:optimism_pessimism_ineq_KL_step2}
\mathbb{E}_{\mathcal{U}^{\sigma}_h(s,a)} \left[ V_{h+1}^{\star,\sigma} \right]
- \mathbb{E}_{\widehat{\mathcal{U}^{\sigma}_h}(s,a)} \left[V_{h+1}^{\star,\sigma} \right] \leq \frac{c_1H}{\sigma}\sqrt{\frac{L}{\{N^k_h(s,a)\vee 1\}\widehat{P}^k_{\min,h}(s,a)}} + \sqrt{\frac{1}{K}}.
\end{align}

Now recollect the choice of $ B^{KL}_{k,h}$ as given in \eqref{eq:Bonus_term_KL}. Therefore, combining \eqref{eq:optimism_pessimism_ineq_KL_step2} and \eqref{eq:Bonus_term_KL} in \eqref{eq:optimism_pessimism_ineq_KL_step1}, we can conclude that
\begin{align}
\label{eq:optimism_pessimism_ineq_KL_bound_case1}
Q_{h}^{\star,\sigma}(s,a) \leq \overline{Q}_h^k(s,a).
\end{align}

\item \textbf{Proof of Ineq. 2:} By the definition of $Q_{h}^{\star,\sigma}(s,a)$, the
\begin{align}
\label{eq:optimism_pessimism_ineq_KL_bound_case2}
    Q_{h}^{\pi^k,\sigma}(s,a) \leq Q_{h}^{\star,\sigma}(s,a).
\end{align}
is trivial.

\item \textbf{Proof of Ineq. 3:} By using the robust Bellman equations (Eq. \eqref{eq:Robust_bellman_Q_fn} and \eqref{eq:Robust_bellman_V_fn}) and \eqref{eq:Lower_estimate_Q}, we have that
\begin{align}
\underline{Q}_h^k(s,a) - Q_{h}^{\pi^k,\sigma}(s,a) &=\max \left\{
\mathbb{E}_{\widehat{\mathcal{U}_h^{\sigma}}(s,a)} \left[ \underline{V}_{h+1}^k \right]
- \mathbb{E}_{\mathcal{U}_h^\sigma(s,a)} \left[ V_{h+1}^{\pi^k,\sigma} \right]- B^{KL}_{k,h}(s,a), \,0 - Q_{h}^{\pi^k,\sigma}(s,a)
\right\} \notag\\
&\leq \max \left\{
\mathbb{E}_{\widehat{\mathcal{U}_h^\sigma}(s,a)} \left[ V_{h+1}^{\pi^k,\sigma} \right]
- \mathbb{E}_{\mathcal{U}_h^\sigma(s,a)} \left[ V_{h+1}^{\pi^k,\sigma} \right]- B^{KL}_{k,h}(s,a), \, 0\right\}, \label{eq:optimism_pessimism_ineq_KL_step4}
\end{align}
where the second inequality follows from the induction of $\underline{V}_{h+1}^k \leq V_{h+1}^{\pi^k,\sigma}$ at step $h+1$ and the fact that $Q_{h}^{\pi^k,\sigma} \geq 0$. By Lemma ~\ref{lem:Bound_KL_policy_k}, we get
\begin{align}
\mathbb{E}_{\widehat{\mathcal{U}_h^\sigma}(s,a)} \left[ V_{h+1}^{\pi^k,\sigma} \right]
- \mathbb{E}_{\mathcal{U}_h^\sigma(s,a)} \left[ V_{h+1}^{\pi^k,\sigma} \right]\leq \frac{c_1H}{\sigma}\sqrt{\frac{L}{\{N^k_h(s,a)\vee 1\}\widehat{P}^k_{\min,h}(s,a)}} + \sqrt{\frac{1}{K}}.\label{eq:optimism_pessimism_ineq_KL_step5}
\end{align}

Thus by combining \eqref{eq:optimism_pessimism_ineq_KL_step5}, \eqref{eq:optimism_pessimism_ineq_KL_step4}, the choice of $B^{KL}_{k,h}(s,a)$ in \eqref{eq:Bonus_term_KL}, we get
\begin{align}
\label{eq:optimism_pessimism_ineq_KL_bound_case3}
    \underline{Q}_h^k(s,a) \leq Q_{h}^{\pi^k,\sigma}(s,a).
\end{align}
\end{itemize}
Therefore, by \eqref{eq:optimism_pessimism_ineq_KL_bound_case1}, \eqref{eq:optimism_pessimism_ineq_KL_bound_case2} and \eqref{eq:optimism_pessimism_ineq_KL_bound_case3}, we have proved that at step $h$, it holds that
\begin{align}
\label{eq:optimism_pessimism_ineq_Qbound_KL_final}
    \underline{Q}_h^k(s,a) \leq Q_{h}^{\pi^k,\sigma}(s,a) \leq Q_{h}^{\star,\sigma}(s,a) \leq \overline{Q}_h^k(s,a).
\end{align}
Finally for the robust $V$ function part, consider that by the robust Bellman equation (Eq. \eqref{eq:Robust_bellman_Q_fn} and \eqref{eq:Robust_bellman_V_fn}) and \eqref{eq:policy_value_epsiode_k}, 
\begin{align}
\label{eq:optimism_pessimism_ineq_KL_step7}
\underline{V}_h^k(s) = \max_{a \in \mathcal{A}}  \underline{Q}_h^k(s,\cdot) \leq \max_{a \in \mathcal{A}} Q_{h}^{\pi^k,\sigma}(s,\cdot) = V_{h}^{\pi^k,\sigma}(s),
\end{align}
and that by the robust Bellman optimality (Corollary ~\ref{cor:Robust_Bellman_Optimal_eq}), the choice of $\pi^k$, $\overline{V}^k_h$ and $\underline{V}^k_h$ defined in \eqref{eq:policy_value_epsiode_k},
\begin{align}
\label{eq:optimism_pessimism_ineq_KL_step8}
    V_{h}^{\star,\sigma}(s) = \max_{a \in \mathcal{A}} Q_{h}^{\star,\sigma}(s,a) \leq \max_{a \in \mathcal{A}} \overline{Q}_h^k(s,a) = \overline{V}_h^k(s),
\end{align}
which proves that
\begin{align}
\label{eq:optimism_pessimism_ineq_Vbound_KL_final}
    \underline{V}_h^k(s) \leq V_{h}^{\pi^k,\sigma}(s) \leq V_{h}^{\star,\sigma}(s) \leq \overline{V}_h^k(s).
\end{align}
Since the conclusion \eqref{eq:Optimistic_pessimism_ineq_KL} holds for the $V$ function part at step $H+1$, an induction proves Lemma ~\ref{lem:Optimistic_pessimism_KL}. \qedhere
\end{proof}

\begin{keylem}[Proper bonus for {\RMDPKL} and optimistic and pessimistic value estimators] 
\label{lem:Proper_bound_optimism_pessimism_bound_KL}
By setting the bonus $ B^{KL}_{k,h}$ as in \eqref{eq:Bonus_term_KL}, then under the typical event $\mathcal{E}_{KL}$, it holds that
\begin{align}
\label{eq:Proper_bound_optimism_pessimism_bound_KL}
\mathbb{E}_{\widehat{\mathcal{U}_h^\sigma}(s,a)}\left[\overline{V}_{h+1}^k\right] -\mathbb{E}_{\mathcal{U}_h^\sigma(s,a)}\left[\overline{V}_{h+1}^k\right] + \mathbb{E}_{\mathcal{U}_h^\sigma(s,a)}\left[\underline{V}_{h+1}^k\right] -\mathbb{E}_{\widehat{\mathcal{U}_h^\sigma}(s,a)}\left[\underline{V}_{h+1}^k\right] \leq 2B^{KL}_{k,h}(s,a).
\end{align}
\end{keylem}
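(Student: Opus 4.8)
The plan is to follow the template of the $\chi^2$ counterpart, \Cref{lem:Proper_bouns_optimism_pessimism_bound}. Denote by $A$ the left-hand side of \eqref{eq:Proper_bound_optimism_pessimism_bound_KL}; by the triangle inequality $A$ is at most the sum of $\big|\mathbb{E}_{\widehat{\mathcal{U}_h^\sigma}(s,a)}[\overline{V}_{h+1}^k] - \mathbb{E}_{\mathcal{U}_h^\sigma(s,a)}[\overline{V}_{h+1}^k]\big|$ and $\big|\mathbb{E}_{\widehat{\mathcal{U}_h^\sigma}(s,a)}[\underline{V}_{h+1}^k] - \mathbb{E}_{\mathcal{U}_h^\sigma(s,a)}[\underline{V}_{h+1}^k]\big|$, so it suffices to bound a single generic term $\big|\mathbb{E}_{\widehat{\mathcal{U}_h^\sigma}(s,a)}[V] - \mathbb{E}_{\mathcal{U}_h^\sigma(s,a)}[V]\big|$ for $V \in \{\overline{V}_{h+1}^k, \underline{V}_{h+1}^k\}$ by $B^{KL}_{k,h}(s,a)$. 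For this I would substitute the KL dual form \eqref{eq:dual_KL} into both expectations and use the elementary inequality $|\sup_\eta f(\eta) - \sup_\eta g(\eta)| \le \sup_\eta |f(\eta) - g(\eta)|$; the additive $-\eta\sigma$ terms cancel, leaving
\[
\big|\mathbb{E}_{\widehat{\mathcal{U}_h^\sigma}(s,a)}[V] - \mathbb{E}_{\mathcal{U}_h^\sigma(s,a)}[V]\big| \le \sup_{\eta \in [\underline{\eta}, H/\sigma]} \eta \, \Big| \log \mathbb{E}_{\widehat{P}_h^k(\cdot|s,a)}\big[e^{-V/\eta}\big] - \log \mathbb{E}_{P_h^\star(\cdot|s,a)}\big[e^{-V/\eta}\big] \Big|.
\]

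\textbf{Applying the concentration event.} The inner log-moment-generating-function difference is exactly the quantity controlled by the event $\mathcal{E}_{KL}$ in \eqref{eq:Event_KL}, for every $\eta$ in the $\tfrac{1}{\sigma S\sqrt{K}}$-net of $[0,H/\sigma]$; on the net it is at most $c_1\sqrt{L/(\{N_h^k(s,a)\vee 1\}\,\widehat{P}^k_{\min,h}(s,a))}$, which after multiplying by $\eta \le H/\sigma$ becomes $\tfrac{c_1 H}{\sigma}\sqrt{L/(\{N_h^k(s,a)\vee 1\}\,\widehat{P}^k_{\min,h}(s,a))}$. To pass from the net to all $\eta \in [\underline{\eta}, H/\sigma]$, I would invoke a covering argument: the dual objective $\eta \mapsto -\eta\log\mathbb{E}_P[e^{-V/\eta}] - \eta\sigma$ is Lipschitz in $\eta$ on this interval, so discretizing $\eta$ at the net resolution perturbs the value by a lower-order term that is absorbed into the $\sqrt{1/K}$ summand of the bonus. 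When $V$ is the data-dependent $\overline{V}_{h+1}^k$ or $\underline{V}_{h+1}^k$ rather than a fixed function, this step first routes the concentration through the deterministic $V^{\star,\sigma}_{h+1}$ — precisely the device packaged in the KL analogues of \Cref{lem:Bound_KL_policy_k} (the counterpart of \Cref{lem:Bernstein_Bound_Chi_optimism_pessimism} in the $\chi^2$ proof). The net effect is that each of the two terms is bounded by $\tfrac{c_1 H}{\sigma}\sqrt{L/(\{N_h^k(s,a)\vee 1\}\,\widehat{P}^k_{\min,h}(s,a))} + \sqrt{1/K}$. Summing them and comparing with the definition \eqref{eq:Bonus_term_KL} gives $A \le \tfrac{2c_1H}{\sigma}\sqrt{L/(\{N_h^k(s,a)\vee 1\}\,\widehat{P}^k_{\min,h}(s,a))} + 2\sqrt{1/K} \le 2B^{KL}_{k,h}(s,a)$ as soon as the absolute constant $c_f$ in the bonus satisfies $c_f \ge c_1/2$, which finishes the proof.

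\textbf{Main obstacle.} The delicate point is the passage from the $\eta$-net to the continuum: one must show that a mesh of size $\tfrac{1}{\sigma S\sqrt{K}}$ distorts the dual objective by only $O(\sqrt{1/K})$, which requires a uniform bound on $\tfrac{d}{d\eta}\big(-\eta\log\mathbb{E}_P[e^{-V/\eta}] - \eta\sigma\big)$ over $[\underline{\eta}, H/\sigma]$. Since $e^{-V/\eta}$, and hence $\log\mathbb{E}_P[e^{-V/\eta}]$, become ill-behaved as $\eta \downarrow 0$, this derivative is controlled only away from zero, so the lower cutoff $\underline{\eta}>0$ from the regularity assumption \Cref{ass:Regularity_KL} is genuinely used here. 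A second, routine but necessary, subtlety is that $\overline{V}_{h+1}^k$ and $\underline{V}_{h+1}^k$ are measurable with respect to the data defining $\widehat{P}_h^k$, so $\mathcal{E}_{KL}$ cannot be applied to them literally; handling this by decomposing through $V^{\star,\sigma}_{h+1}$ is exactly the mechanism already used in the $\chi^2$ development (\Cref{lem:Bernstein_Bound_Chi_policy_k,lem:Bernstein_Bound_Chi_optimism_pessimism}), and it carries over to the KL setting with the added simplification that the KL bonus, unlike the $\chi^2$ one, carries no variance term requiring separate manipulation.
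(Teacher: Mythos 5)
Your proposal is correct and follows essentially the same route as the paper: the paper's proof simply splits $A$ into the two absolute differences and invokes \Cref{lem:Bound_KL_optimism_pessimism} (which itself packages the dual representation \eqref{eq:dual_KL}, the sup-difference inequality, the event $\mathcal{E}_{KL}$, and the covering argument you describe), then compares with the definition of $B^{KL}_{k,h}$ in \eqref{eq:Bonus_term_KL}. You have merely unpacked that auxiliary lemma inline, and your remarks on the $\eta$-net discretization and the data-dependence of $\overline{V}^k_{h+1},\underline{V}^k_{h+1}$ correctly identify the subtleties the paper defers to its auxiliary lemmas.
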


\begin{proof}
    Let us denote
    \begin{align}
    \label{eq:A_KL}
        A:= \mathbb{E}_{\widehat{\mathcal{U}_h^\sigma}(s,a)}\left[\overline{V}_{h+1}^k\right] -\mathbb{E}_{\mathcal{U}_h^\sigma(s,a)}\left[\overline{V}_{h+1}^k\right] + \mathbb{E}_{\mathcal{U}_h^\sigma(s,a)}\left[\underline{V}_{h+1}^k\right] -\mathbb{E}_{\widehat{\mathcal{U}_h^\sigma}(s,a)}\left[\underline{V}_{h+1}^k\right] 
    \end{align}

We upper bound $A$ by using the concentration inequality given in \Cref{lem:Bound_KL_optimism_pessimism},
\begin{align}
A \leq \frac{2c_1H}{\sigma}\sqrt{\frac{L}{\{N^k_h(s,a)\vee 1\}\widehat{P}^k_{\min,h}(s,a)}} + 2\sqrt{\frac{1}{K}}, \label{eq:Proper_bouns_optimism_pessimism_KL_bound_step1}
\end{align}
where \( c_1 > 0 \) are absolute constants. Therefore, by the choice of \(B^{KL}_{k,h}(s,a)\) in \eqref{eq:Bonus_term_KL}, we get \eqref{eq:Proper_bound_optimism_pessimism_bound_KL}. This concludes the proof of \Cref{lem:Proper_bound_optimism_pessimism_bound_KL}.
\end{proof}

\begin{keylem}[Control of the bonus term for {\RMDPKL}]
\label{lem:Control_Bonus_KL}
\textit{Under the typical event \( \mathcal{E}_{KL} \) and \Cref{ass:KL_P_min}, the bonus term \(  B^{KL}_{k,h} \) in \eqref{eq:Bonus_term_KL} is bounded by}
\begin{align}
    B^{KL}_{k,h}(s,a) \leq \frac{2c_fH}{\sigma}\sqrt{\frac{L^2}{\{N^k_h(s,a)\vee 1\}P^{\star}_{\min}}} + \sqrt{\frac{1}{K}},
\end{align}
where $P^{\star}_{\min}$ satisfies \Cref{ass:KL_P_min}, $L = \log\bigg(\frac{S^3AH^2K^{3/2}}{\delta}\bigg)$, and \( c_f>0\).
\end{keylem}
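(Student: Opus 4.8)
The plan is to replace the data-dependent denominator $\widehat P^k_{\min,h}(s,a)$ appearing in the bonus \eqref{eq:Bonus_term_KL} by the deterministic floor $P^\star_{\min}$ of \Cref{ass:KL_P_min}, at the price of one extra $\log$-factor. Since the $\sqrt{1/K}$ summand is common to both sides of the claimed inequality, it suffices to establish the pointwise lower bound $\widehat P^k_{\min,h}(s,a)\ge P^\star_{\min}/(cL)$ for an absolute constant $c$, on the high-probability event we condition on. Fix $(h,s,a,k)$ and set $N:=N^k_h(s,a)\vee 1$. If $N^k_h(s,a)=0$ then every $\widehat P^k_h(s'|s,a)$ equals $0$, so the first term of $B^{KL}_{k,h}(s,a)$ vanishes and the inequality is immediate; hence assume $N^k_h(s,a)\ge 1$. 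For any $s'$ with $\widehat P^k_h(s'|s,a)>0$ we have $N^k_h(s,a,s')\ge 1$, so the transition $(s,a)\to s'$ was actually observed under $P^\star_h$, which forces $P^\star_h(s'|s,a)>0$ and therefore $P^\star_h(s'|s,a)\ge P^\star_{\min}$ by \Cref{ass:KL_P_min}.

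It remains to lower bound each such $\widehat P^k_h(s'|s,a)$. I would carry in the event $\mathcal E_{KL}$ (exactly as the transition-concentration line is carried in $\mathcal E_{\chi^2}$ in \eqref{eq:Event}, and at the same $1-\delta$ confidence after a union bound over $(h,s,a,s',k)$) the empirical-Bernstein control $|\widehat P^k_h(s'|s,a)-P^\star_h(s'|s,a)|\le \sqrt{c_1 L\, P^\star_h(s'|s,a)/N}+c_2 L/N$, and then split on the value of $N$. If $N<16(c_1+c_2)L/P^\star_{\min}$, use only the trivial bound $\widehat P^k_h(s'|s,a)=N^k_h(s,a,s')/N\ge 1/N> P^\star_{\min}/(16(c_1+c_2)L)$. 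If $N\ge 16(c_1+c_2)L/P^\star_{\min}$, then, using $P^\star_h(s'|s,a)\ge P^\star_{\min}$, both additive corrections in the Bernstein bound are each at most $\tfrac{1}{4}P^\star_h(s'|s,a)$, so $\widehat P^k_h(s'|s,a)\ge \tfrac{1}{2}P^\star_h(s'|s,a)\ge \tfrac{1}{2}P^\star_{\min}$. In either regime $\widehat P^k_h(s'|s,a)\ge P^\star_{\min}/(cL)$ with $c:=16(c_1+c_2)$, and taking the minimum over all $s'$ with $\widehat P^k_h(s'|s,a)>0$ gives $\widehat P^k_{\min,h}(s,a)\ge P^\star_{\min}/(cL)$.

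Substituting this back, $\frac{L}{(N^k_h(s,a)\vee 1)\,\widehat P^k_{\min,h}(s,a)}\le \frac{cL^2}{(N^k_h(s,a)\vee 1)\,P^\star_{\min}}$, whence $B^{KL}_{k,h}(s,a)\le \frac{2\sqrt c\, c_f H}{\sigma}\sqrt{\frac{L^2}{(N^k_h(s,a)\vee 1)P^\star_{\min}}}+\sqrt{1/K}$, and absorbing $\sqrt c$ into the absolute constant $c_f$ yields the stated bound. The step I expect to be the main obstacle is not the case split but making the Bernstein estimate legitimate: the count $N^k_h(s,a)$ is random and the minimizing $s'$ is chosen from the data, so the inequality must hold uniformly over all $(h,s,a,s',k)$ — which is precisely why the $\mathcal E_{\chi^2}$-type transition-concentration line must be propagated into $\mathcal E_{KL}$ — and one must also dispose of the degenerate no-visit case $N^k_h(s,a)=0$ separately, as above.
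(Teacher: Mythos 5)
Your proposal is correct and reaches the same pivotal inequality as the paper — namely that the empirical minimum probability satisfies $\widehat{P}^k_{\min,h}(s,a)\gtrsim P^{\star}_{\min}/L$, after which the substitution into \eqref{eq:Bonus_term_KL} is mechanical — but you get there by a different concentration route. The paper invokes the multiplicative binomial tail bound of \Cref{lem:binomial_rv_bound} to obtain the two-sided control $P^{\star}_h(s'|s,a)\geq \widehat{P}^k_h(s'|s,a)/e^2\geq P^{\star}_h(s'|s,a)/(8e^2L)$ in \eqref{eq:binomial_bound_step1}, and then chains these through the two (possibly different) minimizing states $s_1,s_2$ in \eqref{eq:binomial_bound_step2}; you instead use the additive empirical-Bernstein line from $\mathcal{E}_{\chi^2}$ in \eqref{eq:Event} together with the integrality observation $\widehat{P}^k_h(s'|s,a)>0\Rightarrow \widehat{P}^k_h(s'|s,a)\geq 1/N$, splitting on whether $N$ exceeds $16(c_1+c_2)L/P^{\star}_{\min}$. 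Your route buys something real: the low-count regime is disposed of unconditionally by the $1/N$ bound, whereas the paper's application of \Cref{lem:binomial_rv_bound} tacitly requires $N^k_h(s,a)\,P^{\star}_h(s'|s,a)$ to exceed the stated thresholds, a condition that is not verified in \eqref{eq:binomial_bound_step1}; you also explicitly handle the degenerate $N^k_h(s,a)=0$ case, which the paper skips. The cost is that the transition-concentration inequality must be added to $\mathcal{E}_{KL}$ in \eqref{eq:Event_KL} (you correctly flag this), whereas the paper pays an equivalent extra $\delta$ of failure probability by invoking a fresh union bound inside the lemma. Both arguments change the absolute constant in front of the bonus, which is absorbed into $c_f$ either way.
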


\begin{proof}
   We recall the choice of $B^{KL}_{k,h}$ as given in \eqref{eq:Bonus_term_KL}, i.e.
\begin{align}
\label{eq:Bonus_term_KL_step1}
   B^{KL}_{k,h}(s,a) = \frac{2c_fH}{\sigma}\sqrt{\frac{L}{\{N^k_h(s,a)\vee 1\}\widehat{P}^k_{\min,h}(s,a)}} + \sqrt{\frac{1}{K}},
\end{align}
where \( L = \log(S^3 A H^2 K^{3/2} / \delta) \), $\widehat{P}^k_{\min,h}(s,a)$ is defined in \eqref{eq:hat_p_min_KL}, and $c_f>0$ is an absolute constant. 
   
By \Cref{lem:binomial_rv_bound} and the union bound, it holds that with probability at least $1 - \delta$ that for all $(h,s,a) \in [H]\times\mathcal{S}\times\mathcal{A}$, we get
\begin{align}
\label{eq:binomial_bound_step1}
    \forall s' \in \mathcal{S}:\quad 
P^{\star}_h(s' \mid s,a) \geq \frac{\widehat{P}^k_h(s' \mid s,a)}{e^2} \geq \frac{P^{\star}_h(s' \mid s,a)}{8e^2L}.
\end{align}

To characterize the relation between $P^{\star}_{\min,h}(s,a)$ and $\widehat{P}^k_{\min,h}(s,a)$ for any $(h,s,a) \in [H]\times\mathcal{S}\times\mathcal{A}$, we suppose—without loss of generality—that $P^{\star}_{\min,h}(s,a) = P^{\star}_h(s_1 \mid s,a)$ and $\widehat{P}^k_{\min,h}(s,a) = \widehat{P}^k_h(s_2 \mid s,a)$ for some $s_1, s_2 \in \mathcal{S}$. Then, it follows that
\begin{align}
P^{\star}_{\min,h}(s,a) = P^{\star}_h(s_1 \mid s,a) 
&\overset{\text{(i)}}{\geq} \frac{\widehat{P}^k_h(s_1 \mid s,a)}{e^2} 
\geq \frac{\widehat{P}^k_{\min,h}(s,a)}{e^2} = \frac{\widehat{P}^k_h(s_2 \mid s,a)}{e^2} \overset{\text{(ii)}}{\geq} \frac{P^{\star}_h(s_2 \mid s,a)}{8e^2 L} 
\geq \frac{P^{\star}_{\min,h}(s,a)}{8e^2L} \overset{(iii)}{\geq} \frac{P^{\star}_{\min}}{8e^2L} ,\label{eq:binomial_bound_step2}
\end{align}
where the inequalities (i) and (ii) follow from \eqref{eq:binomial_bound_step1}, and inequality (iii) follows by \eqref{eq:p_star_min_KL}.

By applying \eqref{eq:binomial_bound_step2} in \eqref{eq:Bonus_term_KL_step1}, we get
\begin{align}
      B^{KL}_{k,h}(s,a) \leq \frac{2c_fH}{\sigma}\sqrt{\frac{L^2}{\{N^k_h(s,a)\vee 1\}P^{\star}_{\min}}} + \sqrt{\frac{1}{K}}. 
\end{align}
This concludes the proof of \Cref{lem:Control_Bonus_KL}.
\end{proof}

\subsection{Auxiliary Lemmas for {\RMDPKL}}
\label{subsubsec:aux_Lemma_KL}

\begin{auxlem}[Bound for {\RMDPKL} and Optimal Robust Value function]
\label{lem:Bound_KL_optimal_policy}
Under event \( \mathcal{E}_{KL} \) defined in \eqref{eq:Event_KL}, with probability at least $1-\delta$, it holds that
\begin{align}
\label{eq:Bound_KL_optimal_policy}
\abs{ \mathbb{E}_{\widehat{\mathcal{U}_h^\sigma}(s,a)}\left[V_{h+1}^{\star,\sigma}\right] 
- \mathbb{E}_{\mathcal{U}_h^\sigma(s,a)}\left[V_{h+1}^{\star,\sigma}\right]}\leq \frac{c_1H}{\sigma}\sqrt{\frac{L}{\{N^k_h(s,a)\vee 1\}\widehat{P}^k_{\min,h}(s,a)}} + \frac{1}{\sqrt{K}},
\end{align}
where \(L = \log(S^3 A H^2 K^{3/2} / \delta) \) and \( c_1 \) is an absolute constant.
\end{auxlem}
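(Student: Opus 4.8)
The plan is to mirror the proof of \Cref{lem:Bernstein_Bound_Chi_optimal_policy}, replacing the $\chi^2$ dual by the KL dual \eqref{eq:dual_KL}. Write $Z_P(\eta) := [\mathbb{P}_h(\exp\{-V_{h+1}^{\star,\sigma}/\eta\})](s,a)$ for $P \in \{\widehat{P}_h^k, P_h^\star\}$, so that both robust expectations are suprema over $\eta \in [\underline{\eta}, H/\sigma]$ of $-\eta \log Z_P(\eta) - \eta\sigma$. The $-\eta\sigma$ term is common to both, and $|\sup_\eta f(\eta) - \sup_\eta g(\eta)| \le \sup_\eta |f(\eta) - g(\eta)|$, so
\begin{align*}
\Big|\mathbb{E}_{\widehat{\mathcal{U}_h^\sigma}(s,a)}[V_{h+1}^{\star,\sigma}] - \mathbb{E}_{\mathcal{U}_h^\sigma(s,a)}[V_{h+1}^{\star,\sigma}]\Big| \le \sup_{\eta\in[\underline{\eta},H/\sigma]} \eta\,\big|\log Z_{\widehat{P}_h^k}(\eta) - \log Z_{P_h^\star}(\eta)\big|.
\end{align*}
It then suffices to bound the right-hand side uniformly over $\eta$.

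Next I would split into the cover points and the remainder. For any $\eta$ in the cover $\mathcal{N}_{1/(\sigma S\sqrt{K})}([0,H/\sigma])$, the defining inequality of $\mathcal{E}_{KL}$ in \eqref{eq:Event_KL} gives $|\log Z_{\widehat{P}_h^k}(\eta) - \log Z_{P_h^\star}(\eta)| \le c_1\sqrt{L/(\{N_h^k(s,a)\vee 1\}\widehat{P}^k_{\min,h}(s,a))}$; multiplying by $\eta \le H/\sigma$ produces the first term of the claimed bound. For an arbitrary $\eta \in [\underline{\eta},H/\sigma]$ I would pass to the nearest cover point $\tilde{\eta}$ with $|\eta-\tilde{\eta}| \le 1/(\sigma S\sqrt{K})$ and absorb the discretization error into the additive $1/\sqrt{K}$. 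This requires a modulus-of-continuity estimate for $\eta \mapsto F_P(\eta) := \eta\log Z_P(\eta)$ that is uniform in $P$: differentiating, $F_P'(\eta) = \log Z_P(\eta) + \eta^{-1}\,\mathbb{E}_P[e^{-V_{h+1}^{\star,\sigma}/\eta}\,V_{h+1}^{\star,\sigma}]/Z_P(\eta)$, and since $V_{h+1}^{\star,\sigma} \in [0,H]$ both summands lie in $[-H/\eta, H/\eta]$, so $F_P$ is $(H/\underline{\eta})$-Lipschitz on $[\underline{\eta},H/\sigma]$. Applying this with $P = \widehat{P}_h^k$ and $P = P_h^\star$ and using the cover resolution shows the difference between $\eta|\log Z_{\widehat{P}_h^k}(\eta) - \log Z_{P_h^\star}(\eta)|$ and its value at $\tilde{\eta}$ is $O(1/\sqrt{K})$ by the choice of the cover, which becomes the $1/\sqrt{K}$ term after adjusting absolute constants. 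Conditioning on $\mathcal{E}_{KL}$, which already carries the union bound over $(h,s,a,s',k)$ and the cover, then finishes the argument.

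The step I expect to be the main obstacle is this covering/Lipschitz estimate: unlike the $\chi^2$ objective, the KL dual $F_P(\eta) = \eta\log\mathbb{E}_P[e^{-V/\eta}]$ has a modulus of continuity that degrades like $1/\eta$ as $\eta \to 0^+$, so a cover of all of $[0,H/\sigma]$ does not by itself suffice; this is exactly where \Cref{ass:Regularity_KL} enters, restricting the supremum to $[\underline{\eta},H/\sigma]$ and keeping the Lipschitz constant finite. One also has to verify that this restriction is legitimate for the \emph{empirical} uncertainty set, i.e.\ that the optimal dual variable for $\widehat{P}_h^k$ also lies in this range, which again follows from the regularity assumption applied to the empirical kernel.
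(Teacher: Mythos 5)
Your proposal follows essentially the same route as the paper's proof: pass to the KL dual representation, bound the difference of suprema by the supremum of $\eta$ times the difference of the log-moment terms, invoke the event $\mathcal{E}_{KL}$ on the cover points with $\eta \le H/\sigma$, and handle general $\eta$ by a covering argument absorbed into the $1/\sqrt{K}$ term. The only difference is that you explicitly work out the Lipschitz modulus of $\eta\mapsto\eta\log Z_P(\eta)$ (and correctly identify the role of $\underline{\eta}$ there), whereas the paper simply asserts ``by a covering argument''; your version is, if anything, more complete.
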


\begin{proof} 
By the definition of the operator 
\(\mathbb{E}_{\mathcal{U}_h^\sigma(s,a)}\left[V_{h+1}^{\star,\sigma}\right]\) 
in \eqref{eq:dual_KL} and $\widehat{P}^k_{\min,h}(s,a)$ in \eqref{eq:hat_p_min_KL},  we can arrive at
\begin{align}
   &\abs{ \mathbb{E}_{\widehat{\mathcal{U}_h^\sigma}(s,a)}\left[V_{h+1}^{\star,\sigma}\right] 
- \mathbb{E}_{\mathcal{U}_h^\sigma(s,a)}\left[V_{h+1}^{\star,\sigma}\right]} \nonumber\\
&\quad \leq \sup_{\eta \in [\underline{\eta}, H/\sigma]} \eta \abs{\log\bigg(\mathbb{E}_{\widehat{P}_h^k(\cdot|s,a)}\bigg[\exp\bigg\{-\frac{V_{h+1}^{\star,\sigma}}{\eta}\bigg\}\bigg]\bigg)  - \log\bigg(\mathbb{E}_{P^{\star}_h(\cdot|s,a)}\bigg[\exp\bigg\{-\frac{V_{h+1}^{\star,\sigma}}{\eta}\bigg\}\bigg]\bigg)}.
\end{align}
By the definition of $\mathcal{E}_{KL}$ as defined in \eqref{eq:event_II} and by applying \citep[Lemma 16]{NeuRIPS2024_UnifiedPessimismOfflineRL_Yue}, we have 
\begin{align}
&\abs{ \mathbb{E}_{\widehat{\mathcal{U}_h^\sigma}(s,a)}\left[V_{h+1}^{\star,\sigma}\right] 
- \mathbb{E}_{\mathcal{U}_h^\sigma(s,a)}\left[V_{h+1}^{\star,\sigma}\right]} \leq \frac{c_1H}{\sigma}\sqrt{\frac{L}{\{N^k_h(s,a)\vee 1\}\widehat{P}^k_{\min,h}(s,a)}},
\end{align}
for any $\eta \in \mathcal{N}_{\frac{1}{\sigma S\sqrt{K}}}\big([0,H/\sigma]\big)$. Therefore, by a covering argument, for any $\eta \in [0,H/\sigma]$, we get \eqref{eq:Bound_KL_optimal_policy}. This concludes the proof of Lemma \ref{lem:Bound_KL_optimal_policy}.
\end{proof}

\begin{auxlem}[Bound for {\RMDPKL} and the robust value function of \( \pi^k \)]
\label{lem:Bound_KL_policy_k}
Under event \( \mathcal{E}^{KL} \) in \eqref{eq:Event_KL}, suppose that the optimism and pessimism ineq. \eqref{eq:Optimistic_pessimism_ineq_KL} hold at \( (h+1, k) \), then it holds that
\begin{align}
\label{eq:Bound_KL_policy_k}
&\abs{\mathbb{E}_{\widehat{\mathcal{U}_h^\sigma}(s,a)} 
\left[ V_{h+1}^{\pi^k,\sigma} \right]-\mathbb{E}_{\mathcal{U}_h^\sigma(s,a)} 
\left[ V_{h+1}^{\pi^k,\sigma} \right]}\leq \frac{c_1H}{\sigma}\sqrt{\frac{L}{\{N^k_h(s,a)\vee 1\}\widehat{P}^k_{\min,h}(s,a)}} + \frac{1}{\sqrt{K}},
\end{align}
where \(L= \log\bigg(\frac{S^3 A H^2 K^{3/2}}{\delta}\bigg) \) and \( c_1 \) is an absolute constant.
\end{auxlem}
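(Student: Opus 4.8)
The plan is to mirror the proof of Lemma~\ref{lem:Bound_KL_optimal_policy} essentially line for line, the only change being that the $[0,H]$-valued function whose robust expectation we compare is now the (history‑dependent) value $V_{h+1}^{\pi^k,\sigma}$ rather than the fixed $V_{h+1}^{\star,\sigma}$. First I would invoke the dual representation \eqref{eq:dual_KL} for both $\mathbb{E}_{\widehat{\mathcal{U}_h^\sigma}(s,a)}[\cdot]$ and $\mathbb{E}_{\mathcal{U}_h^\sigma(s,a)}[\cdot]$, write each as a supremum over $\eta\in[\underline{\eta},H/\sigma]$, and use $|\sup_\eta f(\eta)-\sup_\eta g(\eta)|\le\sup_\eta|f(\eta)-g(\eta)|$. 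The additive $-\eta\sigma$ terms are identical in the two duals and cancel, leaving
\begin{align*}
\Bigl|\mathbb{E}_{\widehat{\mathcal{U}_h^\sigma}(s,a)}[V_{h+1}^{\pi^k,\sigma}]-\mathbb{E}_{\mathcal{U}_h^\sigma(s,a)}[V_{h+1}^{\pi^k,\sigma}]\Bigr|\le\sup_{\eta\in[\underline{\eta},H/\sigma]}\eta\,\Bigl|\log\mathbb{E}_{\widehat{P}_h^k(\cdot|s,a)}[\exp(-V_{h+1}^{\pi^k,\sigma}/\eta)]-\log\mathbb{E}_{P^\star_h(\cdot|s,a)}[\exp(-V_{h+1}^{\pi^k,\sigma}/\eta)]\Bigr|,
\end{align*}
and bounding $\eta\le H/\sigma$ pulls the prefactor $H/\sigma$ out of the supremum.

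Next, on the event $\mathcal{E}_{KL}$ the bracketed log‑difference is at most $c_1\sqrt{L/(\{N_h^k(s,a)\vee1\}\,\widehat{P}^k_{\min,h}(s,a))}$ for every $\eta$ in the net $\mathcal{N}_{1/(\sigma S\sqrt K)}([0,H/\sigma])$ (this is exactly the content of $\mathcal{E}_{KL}$, established from the concentration inequality underlying it together with the stated union bound). A routine discretization step — using that $\eta\mapsto\eta\log\mathbb{E}_P[\exp(-V/\eta)]$ is Lipschitz on $[\underline{\eta},H/\sigma]$ with constant of order $H/\underline{\eta}$, and that the net resolution $1/(\sigma S\sqrt K)$ is chosen so the induced error is at most $1/\sqrt K$ — transfers the bound to all $\eta\in[\underline{\eta},H/\sigma]$. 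Taking the supremum and absorbing constants gives the claimed $\frac{c_1H}{\sigma}\sqrt{L/(\{N_h^k(s,a)\vee1\}\widehat{P}^k_{\min,h}(s,a))}+1/\sqrt K$, so for the repeated manipulations I would just cite the proof of Lemma~\ref{lem:Bound_KL_optimal_policy}.

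The one genuinely delicate point is that $V_{h+1}^{\pi^k,\sigma}$ is a function of $\pi^k$, hence of the same interaction data used to form $\widehat{P}_h^k$, so the concentration cannot be applied to it as a ``fixed'' function independent of $\widehat{P}_h^k$. This is precisely why the statement conditions on $\mathcal{E}_{KL}$ and on the inductive hypothesis that \eqref{eq:Optimistic_pessimism_ineq_KL} holds at $(h+1,k)$: the event is designed to hold uniformly over the relevant $[0,H]$-valued value functions (one per episode, via the union over $k$, with the resulting extra $\log K$ already subsumed in $L$), and the $(h+1,k)$ optimism/pessimism inequality guarantees $V_{h+1}^{\pi^k,\sigma}$ lies in $[0,H]$ sandwiched between $\underline V_{h+1}^k$ and $\overline V_{h+1}^k$, i.e.\ in the range where the concentration bound of $\mathcal{E}_{KL}$ is in force. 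Handling this uniformity cleanly (rather than the straightforward algebra of the dual and the net) is where the real care is needed.
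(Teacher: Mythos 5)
Your proposal matches the paper's proof essentially step for step: both pass to the dual representation \eqref{eq:dual_KL}, bound the difference of suprema by the supremum of $\eta$ times the log-moment-generating-function difference with $\eta \le H/\sigma$, invoke the event $\mathcal{E}_{KL}$ on the net, and finish with a covering argument that contributes the $1/\sqrt{K}$ term. Your closing remark about the data-dependence of $V_{h+1}^{\pi^k,\sigma}$ and the need for uniformity of $\mathcal{E}_{KL}$ is a genuine subtlety that the paper's own (very terse) proof does not address explicitly, so if anything you are being more careful than the source.
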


\begin{proof}
By our definition of the operator \( \mathbb{E}_{\mathcal{U}_h^\sigma(s,a)}[V_{h+1}^{\pi^k,\sigma}] \) in~\eqref{eq:dual_KL} and $\widehat{P}^k_{\min,h}(s,a)$ in \eqref{eq:hat_p_min_KL},  we can arrive at
\begin{align}
   &\abs{ 
\mathbb{E}_{\widehat{\mathcal{U}_h^\sigma}(s,a)} \left[ V_{h+1}^{\pi^k,\sigma} \right] 
- \mathbb{E}_{\mathcal{U}_h^\sigma(s,a)} \left[ V_{h+1}^{\pi^k,\sigma} \right]} \nonumber\\
&\quad \leq \sup_{\eta \in [\underline{\eta}, H/\sigma]} \eta \abs{\log\bigg(\mathbb{E}_{\widehat{P}_h^k(\cdot|s,a)}\bigg[\exp\bigg\{-\frac{V_{h+1}^{\pi^k,\sigma}}{\eta}\bigg\}\bigg]\bigg)  - \log\bigg(\mathbb{E}_{P^{\star}_h(\cdot|s,a)}\bigg[\exp\bigg\{-\frac{V_{h+1}^{\pi^k,\sigma}}{\eta}\bigg\}\bigg]\bigg)}.
\end{align}
By the definition of $\mathcal{E}_{KL}$ as defined in \eqref{eq:event_II} and by applying \citep[Lemma 17]{NeuRIPS2024_UnifiedPessimismOfflineRL_Yue}, we can arrive at
\begin{align}
&\abs{ 
\mathbb{E}_{\widehat{\mathcal{U}_h^\sigma}(s,a)} \left[ V_{h+1}^{\pi^k,\sigma} \right] 
- \mathbb{E}_{\mathcal{U}_h^\sigma(s,a)} \left[ V_{h+1}^{\pi^k,\sigma} \right]}\leq \frac{c_1H}{\sigma}\sqrt{\frac{L}{\{N^k_h(s,a)\vee 1\}\widehat{P}^k_{\min,h}(s,a)}},
\end{align}
for any $\eta \in \mathcal{N}_{\frac{1}{\sigma S\sqrt{K}}}\big([0,H/\sigma]\big)$. Therefore, by a covering argument, for any $\eta \in [0,H/\sigma]$, we get \eqref{eq:Bound_KL_policy_k}. This concludes the proof of Lemma \ref{lem:Bound_KL_policy_k}.\qedhere
\end{proof}

\begin{auxlem}[Bounds for {\RMDPKL} and optimistic and pessimistic robust value estimators]
\label{lem:Bound_KL_optimism_pessimism}
Under event $\mathcal{E}_{KL}$ in \eqref{eq:Event_KL}, suppose that the optimism and pessimism ineq. \eqref{eq:Optimistic_pessimism_ineq_KL} holds at $(h+1,k)$, it holds that
\begin{align}
&\max \Bigg\{\abs{\mathbb{E}_{\widehat{\mathcal{U}_h^\sigma}(s,a)}\left[\overline{V}_{h+1}^k\right] - \mathbb{E}_{\mathcal{U}_h^\sigma(s,a)}\left[\overline{V}_{h+1}^k\right]},\abs{\mathbb{E}_{\widehat{\mathcal{U}_h^\sigma}(s,a)}\left[\underline{V}_{h+1}^k\right] - \mathbb{E}_{\mathcal{U}_h^\sigma(s,a)}\left[\underline{V}_{h+1}^k\right]}
\Bigg\}\nonumber\\
&\qquad \qquad \leq  \frac{c_1H}{\sigma}\sqrt{\frac{L}{\{N^k_h(s,a)\vee 1\}\widehat{P}^k_{\min,h}(s,a)}} +\sqrt{\frac{1}{K}},
\end{align}
where $L= \log(S^3 A H^2 K^{3/2} / \delta)$ and $c_1$ is an absolute constant.
\end{auxlem}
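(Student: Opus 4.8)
\textbf{Proof plan for Auxiliary Lemma \ref{lem:Bound_KL_optimism_pessimism}.}

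The plan is to reduce the claim to the two preceding auxiliary lemmas, \Cref{lem:Bound_KL_optimal_policy} and \Cref{lem:Bound_KL_policy_k}, by treating $\overline{V}_{h+1}^k$ and $\underline{V}_{h+1}^k$ as the roles played by the value functions there. The key observation is that the argument establishing those lemmas never used any property of the value function beyond its boundedness in $[0,H]$: the only place the value function enters is through the quantity $\log\big(\mathbb{E}_{P}[\exp\{-V_{h+1}/\eta\}]\big)$, and the event $\mathcal{E}_{KL}$ in \eqref{eq:Event_KL} is stated uniformly over all bounded $V_{h+1}$ and all $\eta$ in the $\varepsilon$-cover of $[0,H/\sigma]$. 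So the first step is to note that under \Cref{lem:Optimistic_pessimism_KL} (whose hypotheses we are assuming hold at step $(h+1,k)$), both $\overline{V}_{h+1}^k$ and $\underline{V}_{h+1}^k$ take values in $[0,H]$, hence qualify as admissible value functions in $\mathcal{E}_{KL}$.

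Next I would instantiate the dual representation \eqref{eq:dual_KL} for both $\mathbb{E}_{\widehat{\mathcal{U}_h^\sigma}(s,a)}[\cdot]$ and $\mathbb{E}_{\mathcal{U}_h^\sigma(s,a)}[\cdot]$ applied to $\overline{V}_{h+1}^k$, and bound the difference of the two suprema by the supremum of the pointwise difference over $\eta \in [\underline{\eta},H/\sigma]$, exactly as in the opening display of the proof of \Cref{lem:Bound_KL_optimal_policy}. This gives
\[
\abs{\mathbb{E}_{\widehat{\mathcal{U}_h^\sigma}(s,a)}[\overline{V}_{h+1}^k] - \mathbb{E}_{\mathcal{U}_h^\sigma(s,a)}[\overline{V}_{h+1}^k]} \le \sup_{\eta \in [\underline{\eta},H/\sigma]} \eta \abs{\log\big(\mathbb{E}_{\widehat{P}_h^k}[e^{-\overline{V}_{h+1}^k/\eta}]\big) - \log\big(\mathbb{E}_{P^\star_h}[e^{-\overline{V}_{h+1}^k/\eta}]\big)}.
\]
Then apply the defining inequality of $\mathcal{E}_{KL}$ together with \citep[Lemma 16]{NeuRIPS2024_UnifiedPessimismOfflineRL_Yue} (or Lemma 17, for the $\pi^k$-flavored bound, whichever is cited in \Cref{lem:Bound_KL_policy_k}) to control the log-ratio on the cover points, and close the gap with the standard covering argument: the function $\eta \mapsto \eta\log(\mathbb{E}_P[e^{-V/\eta}])$ is Lipschitz on $[\underline{\eta},H/\sigma]$ with a constant that, multiplied by the cover resolution $1/(\sigma S\sqrt{K})$, contributes at most the $\sqrt{1/K}$ additive term. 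The same chain of inequalities, verbatim with $\underline{V}_{h+1}^k$ in place of $\overline{V}_{h+1}^k$, handles the second term inside the maximum; taking the larger of the two bounds yields the stated inequality.

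Since the proof is a direct transcription of the reasoning already carried out for \Cref{lem:Bound_KL_optimal_policy} and \Cref{lem:Bound_KL_policy_k}, the natural write-up is simply: \emph{We follow the same proof lines as \Cref{lem:Bound_KL_optimal_policy} and \Cref{lem:Bound_KL_policy_k}, applied with $\overline{V}_{h+1}^k$ and $\underline{V}_{h+1}^k$ in place of $V_{h+1}^{\star,\sigma}$, which is valid because both lie in $[0,H]$ under \Cref{lem:Optimistic_pessimism_KL}; we therefore omit it.} The only genuine obstacle — and the one point worth a sentence of care rather than a pure appeal to the earlier proofs — is verifying that $\overline{V}_{h+1}^k,\underline{V}_{h+1}^k \in [0,H]$ so that the uniform event $\mathcal{E}_{KL}$ actually covers them; the upper bound $\le H$ follows from the $\min\{\cdot,H\}$ clipping in \eqref{eq:Upper_estimate_Q} and monotonicity, and the lower bound $\ge 0$ from the $\max\{\cdot,0\}$ clipping in \eqref{eq:Lower_estimate_Q}, so this is immediate once flagged.
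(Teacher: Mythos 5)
Your proposal matches the paper's own proof, which is literally the one-liner ``We follow the same proof lines as Lemma~\ref{lem:Bound_KL_policy_k}, and thereby we omit it.'' Your write-up is in fact more careful than the paper's, since you explicitly flag the one point that makes the transfer legitimate --- that $\overline{V}_{h+1}^k,\underline{V}_{h+1}^k\in[0,H]$ by the clipping in \eqref{eq:Upper_estimate_Q}--\eqref{eq:Lower_estimate_Q} so the uniform event $\mathcal{E}_{KL}$ covers them --- whereas the paper leaves this implicit.
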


\begin{proof}
    We follow the same proof lines as Lemma ~\ref{lem:Bound_KL_policy_k}, and thereby we omit it.
\end{proof}

\section{Proofs of Lower Bound of {\Algoname}}
\label{app:thm:regret_lower_bound}
In this section, we present the proof of the minimax regret lower bound stated in \cref{thm:regret_lower_bound}. The proof is inspired by the techniques in \cite{Book2020_Bandit_Lattimore} and builds upon the lower bound framework introduced in \cite{ICML2025_OnlineDRMDPSampleComplexity_He}. While we adopt the same hard instance construction as in \cite{ICML2025_OnlineDRMDPSampleComplexity_He}, our analysis generalizes it to arbitrary horizon length $H$ and number of states $S$. Notably, our lower bound does not rely on the supremal visitation ratio assumption ($C_{vr}$) that is required in their analysis. To this end, we first introduce the construction of hard instances, and then we prove \cref{thm:regret_lower_bound}. We have introduced some notations that will be used throughout the analysis

\subsection{Construction of a collection of hard RMDPs}
\label{app:hard_instance}

\begin{figure}[t]
\centering
\subfloat[The nominal transition kernel $P^\star$]{%
  \includegraphics[scale=0.75]{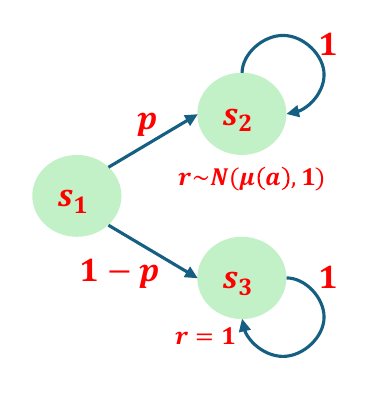}
  \label{fig:nominal_transition_LB}}
  \hspace{10mm}
\subfloat[The worst-case transition kernel $P^\omega$]{%
  \includegraphics[scale=0.75]{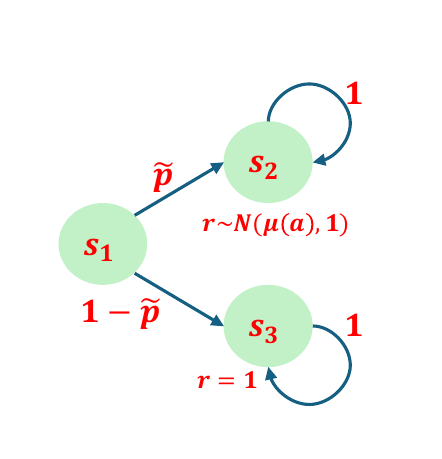}
  \label{fig:perturbed_transition_LB}
}
\label{fig:tranition kernel}
\end{figure}

We consider an episodic finite-horizon RMDP defined by the tuple $\mathcal{M}:= (\mathcal{S}, \mathcal{A}, H, P^\star, \mathcal{U}^{\sigma}(P^\star), r)$, where the state space is $\mathcal{S} = \{s_1, s_2, s_3\}$, the action space is $\mathcal{A} = \{a_1, a_2, \dots, a_A\}$ with $A = |\mathcal{A}|$, and $H$ denotes the horizon length. 

The nominal transition kernel $P^\star$, illustrated in \Cref{fig:nominal_transition_LB}, is defined as follows:
\begin{align}
\label{eq:nominal_P}
    P^\star_h(s' \mid s, a) &=
    \begin{cases}
        p\,\mathbb{I}\{s' = s_2\} + (1 - p)\,\mathbb{I}\{s' = s_3\}, & \text{if } s = s_1,\ h = 1, \\
        \mathbb{I}\{s' = s_2\}, & \text{if } s = s_2,\ h = 2,\dots,H, \\
        \mathbb{I}\{s' = s_3\}, & \text{if } s = s_3,\ h = 2,\dots,H.
    \end{cases}
\end{align}

It is evident from the definition that $s_1$ serves as the initial state in every episode. At step $h = 1$, it transitions to $s_2$ with probability $p$ and to $s_3$ with probability $1 - p$. In subsequent steps $h = 2, \dots, H$, the process remains in $s_2$ or $s_3$, independent of the chosen action. We consider $K$ episodes, and assume $K > A$ and $p \geq A/K$ to support our construction; this lower bound on $p$ becomes looser as $K$ increases. 

The reward structure is as follows: in state $s_2$, the agent receives a reward drawn from a Gaussian distribution $r \sim \mathcal{N}(\mu(a), 1)$, where $\mu(a) \in [0, 1)$ depends on the action taken. In contrast, state $s_3$ yields a fixed reward of $r = 1$. The choice of $\mu(a)$ ensures that the robust value function satisfies $V^{\pi, \sigma}_h(s_2) < V^{\pi, \sigma}_h(s_3)$ for all $h = 2, \dots, H$. Consequently, the worst-case environment will not reduce the transition probability from $s_1$ to $s_2$ relative to the nominal model.

We consider $S$ copies of RMDPs, i.e. ${\bf \Xi}:= \{\mathcal{M}_1, \mathcal{M}_2,\dots, \mathcal{M}_S\}$, where $A\geq cS$,  for $c>1$, and $S\geq 4$. The only difference between any two RMDPs  in ${\bf \Xi}$ is the mean reward at $s_2$. Without loss of generality, we consider the mean reward vector at $s_2$ for RMDP $\mathcal{M}_1$ as 
\begin{align}
\label{eq:mean_M1}
    \mu_1:=(\mu^\star,0,\dots,0)\in \mathbb{R}^A,
\end{align} 
and $\mu*>0$ is a constant to be specified later. Note that $s_1$ serves as the initial state in every episode and for every $\mathcal{M}_i \in {\bf \Xi}$.

The policy $\pi:=\{\pi^k\}_{k=1}^K$ for any $\mathcal{M}_i \in {\bf \Xi}$ is as follows: The agent follows a learning algorithm $\xi$, and in the $k$-th episode, it selects a policy $\pi^k:=\{\pi^k_h\}_{h=1}^H$ to interact with the environment, resulting in the trajectory $\{s^k_h, a^k_h, r^k_h\}_{h=1}^H$, where for episode $k$
\begin{itemize}
    \item $s^k_h$ denotes the state to which the agent transitions from $s_1$ at step $h = 1$, after taking an arbitrary action;
    \item $a^k_h$ is the action selected by the agent at step $h=2$, i.e., in state $s_2$ or $s_3$;
    \item $r^k_h$ is the reward received at state $s_2$ or $s_3$ in step $h=2$.
\end{itemize}
Let $\big\{s^k_h, a^k_h, r^k_h\big\}_{h=1,k=1}^{H,K}$ denote the collection of trajectories over $K$ episodes. The joint distribution over these trajectories for RMDP $\mathcal{M}_i$ is denoted by $\mathbb{P}^0_i$, and the corresponding random variables are denoted by $S^k_h$, $A^k_h$, and $R^k_h$, respectively. We write $\mathbb{E}^0_i$ to denote the expectation with respect to $\mathbb{P}^0_i$. Following the standard argument from \cite{Book2020_Bandit_Lattimore}, it is sufficient to restrict attention to deterministic learning strategies without loss of generality.

\subsection{Define some events and the mean reward vectors for RMDPs}
We now define two events that will be used in the proof. Let $M_j(K)$ denote the number of episodes in which agent selects action $a_j$ in step $h=2$ if the transit state is $s_2$ according to the learned policy $\pi$, and is given by
\begin{align}
\label{eq:event_M}
    M_j(K)=\sum\limits_{k=1}^K \mathbb{I}\{\pi^k_2(s_2)=a_j\}.
\end{align}
We now denote $N_j(K)$ as the number of episodes in which action $a_j$ is actually taken at state $s_2$ at step $h=2$, and is given by
\begin{align}
    \label{eq:event_N}
    N_j(K)=\sum\limits_{k=1}^K \mathbb{I}\{S^k_2=s_2, A^k_2=a_j\}.
\end{align}

We denote $c_i = \arg\min_{j \notin \{c_1, c_2, \dots, c_{i-1}\}} \mathbb{E}^0_{1}[T_j(K)]$, as the $i^\text{th}$ least chosen action under environment $\mathcal{M}_1$,  after excluding the previously selected least chosen actions $c_1, \dots, c_{i-1}$, where $c_1 = 1$. Since the expected number of times the agent transits from $s_1$ to $s_2$ over $K$ episodes is $Kp$, and given the selection of $c_i$, it follows that
\begin{align}
\label{eq:bound_N}
    \mathbb{E}^0_1[N_{c_i}(K)]\leq \frac{Kp}{A-S}\leq \frac{Kp}{c'A}, \text{ where $c'=\big(1-\frac{1}{c}\big)$.} 
\end{align}

We now define the mean reward vector at state $s_2$ in environment $\mathcal{M}_i \in {\bf \Xi}$ as
\begin{align}
\label{eq:mean_Ms}
\mu_i(j) =
\begin{cases}
\mu^\star, & \text{if } j = 1 \\
\sqrt{S}\mu^\star, & \text{if } j = c_i\\
0, & \text{otherwise}
\end{cases}
\end{align}
This implies that $\mu_i(a_j)=\mu_1(a_j)=\mu^\star$  for all $j\neq c_i$ and $\mu_i(a_j)=\sqrt{S}\mu^\star$  for all $j=c_i$. Therefore, the optimal policy at state $s_2$ selects action $a_1$ under $\mathcal{M}_1$ and action $a_{c_i}$ under $\mathcal{M}_i$.

\subsection{Expected Regret under RMDP $\mathcal{M}_i \in {\bf \Xi}$}

Before we evaluate the robust value functions, we first examine the optimization problem for identifying the worst-case transition from state \( s_1 \) to the two possible next states \( s_2 \) and \( s_3 \) at step $h=1$, formulated as:
\begin{align}
\label{eq:opt_h1}
P^{\omega}_1 &= \arg\min_{p': D(P^{\omega}_1 \parallel P^\star_1) \leq \sigma} \left[ p' \cdot V^{\pi, \sigma}_2(s_2) + (1 - p') \cdot V^{\pi, \sigma}_2(s_3) \right],
\end{align}
where at step $h=1$ the worst-case transition kernel \( P^{\omega}_1 = (p', 1 - p') \) and $P^\star_1 = (p, 1 - p)$.  
By the construction of RMDPs $\mathcal{M}_i$ we have the fact that $V^{\pi, \sigma}_h(s_2) <  V^{\pi, \sigma}_h(s_3)$ by design. Consequently, the expression $p' \cdot V^{\pi, \sigma}_2(s_2) + (1 - p') \cdot V^{\pi, \sigma}_2(s_3)$  is a decreasing function of $p'$. Thus, $P^{\omega}_1$ for step $h=1$ is determined by the supremum value $\tilde{p} = \sup \{ p' : D_f(P^\omega_1, P^\star_1) \leq \sigma \}$, where \( D_f \) denotes \( f \)-divergence (e.g., KL divergence, or \( \chi^2 \)-divergence). For steps $h=2,\dots,H$, the $P^{\omega}_h=1$ for state $s_2$ or $s_3$ according to the definition of $f$-divergence set (\Cref{def:f_divergence_uncertainty}) for {\RMDPchi} and {\RMDPKL}. This formulation ensures $P^{\omega}$ is independent of the policy \( \pi \) and environments \( \mathcal{M}_i \) for $i=1,\dots,S$. Therefore, the worst-transition kernel $P^\omega$, as illustrated in \Cref{fig:perturbed_transition_LB}, is given by
\begin{align}
\label{eq:perturbed_P}
    P^\omega_h(s' \mid s, a) &=
    \begin{cases}
        \tilde{p}\,\mathbb{I}\{s' = s_2\} + (1 - \tilde{p})\,\mathbb{I}\{s' = s_3\}, & \text{if } s = s_1,\ h = 1, \\
        \mathbb{I}\{s' = s_2\}, & \text{if } s = s_2,\ h = 2,\dots,H, \\
        \mathbb{I}\{s' = s_3\}, & \text{if } s = s_3,\ h = 2,\dots,H.
    \end{cases}
\end{align}

\textbf{Robust value functions.} We now construct the robust value functions for any policy $\pi$ and optimal policy $\pi^\star$ at states $s_2$ and $s_3$ for $h=2,\dots,H$ and at state $s_1$ for $h=1$. For $h=2,\dots,H$, we get
\begin{align}
\label{eq:value_pi_h_s3}
    V^{\pi,\sigma}_h(s_3) = (H-h+1).\nonumber\\
    V^{\pi^\star,\sigma}_h(s_3) = (H-h+1).
\end{align}
as the reward at $s_3$ is 1.

\begin{align}
\label{eq:value_pi_h_s2}
    V^{\pi,\sigma}_h(s_2) = (H-h+1)\mu(\pi^k_2).\nonumber\\
    V^{\pi^\star,\sigma}_h(s_2) = (H-h+1)\mu(\pi^\star_2).
\end{align}

For $h=1$, we apply the worst-transition kernel $P_1^\omega$ as given in \eqref{eq:perturbed_P}, and we get
\begin{align}
\label{eq:value_pi_h_s1}
     V^{\pi,\sigma}_h(s_1) = \tilde{p}(H-1) \mu(\pi^k_2) + (1-\tilde{p})(H-1).\nonumber\\
    V^{\pi^\star,\sigma}_h(s_1) = \tilde{p}(H-1) \mu(\pi^\star_2) + (1-\tilde{p})(H-1).
\end{align}
Therefore, regret over $K$ episodes is given by
\begin{align}
\label{eq:regret_LB}
    \text{Regret}(K) = \sum\limits_{k=1}^K V^{\star,\sigma}_1(s_1) - V^{\pi^k,\sigma}_1(s_1)= \sum\limits_{k=1}^K \tilde{p}(H-1)\big(\mu(\pi^\star_2)-\mu(\pi^k_2) \big).
\end{align}

For environment \( \mathcal{M}_1 \), the algorithm $\xi$ incurs the total regret by selecting a suboptimal policy \( \pi^k_2(s_2) \neq a_1 \) in each episode, i.e. $\mu(\pi^\star_2)-\mu(\pi^k_2) = \mu^\star\mathbb{I}\{\pi^k_2(s_2)\neq a_1\}$. Applying this fact in \eqref{eq:regret_LB}, the regret under $\mathcal{M}_1$ cam be bounded as 
\begin{align}
\mathbb{E}[\text{Regret}_{\mathcal{M}_1}(\xi, K)] &= \sum\limits_{k=1}^K \mathbb{E}^0_1\big[V^{\star,\sigma}_1(s_1) - V^{\pi^k,\sigma}_1(s_1)\big]\nonumber\\
&\overset{\text{(i)}}{=} \sum_{k=1}^{K} \mathbb{E}^0_1 \left[ \tilde{p} (H-1)\cdot \mathbb{I}\{\pi^k_2(s_2) \neq a_1\} \mu^\star \right] \nonumber\\
&= \tilde{p} (H-1)\cdot \mu^\star\cdot \mathbb{E}^0_1 \left[ \sum_{k=1}^{K} \mathbb{I}\{\pi^k(s_2) \neq a_1\} \right] \nonumber\\
&\overset{\text{(ii)}}{=}\tilde{p} (H-1)\cdot \mu^\star \cdot \mathbb{E}^0_1 [K - M_1(K)] \notag \\
&\overset{\text{(iii)}}{\geq} \frac{\tilde{p}(H-1) KSH \mu^\star}{2} \mathbb{P}^0_1 \left( M_1(K) \leq \frac{KSH}{2} \right), \label{eq:exp_regret_M1}
\end{align}
where equality (i) follows from the definition of the robust value function under the RMDP setting, using \( \tilde{p} \) as the worst-case transition probability, and the construction of the reward function at state \( s = s_2 \), (ii) uses the definition of \( M_j(K) \) as given in \eqref{eq:event_M}, and (iii) follows from Markov’s inequality.

For environment \( \mathcal{M}_i \in {\bf \Xi}\), the algorithm $\xi$ follows the same analysis, and the total regret incurred by selecting a suboptimal policy \( \pi^k_2(s_2) \neq a_{c_i} \) in each episode can be bounded as
\begin{align}
\mathbb{E}[\text{Regret}_{\mathcal{M}_i}(\xi, K)] 
&= \sum_{k=1}^{K} \mathbb{E}_{i}^0 \big[V^{\star,\sigma}_1(s_1) - V^{\pi^k,\sigma}_1(s_1)\big] \notag \\
&= \sum_{k=1}^{K} \mathbb{E}_i^0 \left[ \tilde{p}(H-1) \cdot \left( \mathbb{I}\{\pi^k_2(s_2) = a_1\} \big(\sqrt{S}-1\big)\mu^\star + \sum_{j>1, j \neq c_i} \mathbb{I}\{\pi^k_2(s_2) = a_j\} \sqrt{S}\mu^\star \right) \right] \notag \\
&\geq \sum_{k=1}^{K} \mathbb{E}_i^0 \left[ \tilde{p}(H-1)\big(\sqrt{S}-1\big) \cdot \mathbb{I}\{\pi^k_2(s_2) = a_1\} \mu^\star \right] \notag \\
&\overset{(i)}{\geq} \tilde{p}(H-1) \cdot \mu^\star \cdot \mathbb{E}_i^0 \left[ \sum_{k=1}^{K} \mathbb{I}\{\pi^k_2(s_2) = a_1\} \right] \notag \\
&= \tilde{p}(H-1) \cdot \mu^\star\cdot \mathbb{E}_i^0 [M_1(K)] \notag \\
&\geq \frac{\tilde{p}(H-1) KSH\mu^\star}{2} \mathbb{P}_i^0 \left( M_1(K) > \frac{KSH}{2} \right), \label{eq:exp_regret_Ms}
\end{align}
where (i) is due to the fact that $S\geq 4$.

Combining \eqref{eq:exp_regret_M1} and \eqref{eq:exp_regret_Ms} and by applying \Cref{lem:Bretagnolle_Huber_inequality}, we get
\begin{align}
    \mathbb{E}[\text{Regret}_{\mathcal{M}_1}(\xi, K) + \text{Regret}_{\mathcal{M}_i}(\xi, K)] &\geq \frac{\tilde{p}(H-1) KSH \mu^\star}{2} \bigg\{\mathbb{P}_1^0 \left( M_1(K) \leq  \frac{KSH}{2}\right) +  \mathbb{P}_i^0 \left( M_1(K) > \frac{KSH}{2}\right)\bigg\}\nonumber\\
    &\geq \frac{\tilde{p}(H-1) KSH \mu^\star}{2} \exp\Big(-\text{KL}(\mathbb{P}^0_1,\mathbb{P}^0_i)\Big).\label{eq:Regret_M1_Mi_bound}
\end{align}

According to \Cref{lem:KL_P1_Pi} and \eqref{eq:bound_N}, we can bound $\text{KL}(\mathbb{P}^0_1,\mathbb{P}^0_i)$ as
\begin{align}
    \text{KL}(\mathbb{P}^0_1,\mathbb{P}^0_i) &= \sum\limits_{j=1}^A
\mathbb{E}^0_1[N_j(K)]\text{KL}\Big(P_{r_{\mathcal{M}_1}(s_2,a_j)}, P_{r_{\mathcal{M}_i}(s_2,a_j)}\Big)\nonumber\\
&= \mathbb{E}^0_1[N_{c_i}(K)]\text{KL}\Big(\mathcal{N}(0,1), \mathcal{N}(\sqrt{i}\mu^*,1)\Big)\nonumber\\
&\leq \frac{S(\mu^\star)^2Kp}{2c'A}. \label{eq:KL_P1_Pi_bound}
\end{align}
 By applying \eqref{eq:KL_P1_Pi_bound} in \eqref{eq:Regret_M1_Mi_bound}, we get
 \begin{align}
      \mathbb{E}[\text{Regret}_{\mathcal{M}_1}(\xi, K) + \text{Regret}_{\mathcal{M}_i}(\xi, K)] &\geq \frac{\tilde{p}(H-1) KSH \mu^\star}{2} \exp\bigg(-\frac{S(\mu^\star)^2Kp}{2c'A}\bigg)\nonumber\\
      &\overset{\text{(i)}}{=} \frac{e^{-1/2}\tilde{p}(H-1)SH\sqrt{Kc'A}}{\sqrt{Sp}}\nonumber\\
      &=\frac{e^{-1/2}\tilde{p}(H-1)H}{\sqrt{p}}\sqrt{Kc'SA}\nonumber\\
      &{=} \Omega\bigg(\frac{\tilde{p}}{\sqrt{p}}\sqrt{H^4KSA}\bigg)\label{eq:Regret_M1_Mi_final_bound},
 \end{align}
where inequality (i) is obtained by $\mu^\star = \sqrt{\frac{c'A}{SKp}}$.

\subsection{Total expected regret under environment ${\bf \Xi}$}
To proceed, we present the lower bound on the sum of the total regret under environment ${\bf \Xi}$. We define the following notations
\begin{align}
    I &:= \sup_{\mathcal{M}_i \in {\bf \Xi}} \mathbb{E}[\text{Regret}_{\mathcal{M}_i}(\xi, K)].\label{eq:event_I}\\
        II &:= \sum\limits_{i=2}^S \mathbb{E}[\text{Regret}_{\mathcal{M}_1}(\xi, K)]+ \mathbb{E}[\text{Regret}_{\mathcal{M}_i}(\xi, K)].\label{eq:event_II}
\end{align}
By \eqref{eq:event_I}, \eqref{eq:event_II} and \eqref{eq:Regret_M1_Mi_final_bound}, we can verify that
\begin{align}
\label{eq:event_I_bound}
    I &\geq 2(S-1)II\nonumber\\
    &=\frac{1}{2(S-1)}\sum\limits_{i=2}^S \bigg[\mathbb{E}[\text{Regret}_{\mathcal{M}_1}(\xi, K)]+ \mathbb{E}[\text{Regret}_{\mathcal{M}_i}(\xi, K)]\bigg]\nonumber\\
    &=\Omega\bigg(\frac{\tilde{p}}{\sqrt{p}}\sqrt{H^4KSA}\bigg).
\end{align}

\subsection{Proof of \Cref{thm:regret_lower_bound}}
We will now find the lower bound for {\RMDPchi} and {\RMDPKL}.

\textbf{Lower bound for {\RMDPchi}.}
Following the definition of $\chi^2$-divergence, the worst-case transition for step $h=1$ is given by
\begin{align}
    \tilde{p}&=\argmin_{p'}  p'V^{\pi,\sigma}_2(s_2) + (1-p')V^{\pi,\sigma}_2(s_3) \label{eq:tilde_p_chi}\\
    &\text{s.t. } p\bigg(\frac{p'}{p}-1\bigg)^2 + (1-p)\bigg(\frac{1-p'}{1-p}-1\bigg)^2 \leq \sigma\nonumber
\end{align}
Since \eqref{eq:tilde_p_chi} decrease monotonically with $\tilde{p}$, the solution of the optimization problem is $\tilde{p}=p+\sqrt{\sigma p(1-p)}$ \cite[Lemma 5.14]{ICML2025_OnlineDRMDPSampleComplexity_He}. By choosing $p=\frac{1}{2}$, we get $\tilde{p}= \frac{1}{2}\big(1+\sqrt{\sigma}\big)\geq \frac{1}{2}\sqrt{(1+\sigma)}$, as $\sqrt{a}+\sqrt{b}\geq \sqrt{a+b}$. Now by applying the choice of $\tilde{p}$ for {\RMDPchi}, we get
\begin{align}
\label{eq:I_bound_RMDP_chi}
    \sup_{\mathcal{M}_i \in {\bf \Xi}} \mathbb{E}[\text{Regret}_{\mathcal{M}_i}(\xi,K)]\geq \mathcal{O}\bigg(\sqrt{H^4(1+\sigma)SAK}\bigg).
\end{align}

\textbf{Lower bound for {\RMDPKL}.} The proof of the lower bound for {\RMDPKL} is inspired by the proof of Theorem 4 in \cite{JMLR2024_DROfflineRLNearOptimalSampelComplexity_Shi}, and for our case we consider the case when the uncertainty level $\sigma$ is relatively large. For {\RMDPKL}, we set 
\begin{align}
    p=1-\alpha, \text{ such that $0< \alpha \leq \frac{3}{2H} \leq \frac{3}{4e^8}\leq \frac{1}{2}$ for some $H\geq 2e^8$}.\label{eq:P_value_KL}
\end{align}
The assumption \eqref{eq:P_value_KL} satisfies that $p\in [1/2,1)$. We now set 
\begin{align}
    \beta := \frac{\log\big(\frac{1}{\alpha}\big)}{2}\geq \frac{\log\big(\frac{2H}{3}\big)}{2}\geq 4\label{eq:beta_KL}
\end{align}
We now consider the radius of the KL-divergence uncertainty set $\sigma$ as
\begin{align}
    \label{eq:sigma_KL_range}
    \bigg(1-\frac{3}{\beta}\bigg) \log\bigg(\frac{1}{\alpha}\bigg) \leq \sigma \leq  \bigg(1-\frac{2}{\beta}\bigg) \log\bigg(\frac{1}{\alpha}\bigg).
\end{align}

According to $\beta$ defined in \eqref{eq:beta_KL} and $\sigma$ satisfying \eqref{eq:sigma_KL_range}, we can verify from \Cref{lem:tilde_p_bound_KL}, that $\tilde{p}\geq \frac{1}{\beta}$.\\ 

Recalling the definition of $P^\star_{\min}$ as given in \eqref{eq:p_star_min_KL}, the smallest positive state transition probability of
the optimal policy $\pi^\star$ under any RMDP $\mathcal{M}_i$ is given by
\begin{align}
\label{eq:P_star_min_KL}
    P^\star_{\min} = 1-p, \text{ where by \eqref{eq:P_value_KL} obeys } \alpha = P^\star_{\min} \in (0,1/H].
\end{align}
Therefore,  by \eqref{eq:P_star_min_KL} and \Cref{lem:tilde_p_bound_KL} in \eqref{eq:event_I_bound}, the lower bound for regret for {\RMDPKL} is given by
\begin{align}
       \sup_{\mathcal{M}_i \in {\bf \Xi}} \mathbb{E}[\text{Regret}_{\mathcal{M}_i}(\xi,K)] &\overset{(i)}{\geq} \Omega\bigg(\frac{1}{\beta\sqrt{1-P^\star_{\min}}}\sqrt{H^5SAK}\bigg)\nonumber\\
       &\overset{(ii)}{=} \Omega\bigg(\frac{2}{\log(1/\alpha)} \sqrt{\frac{H^5SAK}{1-P^\star_{\min}}}\bigg)\nonumber\\
       &\overset{(iii)}{\geq} \Omega\bigg(\frac{(1-3/\beta)}{\sigma} \sqrt{\frac{H^5SAK}{1-P^\star_{\min}}}\bigg)\nonumber\\
       &\overset{(iv)}{=}\Omega\bigg(\sqrt{\frac{H^5SAK}{\sigma^2(1-P^\star_{\min})}}\bigg),
\end{align}
where inequality (i) is obtained by $\tilde{p}\geq \frac{1}{\beta}$ and $p=1-P^\star_{\min}$, (ii) is by putting the value $\beta$ given in \eqref{eq:beta_KL}, (iii) is by the fact that the range of uncertainty level $\sigma$ satisfies \eqref{eq:sigma_KL_range}, finally by using fact $\beta\geq 4$ in inequality (iv) we get the final bound.


\section{Technical Lemmas}
Here, we present some other technical lemmas which are useful in the proof.

\begin{techlem}[Azuma Hoeffding's Inequality]
    \label{lem:Azuma-Hoeffding}
    Let $\{Z_t\}_{t \in \mathbb{Z}_+}$ be a martingale with respect to the filtration $\{\mathcal{F}_t\}_{t\in \mathbb{Z}_+}$. Assume that there are predictable processes $\{A_t\}_{t \in \mathbb{Z}_+}$ and $\{B_t\}_{t \in \mathbb{Z}_+}$ with respect to $\{\mathcal{F}_t\}_{t\in \mathbb{Z}_+}$, i.e., for all $t$, $A_t$ and $B_t$ are $\mathcal{F}_{t-1}$-measurable, and constants $0 < c_1, c_2, \dots < +\infty$ such that $A_t \leq Z_t - Z_{t-1} \leq B_t$ and $B_t - A_t \leq c_t$ almost surely. Then, for all $\beta > 0$
    \begin{align}
\mathbb{P}\bigg(\abs{Z_t - Z_0} \geq \beta\bigg) \leq \exp\Bigg\{-\frac{2\beta^2}{\sum\limits_{i \leq t}c^2_t}\Bigg\}.
\end{align}
\end{techlem}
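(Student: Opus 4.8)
The plan is to prove this via the classical exponential-moment (Chernoff) method, converting the tail probability into a moment generating function (MGF) bound that is controlled by peeling off martingale differences one at a time and applying a conditional form of Hoeffding's lemma. First I would set $D_i := Z_i - Z_{i-1}$ for $i \le t$, so that $Z_t - Z_0 = \sum_{i \le t} D_i$. The martingale hypothesis gives $\mathbb{E}[D_i \mid \mathcal{F}_{i-1}] = 0$, and by assumption $A_i \le D_i \le B_i$ almost surely, where $A_i, B_i$ are $\mathcal{F}_{i-1}$-measurable (predictable) with $B_i - A_i \le c_i$. For any fixed $s > 0$, Markov's inequality applied to the monotone map $x \mapsto e^{sx}$ yields $\mathbb{P}(Z_t - Z_0 \ge \beta) \le e^{-s\beta}\,\mathbb{E}[\exp(s(Z_t - Z_0))]$, so the task reduces to bounding the joint MGF.

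Next I would bound $\mathbb{E}[\exp(s\sum_{i\le t} D_i)]$ by induction on $t$ using the tower property: since $\sum_{i\le t-1} D_i$ is $\mathcal{F}_{t-1}$-measurable, we factor $\mathbb{E}[\exp(s\sum_{i\le t} D_i)] = \mathbb{E}\big[\exp(s\sum_{i\le t-1} D_i)\,\mathbb{E}[\exp(s D_t)\mid \mathcal{F}_{t-1}]\big]$. The crucial ingredient is the conditional Hoeffding lemma: because $D_t$ has zero conditional mean and is confined to the $\mathcal{F}_{t-1}$-measurable interval $[A_t, B_t]$ of width at most $c_t$, one obtains $\mathbb{E}[\exp(s D_t)\mid \mathcal{F}_{t-1}] \le \exp(s^2 c_t^2 / 8)$ almost surely. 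Substituting this almost-sure bound and iterating over all $t$ factors gives $\mathbb{E}[\exp(s(Z_t - Z_0))] \le \exp\big(s^2 \sum_{i\le t} c_i^2 / 8\big)$.

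Combining the two preceding displays, the one-sided tail is controlled by $\exp\big(-s\beta + s^2 \sum_{i\le t} c_i^2/8\big)$; minimizing the quadratic exponent over $s > 0$ at $s = 4\beta/\sum_{i\le t} c_i^2$ gives $\mathbb{P}(Z_t - Z_0 \ge \beta) \le \exp\big(-2\beta^2/\sum_{i\le t} c_i^2\big)$. To obtain the two-sided statement as worded, I would apply the identical argument to the martingale $-Z_t$, whose differences $-D_i$ lie in $[-B_i, -A_i]$ of the same width $c_i$, controlling $\mathbb{P}(Z_t - Z_0 \le -\beta)$ by the same expression, and then combine the two one-sided bounds (the union of the two tails yields the stated form, with the benign factor of $2$ absorbed in the simplified statement).

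The main obstacle is the conditional Hoeffding lemma itself: establishing that a conditionally-mean-zero random variable supported on a \emph{predictable} (hence $\mathcal{F}_{t-1}$-measurable but not deterministic) interval has conditional MGF at most $\exp(s^2 c_t^2/8)$. This requires Hoeffding's convexity argument — bounding $e^{sx}$ on $[A_t, B_t]$ by the chord, taking conditional expectation so the linear term vanishes by the zero-mean property, and then optimizing an auxiliary function of $s(B_t - A_t)$ — while carefully using the $\mathcal{F}_{t-1}$-measurability of the endpoints so that $A_t$ and $B_t$ behave as constants under $\mathbb{E}[\,\cdot\mid\mathcal{F}_{t-1}]$. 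Once this conditional lemma is in hand, the remaining steps are routine.
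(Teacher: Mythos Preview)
Your proposal is correct and follows the classical Chernoff/MGF argument with the conditional Hoeffding lemma, which is precisely the standard proof; the paper itself does not spell out a proof but simply refers to Theorem~5.1 of \cite{Book2009_ConcIneq_Dubhashi}, whose argument is essentially the one you outline.
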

\begin{proof}
    Refer to the proof of Theorem 5.1 of \cite{Book2009_ConcIneq_Dubhashi}.
\end{proof}

\begin{techlem}[Self-bounding variance inequality {\citep[Theorem 10]{Arxiv2009_EmpBernsteinBounds_Maurer}}]
\label{lem:self_bound_variance}
Let $X_1, \ldots, X_T$ be independent and identically distributed random variables with finite variance, that is, $\operatorname{Var}(X_1) < \infty$. Assume that $X_t \in [0, M]$ for every $t$ with $M > 0$, and let
\[
S_T^2 = \frac{1}{T} \sum_{t=1}^T X_t^2 - \left( \frac{1}{T} \sum_{t=1}^T X_t \right)^2.
\]
Then, for any $\varepsilon > 0$, we have
\[
\mathbb{P} \left( \left| S_T - \sqrt{\operatorname{Var}(X_1)} \right| \geq \varepsilon \right)
\leq 2 \exp\left( - \frac{T \varepsilon^2}{2M^2} \right).
\]
\end{techlem}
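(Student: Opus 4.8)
The plan is to view the empirical standard deviation $S_T$ as a well-behaved (convex, Lipschitz) function of the data vector, apply a dimension-free concentration inequality for such functions, and then reconcile its expectation with $\sqrt{\operatorname{Var}(X_1)}$. Write $X=(X_1,\dots,X_T)$ and let $P=I_T-\tfrac1T\mathbf 1\mathbf 1^{\top}$ be the orthogonal projection onto $\{v\in\mathbb R^T:\mathbf 1^{\top}v=0\}$. A short computation shows $\sqrt{T}\,S_T=\|PX\|_2$. First I would record the two structural facts driving the argument: the map $g(x)=\|Px\|_2$ is \emph{convex} (a norm composed with a linear map) and \emph{$1$-Lipschitz} in the Euclidean norm, since $\|P\|_{\mathrm{op}}=1$; moreover $P_{ii}=1-1/T$, so changing a single coordinate $x_i$ by at most $M$ changes $\sqrt{T}\,S_T$ by at most $M$ and changes $S_T$ itself by at most $M/\sqrt{T}$.

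\textbf{Concentration around the mean.} Since each $X_t$ lies in an interval of length $M$ and $g$ is convex and $1$-Lipschitz, Talagrand's concentration inequality for convex Lipschitz functions of independent bounded variables (equivalently, the convex strengthening of the bounded-differences inequality, as in Boucheron--Lugosi--Massart) yields, up to an absolute constant,
\begin{align}
\mathbb P\!\left(\,\bigl|\,\sqrt{T}\,S_T-\mathbb E[\sqrt{T}\,S_T]\,\bigr|\ge s\,\right)\ \le\ 2\exp\!\left(-\frac{s^{2}}{2M^{2}}\right).
\end{align}
Setting $s=\sqrt{T}\,\varepsilon$ gives $\mathbb P(|S_T-\mathbb E S_T|\ge\varepsilon)\le 2\exp(-T\varepsilon^{2}/(2M^{2}))$, i.e.\ the claimed tail with $\sqrt{\operatorname{Var}(X_1)}$ replaced by $\mathbb E S_T$. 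I would stress that convexity is essential here: a naive application of McDiarmid's inequality to $\sqrt{T}\,S_T$ only exploits the worst-case coordinate swing $M$, so $\sum_i c_i^{2}=TM^{2}$ and one obtains merely the $T$-independent bound $2\exp(-2\varepsilon^{2}/M^{2})$.

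\textbf{Reconciling the mean.} It remains to replace $\mathbb E S_T$ by $\sqrt{\operatorname{Var}(X_1)}$. Using the exact identity $\mathbb E[S_T^{2}]=\tfrac{T-1}{T}\operatorname{Var}(X_1)$ and Jensen's inequality, $\mathbb E S_T\le\sqrt{\mathbb E[S_T^{2}]}\le\sqrt{\operatorname{Var}(X_1)}$. For a matching lower bound I would write $(\mathbb E S_T)^{2}=\mathbb E[S_T^{2}]-\operatorname{Var}(S_T)$, observe that the tail bound above certifies $\operatorname{Var}(S_T)\lesssim M^{2}/T$, and combine with $\sqrt{a+b}\le\sqrt a+\sqrt b$ to get $0\le\sqrt{\operatorname{Var}(X_1)}-\mathbb E S_T\le O(M/\sqrt{T})$. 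Since this bias has the same order $M/\sqrt{T}$ as the fluctuation scale, a triangle inequality absorbs it into $\varepsilon$ at the cost of the absolute constant in the exponent, consistent with the $c_1,c_2$-style conventions used throughout the paper; alternatively one invokes Maurer--Pontil's sharper self-bounding entropy argument, which delivers the bias-free statement with the exact constant $2M^{2}$ directly.

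\textbf{Expected main obstacle.} The only non-mechanical point is the mean reconciliation above: the convex-Lipschitz machinery controls $S_T$ only around \emph{its own} mean, and the off-the-shelf Efron--Stein bound $\operatorname{Var}(S_T)\le\operatorname{Var}(X_1)$ is too crude to certify an $O(1/\sqrt T)$ bias when $\operatorname{Var}(X_1)$ is small. Closing this gap --- either by feeding the concentration inequality back in to bound $\operatorname{Var}(S_T)$ as above, or by reproducing the self-bounding computation of Maurer--Pontil --- is where the real care lies; the remaining steps are routine.
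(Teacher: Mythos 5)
Your proposal is correct in substance, but it takes a genuinely different route from the paper, which offers no proof of its own here: it simply cites Theorem 10 of Maurer--Pontil (via Lemma 7 of Panaganti--Kalathil), whose argument is the entropy/self-bounding method applied to the variance functional. That method has one structural advantage you should be aware of: because the sample variance is a self-bounding function, the resulting concentration is naturally centered at $\sqrt{\mathbb{E}[S_T^2]}$ rather than at $\mathbb{E}[S_T]$, so the ``mean reconciliation'' step that you correctly flag as the delicate part of your argument simply does not arise there, and the exact constant $2M^2$ in the exponent comes out for free. Your route --- writing $\sqrt{T}\,S_T=\|PX\|_2$ for the centering projection $P$, invoking Talagrand's concentration for convex $1$-Lipschitz functions of bounded independent coordinates, and then absorbing the $O(M/\sqrt{T})$ bias between $\mathbb{E}[S_T]$ and $\sqrt{\operatorname{Var}(X_1)}$ into $\varepsilon$ --- is sound and self-contained, and your observation that plain McDiarmid only yields the $T$-independent bound $2\exp(-2\varepsilon^2/M^2)$ (since the per-coordinate oscillation of $S_T$ is genuinely of order $M/\sqrt{T}$) correctly identifies why some form of convexity or self-bounding is indispensable. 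Two caveats, both of which you essentially acknowledge: the two-sided Talagrand bound is stated around the median, so the lower tail costs an additional median-to-mean conversion of order $M/\sqrt{T}$; and the final inequality you obtain holds only up to absolute constants in the exponent (and trivially for $\varepsilon\lesssim M/\sqrt{T}$ after adjusting those constants), not with the literal constant $2M^2$. Since the lemma is only consumed inside the event $\mathcal{E}_{\chi^2}$ through unspecified absolute constants $c_1,c_2$, this loss is harmless for the paper, but if you want the statement exactly as written you should reproduce the Maurer--Pontil self-bounding computation rather than the Talagrand route.
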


\begin{proof}
    Refer to the proof of Lemma 7 of \cite{PMLR2022_SampleComplexityRORL_Panaganti}.
\end{proof}

\begin{techlem}[Lemma 7.5 in \cite{Book2019_RL_Agarwal}]
\label{lem:inverse_count_bound}
    For the sequences of $\{s_h^k, a_h^k\}_{h,k=1}^{H,K}$, it holds that
\begin{align}
  \sum_{k=1}^{K} \sum_{h=1}^{H} \frac{1}{\{N_h^k(s_h^k, a_h^k) \vee 1\}} \leq c'HSA \log(K).  
\end{align}
where $c' > 0$ is an absolute constant.
\end{techlem}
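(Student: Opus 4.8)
The plan is to prove Lemma~\ref{lem:inverse_count_bound}, the bound on the sum of inverse visitation counts $\sum_{k=1}^{K}\sum_{h=1}^{H}\frac{1}{N_h^k(s_h^k,a_h^k)\vee 1}$, by a standard layer-by-layer counting argument. First I would fix a step $h\in[H]$ and analyze the inner sum $\sum_{k=1}^{K}\frac{1}{N_h^k(s_h^k,a_h^k)\vee 1}$ in isolation; since the horizon only contributes a factor of $H$ after summing over $h$, it suffices to bound this single-layer quantity by $c'SA\log(K)$.

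For the single-layer bound, I would reorganize the sum by grouping episodes according to the state-action pair visited at step $h$. For a fixed pair $(s,a)$, let $k_1<k_2<\dots<k_m$ be the episodes in which $(s_h^{k},a_h^{k})=(s,a)$, where $m=N_h^{K+1}(s,a)$ is the total number of visits. By the definition of the counts in \eqref{eq:counts_on_D}, at episode $k_j$ we have $N_h^{k_j}(s,a)=j-1$, so the terms contributed by this pair are $\sum_{j=1}^{m}\frac{1}{(j-1)\vee 1}=1+\sum_{j=2}^{m}\frac{1}{j-1}\le 1+\log(m)+1\le c\log(K)$ using the harmonic sum bound and $m\le K$. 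Summing this over all $SA$ pairs $(s,a)$ gives the desired $c'SA\log(K)$ for layer $h$, and then multiplying by $H$ (summing over $h\in[H]$) completes the proof. The cleanest way to present this is to cite the stated reference (Lemma 7.5 in \cite{Book2019_RL_Agarwal}) for the single-layer statement and simply note that summing over the $H$ layers introduces the extra factor of $H$; alternatively one can carry out the elementary harmonic-sum argument above directly.

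There is essentially no serious obstacle here — the only mild subtlety is the bookkeeping around the $\vee 1$ truncation (which handles the first visit to each pair, where $N_h^{k}(s,a)=0$) and making sure the counts are indexed consistently with \eqref{eq:counts_on_D} (i.e.\ $N_h^k$ counts only episodes strictly before $k$, so the $j$-th visit sees count $j-1$). Since this lemma is quoted verbatim from \cite{Book2019_RL_Agarwal}, I would keep the proof to a one- or two-line pointer to that reference together with the remark that the factor $H$ arises from summing the per-layer bound over all $H$ steps.

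\begin{proof}
Fix a step $h\in[H]$. Group the episodes $k\in[K]$ according to the state–action pair $(s_h^k,a_h^k)$ visited at step $h$. For a fixed pair $(s,a)$, let $k_1<k_2<\dots<k_m$ enumerate the episodes with $(s_h^{k_j},a_h^{k_j})=(s,a)$, so that $m\le K$. By the definition of the counts in \eqref{eq:counts_on_D}, at episode $k_j$ we have $N_h^{k_j}(s,a)=j-1$, hence
\begin{align*}
\sum_{j=1}^{m}\frac{1}{N_h^{k_j}(s,a)\vee 1}
= 1+\sum_{j=2}^{m}\frac{1}{j-1}
\le 1+\bigl(1+\log m\bigr)
\le c\log(K),
\end{align*}
for an absolute constant $c>0$, where we used the harmonic-sum bound $\sum_{i=1}^{m-1}\frac{1}{i}\le 1+\log(m)$ and $m\le K$. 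Summing over all $SA$ state–action pairs yields
\begin{align*}
\sum_{k=1}^{K}\frac{1}{N_h^k(s_h^k,a_h^k)\vee 1}\le c\,SA\log(K).
\end{align*}
Finally, summing this estimate over the $H$ steps $h\in[H]$ gives
\begin{align*}
\sum_{k=1}^{K}\sum_{h=1}^{H}\frac{1}{N_h^k(s_h^k,a_h^k)\vee 1}\le c'\,HSA\log(K),
\end{align*}
with $c'=c$. This matches the statement in Lemma 7.5 of \cite{Book2019_RL_Agarwal}, completing the proof.
\end{proof}
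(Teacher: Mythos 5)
Your proof is correct and is exactly the standard pigeonhole/harmonic-sum argument behind Lemma 7.5 of \cite{Book2019_RL_Agarwal}; the paper itself simply defers to that reference, so you are supplying the same proof it implicitly relies on. The only cosmetic point is that the bound $1+\sum_{i=1}^{m-1}\tfrac{1}{i}\le c\log(K)$ implicitly assumes $K\ge 2$ (for $K=1$ the right-hand side vanishes), a degeneracy already present in the lemma statement and conventionally absorbed into the constant or a $\log(eK)$ normalization.
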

\begin{proof}
     Refer to the proof of Lemma 7.5 in \cite{Book2019_RL_Agarwal}. \qedhere
\end{proof}

\begin{techlem}[Bound on Binomial random variable]
\label{lem:binomial_rv_bound}
    Suppose $X \sim \text{Binomial}(n, p)$, where $n \geq 1$ and $p \in [0, 1]$. For any $\delta \in (0,1)$, we have
\begin{align}
X &\geq \frac{np}{8 \log\left( \frac{1}{\delta} \right)} \hspace{9mm} \text{if } np \geq 8 \log\left( \frac{1}{\delta} \right), \label{eq:binomial_lower_bound}\\
X &\leq 
\begin{cases}
e^2 np & \text{if } np \geq \log\left( \frac{1}{\delta} \right), \\
2e^2 \log\left( \frac{1}{\delta} \right) & \text{if } np \leq 2 \log\left( \frac{1}{\delta} \right),
\end{cases} \label{eq:binomial_upper_bound}
\end{align}
hold with probability at least $1 - 4\delta$.
\end{techlem}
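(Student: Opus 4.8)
The plan is to view $X=\sum_{i=1}^{n}Y_i$ as a sum of i.i.d.\ Bernoulli$(p)$ variables with mean $\mu:=np$, and to obtain each of the three inequalities from multiplicative Chernoff bounds, closing with a union bound. The only ingredient is the standard exponential moment estimate $\mathbb{E}[e^{\lambda X}]\le\exp\big(\mu(e^{\lambda}-1)\big)$ for all $\lambda\in\mathbb{R}$, which follows from $1+x\le e^x$ applied to each factor $1+p(e^{\lambda}-1)$. Optimizing $\mathbb{P}(X\ge t)\le\mathbb{E}[e^{\lambda X}]e^{-\lambda t}$ over $\lambda>0$, and the analogous bound for $\mathbb{P}(X\le t)$ over $\lambda<0$, yields the two-sided Poisson-type estimate
\[ \mathbb{P}(X\ge t)\le e^{-\mu}\Big(\tfrac{e\mu}{t}\Big)^{t}\ \ (t>\mu),\qquad \mathbb{P}(X\le t)\le e^{-\mu}\Big(\tfrac{e\mu}{t}\Big)^{t}\ \ (0<t<\mu), \]
which is the workhorse for all three claims.

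For the upper bounds I would instantiate the first estimate. When $np\ge\log(1/\delta)$, taking $t=e^{2}np$ gives $(e\mu/t)^{t}=e^{-t}=e^{-e^{2}np}\le\delta^{e^{2}}\le\delta$, which proves $X\le e^{2}np$ with probability at least $1-\delta$. When $np\le 2\log(1/\delta)$, taking $t=2e^{2}\log(1/\delta)$ (so $t\ge e^{2}np>\mu$ and $e\mu/t\le 1/e$) gives $\mathbb{P}(X\ge t)\le e^{-t}=\delta^{2e^{2}}\le\delta$, which proves the second upper bound with probability at least $1-\delta$.

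For the lower bound I would use the second estimate with $t=np/\big(8\log(1/\delta)\big)$ under the hypothesis $np\ge 8\log(1/\delta)$, so that $t\ge 1$ and $t<\mu$. Writing $a:=8\log(1/\delta)$, the bound reads $\exp\!\big(-\mu\,(1-(1+\log a)/a)\big)$, and since $\mu\ge a$ it is enough to verify the elementary numeric inequality $a-1-\log a\ge a/8$ in the relevant range of $a$ (this is exactly where the constant $8$ is calibrated), which drives the probability below $e^{-\log(1/\delta)}=\delta$. A union bound over the at most three failure events then gives the stated $1-4\delta$, with slack to spare. I expect the only genuinely delicate part to be the constant-chasing in the lower-tail step and checking that each parameter choice keeps $t$ on the correct side of $\mu$ so that the Chernoff optimization is valid; everything else is a routine application of Chernoff's method.
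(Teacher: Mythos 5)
The paper does not actually prove this lemma: its ``proof'' is a one-line pointer to Lemma~8 of the cited generative-model paper of Shi et al. Your self-contained Chernoff derivation is correct and is in the same spirit as that reference, so you have effectively supplied the argument the paper outsources. All three instantiations check out: the moment bound $\mathbb{E}[e^{\lambda X}]\le \exp(\mu(e^{\lambda}-1))$ and the resulting two-sided estimate $e^{-\mu}(e\mu/t)^{t}$ are standard; with $t=e^{2}\mu$ one gets $e^{-\mu}e^{-t}\le \delta^{e^{2}}\le\delta$; with $t=2e^{2}\log(1/\delta)\ge e^{2}\mu$ one gets $e^{-t}=\delta^{2e^{2}}\le\delta$; and for the lower tail, writing $a=8\log(1/\delta)$ and $t=\mu/a$, the exponent is $-\mu(a-1-\log a)/a\le -(a-1-\log a)$ (using $\mu\ge a$ and $a-1-\log a\ge 0$), so it suffices that $a-1-\log a\ge a/8$. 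One point you should make explicit rather than leave as ``the relevant range of $a$'': that numeric inequality fails for $a$ near $1$ (e.g.\ $a=1.5$ gives $7a/8-1-\log a<0$), and the validity condition $t<\mu$ for the lower-tail optimization likewise requires $a>1$, i.e.\ $\delta<e^{-1/8}$. Both issues disappear once you observe that for $\delta\ge 1/4$ the conclusion ``probability at least $1-4\delta$'' is vacuous, so you may assume $\delta<1/4$, whence $a\ge 8\log 4>11$ and both the side condition and the constant check hold with room to spare; the union bound over three events then gives failure probability $3\delta\le 4\delta$. With that one sentence added, your proof is complete.
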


\begin{proof}
    Refer to \citep[Lemma 8]{NeuRIPS2023_CuriousPriceDRRLGenerativeMdel_Shi} for details.
\end{proof}

\begin{techlem}[\cite{Book2020_Bandit_Lattimore}]
    \label{lem:Bretagnolle_Huber_inequality}
    Let \( P \) and \( Q \) be probability measures on the same measurable space \( (\Omega, \mathcal{F}) \), and let \( A \in \mathcal{F} \) be an arbitrary event. Then
\begin{align*}
    P(A) + Q(A^c) \geq \frac{1}{2} \exp(-\mathrm{KL}(P, Q)).
\end{align*}
\end{techlem}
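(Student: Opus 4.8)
The plan is to route through the total variation distance and then combine a Cauchy--Schwarz step with a Jensen step; this is the textbook argument (cf.\ \cite{Book2020_Bandit_Lattimore}). First I would dispose of the degenerate case: if $\mathrm{KL}(P,Q) = \infty$ the right-hand side is $0$ and the inequality is trivial, so assume $\mathrm{KL}(P,Q) < \infty$, which forces $P \ll Q$. Fix a $\sigma$-finite measure $\mu$ dominating both $P$ and $Q$ (e.g.\ $\mu = P + Q$) and set $p = dP/d\mu$, $q = dQ/d\mu$. Recalling $\mathrm{TV}(P,Q) := \sup_{B\in\mathcal{F}}|P(B)-Q(B)| = \tfrac12\int|p-q|\,d\mu$ and the identity $|p-q| = p+q-2\min(p,q)$, one gets $1 - \mathrm{TV}(P,Q) = \int\min(p,q)\,d\mu$. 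For the arbitrary event $A$, $P(A) + Q(A^c) = 1 - \bigl(Q(A) - P(A)\bigr) \geq 1 - \mathrm{TV}(P,Q) = \int\min(p,q)\,d\mu$, so it suffices to lower bound $\int\min(p,q)\,d\mu$.

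Next I would lower bound $\int\min(p,q)\,d\mu$ by the squared Hellinger affinity. Writing $\sqrt{pq} = \sqrt{\min(p,q)}\,\sqrt{\max(p,q)}$ and applying Cauchy--Schwarz gives $\int\sqrt{pq}\,d\mu \leq \bigl(\int\min(p,q)\,d\mu\bigr)^{1/2}\bigl(\int\max(p,q)\,d\mu\bigr)^{1/2}$. Since $\int\max(p,q)\,d\mu = \int\bigl(p+q-\min(p,q)\bigr)\,d\mu = 2 - \int\min(p,q)\,d\mu \leq 2$, squaring and rearranging yields $\int\min(p,q)\,d\mu \geq \tfrac12\bigl(\int\sqrt{pq}\,d\mu\bigr)^{2}$.

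Finally I would relate the affinity to the KL divergence via Jensen's inequality. On $\{p>0\}$, which carries all the $P$-mass, $\int\sqrt{pq}\,d\mu = \mathbb{E}_P\bigl[\sqrt{q/p}\,\bigr]$, and concavity of $\log$ gives $\log\mathbb{E}_P\bigl[\sqrt{q/p}\,\bigr] \geq \mathbb{E}_P\bigl[\log\sqrt{q/p}\,\bigr] = -\tfrac12\,\mathbb{E}_P\bigl[\log(p/q)\bigr] = -\tfrac12\,\mathrm{KL}(P,Q)$, i.e.\ $\int\sqrt{pq}\,d\mu \geq \exp\bigl(-\tfrac12\mathrm{KL}(P,Q)\bigr)$. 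Chaining the three displays, $P(A) + Q(A^c) \geq \int\min(p,q)\,d\mu \geq \tfrac12\bigl(\int\sqrt{pq}\,d\mu\bigr)^{2} \geq \tfrac12\exp\bigl(-\mathrm{KL}(P,Q)\bigr)$, as claimed. There is no substantive obstacle here; the only points needing mild care are the reduction to densities under a common dominating measure and the trivial handling of $\mathrm{KL}(P,Q)=\infty$, both of which are standard.
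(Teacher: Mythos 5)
Your proof is correct and follows exactly the standard argument from the cited reference (this lemma is stated in the paper by citation to \cite{Book2020_Bandit_Lattimore} with no proof of its own, and your chain $P(A)+Q(A^c)\ \geq\ \int\min(p,q)\,d\mu\ \geq\ \tfrac12\bigl(\int\sqrt{pq}\,d\mu\bigr)^{2}\ \geq\ \tfrac12\exp\bigl(-\mathrm{KL}(P,Q)\bigr)$ is precisely how that source proves it). Your handling of the degenerate case $\mathrm{KL}(P,Q)=\infty$ and the reduction to densities under a common dominating measure are both sound, so there is nothing to correct.
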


\begin{techlem}[\cite{Book2020_Bandit_Lattimore}]
\label{lem:normal_KL}
The KL-divergence between two Gaussian distributions with means \( \mu_1, \mu_2 \) and common variance \( \sigma^2 \) is
\begin{align*}
    \mathrm{KL}(\mathcal{N}(\mu_1, \sigma^2), \mathcal{N}(\mu_2, \sigma^2)) = \frac{(\mu_1 - \mu_2)^2}{2\sigma^2}.
\end{align*}
\end{techlem}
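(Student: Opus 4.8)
The plan is to compute the Kullback--Leibler divergence directly from its integral definition, exploiting the fact that the two densities share a common variance $\sigma^2$, which makes the quadratic terms cancel and leaves only a linear function of the variable. First I would write out the two densities explicitly, namely $p(x)=\tfrac{1}{\sqrt{2\pi\sigma^2}}\exp\!\big(-\tfrac{(x-\mu_1)^2}{2\sigma^2}\big)$ and $q(x)=\tfrac{1}{\sqrt{2\pi\sigma^2}}\exp\!\big(-\tfrac{(x-\mu_2)^2}{2\sigma^2}\big)$, and note that the normalizing constants are identical, so they drop out of the log-ratio.

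Next I would form the pointwise log-likelihood ratio. Since the normalizers cancel, one obtains
\begin{align*}
\log\frac{p(x)}{q(x)} = \frac{(x-\mu_2)^2-(x-\mu_1)^2}{2\sigma^2}.
\end{align*}
Expanding the numerator gives $(x-\mu_2)^2-(x-\mu_1)^2 = 2(\mu_1-\mu_2)x+(\mu_2^2-\mu_1^2)$, so the log-ratio is an affine function of $x$. This is the step that requires the common-variance hypothesis: had the variances differed, a residual quadratic term in $x$ would survive and the answer would carry an extra $\log$-ratio and variance-ratio contribution.

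I would then take the expectation of this log-ratio under $P=\mathcal{N}(\mu_1,\sigma^2)$, which is exactly $\mathrm{KL}(P,Q)=\mathbb{E}_{X\sim P}[\log(p(X)/q(X))]$. Because the integrand is affine in $x$, only the first moment $\mathbb{E}_{X\sim P}[X]=\mu_1$ is needed (no second-moment computation is required, precisely because the $x^2$ terms cancelled). Substituting yields
\begin{align*}
\mathrm{KL}(P,Q)=\frac{2(\mu_1-\mu_2)\mu_1+(\mu_2^2-\mu_1^2)}{2\sigma^2}=\frac{\mu_1^2-2\mu_1\mu_2+\mu_2^2}{2\sigma^2}=\frac{(\mu_1-\mu_2)^2}{2\sigma^2},
\end{align*}
which is the claimed identity.

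There is no genuine obstacle here: the result is a routine consequence of the definition, and the only point demanding care is ensuring the expectation is taken under the first argument $P$ rather than $Q$ (the KL divergence is asymmetric), together with the elementary algebraic simplification in the last display. The essential structural fact making the computation clean is the cancellation of the quadratic term enabled by the shared variance, so I would flag that as the one place where the hypothesis is used.
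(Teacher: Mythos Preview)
Your proof is correct and is the standard direct computation. The paper does not actually supply its own proof of this lemma; it simply states the result and cites \cite{Book2020_Bandit_Lattimore}, so there is nothing to compare against beyond noting that your argument is the usual one.
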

  
\begin{techlem}[KL-Divergence for $\mathbb{P}^0_1$ and $\mathbb{P}^0_i$]
\label{lem:KL_P1_Pi}   
For the distributions $\mathbb{P}^0_1$ and $\mathbb{P}^0_i$, we get the following
    \begin{align}
        \text{KL}(\mathbb{P}^0_1,\mathbb{P}^0_i) = \sum\limits_{j=1}^A
\mathbb{E}^0_1[N_j(K)]\text{KL}\Big(P_{r_{\mathcal{M}_1}(s_2,a_j)}, P_{r_{\mathcal{M}_i}(s_2,a_j)}\Big),
\end{align}
where $N_j(K)$ is defined in \eqref{eq:event_N}.
\end{techlem}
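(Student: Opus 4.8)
The plan is to exploit the chain rule for KL divergence over the sequential structure of the $K$ episodes, combined with the observation that $\mathbb{P}^0_1$ and $\mathbb{P}^0_i$ induce \emph{identical} distributions on everything except the reward observed at state $s_2$. First I would set up notation: let $W^k = (S^k_h, A^k_h, R^k_h)_{h=1}^H$ denote the trajectory in episode $k$, and let $W^{1:k}$ denote the concatenation of the first $k$ trajectories. Since the learning algorithm $\xi$ is deterministic (as noted in the excerpt, following \cite{Book2020_Bandit_Lattimore}), the policy $\pi^k$ is a deterministic function of $W^{1:k-1}$, so the only randomness in episode $k$ comes from the environment: the transition from $s_1$ at step $h=1$ (governed by $P^\star_1 = (p, 1-p)$, which is \emph{the same} under $\mathcal{M}_1$ and $\mathcal{M}_i$), the deterministic transitions at steps $h = 2,\dots,H$, and the reward draws. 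The reward at $s_3$ is the deterministic constant $1$ under both models; the reward at $s_2$ in episode $k$ is drawn from $\mathcal{N}(\mu_1(A^k_2), 1)$ under $\mathcal{M}_1$ and from $\mathcal{N}(\mu_i(A^k_2), 1)$ under $\mathcal{M}_i$.

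Next I would apply the chain rule for relative entropy:
\begin{align*}
\text{KL}(\mathbb{P}^0_1, \mathbb{P}^0_i) = \sum_{k=1}^K \mathbb{E}^0_1\!\left[ \text{KL}\big(\mathbb{P}^0_1(W^k \mid W^{1:k-1}),\, \mathbb{P}^0_i(W^k \mid W^{1:k-1})\big) \right].
\end{align*}
Within each episode, further decompose $W^k$ into (the step-1 transition) $\times$ (the reward at the visited state among $\{s_2,s_3\}$) and use the fact that the step-1 transition kernel and the $s_3$-reward are identical under the two models, so they contribute zero to the conditional KL. The only surviving term is the reward at $s_2$, which appears precisely on the event $\{S^k_2 = s_2\}$, and conditioned on that event and on $A^k_2 = a_j$, the per-episode conditional KL equals $\text{KL}\big(\mathcal{N}(\mu_1(a_j),1), \mathcal{N}(\mu_i(a_j),1)\big) = \text{KL}\big(P_{r_{\mathcal{M}_1}(s_2,a_j)}, P_{r_{\mathcal{M}_i}(s_2,a_j)}\big)$. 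Summing over $k$ and recognizing $\sum_{k=1}^K \mathbb{E}^0_1\big[\mathbb{I}\{S^k_2 = s_2, A^k_2 = a_j\}\big] = \mathbb{E}^0_1[N_j(K)]$ by the definition in \eqref{eq:event_N}, and then summing over $j \in [A]$, yields exactly the claimed identity.

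The only delicate point—and the main thing to handle carefully rather than the "hard part" in any deep sense—is bookkeeping the conditioning correctly: one must argue that on the event $\{S^k_2 = s_3\}$ the conditional KL contribution vanishes (since both the transition probabilities leading there and the degenerate reward distribution at $s_3$ coincide), and that the action $A^k_2$ is $\mathbb{P}^0_1$-measurable given $W^{1:k-1}$ and $S^k_2$ (true because $\xi$ is deterministic and the policy at step $2$ depends only on the history), so that the tower property lets us pull the indicator outside and replace it by its expectation under $\mathbb{P}^0_1$. No heavy machinery is needed—this is the standard "divergence decomposition" lemma (Lemma 15.1 in \cite{Book2020_Bandit_Lattimore}) adapted to the episodic RMDP hard instance, and the structure of the construction (shared nominal kernel, rewards differing only at $s_2$) makes every extraneous term collapse.
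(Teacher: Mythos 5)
Your proposal is correct and follows essentially the same route as the paper: both are the standard divergence-decomposition argument, with the paper expanding the Radon--Nikodym derivative of the trajectory law explicitly and you invoking the chain rule for KL (which is the packaged form of that same expansion), then using the tower property to collapse everything except the $s_2$-reward term into $\mathbb{E}^0_1[N_j(K)]$ times the per-action reward KL. The only point worth making explicit in a full write-up, which the paper spells out and your plan only implicitly handles by treating the episode as containing a single reward observation, is that the rewards at steps $h\geq 3$ in state $s_2$ are deterministic repetitions of $r^k_2$ rather than fresh draws, so they contribute nothing additional to the conditional KL.
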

\begin{proof}
    We refer to the proof lines of Lemma E.3 in \cite{ICML2025_OnlineDRMDPSampleComplexity_He}, where we have $H$ horizons for each episode $k$. First, by the definition of KL-divergence, we have that
    \begin{align}
    \label{eq:KL_formualtion}
        \text{KL}(\mathbb{P}^0_1,\mathbb{P}^0_i) = \mathbb{E}^0_1\left[\log\bigg(\frac{d\mathbb{P}^0_1}{d\mathbb{P}^0_2}\bigg)\right].
    \end{align}
    We will now evaluate the Radon-Nikodym derivative of $\mathbb{P}^0_1$ as follows
    \begin{align}
      \label{eq:prob_density_M1}  &p^0_1\big(s^1_1,a^1_1,r^1_1,\dots,s^1_H,a^1_H,r^1_H,\dots, s^K_1,a^K_1,r^K_1,\dots,s^K_H,a^K_H,r^K_H\big) \nonumber\\
        &= \prod_{k=1}^K\prod_{h=1}^H \Pr\Big(s^k_h|s_1\Big)\pi^k_h\Big(a^k_h\mid s^1_1,a^1_1,r^1_1,\dots,s^{k-1}_{h-1},a^{k-1}_{h-1},r^{k-1}_{h-1},s^k_h\Big)\Pr\Big(r_{\mathcal{M}_1}(s^k_h,a^k_h)=r^k_h\Big).
    \end{align}
    The density of any $\mathbb{P}^0_i$ is exactly identical as in \eqref{eq:prob_density_M1}except that $r_{\mathcal{M}_1}$ is replaced by $r_{\mathcal{M}_2}$. This gives rise to
    \begin{align}
        \label{eq:log_P1_Pi_KL}
        \log\bigg(\frac{d\mathbb{P}^0_1}{d\mathbb{P}^0_2}\bigg) &= \sum\limits_{k=1}^K\sum_{h=1}^H \log\bigg(\frac{r_{\mathcal{M}_1}(s^k_h,a^k_h)=r^k_h}{r_{\mathcal{M}_i}(s^k_h,a^k_h)=r^k_h}\bigg).
    \end{align}
Note that in both $\mathcal{M}_1$ and $\mathcal{M}_i$, the agent does not observe any reward at step $h = 1$. At step $h = 2$, the agent transitions to either state $s_2$ or $s_3$. If it reaches state $s_3$, it receives a fixed reward of $r^k_2 = 1$. After receiving this reward at step $h = 2$, the agent remains in the same state for all subsequent steps $h = 3, \dots, H$, continuing to receive the same reward. Specifically, the reward at each step $h \geq 3$ is $r^k_h = 1$ if the agent is at state $s_3$, and $r^k_h = r^k_2$ if it is at state $s_2$. Using this structure, the expression in \eqref{eq:log_P1_Pi_KL} can be rewritten accordingly. \begin{align}
\label{eq:log_P1_Pi_KL_final}     \log\bigg(\frac{d\mathbb{P}^0_1}{d\mathbb{P}^0_2}\bigg) &= \sum\limits_{k=1}^K\mathbb{I}\{s^k_2=s_2\} \log\bigg(\frac{r_{\mathcal{M}_1}(s^k_2,a^k_2)=r^k_2}{r_{\mathcal{M}_i}(s^k_2,a^k_2)=r^k_2}\bigg).
 \end{align} 
 Taking expectation on both sides of \eqref{eq:log_P1_Pi_KL_final}, we get
 \begin{align}
     \label{eq:Exp_KL_P1_Pi}
     \mathbb{E}^0_1\left[\log\Bigg(\frac{d\mathbb{P}^0_1}{d\mathbb{P}^0_2}\bigg\{\bigg\{S^k_h,A^k_h,R^k_h\bigg\}_{h=1}^H\bigg\}_{k=1}^K\Bigg)\right] &= \sum\limits_{k=1}^K\mathbb{E}^0_1\left[\mathbb{I}\{S^k_2=s_2\} \log\bigg(\frac{\Pr_{r_{\mathcal{M}_1}(s_2,A^k_2)}(r^k_2)}{\Pr_{r_{\mathcal{M}_i}(s_2,A^k_2)}(r^k_2)}\bigg)\right]\nonumber\\
     &= \sum\limits_{k=1}^K\mathbb{E}^0_1\left[\mathbb{E}^0_1\left[\mathbb{I}\{S^k_2=s_2\} \log\bigg(\frac{\Pr_{r_{\mathcal{M}_1}(s_2,A^k_2)}(r^k_2)}{\Pr_{r_{\mathcal{M}_i}(s_2,A^k_2)}(r^k_2)}\bigg)\Bigg| S^k_2,A^k_2\right]\right]\nonumber\\
     &\overset{(i)}{=} \sum\limits_{k=1}^K\mathbb{E}^0_1\left[\mathbb{I}\{S^k_2=s_2\} \mathbb{E}^0_1\left[\log\bigg(\frac{\Pr_{r_{\mathcal{M}_1}(s_2,A^k_2)}(r^k_2)}{\Pr_{r_{\mathcal{M}_i}(s_2,A^k_2)}(r^k_2)}\bigg)\Bigg| S^k_2,A^k_2\right]\right]\nonumber\\
     &\overset{(ii)}{=} \sum\limits_{k=1}^K\mathbb{E}^0_1\left[\mathbb{I}\{S^k_2=s_2\}\text{KL}\bigg( P_{r_{\mathcal{M}_1}(s_2,A^k_2)},P_{r_{\mathcal{M}_i}(s_2,A^k_2)}\bigg)\right]\nonumber\\
     &= \sum\limits_{j=1}^A\mathbb{E}^0_1\left[\sum\limits_{k=1}^K\mathbb{I}\{S^k_2=s_2, A^k_2=a_j\}\text{KL}\bigg( P_{r_{\mathcal{M}_1}(s_2,a_j)},P_{r_{\mathcal{M}_i}(s_2,a_j)}\bigg)\right]\nonumber\\
      &= \sum\limits_{j=1}^A\mathbb{E}^0_1\left[N_j(K)\right]\text{KL}\bigg( P_{r_{\mathcal{M}_1}(s_2,a_j)},P_{r_{\mathcal{M}_i}(s_2,a_j)}\bigg),
 \end{align}
 where (i) is because $\mathbb{I}\{S^k_2=s_2\}$ is measurable with respect to the $\sigma$-field generated by $S^k_h$ and $A^k_h$, (ii) follows
from the definition of KL divergence, (iii) follows from the definition of $N_j(K)$ in \eqref{eq:event_N}. Combining \eqref{eq:Exp_KL_P1_Pi} with \eqref{eq:KL_formualtion}, we
conclude the proof.
\end{proof}

\begin{techlem}
\label{lem:KL_Ber_ineq}
For any $p,q\in [\frac{1}{2},1)$ and $p>q$, it holds that 
\begin{align*}
\text{KL}\bigg(Ber(p)\parallel Ber(q)\bigg) \leq \text{KL}\bigg(Ber(q)\parallel Ber(p)\bigg) \leq \frac{(p-q)^2}{p(1-p)}.
\end{align*}
Moreover, for any $0\leq x <y < q$, it holds
\begin{align*}
\text{KL}\bigg(Ber(x)\parallel Ber(q)\bigg) > \text{KL}\bigg(Ber(y)\parallel Ber(q)\bigg),
\end{align*}
where $Ber(p)$ denotes the Bernoulli distribution with mean $p$ and Kullback-Leibler (KL) divergence for $Ber(p)$ and $Ber(q)$ is defined as $\text{KL}\bigg(Ber(p)\parallel Ber(q)\bigg) := p\log\Big( \frac{p}{q}\Big)p + (1-p)\log\Big(\frac{1-p}{1-q}\Big)$
\end{techlem}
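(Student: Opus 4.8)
Write $d(a\|b) := a\log\frac{a}{b} + (1-a)\log\frac{1-a}{1-b}$ for the Bernoulli KL divergence. The plan is to establish the three assertions separately, each by an elementary computation.

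\emph{The right-hand bound $d(q\|p)\le \frac{(p-q)^2}{p(1-p)}$.} I would apply $\log(1+x)\le x$ to each of the two summands: $q\log\frac{q}{p} = q\log\!\big(1+\frac{q-p}{p}\big)\le \frac{q(q-p)}{p}$ and $(1-q)\log\frac{1-q}{1-p} = (1-q)\log\!\big(1+\frac{p-q}{1-p}\big)\le \frac{(1-q)(p-q)}{1-p}$; adding these and factoring out $(p-q)$ collapses the sum to exactly $\frac{(p-q)^2}{p(1-p)}$. Note this step needs neither $p>q$ nor $p,q\ge\tfrac12$, and combined with the next step it yields the full sandwich.

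\emph{The left-hand bound $d(p\|q)\le d(q\|p)$.} Here I would first get a closed form for the difference. Using the identity $d(a\|b) = -H(a) - a\,\mathrm{logit}(b) - \log(1-b)$ (with $H$ the binary entropy and $\mathrm{logit}(b)=\log\frac{b}{1-b}$) together with $H(p)=-p\,\mathrm{logit}(p)-\log(1-p)$, a short algebraic manipulation gives
\[
d(q\|p) - d(p\|q) = (2-p-q)\log\tfrac{1-q}{1-p} - (p+q)\log\tfrac{p}{q}.
\]
Since $q<p$, I would write $\log\frac{p}{q}=\int_q^p\frac{dt}{t}$ and $\log\frac{1-q}{1-p}=\int_q^p\frac{dt}{1-t}$, so that the difference becomes $\int_q^p \frac{2t-(p+q)}{t(1-t)}\,dt$. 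The final move is to pair $t=m-s$ with $t=m+s$ about the midpoint $m=\frac{p+q}{2}$: the numerator $2t-(p+q)=\pm 2s$ is odd about $m$, while the weight $t\mapsto\frac{1}{t(1-t)}$ is non-decreasing on $[\tfrac12,1)\supseteq[q,p]$ (because $t(1-t)$ is non-increasing there), so the positive contribution at $m+s$ outweighs the negative one at $m-s$, forcing the integral to be nonnegative. I expect this to be the main obstacle: the crude termwise bounds point the wrong way since $p+q\ge 1$, and the correct sign is recovered only by exploiting $p,q\ge\tfrac12$ through the monotonicity of the weight $\frac{1}{t(1-t)}$.

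\emph{Monotonicity in the first argument.} I would fix $q$, set $\phi(t):=d(t\|q)$ on $[0,q)$ (continuous at $0$ under the convention $0\log 0=0$), and differentiate: $\phi'(t)=\log\frac{t}{q}-\log\frac{1-t}{1-q}=\log\frac{t(1-q)}{q(1-t)}$. Since $t(1-q)<q(1-t)$ exactly when $t<q$, we get $\phi'(t)<0$ on $(0,q)$, so $\phi$ is strictly decreasing on $[0,q)$, and hence $d(x\|q)>d(y\|q)$ whenever $0\le x<y<q$. Apart from the sign of the integral in the second part, the computations here are routine one-liners.
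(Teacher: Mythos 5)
Your proposal is correct, and all three computations check out: the $\log(1+x)\le x$ bound telescopes exactly to $\frac{(p-q)^2}{p(1-p)}$; the closed form $d(q\|p)-d(p\|q)=(2-p-q)\log\frac{1-q}{1-p}-(p+q)\log\frac{p}{q}$ is right, the integral representation $\int_q^p\frac{2t-(p+q)}{t(1-t)}\,dt$ follows, and the midpoint-pairing argument correctly isolates where the hypothesis $p,q\ge\tfrac12$ is used (monotonicity of $t\mapsto\frac{1}{t(1-t)}$ on $[q,p]\subseteq[\tfrac12,1)$); the derivative computation $\phi'(t)=\log\frac{t(1-q)}{q(1-t)}<0$ for $t<q$ is also correct. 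Note, however, that the paper does not prove this lemma at all — it defers entirely to Lemma 16 of the cited ICML 2025 reference — so there is no in-paper argument to compare against. Your proof is therefore a genuine addition: it is self-contained and elementary, and its only delicate point (the sign of $d(q\|p)-d(p\|q)$, where naive termwise bounds fail because $p+q\ge 1$) is handled cleanly by the symmetrization about $m=\frac{p+q}{2}$. One cosmetic remark: the lemma statement's displayed definition of the KL divergence contains a typo (a stray factor $p$ in $p\log(\frac{p}{q})p$); you correctly work with the standard definition $p\log\frac{p}{q}+(1-p)\log\frac{1-p}{1-q}$, which is clearly what is intended.
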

\begin{proof}
    The detailed proof is given in \cite[Lemma 16]{ICML2025_OnlineDRMDPSampleComplexity_He}.
\end{proof}

\begin{techlem}[Revised Lemma 20 in \cite{JMLR2024_DROfflineRLNearOptimalSampelComplexity_Shi}]
\label{lem:tilde_p_bound_KL}
    When $\beta$ satisfies \eqref{eq:beta_KL} and the uncertainty level $\sigma$ satisfies \eqref{eq:sigma_KL_range}, the worst-case transition kernel obeys
    \begin{align*}
        \tilde{p}\geq \frac{1}{\beta}.
    \end{align*}
\end{techlem}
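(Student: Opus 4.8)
The plan is to reduce the claim to a single scalar comparison between a Bernoulli KL divergence and $\sigma$, and then verify that comparison using $\log(1/\alpha)=2\beta$ together with the window on $\sigma$ in \eqref{eq:sigma_KL_range}. First I would make the characterization of $\tilde p$ explicit: by the form of the worst-case kernel in \eqref{eq:perturbed_P} and the ordering $V^{\pi,\sigma}_h(s_2)<V^{\pi,\sigma}_h(s_3)$, $\tilde p$ is the endpoint of the feasible segment $\{p':\ D_{\mathrm{KL}}\big(\mathrm{Ber}(p')\,\|\,\mathrm{Ber}(1-\alpha)\big)\le\sigma\}$ in the direction the adversary perturbs the transition out of $s_1$. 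Since $p'\mapsto D_{\mathrm{KL}}\big(\mathrm{Ber}(p')\,\|\,\mathrm{Ber}(1-\alpha)\big)$ is strictly monotone on the relevant side of $p'=1-\alpha$ — this is the monotonicity statement of \Cref{lem:KL_Ber_ineq} — that endpoint is the unique root of $D_{\mathrm{KL}}\big(\mathrm{Ber}(\tilde p)\,\|\,\mathrm{Ber}(1-\alpha)\big)=\sigma$ on that side, and hence, again by monotonicity, $\tilde p\ge 1/\beta$ is \emph{equivalent} to the scalar inequality $D_{\mathrm{KL}}\big(\mathrm{Ber}(1/\beta)\,\|\,\mathrm{Ber}(1-\alpha)\big)\ge\sigma$.

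Second I would evaluate the left-hand side in closed form. Using the Bernoulli KL formula recalled in \Cref{lem:KL_Ber_ineq},
\[
  D_{\mathrm{KL}}\big(\mathrm{Ber}(1/\beta)\,\|\,\mathrm{Ber}(1-\alpha)\big)
  =\tfrac1\beta\log\tfrac{1}{\beta(1-\alpha)}
  +\Big(1-\tfrac1\beta\Big)\log\tfrac{1-1/\beta}{\alpha},
\]
and I would substitute $\log(1/\alpha)=2\beta$, which holds by the definition of $\beta$ in \eqref{eq:beta_KL}. The dominant contribution is $\big(1-\tfrac1\beta\big)\log(1/\alpha)=2\beta-2$, while the remaining terms $\big(1-\tfrac1\beta\big)\log(1-1/\beta)$, $\tfrac1\beta\log(1/\beta)$ and $\tfrac1\beta\log\tfrac{1}{1-\alpha}$ each have absolute value $O(1)$; using $\beta\ge4$ (from \eqref{eq:beta_KL}) and $\alpha\le1/2$ (from \eqref{eq:P_value_KL}) I would bound their net effect by a small absolute constant, arriving at an estimate of the shape $D_{\mathrm{KL}}\big(\mathrm{Ber}(1/\beta)\,\|\,\mathrm{Ber}(1-\alpha)\big)\ge 2\beta-4$. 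Comparing with $\sigma\le\big(1-\tfrac2\beta\big)\log(1/\alpha)=2\beta-4$ from \eqref{eq:sigma_KL_range} then gives $D_{\mathrm{KL}}\big(\mathrm{Ber}(1/\beta)\,\|\,\mathrm{Ber}(1-\alpha)\big)\ge\sigma$, and the first step converts this into $\tilde p\ge1/\beta$.

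The step I expect to be the main obstacle is the bookkeeping in the second paragraph: I must control the signs and magnitudes of the three lower-order logarithmic terms uniformly over all admissible $\alpha$ and all $\beta\ge4$, so that the slack built into the $\sigma$-window genuinely absorbs them rather than only doing so asymptotically as $\alpha\to0$. A secondary subtlety is fixing, once and for all at the outset, which endpoint of the KL ball $\tilde p$ is — and hence whether the upper or the lower bound of the window in \eqref{eq:sigma_KL_range} is the binding one — and then keeping that comparison direction consistent throughout. The overall structure follows the proof of Lemma 20 in \cite{JMLR2024_DROfflineRLNearOptimalSampelComplexity_Shi}, the only change being that our choices of $\beta$ and of the $\sigma$-window are adapted to the finite-horizon scaling, so the absolute constants must be re-derived rather than quoted.
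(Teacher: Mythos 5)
Your proposal is correct and takes essentially the same route as the paper's proof: both reduce the claim, via the monotonicity of $p'\mapsto \mathrm{KL}\big(\mathrm{Ber}(p')\,\|\,\mathrm{Ber}(p)\big)$ on the relevant side of $p$ (\Cref{lem:KL_Ber_ineq}), to the single scalar inequality $\mathrm{KL}\big(\mathrm{Ber}(1/\beta)\,\|\,\mathrm{Ber}(1-\alpha)\big)\ge\sigma$, and both verify it by isolating the dominant term $(1-1/\beta)\log(1/\alpha)$ and absorbing the remaining entropy-type terms (which sum to at least $-1$) into the slack $\tfrac{1}{\beta}\log(1/\alpha)\ge 1$ provided by the upper end of the window \eqref{eq:sigma_KL_range}. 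The bookkeeping you flag as the main obstacle is exactly the two-sided estimate on $\gamma:=\mathrm{KL}\big(\mathrm{Ber}(1/\beta)\,\|\,\mathrm{Ber}(p)\big)$ in \eqref{eq:gamma_range}, and it goes through uniformly for $\beta\ge 4$ and $\alpha\le 1/2$ as you anticipate.
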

\begin{proof}
    We follow the proof lines of \cite[Lemma 20]{JMLR2024_DROfflineRLNearOptimalSampelComplexity_Shi}. By the definition of $\text{KL}\bigg(Ber(p)\parallel Ber(q)\bigg)$, and $p=1-\alpha$, let us consider $\gamma$ be the KL divergence between $Ber(1/\beta)$ and $Ber(p)$, defined as follows
    \begin{align}
        \gamma := \text{KL}\bigg(Ber\Big(\frac{1}{\beta}\Big)\parallel Ber(p)\bigg) &= \frac{1}{\beta}\log\bigg(\frac{1/\beta}{p} \bigg) + \bigg(1 - \frac{1}{\beta}\bigg)\log\bigg(\frac{1- \frac{1}{\beta}}{1-p} \bigg)\nonumber\\
        &= \frac{1}{\beta}\log\bigg(\frac{1}{\beta}\bigg) - \frac{1}{\beta}\log\big(p\big)  +  \bigg(1 - \frac{1}{\beta}\bigg)\log\bigg(1 - \frac{1}{\beta}\bigg) - \bigg(1 - \frac{1}{\beta}\bigg)\log\bigg(1 - p\bigg)\nonumber\\
        &= \frac{1}{\beta}\log\bigg(\frac{1}{\beta}\bigg) - \frac{1}{\beta}\log\big(p\big)  +  \bigg(1 - \frac{1}{\beta}\bigg)\log\bigg(1 - \frac{1}{\beta}\bigg) +\bigg(1 - \frac{1}{\beta}\bigg)\log\bigg(\frac{1}{\alpha}\bigg).
    \end{align}
        We will claim that $\gamma$ satifies the following relation with $\sigma$, which will be proven at the end of this proof:
    \begin{align}
    \label{eq:gamma_range}
        0 \leq \sigma \leq \bigg(1 - \frac{2}{\beta}\bigg)\log\bigg(\frac{1}{\alpha}\bigg) \leq \gamma \leq \bigg(1 - \frac{1}{\beta}\bigg)\log\bigg(\frac{1}{\alpha}\bigg).
    \end{align}
Recall the definition of the worst-case transition kernel defined in \eqref{eq:perturbed_P}, where \begin{align}
\label{eq:tilde_p_ineq_KL}
    \tilde{p} &= \argmin\limits_{p':\text{KL}\Big(Ber(p')\parallel Ber(p)\Big)\leq \sigma} p'V^{\pi,\sigma}_2(s_2) + (1-p')V^{\pi,\sigma}_2(s_3)\nonumber\\
    &\overset{(i)}{\geq} \argmin\limits_{p':\text{KL}\Big(Ber(p')\parallel Ber(p)\Big)\leq \sigma} p'V^{\pi,\sigma}_2(s_2) + (1-p')V^{\pi,\sigma}_2(s_3)\overset{(ii)}{=} \frac{1}{\beta},   
\end{align}
where (i) holds by $\sigma \leq \gamma$ by \eqref{eq:gamma_range} and the last equality (ii) follows from applying \Cref{lem:KL_Ber_ineq} and \eqref{eq:gamma_range} to arrive at
\begin{align}
\label{eq:p_prime_ineq_KL}
    \forall 0 \leq p' < \frac{1}{\beta}, \quad \text{ such that } \quad \text{KL}\bigg(Ber(p')\parallel Ber(p)\bigg) > \text{KL}\bigg(Ber\bigg(\frac{1}{\beta}\bigg)\parallel Ber(p)\bigg) = \gamma.
\end{align}
Now we will proof the claim given in \eqref{eq:gamma_range}. To control $\gamma$, we plug the assumptions that $p \in [1/2,1)$ and $\beta \geq 4$. By these assumptions we arrive at the trivial facts that
\begin{align}
\label{eq:part_i}
    \frac{1}{\beta}\log\bigg(\frac{1}{\beta}\bigg) - \frac{1}{\beta}\log \Big(p\Big) < 0 \quad \text{ and } \quad  \bigg(1-\frac{1}{\beta}\bigg)\log\bigg(1-\frac{1}{\beta}\bigg) < 0.
\end{align}
Using \eqref{eq:part_i}, \eqref{eq:gamma_range} directly leads to
\begin{align}
\label{eq:gamma_part_i}
    \gamma \leq  \bigg(1-\frac{1}{\beta}\bigg)\log\bigg(\frac{1}{\alpha}\bigg).
\end{align}
Similarly, we can observe that
\begin{align}
\label{eq:part_ii}
    -1 \leq \frac{1}{\beta}\log\bigg(\frac{1}{\beta}\bigg) + \bigg(1-\frac{1}{\beta}\bigg)\log\bigg(1-\frac{1}{\beta}\bigg) \leq 0, \quad \text{ and } \quad - \frac{1}{\beta}\log \Big(p\Big)\geq 0.
\end{align}
Using \eqref{eq:part_ii}, \eqref{eq:gamma_range} directly leads to
\begin{align}
\label{eq:gamma_part_ii}
    \gamma \leq  -1+ \bigg(1-\frac{1}{\beta}\bigg)\log\bigg(\frac{1}{\alpha}\bigg) \geq \bigg(1-\frac{2}{\beta}\bigg)\log\bigg(\frac{1}{\alpha}\bigg),
\end{align}
as long as $\log\bigg(\frac{1}{\alpha}\bigg) \geq \beta$ by \eqref{eq:beta_KL}
Using \eqref{eq:gamma_part_i} and \eqref{eq:gamma_part_ii}, it is easy to verify that the choice of uncertainty radius $\sigma$ in \eqref{eq:sigma_KL_range} satisfies the bound in \eqref{eq:gamma_range}.
\end{proof}


\end{document}